\definecolor{myred}{rgb}{1,0.2,0}
\definecolor{mygreen}{rgb}{0,1,0}
\definecolor{myblue}{rgb}{0,0.2,1}
\newtheorem{lemma}{Lemma}
\newtheorem{property}{Property}
\newtheorem{remark}{Remark}
\newtheorem{proposition}{Proposition}
\newtheorem{theorem}{Theorem}
\newtheorem{definition}{Definition}
\newcommand{\Xt}{{\bf X}}
\newcommand{\Gt}{{\bf G}}
\newcommand{\Yt}{{\bf Y}}
\newcommand{\Nt}{{\bf N}}
\newcommand{\Wt}{{\bf W}}
\newcommand{\Rt}{{\bf R}}
\newcommand{\Pt}{{\bf P}}
\newcommand{\Gammat}{{\bm \Gamma}}
\def\Real{{\mathbb R}}
\def\Complex{{\mathbb C}}
\def\Natural{{\mathbb N}}
\def\bPi{\mbox{\boldmath $\Pi$}}
\def\bLambda{\mbox{\boldmath $\Lambda$}}
\def\bSigma{\mbox{\boldmath $\Sigma$}}
\def\bGamma{\mbox{\boldmath $\Gamma$}}
\def\bgamma{\mbox{\boldmath $\gamma$}}
\newcommand{\diag}[1]{\text{Diag}\left\{#1\right\}}
\newcommand{\trace}[1]{\text{Tr}\left\{#1\right\}}
\newcommand{\Null}[1]{\text{null}\left\{#1\right\}}
\newcommand{\range}[1]{\text{range}\left\{#1\right\}}
\newcommand{\inv}[1]{\left(#1\right)^{-1}}
\newcommand{\invtwo}[1]{\left(#1\right)^{-2}}
\DeclareMathOperator*{\CD}{\ast}
\newcommand{\hada}[1]{\CD_{\substack{j=1\\j \neq #1}}^N}
\newcommand{\Y}{{\bf Y}}
\newcommand{\W}{{\bf W}}
\renewcommand{\H}{{\bf H}}
\newcommand{\A}{{\bf A}}
\newcommand{\B}{{\bf B}}
\newcommand{\C}{{\bf C}}
\newcommand{\D}{{\bf D}}
\newcommand{\I}{{\bf I}}
\newcommand{\G}{{\bf G}}
\newcommand{\F}{{\bf F}}
\newcommand{\Q}{{\bf Q}}
\renewcommand{\L}{{\bf L}}
\newcommand{\M}{{\bf M}}
\newcommand{\N}{{\bf N}}
\newcommand{\T}{{\bf T}}
\newcommand{\K}{{\bf K}}
\newcommand{\Sb}{{\bf S}}
\newcommand{\eye}[1]{{\bf I}_{#1}}
\newcommand{\one}{{\bf 1}}
\newcommand{\zero}{{\bf 0}}
\newcommand{\y}{{\bf y}}
\newcommand{\z}{{\bf z}}
\newcommand{\h}{{\bf h}}
\newcommand{\w}{{\bf w}}
\newcommand{\e}{{\bf e}}
\newcommand{\crb}{Cram{\'e}r-Rao}
\renewcommand{\th}{\bm{\theta}}
\newcommand{\ph}{\bm{\varphi}}
\newcommand{\fim}{\bm{\Phi}}
\newcommand{\ufim}{\bm{\Psi}}
\newcommand{\fullfim}{\bm{\Omega}}
\newcommand{\E}[1]{\text{E}\left\{#1\right\}}
\newcommand{\cov}[1]{\text{cov}\{#1\}}
\renewcommand{\vec}[1]{\text{vec}\left(#1\right)}
\newcommand{\score}{\nabla_{\th} \log p(\y;\th)}
\newcommand{\jacob}{\mathcal{D}}
\newcommand{\ktensor}[1]{\text{\textlbrackdbl}#1\text{\textrbrackdbl}}
\begin{document}

\title{Tensor Decomposition for Signal Processing and Machine Learning}

\author{Nicholas D. Sidiropoulos,~\IEEEmembership{Fellow,~IEEE}, Lieven De Lathauwer,~\IEEEmembership{Fellow,~IEEE}, Xiao Fu,~\IEEEmembership{Member,~IEEE}, Kejun Huang,~\IEEEmembership{Student~Member,~IEEE}, Evangelos E. Papalexakis, and Christos Faloutsos
\thanks{N.D. Sidiropoulos, X. Fu, and K. Huang are with the ECE Department, University of Minnesota, Minneapolis, USA; e-mail: {\tt (nikos,xfu,huang663)@umn.edu}. Supported in part by NSF IIS-1247632, IIS-1447788.}
\thanks{Lieven De Lathauwer is with KU Leuven, Belgium; e-mail: {\tt Lieven.DeLathauwer@kuleuven.be}. Supported by (1) KU Leuven Research Council: CoE EF/05/006 Optimization in Engineering (OPTEC), C1 project C16/15/059-nD; (2) F.W.O.: project G.0830.14N, G.0881.14N; (3) Belgian Federal Science Policy Office: IUAP P7 (DYSCO II, Dynamical systems, control and optimization, 2012–2017); (4) EU: The research leading to these results has received funding from the European Research Council under the European Union's Seventh Framework Programme (FP7/2007-2013) / ERC Advanced Grant: BIOTENSORS (no. 339804). This paper reflects only the authors' views and the EU is not liable for any use that may be made of the contained information.}
\thanks{E.E. Papalexakis and C. Faloutsos are with the CS Department, Carnegie Mellon University, USA; e-mail {\tt (epapalex,christos)@cs.cmu.edu}. Supported in part by NSF IIS-1247489.
}
}

\maketitle

\begin{abstract}
Tensors or {\em multi-way arrays} are functions of three or more indices $(i,j,k,\cdots)$ -- similar to matrices (two-way arrays), which are functions of two indices $(r,c)$ for (row,column). Tensors have a rich history, stretching over almost a century, and touching upon numerous disciplines; but they have only recently become ubiquitous in signal and data analytics at the confluence of signal processing, statistics, data mining and machine learning. This overview article aims to provide a good starting point for researchers and practitioners interested in learning about and working with tensors. As such, it focuses on fundamentals and motivation (using various application examples), aiming to strike an appropriate balance of breadth {\em and depth} that will enable someone having taken first graduate courses in matrix algebra and probability to get started doing research and/or developing tensor algorithms and software. Some background in applied optimization is useful but not strictly required. The material covered includes tensor rank and rank decomposition; basic tensor factorization models and their relationships and properties (including fairly good coverage of identifiability); broad coverage of algorithms ranging from alternating optimization to stochastic gradient; statistical performance analysis; and applications ranging from source separation to collaborative filtering, mixture and topic modeling, classification, and multilinear subspace learning.
\end{abstract}
\begin{IEEEkeywords}
Tensor decomposition, tensor factorization, rank, canonical polyadic decomposition (CPD), parallel factor analysis (PARAFAC), Tucker model, higher-order singular value decomposition (HOSVD), multilinear singular value decomposition (MLSVD), uniqueness, NP-hard problems, alternating optimization, alternating direction method of multipliers, gradient descent, Gauss-Newton, stochastic gradient, Cram\'er-Rao bound, communications, source separation, harmonic retrieval, speech separation, collaborative filtering, mixture modeling, topic modeling, classification, subspace learning.
\end{IEEEkeywords}

\section{Introduction}



Tensors\footnote{The term has different meaning in Physics, however it has been widely adopted across various disciplines in recent years to refer to what was previously known as a {\em multi-way array}.} (of order higher than two) are arrays indexed by three or more indices, say $(i,j,k,\cdots)$ -- a generalization of matrices, which are indexed by two indices, say $(r,c)$ for (row, column). Matrices are two-way arrays, and there are three- and higher-way arrays (or {\em higher-order}) tensors.

Tensor algebra has many similarities but also many striking differences with matrix algebra -- e.g., low-rank tensor factorization is essentially unique under mild conditions; determining tensor rank is NP-hard, on the other hand, and the best low-rank approximation of a higher rank tensor may not even exist. Despite such apparent paradoxes and the 
learning curve needed to digest tensor algebra notation and data manipulation, tensors have already found many applications in signal processing (speech, audio, communications, radar, biomedical), machine learning (clustering, dimensionality reduction, latent factor models, subspace learning), and well beyond. Psychometrics (loosely defined as mathematical methods for the analysis of personality data) and later Chemometrics (likewise, for chemical data) have historically been two important application areas driving theoretical and algorithmic developments. Signal processing followed, in the 90's, but the real spark that popularized tensors came when the computer science community (notably those in machine learning, data mining, computing) discovered the power of tensor decompositions, roughly a decade ago \cite{1565685,Acar2005,4538221}. There are nowadays many hundreds, perhaps thousands of papers published each year on tensor-related topics. Signal processing applications include, e.g., unsupervised separation of unknown mixtures of speech signals \cite{NioMokSidPot08} and code-division communication signals without knowledge of their codes \cite{SidGiaBro00}; and emitter localization for radar, passive sensing, and communication applications \cite{NionSidRadar,SidBroGia00}. There are many more applications of tensor techniques that are not immediately recognized as such, e.g., the analytical constant modulus algorithm \cite{ACMA,kofidis2001tensor}. Machine learning applications include face recognition, mining musical scores, and detecting cliques in social networks -- see \cite{Vasilescu2002,psychovgaltis,PapFalSid2012} and references therein. More recently, there has been considerable work on tensor decompositions for learning latent variable models, particularly topic models \cite{Anandkumar:2014:TDL:2627435.2697055}, and connections between orthogonal tensor decomposition and the method of moments for computing the Latent Dirichlet Allocation (LDA -- a widely used topic model).

After two decades of research on tensor decompositions and applications, the senior co-authors still couldn't point their new graduate students to a single ``point of entry'' to begin research in this area. This article has been designed to address this need: to provide a fairly comprehensive {\em and} deep overview of tensor decompositions that will enable someone having taken first graduate courses in matrix algebra and probability to get started doing research and/or developing related algorithms and software. While no single reference fits this bill, there are several very worthy tutorials and overviews that offer different points of view in certain aspects, and we would like to acknowledge them here. Among them, the highly-cited and clearly-written tutorial \cite{Kolda09tensordecompositions} that appeared 7 years ago in {\it SIAM Review} is perhaps the one closest to this article. It covers the basic models and algorithms (as of that time) well, but it does not go deep into uniqueness, advanced algorithmic, or estimation-theoretic aspects. The target audience of \cite{Kolda09tensordecompositions} is applied mathematics (SIAM). The recent tutorial \cite{psychovgaltis} offers an accessible introduction, with many figures that help ease the reader into three-way thinking. It covers most of the bases and includes many motivating  applications, but it also covers a lot more beyond the basics and thus stays at a high level. The reader gets a good roadmap of the area, without delving into it enough to prepare for research. Another recent tutorial on tensors is \cite{ComonTensorsBriefIntro}, which adopts a more abstract point of view of tensors as mappings from a linear space to another, whose coordinates transform multilinearly under a change of bases. This article is more suited for people interested in tensors as a mathematical concept, rather than how to use tensors in science and engineering. It includes a nice review of tensor rank results and a brief account of uniqueness aspects, but nothing in the way of algorithms or tensor computations. An overview of tensor techniques for large-scale numerical computations is given in \cite{grasedyck2013literature, hackbusch2012tensor}, geared towards a scientific computing audience; see \cite{NV-OD-LS-LDL} for a more accessible introduction. A gentle introduction to tensor decompositions can be found in the highly cited Chemometrics tutorial \cite{Bro1997a} -- a bit outdated but still useful for its clarity -- and the more recent book \cite{SmiBroGel}. Finally, \cite{PapFalSidKol:TIST2016} is an upcoming tutorial with emphasis on scalability and data fusion applications -- it does not go deep into tensor rank, identifiability, decomposition under constraints, or statistical performance benchmarking.

None of the above offers a comprehensive overview that is sufficiently deep to allow one to appreciate the underlying mathematics, the rapidly expanding and diversifying toolbox of tensor decomposition algorithms, and the basic ways in which tensor decompositions are used in signal processing and machine learning -- and they are quite different. Our aim in this paper is to give the reader a tour that goes `under the hood' on the technical side, and, at the same time, serve as a bridge between the two areas.  Whereas we cannot include detailed proofs of some of the deepest results, we do provide insightful derivations of simpler results and {\em sketch} the line of argument behind more general ones. For example, we include a one-page self-contained proof of Kruskal's condition when one factor matrix is full column rank, which illuminates the role of Kruskal-rank in proving uniqueness. We also `translate' between the signal processing (SP) and machine learning (ML) points of view. In the context of the canonical polyadic decomposition (CPD), also known as parallel factor analysis (PARAFAC), SP researchers (and Chemists) typically focus on the columns of the factor matrices ${\bf A}$, ${\bf B}$, ${\bf C}$ and the associated rank-1 factors ${\bf a}_f \circledcirc  {\bf b}_f \circledcirc  {\bf c}_f$ of the decomposition (where $\circledcirc$ denotes the outer product, see section \ref{section_prods}), because they are interested in {\em separation}. ML researchers often focus on the rows of ${\bf A}$, ${\bf B}$, ${\bf C}$, because they think of them as parsimonious latent space representations. For a user $\times$ item $\times$ context ratings tensor, for example, a row of ${\bf A}$ is a representation of the corresponding user in latent space, and likewise a row of ${\bf B}$ (${\bf C}$) is a representation of the corresponding item (context) in the same latent space. The inner product of these three vectors is used to predict that user's rating of the given item in the given context. This is one reason why ML researchers tend to use inner (instead of outer) product notation. SP researchers are interested in model identifiability because it guarantees separability; ML researchers are interested in identifiability to be able to interpret the dimensions of the latent space. In co-clustering applications, on the other hand, the rank-1 tensors ${\bf a}_f \circledcirc  {\bf b}_f \circledcirc  {\bf c}_f$ capture {\em latent concepts} that the analyst seeks to learn from the data (e.g., cliques of users buying certain types of items in certain contexts). SP researchers are trained to seek {\em optimal} solutions, which is conceivable for small to moderate data; they tend to use computationally heavier algorithms. ML researchers are nowadays trained to think about scalability from day one, and thus tend to choose much more lightweight algorithms to begin with. There are many differences, but also many similarities and opportunities for cross-fertilization. Being conversant in both communities allows us to bridge the ground between and help SP and ML researchers better understand each other.

\subsection{Roadmap} The rest of this article is structured as follows. We begin with some matrix preliminaries, including matrix rank and low-rank approximation, and a review of some useful matrix products and their properties. We then move to rank and rank decomposition for tensors. We briefly review bounds on tensor rank, multilinear (mode-) ranks, and relationship between tensor rank and multilinear rank. We also explain the notions of typical, generic, and border rank, and discuss why low-rank tensor approximation may not be well-posed in general. Tensors can be viewed as data or as multi-linear operators, and while we are mostly concerned with the former viewpoint in this article, we also give a few important examples of the latter as well. Next, we provide a fairly comprehensive account of uniqueness of low-rank tensor decomposition. This is the most advantageous difference when one goes from matrices to tensors, and therefore understanding uniqueness is important in order to make the most out of the tensor toolbox. Our exposition includes two stepping-stone proofs: one based on eigendecomposition, the other bearing Kruskal's mark (``down-converted to baseband'' in terms of difficulty). The Tucker model and multilinear SVD come next, along with a discussion of their properties and connections with rank decomposition. A thorough discussion of algorithmic aspects follows, including a detailed discussion of how different types of constraints can be handled, how to exploit data sparsity, scalability, how to handle missing values, and different loss functions. In addition to basic alternating optimization strategies, a host of other solutions are reviewed, including gradient descent, line search, Gauss-Newton, alternating direction method of multipliers, and stochastic gradient approaches. The next topic is statistical performance analysis, focusing on the widely-used Cram\'er-Rao bound and its efficient numerical computation. This section contains novel results and derivations that are of interest well beyond our present context -- e.g., can also be used to characterize estimation performance for a broad range of constrained matrix factorization problems. The final main section of the article presents motivating applications in signal processing (communication and speech signal separation, multidimensional harmonic retrieval) and machine learning (collaborative filtering, mixture and topic modeling, classification, and multilinear subspace learning). We conclude with some pointers to online resources (toolboxes, software, demos), conferences, and some historical notes.

\section{Preliminaries}

\subsection{Rank and rank decomposition for matrices}

Consider an $I \times J$ matrix ${\bf X}$, and let $\text{colrank}({\bf X})$ $:=$ the number of linearly independent columns of ${\bf X}$, i.e., the dimension of the range space of ${\bf X}$, $\text{dim}(\text{range}({\bf X}))$. $\text{colrank}({\bf X})$ is the minimum $k \in \Natural$ such that ${\bf X} = {\bf A} {\bf B}^T$, where ${\bf A}$ is an $I \times k$ basis of $\text{range}({\bf X})$, and ${\bf B}^T$ is $k \times J$ and holds the corresponding coefficients. This is because if we can generate all columns of ${\bf X}$, by linearity we can generate anything in $\text{range}({\bf X})$, and vice-versa. We can similarly define $\text{rowrank}({\bf X})$ $:=$ the number of linearly independent rows of ${\bf X}$ $=$ $\text{dim}(\text{range}({\bf X}^T))$, which is the minimum $\ell \in \Natural$ such that ${\bf X}^T = {\bf B} {\bf A}^T$ $\Longleftrightarrow$ ${\bf X} = {\bf A} {\bf B}^T$, where ${\bf B}$ is $J \times \ell$ and ${\bf A}^T$ is $\ell \times I$. Noting that
\[
{\bf X} = {\bf A} {\bf B}^T = {\bf A}(:,1) ({\bf B}(:,1))^T + \cdots + {\bf A}(:,\ell) ({\bf B}(:,\ell))^T,
\]
where ${\bf A}(:,\ell)$ stands for the $\ell$-th column of ${\bf A}$, we have
\[
{\bf X} = {\bf a}_1 {\bf b}_1^T + \cdots + {\bf a}_\ell {\bf b}_\ell^T,
\]
where ${\bf A} = \left[{\bf a}_1,\cdots,{\bf a}_{\ell}\right]$ and ${\bf B} = \left[{\bf b}_1,\cdots,{\bf b}_{\ell}\right]$.
It follows that
$\text{colrank}({\bf X}) = \text{rowrank}({\bf X}) = \text{rank}({\bf X})$,
and $\text{rank}({\bf X}) = \text{minimum}~m~\text{such that}~{\bf X} = \sum_{n=1}^m {\bf a}_n {\bf b}_n^T$,
so the three definitions actually coincide -- but only in the matrix (two-way tensor) case, as we will see later. Note that, per the definition above, ${\bf a} {\bf b}^T$ is a rank-1 matrix that is `simple' in the sense that every column (or row) is proportional to any other column (row, respectively). In this sense, rank can be thought of as a measure of complexity. Note also that $\text{rank}({\bf X}) \leq \min(I,J)$, because obviously ${\bf X} = {\bf X} {\bf I}$, where ${\bf I}$ is the identity matrix.

\subsection{Low-rank matrix approximation}

In practice ${\bf X}$ is usually full-rank, e.g., due to measurement noise, and we observe ${\bf X} = {\bf L} + {\bf N}$, where ${\bf L}={\bf A} {\bf B}^T$ is low-rank and ${\bf N}$ represents noise and `unmodeled dynamics'. If the elements of ${\bf N}$ are sampled from a jointly continuous distribution, then ${\bf N}$ will be full rank almost surely -- for the determinant of any square submatrix of ${\bf N}$ is a polynomial in the matrix entries, and a polynomial that is nonzero at one point is nonzero at every point except for a set of measure zero. In such cases, we are interested in approximating ${\bf X}$ with a low-rank matrix, i.e., in
\[
\min_{{\bf L} ~|~ \text{rank}({\bf L})=\ell} ||{\bf X} - {\bf L}||_F^2 \Longleftrightarrow
\min_{{\bf A} \in \Real^{I \times \ell},~ {\bf B} \in \Real^{J \times \ell}} ||{\bf X} - {\bf A}{\bf B}^T||_F^2.
\]
The solution is provided by the truncated SVD of ${\bf X}$, i.e., with ${\bf X} = {\bf U} \mathbf{\Sigma} {\bf V}^T$, set ${\bf A} = {\bf U}(:,1:\ell) \mathbf{\Sigma}(1:\ell,1:\ell)$, ${\bf B} = {\bf V}(:,1:\ell)$ or
${\bf L} = {\bf U}(:,1:\ell) \mathbf{\Sigma}(1:\ell,1:\ell) ({\bf V}(:,1:\ell))^T$, where ${\bf U}(:,1:\ell)$ denotes the matrix containing columns $1$ to $\ell$ of ${\bf U}$. However, this factorization is non-unique because ${\bf A} {\bf B}^T = {\bf A} {\bf M} {\bf M}^{-1} {\bf B}^T$ $=$ $({\bf A} {\bf M})({\bf B} {\bf M}^{-T})^T$, for any nonsingular $\ell \times \ell$ matrix ${\bf M}$, where ${\bf M}^{-T} = ({\bf M}^{-1})^{T}$. In other words: the factorization of the approximation is highly non-unique (when $\ell=1$, there is only scaling ambiguity, which is usually inconsequential). As a special case, when ${\bf X}={\bf L}$ (noise-free) so $\text{rank}({\bf X})=\ell$, low-rank decomposition of ${\bf X}$ is non-unique.

\subsection{Some useful products and their properties}
\label{section_prods}

In this section we review some useful matrix products and their properties, as they pertain to tensor computations.

\noindent {\em Kronecker product}: The Kronecker product of ${\bf A}$ ($I \times K$) and ${\bf B}$ ($J \times L$) is the $IJ \times KL$ matrix
\[
{\bf A} \otimes {\bf B} := \left[
                             \begin{array}{cccc}
                               {\bf B} {\bf A}(1,1) & {\bf B} {\bf A}(1,2) & \cdots & {\bf B} {\bf A}(1,K) \\
                               {\bf B} {\bf A}(2,1) & {\bf B} {\bf A}(2,2) & \cdots & {\bf B} {\bf A}(2,K) \\
                               \vdots & \vdots & \cdots & \vdots \\
                               {\bf B} {\bf A}(I,1) & {\bf B} {\bf A}(I,2) & \cdots & {\bf B} {\bf A}(I,K) \\
                             \end{array}
                           \right]
\]
The Kronecker product has many useful properties. From its definition, it follows that ${\bf b}^T \otimes {\bf a} = {\bf a} {\bf b}^T$. For an $I \times J$ matrix ${\bf X}$, define
\[
\text{vec}({\bf X}) :=  \left[
                             \begin{array}{c}
                               {\bf X}(:,1)\\
                               {\bf X}(:,2)\\
                               \vdots\\
                               {\bf X}(:,J)\\
                             \end{array}
                           \right],
\]
i.e., the $IJ \times 1$ vector obtained by vertically stacking the columns of ${\bf X}$. By definition of $\text{vec}(\cdot)$ it follows that $\text{vec}({\bf a} {\bf b}^T) = {\bf b} \otimes {\bf a}$. 

Consider the product ${\bf A} {\bf M} {\bf B}^T$, where ${\bf A}$ is $I \times K$, ${\bf M}$ is $K \times L$, and ${\bf B}$ is $J \times L$. Note that
\begin{eqnarray*}
{\bf A} {\bf M} {\bf B}^T &=& \left( \sum_{k=1}^K {\bf A}(:,k) {\bf M}(k,:) \right) {\bf B}^T\\
                          &=& \sum_{k=1}^K \sum_{\ell=1}^{L} {\bf A}(:,k) {\bf M}(k,\ell) ({\bf B}(:,\ell))^T.\\
\end{eqnarray*}
Therefore, using $\text{vec}({\bf a} {\bf b}^T) = {\bf b} \otimes {\bf a}$ and linearity of the $\text{vec}(\cdot)$ operator
\begin{eqnarray*}
\text{vec}\left({\bf A} {\bf M} {\bf B}^T\right) &=& \sum_{k=1}^K \sum_{\ell=1}^{L} {\bf M}(k,\ell) {\bf B}(:,\ell) \otimes {\bf A}(:,k)\\
&=& \left({\bf B} \otimes {\bf A}\right) \text{vec}({\bf M}).
\end{eqnarray*}

This is useful when dealing with linear least squares problems of the following form
\[
\min_{{\bf M}} ||{\bf X} - {\bf A} {\bf M} {\bf B}^T||_F^2
\Longleftrightarrow \min_{{\bf m}} ||\text{vec}({\bf X}) - ({\bf B} \otimes {\bf A}) {\bf m}||_2^2,
\]
where ${\bf m} := \text{vec}({\bf M})$.

\noindent {\em Khatri--Rao product:} Another useful product is the Khatri--Rao (column-wise Kronecker) product of two matrices {\em with the same number of columns} (see \cite[p. 14]{SmiBroGel} for a generalization). That is, with ${\bf A}=\left[{\bf a}_1,\cdots,{\bf a}_{\ell}\right]$ and ${\bf B}=\left[{\bf b}_1,\cdots,{\bf b}_{\ell}\right]$, the Khatri--Rao product of ${\bf A}$ and ${\bf B}$ is ${\bf A} \odot {\bf B}$ $:=$ $\left[{\bf a}_1 \otimes {\bf b}_1, \cdots {\bf a}_{\ell} \otimes {\bf b}_{\ell}\right]$. It is easy to see that, with ${\bf D}$ being a diagonal matrix with vector ${\bf d}$ on its diagonal (we will write ${\bf D} = \text{Diag}({\bf d})$, and ${\bf d} = \text{diag}({\bf D})$, where we have implicitly defined operators $\text{Diag}(\cdot)$ and $\text{diag}(\cdot)$ to convert one to the other), the following property holds
\[
\text{vec}\left({\bf A} {\bf D} {\bf B}^T\right) = \left({\bf B} \odot {\bf A}\right) {\bf d},
\]
which is useful when dealing with linear least squares problems of the following form
\[
\min_{{\bf D}=\text{Diag}({\bf d})} ||{\bf X} - {\bf A} {\bf D} {\bf B}^T||_F^2
\Longleftrightarrow \min_{{\bf d}} ||\text{vec}({\bf X}) - ({\bf B} \odot {\bf A}) {\bf d}||_2^2.
\]
It should now be clear that the Khatri--Rao product ${\bf B} \odot {\bf A}$ is a subset of columns from ${\bf B} \otimes {\bf A}$. Whereas ${\bf B} \otimes {\bf A}$ contains the `interaction' (Kronecker product) of {\em any} column of ${\bf A}$ with {\em any} column of ${\bf B}$, ${\bf B} \odot {\bf A}$ contains the Kronecker product of {\em any} column of ${\bf A}$ with {\em only the corresponding} column of ${\bf B}$.

\noindent {\em Additional properties:}
\begin{itemize}
\item $({\bf A} \otimes {\bf B}) \otimes {\bf C} = {\bf A} \otimes ({\bf B} \otimes {\bf C})$ (associative); so we may simply write as ${\bf A} \otimes {\bf B} \otimes {\bf C}$. Note though that ${\bf A} \otimes {\bf B} \neq {\bf B} \otimes {\bf A}$, so the Kronecker product is non-commutative.
\item $({\bf A} \otimes {\bf B})^T = {\bf A}^T \otimes {\bf B}^T$ (note order, unlike $({\bf A B})^T = {\bf B}^T {\bf A}^T$).
\item $({\bf A} \otimes {\bf B})^* = {\bf A}^* \otimes {\bf B}^*$ $\Longrightarrow$ $({\bf A} \otimes {\bf B})^H = {\bf A}^H \otimes {\bf B}^H$, where $^*$, $^H$ stand for conjugation and Hermitian (conjugate) transposition, respectively.
\item $({\bf A} \otimes {\bf B})({\bf E} \otimes {\bf F}) = ({\bf A E} \otimes {\bf B F})$ (the {\em mixed product rule}). This is very useful -- as a corollary, if ${\bf A}$ and ${\bf B}$ are square nonsingular, then it follows that $({\bf A} \otimes {\bf B})^{-1} = {\bf A}^{-1} \otimes {\bf B}^{-1}$, and likewise for the pseudo-inverse. More generally, if ${\bf A} = {\bf U}_1 \mathbf{\Sigma}_1 {\bf V}^T_1$ is the SVD of ${\bf A}$, and ${\bf B} = {\bf U}_2 \mathbf{\Sigma}_2 {\bf V}^T_2$ is the SVD of ${\bf B}$, then it follows from the mixed product rule that ${\bf A} \otimes {\bf B}$ $=$ $({\bf U}_1 \mathbf{\Sigma}_1 {\bf V}^T_1) \otimes ({\bf U}_2 \mathbf{\Sigma}_2 {\bf V}^T_2)$ $=$ $({\bf U}_1 \otimes {\bf U}_2)  (\mathbf{\Sigma}_1 \otimes \mathbf{\Sigma}_2) ({\bf V}_1 \otimes {\bf V}_2)^T_2$ is the SVD of ${\bf A} \otimes {\bf B}$. It follows that
\item $\text{rank}({\bf A} \otimes {\bf B})=\text{rank}({\bf A})\text{rank}({\bf B})$.
\item $\text{tr}({\bf A} \otimes {\bf B})=\text{tr}({\bf A})\text{tr}({\bf B})$, for square ${\bf A}$, ${\bf B}$.
\item $\text{det}({\bf A} \otimes {\bf B})=\text{det}({\bf A})\text{det}({\bf B})$, for square ${\bf A}$, ${\bf B}$.
\end{itemize}
The Khatri--Rao product has the following properties, among others:
\begin{itemize}
\item $({\bf A} \odot {\bf B}) \odot {\bf C} = {\bf A} \odot ({\bf B} \odot {\bf C})$ (associative); so we may simply write as ${\bf A} \odot {\bf B} \odot {\bf C}$. Note though that ${\bf A} \odot {\bf B} \neq {\bf B} \odot {\bf A}$, so the Khatri--Rao product is non-commutative.
\item $({\bf A} \otimes {\bf B})({\bf E} \odot {\bf F}) = ({\bf A E}) \odot ({\bf B F})$ ({\em mixed product rule}).
\end{itemize}

\noindent {\em Tensor (outer) product:} The {\em tensor product} or {\em outer product} of vectors ${\bf a}~(I \times 1)$ and ${\bf b}~(J \times 1)$ is defined as the $I \times J$ matrix ${\bf a} \circledcirc  {\bf b}$ with elements $({\bf a} \circledcirc  {\bf b})(i,j)={\bf a}(i) {\bf b}(j)$, $\forall i, j$. Note that
${\bf a} \circledcirc  {\bf b} = {\bf a} {\bf b}^T$. Introducing a third vector ${\bf c}~(K \times 1)$, we can generalize to the outer product of three vectors, which is an $I \times J \times K$ {\em three-way array} or {\em third-order tensor} ${\bf a} \circledcirc  {\bf b} \circledcirc  {\bf c}$ with elements $({\bf a} \circledcirc  {\bf b} \circledcirc  {\bf c})(i,j,k)={\bf a}(i) {\bf b}(j) {\bf c}(k)$. Note that the element-wise definition of the outer product naturally generalizes to three- and higher-way cases involving more vectors, but one loses the `transposition' representation that is familiar in the two-way (matrix) case.

\section{Rank and rank decomposition for tensors: CPD / PARAFAC}
\label{sect:rank}

We know that the outer product of two vectors is a `simple' rank-1 matrix -- in fact we may define matrix rank as the minimum number of rank-1 matrices (outer products of two vectors) needed to synthesize a given matrix. We can express this in different ways: $\text{rank}({\bf X})=F$ if and only if (iff) $F$ is the smallest integer such that ${\bf X} = {\bf A} {\bf B}^T$ for some ${\bf A}=\left[{\bf a}_1,\cdots,{\bf a}_F\right]$ and ${\bf B}=\left[{\bf b}_1,\cdots,{\bf b}_F\right]$, or, equivalently,
${\bf X}(i,j)=\sum_{f=1}^F {\bf A}(i,f) {\bf B}(j,f)$ $=$ $\sum_{f=1}^F {\bf a}_f(i) {\bf b}_f(j)$, $\forall i, j$ $\Longleftrightarrow$ ${\bf X} = \sum_{f=1}^F {\bf a}_f \circledcirc  {\bf b}_f$ $=$ $\sum_{f=1}^F {\bf a}_f {\bf b}_f^T$.

\begin{figure}[t!]
\vspace{-30pt}
\includegraphics[height=1.8in]{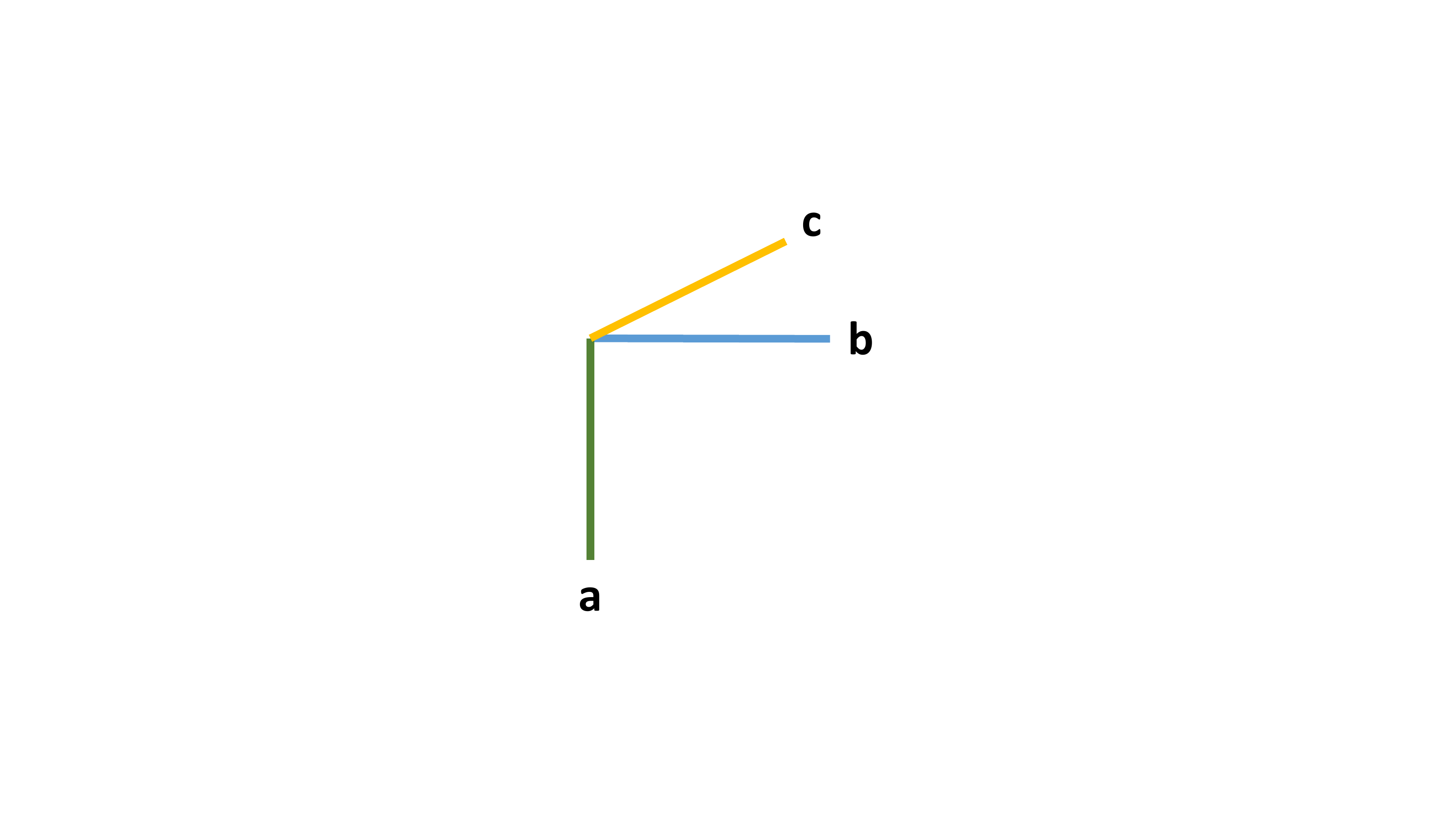}
\vspace{-35pt}
\caption{Schematic of a rank-1 tensor.}\label{fig-nikos1}
\vspace{-13pt}
\end{figure}

A {\em rank-1} third-order tensor $\Xt$ of size $I \times J \times K$ is an outer product of three vectors: $\Xt(i,j,k)={\bf a}(i) {\bf b}(j) {\bf c}(k)$, $\forall i \in \left\{1,\cdots,I\right\}$, $j \in \left\{1,\cdots,J\right\}$, and $k \in \left\{1,\cdots,K\right\}$; i.e., $\Xt={\bf a} \circledcirc  {\bf b} \circledcirc  {\bf c}$ -- see Fig. \ref{fig-nikos1}. A rank-1 $N$-th order tensor $\Xt$ is likewise an outer product of $N$ vectors: $\Xt(i_1,\cdots,i_N)={\bf a}_1(i_1) \cdots {\bf a}_N(i_N)$, $\forall i_n \in \left\{1,\cdots,I_n\right\}$, $\forall n \in \left\{1,\cdots,N\right\}$; i.e., $\Xt={\bf a}_1 \circledcirc  \cdots \circledcirc  {\bf a}_N$. In the sequel we mostly focus on third-order tensors for brevity; everything naturally generalizes to higher-order tensors, and we will occasionally comment on such generalization, where appropriate.

The {\em rank} of tensor $\Xt$ is the minimum number of rank-1 tensors needed to produce $\Xt$ as their sum -- see Fig. \ref{fig-nikos2} for a tensor of rank three. Therefore, a tensor of rank at most $F$ can be written as
\[
\Xt = \sum_{f=1}^F {\bf a}_f \circledcirc  {\bf b}_f \circledcirc  {\bf c}_f \Longleftrightarrow \Xt(i,j,k) = \sum_{f=1}^F {\bf a}_f(i) {\bf b}_f(j) {\bf c}_f(k)
\]
\[
= \sum_{f=1}^F {\bf A}(i,f) {\bf B}(j,f) {\bf C}(k,f),~ \left\{ \forall
\begin{array}{c}
i \in \left\{1,\cdots,I\right\}\\
j \in \left\{1,\cdots,J\right\}\\
k \in \left\{1,\cdots,K\right\}\\
\end{array}
\right.
\]
where ${\bf A}:=\left[{\bf a}_1,\cdots,{\bf a}_F\right]$, ${\bf B}:=\left[{\bf b}_1,\cdots,{\bf b}_F\right]$, and
${\bf C}:=\left[{\bf c}_1,\cdots,{\bf c}_F\right]$. It is also customary to use $a_{i,f}:={\bf A}(i,f)$, so $\Xt(i,j,k) = \sum_{f=1}^F a_{i,f} b_{j,f} c_{k,f}$. For brevity, we sometimes also use the notation $\Xt = \left\llbracket{\bf A},{\bf B},{\bf C}\right\rrbracket$ to denote the relationship $\Xt = \sum_{f=1}^F {\bf a}_f \circledcirc  {\bf b}_f \circledcirc  {\bf c}_f$.

\begin{figure}[t!]
	\vspace{-20pt}
	\includegraphics[width=.45\textwidth]{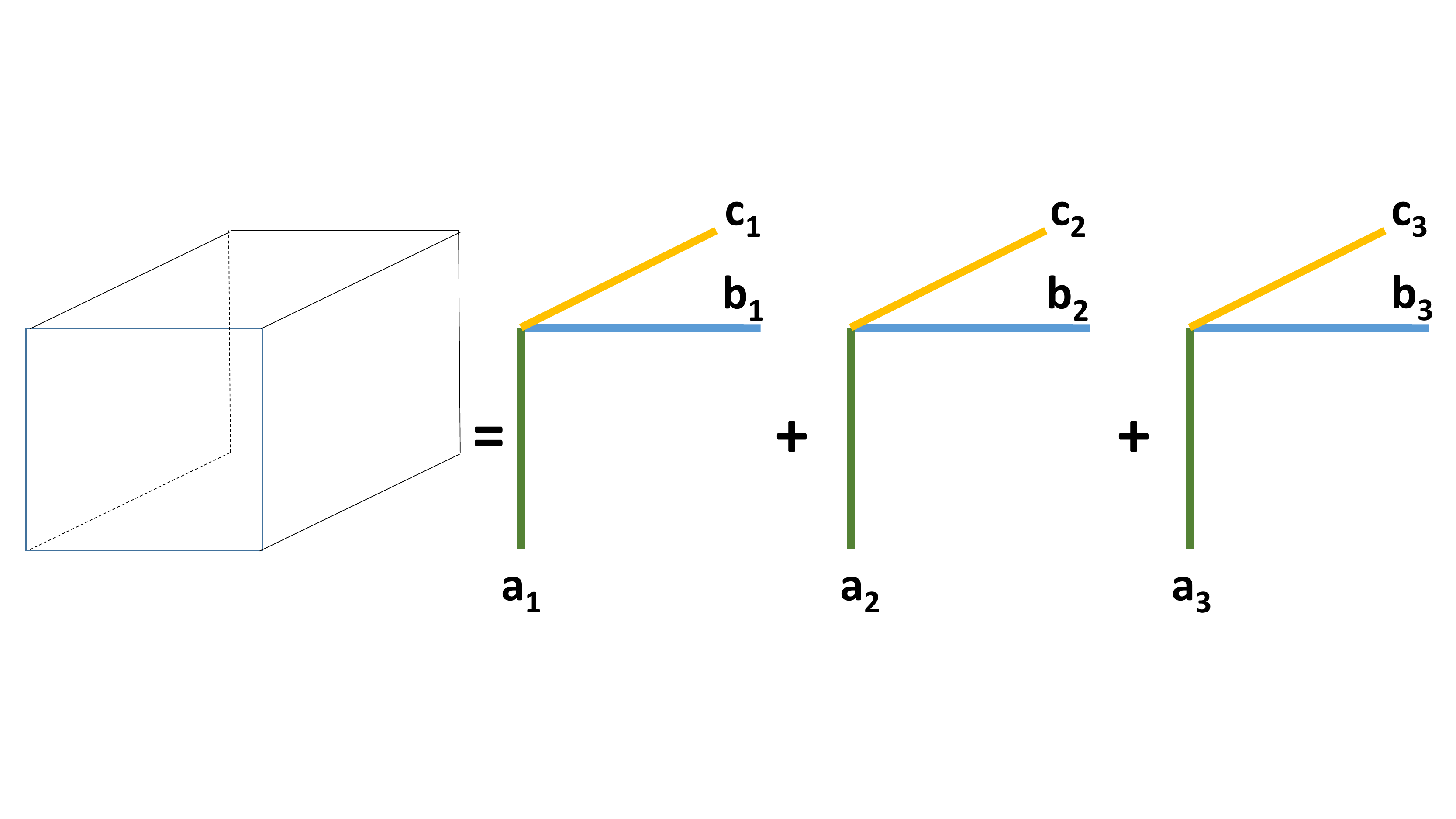}
	\vspace{-30pt}
	\caption{Schematic of tensor of rank three.}\label{fig-nikos2}
	\vspace{-10pt}
\end{figure}

Let us now fix $k=1$ and look at the {\em frontal slab} ${\bf X}(:,:,1)$ of $\Xt$. Its elements can be written as
\[
{\bf X}(i,j,1) = \sum_{f=1}^{F} {\bf a}_f(i) {\bf b}_f(j) {\bf c}_f(1)
\]
\[
\Longrightarrow {\bf X}(:,:,1) = \sum_{f=1}^{F} {\bf a}_f {\bf b}_f^T {\bf c}_f(1) =
\]
\[
{\bf A} \text{Diag}([{\bf c}_1(1),{\bf c}_2(1),\cdots,{\bf c}_F(1)]) {\bf B}^T = {\bf A} \text{Diag}({\bf C}(1,:)) {\bf B}^T,
\]
where we note that the elements of the first row of ${\bf C}$ weigh the rank-1 factors (outer products of corresponding columns of ${\bf A}$ and ${\bf B}$). We will denote $\text{D}_k({\bf C}):= \text{Diag}({\bf C}(k,:))$ for brevity. Hence, for any $k$,
\[
{\bf X}(:,:,k) = {\bf A} \text{D}_k({\bf C}) {\bf B}^T.
\]
Applying the vectorization property of $\odot$ it now follows that
\[
\text{vec}({\bf X}(:,:,k))=({\bf B} \odot {\bf A})({\bf C}(k,:))^T,
\]
and by parallel stacking, we obtain the matrix {\em unfolding} (or, matrix {\em view})
\[
{\bf X}_3 := \left[\text{vec}({\bf X}(:,:,1)), \text{vec}({\bf X}(:,:,2)), \cdots, \text{vec}({\bf X}(:,:,K))\right] \rightarrow
\]
\begin{equation}
\label{mu3}
{\bf X}_3 = ({\bf B} \odot {\bf A}){\bf C}^T, ~~~ (IJ \times K).
\end{equation}
We see that, when cast as a matrix, a third-order tensor of rank $F$ admits factorization in two matrix factors, {\em one of which is specially structured} -- being the Khatri--Rao product of two smaller matrices. One more application of the vectorization property of $\odot$ yields the $IJK \times 1$ vector
\[
{\bf x}_3 = \left( {\bf C} \odot ({\bf B} \odot {\bf A})\right) {\bf 1} = \left( {\bf C} \odot {\bf B} \odot {\bf A}\right) {\bf 1},
\]
where ${\bf 1}$ is an $F \times 1$ vector of all 1's. Hence, when converted to a long vector, a tensor of rank $F$ is a sum of $F$ {\em structured} vectors, each being the Khatri--Rao / Kronecker product of three vectors (in the three-way case; or more vectors in higher-way cases).

In the same vain, we may consider lateral or horizontal slabs\footnote{A warning for Matlab aficionados: due to the way that Matlab stores and handles tensors, one needs to use the `squeeze' operator, i.e., $\text{squeeze}({\bf X}(:,j,:)) = {\bf A} \text{D}_j({\bf B}) {\bf C}^T$, and $\text{vec}(\text{squeeze}({\bf X}(:,j,:))) = ({\bf C} \odot {\bf A}) ({\bf B}(j,:))^T$.}, e.g.,
\[
{\bf X}(:,j,:) = {\bf A} \text{D}_j({\bf B}) {\bf C}^T \rightarrow \text{vec}({\bf X}(:,j,:)) = ({\bf C} \odot {\bf A}) ({\bf B}(j,:))^T.
\]
Hence
\[
{\bf X}_2 := \left[\text{vec}({\bf X}(:,1,:)), \text{vec}({\bf X}(:,2,:)), \cdots, \text{vec}({\bf X}(:,J,:))\right] \rightarrow
\]
\begin{equation}
\label{mu2}
{\bf X}_2 = ({\bf C} \odot {\bf A}){\bf B}^T, ~~~ (IK \times J),
\end{equation}
and similarly\footnote{One needs to use the `squeeze' operator here as well.} ${\bf X}(i,:,:) = {\bf B} \text{D}_i({\bf A}) {\bf C}^T$, so
\[
{\bf X}_1 := \left[\text{vec}({\bf X}(1,:,:)), \text{vec}({\bf X}(2,:,:)), \cdots, \text{vec}({\bf X}(I,:,:))\right] \rightarrow
\]
\vspace{-20pt}
\begin{equation}
\label{mu1}
{\bf X}_1 = ({\bf C} \odot {\bf B}){\bf A}^T, ~~~ (KJ \times I).
\end{equation}

\subsection{Low-rank tensor approximation}

We are in fact ready to get a first glimpse on how we can go about estimating ${\bf A}$, ${\bf B}$, ${\bf C}$ from (possibly noisy) data $\Xt$. Adopting a least squares criterion, the problem is
\[
\min_{{\bf A}, {\bf B}, {\bf C}} ||\Xt - \sum_{f=1}^F {\bf a}_f \circledcirc  {\bf b}_f \circledcirc  {\bf c}_f||_F^2,
\]
where $||\Xt||_F^2$ is the sum of squares of all elements of $\Xt$ (the subscript $F$ in $||\cdot||_F$ stands for {\em Frobenius} (norm), and it should not be confused with the number of {\em factors} $F$ in the rank decomposition -- the difference will always be clear from context). Equivalently, we may consider
\[
\min_{{\bf A}, {\bf B}, {\bf C}} ||{\bf X}_1 - ({\bf C} \odot {\bf B}){\bf A}^T||_F^2.
\]
Note that the above model is nonconvex (in fact {\em trilinear}) in ${\bf A}$, ${\bf B}$, ${\bf C}$; but fixing
${\bf B}$ and ${\bf C}$, it becomes (conditionally) linear in ${\bf A}$, so that we may update
\[
{\bf A} \leftarrow \text{arg} \min_{{\bf A}} ||{\bf X}_1 - ({\bf C} \odot {\bf B}){\bf A}^T||_F^2,
\]
and, using the other two matrix representations of the tensor, update
\[
{\bf B} \leftarrow \text{arg} \min_{{\bf B}} ||{\bf X}_2 - ({\bf C} \odot {\bf A}){\bf B}^T||_F^2,
\]
and
\[
{\bf C} \leftarrow \text{arg} \min_{{\bf C}} ||{\bf X}_3 - ({\bf B} \odot {\bf A}){\bf C}^T||_F^2,
\]
until convergence. The above algorithm, widely known as {\em Alternating Least Squares} (ALS) is a popular way of computing approximate low-rank models of tensor data. We will discuss algorithmic issues in depth at a later stage, but it is important to note that ALS is very easy to program, and we encourage the reader to do so -- this exercise helps a lot in terms of developing the ability to `think three-way'.

\subsection{Bounds on tensor rank}

For an $I \times J$ matrix ${\bf X}$, we know that $\text{rank}({\bf X}) \leq \min(I,J)$,
and $\text{rank}({\bf X}) = \min(I,J)$ almost surely, meaning that rank-deficient real (complex) matrices are a set of Lebesgue measure zero in $\Real^{I \times J}$  $(\Complex^{I \times J})$. What can we say about $I \times J \times K$ tensors $\Xt$? Before we get to this, a retrospective on the matrix case is useful. Considering ${\bf X} = {\bf A} {\bf B}^T$ where ${\bf A}$ is $I \times F$ and ${\bf B}$ is $J \times F$, the size of such parametrization (the {\em number of unknowns}, or {\em degrees of freedom} (DoF) in the model) of ${\bf X}$ is\footnote{Note that we have taken away $F$ DoF due to the scaling / counter-scaling ambiguity, i.e., we may always multiply a column of ${\bf A}$ and divide the corresponding column of ${\bf B}$ with any nonzero number without changing ${\bf A} {\bf B}^T$.} $(I+J-1)F$. The number of equations in ${\bf X} = {\bf A} {\bf B}^T$ is $IJ$, and equations-versus-unknowns considerations suggest that $F$ of order $\min(I,J)$ may be needed -- and this turns out being sufficient as well.

For third-order tensors, the DoF in the low-rank parametrization $\Xt = \sum_{f=1}^F {\bf a}_f \circledcirc  {\bf b}_f \circledcirc  {\bf c}_f$ is\footnote{Note that here we can scale, e.g., ${\bf a}_f$ and ${\bf b}_f$ at will, and counter-scale ${\bf c}_f$, which explains the $( \ldots -2)F$.} $(I+J+K-2)F$, whereas the number of equations is $IJK$. This suggests that
$F \geq \lceil \frac{IJK}{I+J+K-2} \rceil$ may be needed to describe an {\em arbitrary} tensor $\Xt$ of size $I \times J \times K$, i.e., that third-order tensor rank can potentially be as high as $\min(IJ,JK,IK)$. In fact this turns out being sufficient as well. One way to see this is as follows: any frontal slab $\Xt(:,:,k)$ can always be written as $\Xt(:,:,k)={\bf A}_k {\bf B}_k^T$, with ${\bf A}_k$ and ${\bf B}_k$ having at most $\min(I,J)$ columns. Upon defining ${\bf A} := \left[{\bf A}_1, \cdots, {\bf A}_K\right]$, ${\bf B} := \left[{\bf B}_1, \cdots, {\bf B}_K\right]$, and ${\bf C} := {\bf I}_{K \times K} \otimes {\bf 1}_{1 \times \min(I,J)}$ (where ${\bf I}_{K \times K}$ is an identity matrix of size $K \times K$, and ${\bf 1}_{1 \times \min(I,J)}$ is a vector of all 1's of size $1 \times  \min(I,J)$), we can synthesize $\Xt$ as $\Xt = \left\llbracket{\bf A},{\bf B},{\bf C}\right\rrbracket$. Noting that ${\bf A}_k$ and ${\bf B}_k$ have at most $\min(I,J)$ columns, it follows that we need at most $\min(IK,JK)$ columns in ${\bf A}$, ${\bf B}$, ${\bf C}$. Using `role symmetry' (switching the names of the `ways' or `modes'), it follows that we in fact need at most $\min(IJ,JK,IK)$ columns in ${\bf A}$, ${\bf B}$, ${\bf C}$, and thus the rank of any $I \times J \times K$ three-way array $\Xt$ is bounded above by $\min(IJ,JK,IK)$. Another (cleaner but perhaps less intuitive) way of arriving at this result is as follows. Looking at the $IJ \times K$ matrix unfolding
\[
{\bf X}_3 := \left[\text{vec}({\bf X}(:,:,1)), \cdots, \text{vec}({\bf X}(:,:,K))\right] = ({\bf B} \odot {\bf A}){\bf C}^T,
\]
and noting that $({\bf B} \odot {\bf A})$ is $IJ \times F$ and ${\bf C}^T$ is $F \times K$, the issue is what is the maximum inner dimension $F$ that we need to be able to express an {\em arbitrary} $IJ \times K$ matrix  ${\bf X}_3$ on the left (corresponding to an arbitrary $I \times J \times K$ tensor $\Xt$) as a Khatri--Rao product of two $I \times F$, $J \times F$ matrices, times another $F \times K$ matrix? The answer can be seen as follows:
\[
\text{vec}({\bf X}(:,:,k)) =  \text{vec}({\bf A}_k {\bf B}_k^T) = ({\bf B}_k \odot {\bf A}_k) {\bf 1},
\]
and thus we need at most $\min(I,J)$ columns per column of ${\bf X}_3$, which has $K$ columns -- QED.

This upper bound on tensor rank is important because it spells out that tensor rank is finite, and not much larger than the equations-versus-unknowns bound that we derived earlier. On the other hand, it is also useful to have lower bounds on rank. Towards this end, concatenate the frontal slabs one next to each other
\[
\left[ {\bf X}(:,:,1) \cdots {\bf X}(:,:,K) \right] = {\bf A} \left[ {\bf D}_k({\bf C}) {\bf B}^T \cdots {\bf D}_k({\bf C}) {\bf B}^T \right]
\]
since ${\bf X}(:,:,k) = {\bf A} {\bf D}_k({\bf C}) {\bf B}^T$. Note that ${\bf A}$ is $I \times F$, and it follows that $F$ must be greater than or equal to the dimension of the column span of $\Xt$, i.e., the number of linearly independent columns needed to synthesize any of the $JK$ columns $\Xt(:,j,k)$ of $\Xt$. By role symmetry, and upon defining
\[
R_1({\Xt}) := \text{dim} ~ \text{colspan}({\Xt}) := \text{dim} ~ \text{span} \left\{ \Xt(:,j,k) \right\}_{\forall j, k},
\]
\[
R_2({\Xt}) := \text{dim} ~ \text{rowspan}({\Xt}) := \text{dim} ~ \text{span} \left\{ \Xt(i,:,k) \right\}_{\forall i, k},
\]
\[
R_3({\Xt}) := \text{dim} ~ \text{fiberspan}({\Xt}) := \text{dim} ~ \text{span} \left\{ \Xt(i,j,:) \right\}_{\forall i, j},
\]
we have that $F \geq \max(R_1({\Xt}),R_2({\Xt}),R_3({\Xt}))$. $R_1({\Xt})$ is the {\em mode-1 or mode-A rank} of $\Xt$, and likewise $R_2({\Xt})$ and $R_3({\Xt})$ are the {\em mode-2 or mode-B} and {\em mode-3 or mode-C} ranks of $\Xt$, respectively. $R_1({\Xt})$ is sometimes called the {\em column rank}, $R_2({\Xt})$ the {\em row rank}, and $R_3({\Xt})$ the {\em fiber} or {\em tube rank} of $\Xt$. The triple $(R_1({\Xt}),R_2({\Xt}),R_3({\Xt}))$ is called the {\em multilinear rank} of $\Xt$.

At this point it is worth noting that, for matrices we have that column rank = row rank = rank, i.e., in our current notation, for a matrix ${\bf M}$ (which can be thought of as an $I \times J \times 1$ third-order tensor) it holds that
$R_1({\bf M}) = R_2({\bf M}) = \text{rank}({\bf M})$, but for nontrivial tensors $R_1({\Xt})$, $R_2({\Xt})$, $R_3({\Xt})$ and $\text{rank}({\Xt})$ are in general different, with $\text{rank}({\Xt}) \geq \max(R_1({\Xt}),R_2({\Xt}),R_3({\Xt}))$. Since $R_1({\Xt}) \leq I$, $R_2({\Xt}) \leq J$, $R_3({\Xt}) \leq K$, it follows that
$\text{rank}({\bf M}) \leq \min(I,J)$ for matrices but $\text{rank}({\Xt})$ {\em can be} $>$ $\max(I,J,K)$ for tensors.

Now, going back to the first way of explaining the upper bound we derived on tensor rank, it should be clear that we only need
$\min(R_1({\Xt}),R_2({\Xt}))$ rank-1 factors to describe any given frontal slab of the tensor, and so we can describe all slabs with at most $\min(R_1({\Xt}),R_2({\Xt}))K$ rank-1 factors; with a little more thought, it is apparent that $\min(R_1({\Xt}),R_2({\Xt})) R_3({\Xt})$ is enough. Appealing to role symmetry, it then follows that $F \leq \min(R_1({\Xt}) R_2({\Xt}), R_2({\Xt}) R_3({\Xt}), R_1({\Xt}) R_3({\Xt}))$, where $F:=\text{rank}({\Xt})$. Dropping the explicit dependence on ${\Xt}$ for brevity, we have
\[
\max(R_1,R_2,R_3) \leq F \leq \min(R_1 R_2, R_2 R_3, R_1 R_3).
\]

\subsection{Typical, generic, and border rank of tensors}

Consider a $2 \times 2 \times 2$ tensor ${\Xt}$ whose elements are i.i.d., drawn from the standard normal distribution ${\cal N}(0,1)$ ($\Xt=\text{randn(2,2,2)}$ in Matlab). The rank of ${\Xt}$ {\em over the real field}, i.e., when we consider
\[
\Xt = \sum_{f=1}^F {\bf a}_f \circledcirc  {\bf b}_f \circledcirc  {\bf c}_f,~~~ {\bf a}_f \in \Real^{2\times 1}, {\bf b}_f \in \Real^{2\times 1}, {\bf c}_f \in \Real^{2\times 1}, \forall f
\]
is \cite{Bergqvist2013663}
\[
\text{rank}(\Xt) = \left\{ \begin{array}{ll}
                             2, & \text{with probability}~ \frac{\pi}{4} \\
                             3, &  \text{with probability}~ 1-\frac{\pi}{4}
                           \end{array} \right.
\]
This is very different from the matrix case, where $\text{rank}(\text{randn(2,2)})=2$ with probability 1. To make matters more (or less) curious, the rank of the same $\Xt=\text{randn(2,2,2)}$ is in fact 2 with probability 1 when we instead consider decomposition over the complex field, i.e., using ${\bf a}_f \in \Complex^{2\times 1}, {\bf b}_f \in \Complex^{2\times 1}, {\bf c}_f \in \Complex^{2\times 1}, \forall f$. As another example \cite{Bergqvist2013663}, for $\Xt=\text{randn(3,3,2)}$,
\[
\text{rank}(\Xt) = \left\{ \begin{array}{ll} \begin{array}{ll}
                             3, & \text{with probability}~ \frac{1}{2} \\
                             4, &  \text{with probability}~ \frac{1}{2}
                           \end{array}, & \text{over}~\Real;\\\\
                           \begin{array}{ll}
                             3, & \text{with probability}~ 1
                           \end{array}, & \text{over}~\Complex.
                           \end{array} \right.
\]
To understand this behavior, consider the $2 \times 2 \times 2$ case. We have two $2 \times 2$ slabs, ${\bf S}_1:={\bf X}(:,:,1)$ and ${\bf S}_2:={\bf X}(:,:,2)$. For $\Xt$ to have $\text{rank}(\Xt)=2$, we must be able to express these two slabs as
\[
{\bf S}_1 = {\bf A} {\bf D}_1({\bf C}) {\bf B}^T,~\text{and}~~~{\bf S}_2 = {\bf A} {\bf D}_2({\bf C}) {\bf B}^T,
\]
for some $2 \times 2$ real or complex matrices ${\bf A}$, ${\bf B}$, and ${\bf C}$, depending on whether we decompose over the real or the complex field. Now, if $\Xt=\text{randn(2,2,2)}$, then both ${\bf S}_1$ and ${\bf S}_2$ are nonsingular matrices, almost surely (with probability 1). It follows from the above equations that ${\bf A}$, ${\bf B}$, ${\bf D}_1({\bf C})$, and ${\bf D}_2({\bf C})$ must all be nonsingular too. Denoting $\tilde {\bf A} := {\bf A} {\bf D}_1({\bf C})$, ${\bf D} := ({\bf D}_1({\bf C}))^{-1} {\bf D}_2({\bf C})$, it follows that ${\bf B}^T = (\tilde {\bf A})^{-1} {\bf S}_1$, and substituting in the second equation we obtain ${\bf S}_2 = \tilde {\bf A} {\bf D} (\tilde {\bf A})^{-1} {\bf S}_1$, i.e., we obtain the eigen-problem
\[
{\bf S}_2 {\bf S}_1^{-1} = \tilde {\bf A} {\bf D} (\tilde {\bf A})^{-1}.
\]
It follows that for $\text{rank}(\Xt)=2$ over $\Real$, the matrix ${\bf S}_2 {\bf S}_1^{-1}$ should have two {\em real} eigenvalues; but complex conjugate eigenvalues do arise with positive probability. When they do, we have $\text{rank}(\Xt)=2$ over $\Complex$, but $\text{rank}(\Xt) \geq 3$ over $\Real$ -- and it turns out that $\text{rank}(\Xt) = 3$ over $\Real$ is enough.

We see that the rank of a tensor for decomposition over $\Real$ is a random variable that can take more than one value with positive probability. These values are called {\em typical ranks}. For decomposition over $\Complex$ the situation is different: $\text{rank}(\text{randn(2,2,2)})=2$ with probability 1, so there is only one typical rank. When there is only one typical rank (that occurs with probability 1 then) we call it {\em generic rank}.

All these differences with the usual matrix algebra may be fascinating -- and they don't end here either. Consider
\[
\Xt = {\bf u} \circledcirc  {\bf u} \circledcirc  {\bf v} +  {\bf u} \circledcirc  {\bf v} \circledcirc  {\bf u} + {\bf v} \circledcirc  {\bf u} \circledcirc  {\bf u},
\]
where $||{\bf u}||=||{\bf v}||=1$, with $|<{\bf u},{\bf v}>| \neq 1$, where $<\cdot,\cdot>$ stands for the inner product. This tensor has rank equal to 3, however it can be {\em arbitrarily well} approximated \cite{CEM:CEM584} by the following sequence of rank-two tensors (see also \cite{Kolda09tensordecompositions}):
\[
\Xt_n = n ({\bf u} + \frac{1}{n} {\bf v}) \circledcirc  ({\bf u} + \frac{1}{n} {\bf v}) \circledcirc  ({\bf u} + \frac{1}{n} {\bf v}) - n {\bf u} \circledcirc  {\bf u} \circledcirc  {\bf u}
\]
\[
= {\bf u} \circledcirc  {\bf u} \circledcirc  {\bf v} + {\bf u} \circledcirc  {\bf v} \circledcirc  {\bf u} + {\bf v} \circledcirc  {\bf u} \circledcirc  {\bf u} +
\]
\[
+ \frac{1}{n} {\bf v} \circledcirc  {\bf v} \circledcirc  {\bf u} + + \frac{1}{n} {\bf u} \circledcirc  {\bf v} \circledcirc  {\bf v} + \frac{1}{n^2} {\bf v} \circledcirc  {\bf v} \circledcirc  {\bf v},
\]
so
\[
\Xt_n = \Xt + \text{terms that vanish as~} n \rightarrow \infty.
\]
$\Xt$ has rank equal to 3, but {\em border rank} equal to 2 \cite{ComonTensorsBriefIntro}. It is also worth noting that $\Xt_n$ contains two diverging rank-1 components that progressively cancel each other approximately, leading to ever-improving approximation of $\Xt$. This situation is actually encountered in practice when fitting tensors of border rank lower than their rank. Also note that the above example shows clearly that the low-rank tensor approximation problem
\[
\min_{\left\{ {\bf a}_f, {\bf b}_f, {\bf c}_f \right\}_{f=1}^F} \left|\left| \Xt - \sum_{f=1}^F {\bf a}_f \circledcirc  {\bf b}_f \circledcirc  {\bf c}_f \right| \right|_F^2,
\]
is ill-posed in general, for there is no minimum if we pick $F$ equal to the border rank of $\Xt$ -- the set of tensors of a given rank is not closed. There are many ways to fix this ill-posedness, e.g., by adding constraints such as element-wise non-negativity of ${\bf a}_f, {\bf b}_f, {\bf c}_f$ \cite{CEM:CEM1244,DBLP:journals/corr/QiCL14} in cases where $\Xt$ is element-wise non-negative (and these constraints are physically meaningful), or orthogonality \cite{RePEc:spr:psycho:v:73:y:2008:i:3:p:431-439} -- any application-specific constraint that prevents terms from diverging while approximately canceling each other will do. An alternative is to add norm regularization to the cost function, such as $\lambda \left( ||{\bf A}||_F^2 + ||{\bf B}||_F^2 + ||{\bf C}||_F^2 \right)$. This can be interpreted as coming from a Gaussian prior on the sought parameter matrices; yet, if not properly justified, regularization may produce artificial results and a false sense of security.

Some useful results on maximal and typical rank for decomposition over $\Real$ are summarized in Tables~\ref{tab1}, \ref{tab2}, \ref{tab3} -- see \cite{Kolda09tensordecompositions,landsberg-tensors-2012} for more results of this kind, as well as original references.
Notice that, for a tensor of a given size, there is always one typical rank over $\Complex$, which is therefore generic. For $I_1 \times I_2 \times \cdots \times I_N$ tensors, this generic rank is the value $\lceil \frac{\prod_{n=1}^{N} I_n}{\sum_{n=1}^{N} I_n -N +1} \rceil$ that can be expected from the equations-versus-unknowns reasoning, except for the so-called defective cases (i) $I_1 > \prod_{n=2}^{N} I_n - \sum_{n=2}^{N} (I_n -1)$ (assuming w.l.o.g. that the first dimension $I_1$ is the largest), (ii) the third-order case of dimension $(4,4,3)$, (iii) the third-order cases of dimension $(2p+1,2p+1,3)$, $p \in \mathbb{N}$, and (iv) the fourth-order cases of dimension $(p,p,2,2)$, $p \in \mathbb{N}$, where it is 1 higher \footnote{In fact this has been verified for $R \leq 55$, with the probability that a defective case has been overlooked less than $10^{-55}$, the limitations being a matter of computing power \cite{vannieuwenhoven2014randomized}.}. Also note that the typical rank may change when the tensor is constrained in some way; e.g., when the frontal slabs are symmetric, we have the results in Table~\ref{tab3}, so symmetry may restrict the typical rank. Also, one may be interested in symmetric or asymmetric rank decomposition (i.e., symmetric or asymmetric rank-1 factors) in this case, and therefore symmetric or regular rank. Consider, for example, a fully symmetric tensor, i.e., one such that $\Xt(i,j,k)=\Xt(i,k,j)=\Xt(j,i,k)=\Xt(j,k,i)=\Xt(k,i,j)=\Xt(k,j,i)$, i.e., its value is invariant to any permutation of the three indices (the concept readily generalizes to $N$-way tensors $\Xt(i_i,\cdots,i_N)$). Then the {\em symmetric rank} of $\Xt$ over $\Complex$ is defined as the minimum $R$ such that $\Xt$ can be written as
$\Xt = \sum_{r=1}^R {\bf a}_r \circledcirc  {\bf a}_r \circledcirc  \cdots \circledcirc  {\bf a}_r$, where the outer product involves $N$ copies of vector ${\bf a}_r$, and ${\bf A} := \left[{\bf a}_1, \cdots, {\bf a}_R \right] \in \Complex^{I \times R}$. It has been shown that this symmetric rank equals $\lceil \binom{I+N-1}{N} / I \rceil$ almost surely except in the defective cases $(N,I) = (3,5), (4,3), (4,4), (4,5)$, where it is 1 higher  \cite{alexander1995polynomial}. Taking $N=3$ as a special case, this formula gives $\frac{(I+1)(I+2)}{6}$. We also remark that constraints such as nonnegativity of a factor matrix can strongly affect rank.

\begin{table}
	\centering
	
	\caption{Maximum attainable rank over $\Real$.}
	\label{tab1}
	\vspace{-5pt}
	\begin{tabular}{|l|c|}
		\hline
		Size & Maximum attainable rank over $\Real$\\
		\hline
		$I \times J \times 2$ & $\min(I,J)+\min(I,J,\lfloor\max(I,J)/2\rfloor)$ \\
		\hline
		$2 \times 2 \times 2$ & $3$\\
		\hline
		$3 \times 3 \times 3$ & $5$\\
		\hline
	\end{tabular}
	
	\vspace{10pt}
	
	\caption{Typical rank over $\Real$}
	\label{tab2}
	\vspace{-5pt}
	\begin{tabular}{|l|l|}
		\hline
		Size & Typical ranks over $\Real$\\
		\hline
		$I \times I \times 2$ & $\left\{I,I+1\right\}$ \\
		\hline
		$I \times J \times 2$, $I > J$ & $\min(I,2J)$ \\
		\hline
		$I \times J \times K$, $I > JK$ & $JK$ \\
		\hline
	\end{tabular}
	
	\vspace{10pt}
	
	\caption{Symmetry may affect typical rank.}
	\label{tab3}
	\vspace{-5pt}
	\begin{tabular}{|l|l|l|}
		\hline
		Size & Typical ranks, $\Real$ & Typical ranks, $\Real$\\
		& partial symmetry & no symmetry\\
		\hline
		$I \times I \times 2$ & $\left\{I,I+1\right\}$ & $\left\{I,I+1\right\}$\\
		\hline
		$9 \times 3 \times 3$ & $6$ & $9$ \\
		\hline
	\end{tabular}
	
	\vspace{-10pt}
\end{table}

Given a particular tensor $\Xt$, determining $\text{rank}(\Xt)$ is NP-hard \cite{Hastad:1990:TRN:96134.95990}. There is a well-known example of a $9 \times 9 \times 9$ tensor\footnote{See the insert entitled {\em Tensors as bilinear operators}.} whose rank (border rank) has been bounded between $19$ and $23$ ($14$ and $21$, resp.), but has not been pinned down yet. At this point, the reader may rightfully wonder whether this is an issue in practical applications of tensor decomposition, or merely a mathematical curiosity? The answer is not black-and-white, but rather nuanced: In most applications, one is really interested in fitting a model that has the ``essential'' or ``meaningful'' number of components that we usually call the (useful signal) rank, which is usually much less than the actual rank of the tensor that we observe, due to noise and other imperfections. Determining this rank is challenging, even in the matrix case. There exist heuristics and a few more disciplined approaches that can help, but, at the end of the day, the process generally involves some trial-and-error.

An exception to the above is certain applications where the tensor actually models a mathematical object (e.g., a multilinear map) rather than ``data''. A good example of this is Strassen's matrix multiplication tensor -- see the insert entitled {\em Tensors as bilinear operators}. A vector-valued (multiple-output) bilinear map can be represented as a third-order tensor, a vector-valued trilinear map as a fourth-order tensor, etc. When working with tensors that represent such maps, one is usually interested in exact factorization, and thus the mathematical rank of the tensor. The border rank is also of interest in this context, when the objective is to obtain a very accurate approximation (say, to within machine precision) of the given map. There are other applications (such as {\em factorization machines}, to be discussed later) where one is forced to approximate a general multilinear map in a possibly crude way, but then the number of components is determined by other means, not directly related to notions of rank. 

Consider again the three matrix views of a given tensor $\Xt$ in \eqref{mu1}, \eqref{mu2}, \eqref{mu3}. Looking at ${\bf X}_1$ in \eqref{mu3}, note that if $({\bf C} \odot {\bf B})$ is full column rank and so is ${\bf A}$, then $\text{rank}({\bf X}_1)=F=\text{rank}(\Xt)$. Hence this matrix view of $\Xt$ is rank-revealing. For this to happen it is {\em necessary} (but not sufficient) that $JK \geq F$, and $I \geq F$, so $F$ has to be small: $F \leq \min(I,JK)$. Appealing to role symmetry of the three modes, it follows that $F \leq \max(\min(I,JK),\min(J,IK),\min(K,IJ))$ is necessary to have a rank-revealing matricization of the tensor. However, we know that the (perhaps unattainable) upper bound on $F=\text{rank}(\Xt)$ is $F \leq \min(IJ,JK,IK)$, hence for matricization to reveal rank, it must be that the rank is really small relative to the upper bound. More generally, what holds for sure, as we have seen, is that $F=\text{rank}(\Xt) \geq \max(\text{rank}({\bf X}_1), \text{rank}({\bf X}_2), \text{rank}({\bf X}_3))$.

\begin{table}[ht]
\colorbox{lightgray}{\begin{minipage}{.48\textwidth}\normalsize
{\bf Tensors as bilinear operators:} When multiplying two $2 \times 2$ matrices ${\bf M}_1$, ${\bf M}_2$, every element of the $2 \times 2$ result ${\bf P} = {\bf M}_1 {\bf M}_2$ is a bilinear form $\text{vec}({\bf M}_1)^T {\bf X}_k \text{vec}({\bf M}_2)$, where ${\bf X}_k$ is $4 \times 4$, holding the coefficients that produce the $k$-th element of $\text{vec}({\bf P})$, $k \in \left\{1,2,3,4\right\}$. Collecting the slabs $\left\{{\bf X}_k\right\}_{k=1}^4$ into a $4 \times 4 \times 4$ tensor $\Xt$, matrix multiplication can be implemented by means of evaluating $4$ bilinear forms involving the $4$ frontal slabs of $\Xt$. Now suppose that $\Xt$ admits a rank decomposition involving matrices ${\bf A}$, ${\bf B}$, ${\bf C}$ (all $4 \times F$ in this case). Then any element of ${\bf P}$ can be written as $\text{vec}({\bf M}_1)^T {\bf A} {\bf D}_k({\bf C}) {\bf B}^T \text{vec}({\bf M}_2)$. Notice that ${\bf B}^T \text{vec}({\bf M}_2)$ can be computed using $F$ inner products, and the same is true for $\text{vec}({\bf M}_1)^T {\bf A}$. If the elements of ${\bf A}$, ${\bf B}$, ${\bf C}$ take values in $\left\{0,\pm1\right\}$ (as it turns out, this is true for the ``naive'' as well as the minimal decomposition of $\Xt$), then these inner products require no multiplication -- only selection, addition, subtraction. Letting ${\bf u}^T := \text{vec}({\bf M}_1)^T {\bf A}$ and ${\bf v} := {\bf B}^T \text{vec}({\bf M}_2)$, it remains to compute ${\bf u}^T  {\bf D}_k({\bf C}) {\bf v} = \sum_{f=1}^F {\bf u}(f) {\bf v}(f) {\bf C}(k,f)$, $\forall k \in \left\{1,2,3,4\right\}$. This entails $F$ multiplications to compute the products $\left\{{\bf u}(f) {\bf v}(f)\right\}_{f=1}^F$ -- the rest is all selections, additions, subtractions if ${\bf C}$ takes values in $\left\{0,\pm1\right\}$. Thus $F$ multiplications suffice to multiply two $2 \times 2$ matrices -- and it so happens, that the rank of Strassen's $4 \times 4 \times 4$ tensor is $7$, so $F=7$ suffices. Contrast this to the ``naive'' approach which entails $F=8$ multiplications (or, a ``naive'' decomposition of Strassen's tensor involving ${\bf A}$, ${\bf B}$, ${\bf C}$ all of size $4 \times 8$).
\end{minipage}}
\vspace{-8pt}
\end{table}

Before we move on, let us extend what we have done so far to the case of $N$-way tensors. Let us start with $4$-way tensors, whose rank decomposition can be written as
\[
\Xt(i,j,k,\ell) = \sum_{f=1}^{F} {\bf a}_f(i) {\bf b}_f(j) {\bf c}_f(k) {\bf e}_f(\ell), \forall \left\{ \begin{array}{l}
i \in \left\{1, \cdots, I \right\}\\
j \in \left\{1, \cdots, J \right\}\\
k \in \left\{1, \cdots, K \right\}\\
\ell \in \left\{1, \cdots, L \right\}\\
\end{array}
\right.
\]
\[
\text{or, equivalently~} \Xt = \sum_{f=1}^{F} {\bf a}_f \circledcirc  {\bf b}_f \circledcirc  {\bf c}_f \circledcirc  {\bf e}_f.
\]
Upon defining ${\bf A} :=\left[{\bf a}_1,\cdots,{\bf a}_F\right]$, ${\bf B} :=\left[{\bf b}_1,\cdots,{\bf b}_F\right]$, ${\bf C} :=\left[{\bf c}_1,\cdots,{\bf c}_F\right]$, ${\bf E} :=\left[{\bf e}_1,\cdots,{\bf e}_F\right]$, we may also write
\[
\Xt(i,j,k,\ell) = \sum_{f=1}^{F} {\bf A}(i,f) {\bf B}(j,f) {\bf C}(k,f) {\bf E}(\ell,f),
\]
and we sometimes also use $\Xt(i,j,k,\ell) = \sum_{f=1}^{F} a_{i,f} b_{j,f} c_{k,f} e_{\ell,f}$. Now consider $\Xt(:,:,:,1)$, which is a third-order tensor. Its elements are given by
\[
\Xt(i,j,k,1) = \sum_{f=1}^{F} a_{i,f} b_{j,f} c_{k,f} e_{1,f},
\]
where we notice that the `weight' $e_{1,f}$ is independent of $i,j,k$, it only depends on $f$, so we would normally absorb it in, say, $a_{i,f}$, if we only had to deal with $\Xt(:,:,:,1)$ -- but here we don't, because we want to model $\Xt$ as a whole. Towards this end, let us vectorize $\Xt(:,:,:,1)$ into an $IJK \times 1$ vector
\[
\text{vec}\left( \text{vec}\left(\Xt(:,:,:,1)\right)\right) = ({\bf C} \odot {\bf B} \odot {\bf A}) ({\bf E}(1,:))^T,
\]
where the result on the right should be contrasted with $({\bf C} \odot {\bf B} \odot {\bf A}) {\bf 1}$, which would have been the result had we absorbed $e_{1,f}$ in $a_{i,f}$. Stacking one next to each other the vectors corresponding to $\Xt(:,:,:,1)$, $\Xt(:,:,:,2)$, $\cdots$, $\Xt(:,:,:,L)$, we obtain $({\bf C} \odot {\bf B} \odot {\bf A}) {\bf E}^T$; and after one more $\text{vec}(\cdot)$ we get $({\bf E} \odot {\bf C} \odot {\bf B} \odot {\bf A}) {\bf 1}$.

\begin{table}[ht]
\colorbox{lightgray}{\begin{minipage}{.48\textwidth}\normalsize
{\bf Multiplying two complex numbers:} Another interesting example involves the multiplication of two complex numbers -- each represented as a $2 \times 1$ vector comprising its real and imaginary part. Let $j := \sqrt{-1}$, $x=x_r+jx_i \leftrightarrow {\bf x}:=[x_r ~ x_i]^T$, $y=y_r+jy_i \leftrightarrow {\bf y}:=[y_r ~ y_i]^T$. Then $xy = (x_r y_r - x_i y_i) + j (x_r y_i + x_r y_r) =: z_r + j z_i$. It appears that $4$ real multiplications are needed to compute the result; but in fact $3$ are enough. To see this, note that the $2 \times 2 \times 2$ multiplication tensor in this case has frontal slabs
\[
\Xt(:,:,1)=
\left[
\begin{array}{lr}
1 & 0 \\
0 & -1
\end{array}
\right],~~~
\Xt(:,:,2)=
\left[
\begin{array}{lr}
0 & 1 \\
1 & 0
\end{array}
\right],
\]
whose rank is at most $3$, because
\[
\left[
\begin{array}{lr}
1 & 0 \\
0 & -1
\end{array}
\right] = \left[
\begin{array}{c}
1\\
0
\end{array}
\right] \left[
\begin{array}{c}
1\\
0
\end{array}
\right]^T - \left[
\begin{array}{c}
0\\
1
\end{array}
\right] \left[
\begin{array}{c}
0\\
1
\end{array}
\right]^T,
\]
and
\[
\left[
\begin{array}{lr}
0 & 1 \\
1 & 0
\end{array}
\right] = \left[
\begin{array}{c}
1\\
1
\end{array}
\right] \left[
\begin{array}{c}
1\\
1
\end{array}
\right]^T - \left[
\begin{array}{c}
1\\
0
\end{array}
\right] \left[
\begin{array}{c}
1\\
0
\end{array}
\right]^T -
\left[
\begin{array}{c}
0\\
1
\end{array}
\right] \left[
\begin{array}{c}
0\\
1
\end{array}
\right]^T,
\]
Thus taking
\[
{\bf A} = {\bf B} = \left[
                      \begin{array}{ccc}
                        1 & 0 & 1 \\
                        0 & 1 & 1 \\
                      \end{array}
                    \right], ~~ {\bf C} = \left[
                      \begin{array}{ccc}
                        1 & -1 & 0 \\
                        -1 & -1 & 1 \\
                      \end{array}
                    \right],
\]
we only need to compute $p_1 = x_r y_r$, $p_2 = x_i y_i$, $p_3 = (x_r + x_i)(y_r + y_i)$, and then $z_r = p_1 - p_2$, $z_i = p_3 - p_1 - p_2$. Of course, we did not need tensors to invent these computation schedules -- but tensors can provide a way of obtaining them.
\end{minipage}}
\vspace{-15pt}
\end{table}

It is also easy to see that, if we fix the last two indices and vary the first two, we get
\[
\Xt(:,:,k,\ell) = {\bf A} {\bf D}_k({\bf C}) {\bf D}_{\ell}({\bf E}) {\bf B}^T,
\]
so that
\[
\text{vec}\left(\Xt(:,:,k,\ell)\right)=({\bf B} \odot {\bf A})({\bf C}(k,:) * {\bf E}(\ell,:))^T,
\]
where $*$ stands for the Hadamard (element-wise) matrix product. If we now stack these vectors one next to each other, we obtain the following ``balanced'' matricization\footnote{An alternative way to obtain this is to start from $({\bf E} \odot {\bf C} \odot {\bf B} \odot {\bf A}) {\bf 1}$ $=$ $(({\bf E} \odot {\bf C}) \odot ({\bf B} \odot {\bf A})) {\bf 1}$ $=$ vectorization of $({\bf B} \odot {\bf A}) ({\bf E} \odot {\bf X})^T$, by the vectorization property of $\odot$.} of the $4$-th order tensor $\Xt$:
\[
{\bf X}_b = ({\bf B} \odot {\bf A}) ({\bf E} \odot {\bf C})^T.
\]
This is interesting because the inner dimension is $F$, so if ${\bf B} \odot {\bf A}$ and ${\bf E} \odot {\bf C}$ are both full column rank, then $F=\text{rank}({\bf X}_b)$, i.e., the matricization ${\bf X}_b$ is {\em rank-revealing} in this case. Note that full column rank of ${\bf B} \odot {\bf A}$ and ${\bf E} \odot {\bf C}$ requires $F \leq \min(IJ,KL)$, which seems to be a more relaxed condition than in the three-way case. The catch is that, for $4$-way tensors, the corresponding upper bound on tensor rank (obtained in the same manner as for third-order tensors) is $F \leq \min(IJK,IJL,IKL,JKL)$ -- so the upper bound on tensor rank increases as well. Note that the boundary where matricization can reveal tensor rank remains off by one order of magnitude relative to the upper bound on rank, when $I=J=K=L$. In short: matricization can generally reveal the tensor rank in low-rank cases only.

Note that once we have understood what happens with $3$-way and $4$-way tensors, generalizing to $N$-way tensors for any integer $N \geq 3$ is easy. For a general $N$-way tensor, we can write it in scalar form as
\[
\Xt(i_1,\cdots,i_N)=\sum_{f=1}^F {\bf a}_f^{(1)}(i_1) \cdots {\bf a}_f^{(N)}(i_N) =
\sum_{f=1}^F a_{i_1,f}^{(1)} \cdots a_{i_N,f}^{(N)},
\]
and in (combinatorially!) many different ways, including
\[
{\bf X}_N = ({\bf A}_{N-1} \odot \cdots \odot {\bf A}_1) {\bf A}_N^T \rightarrow \text{vec}({\bf X}_N) = ({\bf A}_N \odot \cdots \odot {\bf A}_1) {\bf 1}.
\]
We sometimes also use the shorthand $\text{vec}({\bf X}_N) = \left(\odot_{n=N}^1 {\bf A}_n\right) {\bf 1}$, 
where $\text{vec}(\cdot)$ is now a compound operator, and the order of vectorization only affects the ordering of the factor matrices in the Khatri--Rao product.

\section{Uniqueness, demystified} \label{sect:uniq}

We have already emphasized what is perhaps the most significant advantage of low-rank decomposition of third- and higher-order tensors versus low-rank decomposition of matrices (second-order tensors): namely, the former is essentially unique under mild conditions, whereas the latter is never essentially unique, unless the rank is equal to one, or else we impose additional constraints on the factor matrices. The reason why uniqueness happens for tensors but not for matrices may seem like a bit of a mystery at the beginning. The purpose of this section is to shed light in this direction, by assuming more stringent conditions than necessary to enable simple and insightful proofs. First, a concise definition of essential uniqueness.

\begin{definition}
	Given a tensor $\Xt$ of rank $F$, we say that its CPD is {\em essentially unique} if the $F$ rank-1 terms in its decomposition (the outer products or ``chicken feet'') in Fig. \ref{fig-nikos2} are unique, i.e., there is no other way to decompose $\Xt$ for the given number of terms. Note that we can of course permute these terms without changing their sum, hence there exists an inherently unresolvable permutation ambiguity in the rank-1 tensors. If $\Xt = \left\llbracket{\bf A},{\bf B},{\bf C}\right\rrbracket$, with ${{\bf A}}:~I \times F$, ${{\bf B}}:~J \times F$, and ${{\bf C}}:~K \times F$, then essential uniqueness means that ${\bf A}$, ${\bf B}$, and ${\bf C}$ are unique up to a common permutation and scaling / counter-scaling of columns, meaning that if $\Xt = \left\llbracket {\bar {\bf A}},{\bar {\bf B}},{\bar {\bf C}}\right\rrbracket$, for some ${\bar {\bf A}}:~I \times F$, ${\bar {\bf B}}:~J \times F$, and ${\bar {\bf C}}:~K \times F$, then there exists a permutation
	matrix $\bPi$ and diagonal scaling matrices ${\bLambda}_{1},{\bLambda}_{2},{\bLambda}_{3}$ such that
	\[
	{\bar {\bf A}} = {\bf A} {\bPi} {\bLambda}_{1},~ {\bar {\bf B}} =
	{\bf B} {\bPi} {\bLambda}_{2},~ {\bar {\bf C}} = {\bf C} {\bPi}
	{\bLambda}_{3},~ {\bLambda}_{1} {\bLambda}_{2} {\bLambda}_{3} = {\bf I}.
	\]
\end{definition}
\begin{remark}
	Note that if we under-estimate the true rank $F=\text{rank}(\Xt)$, it is impossible to fully decompose the given tensor using $R < F$ terms by definition. If we use $R > F$, uniqueness cannot hold unless we place conditions on ${\bf A}$, ${\bf B}$, ${\bf C}$. In particular, for uniqueness it is necessary that each of the matrices ${\bf A} \odot {\bf B}$, ${\bf B} \odot {\bf C}$ and ${\bf C} \odot {\bf A}$ is full column rank. Indeed, if for instance ${\bf a}_R \otimes {\bf b}_R = \sum_{r=1}^{R-1} d_r {\bf a}_r \otimes {\bf b}_r$, then $\Xt = \left\llbracket{\bf A}(:,1:R-1),{\bf B}(:,1:R-1),{\bf C}(:,1:R-1) + {\bf c}_R {\bf d}^T \right\rrbracket$, with ${\bf d} = \left[d_1, \cdots, d_{R-1}\right]^T$, is an alternative decomposition that involves only $R-1$ rank-1 terms, i.e. the number of rank-1 terms has been overestimated.
\end{remark}
We begin with the simplest possible line of argument. Consider an $I \times J \times 2$ tensor $\Xt$ of rank $F \leq \min(I,J)$. We know that the maximal rank of an $I \times J \times 2$ tensor over $\Real$ is $\min(I,J)+\min(I,J,\lfloor\max(I,J)/2\rfloor)$, and typical rank is $\min(I,2J)$ when $I > J$, or $\left\{I,I+1\right\}$ when $I=J$ (see Tables \ref{tab1}, \ref{tab2}) -- so here we purposefully restrict ourselves to low-rank tensors (over $\Complex$ the argument is more general).

Let us look at the two frontal slabs of $\Xt$. Since $\text{rank}(\Xt)=F$, it follows that
\[
{\bf X}^{(1)} := \Xt(:,:,1)={\bf A} {\bf D}_1({\bf C}) {\bf B}^T,
\]
\[
{\bf X}^{(2)} := \Xt(:,:,2)={\bf A} {\bf D}_2({\bf C}) {\bf B}^T,
\]
where ${\bf A}$, ${\bf B}$, ${\bf C}$ are $I \times F$, $J \times F$, and $2 \times F$, respectively. Let us assume that the multilinear rank of $\Xt$ is $(F,F,2)$, which implies that
$\text{rank}\left({\bf A}\right)=\text{rank}\left({\bf B}\right)=F$.
Now define the pseudo-inverse ${\bf E} := ({\bf B}^T)^\dagger$. It is clear that the columns of ${\bf E}$ are generalized eigenvectors of the matrix pencil $({\bf X}^{(1)}, {\bf X}^{(2)})$:
\[
{\bf X}^{(1)} {\bf e}_f = c_{1,f} {\bf a}_f, ~~~
{\bf X}^{(2)} {\bf e}_f = c_{2,f} {\bf a}_f.
\]
(In the case $I=J$ and assuming that ${\bf X}^{(2)}$ is full rank, the Generalized EVD (GEVD) is algebraically equivalent with the basic EVD ${\bf X}^{(2)^{-1}} {\bf X}^{(1)} = {\bf B}^{-T} {\bf D} {\bf B}^{T}$ where ${\bf D} := \mbox{diag}(c_{1,1}/c_{2,1}, \cdots, c_{1,F}/c_{2,F})$; however, there are numerical differences.) For the moment we assume that the generalized eigenvalues are distinct, i.e. no two columns of ${\bf C}$ are proportional.
There is freedom to scale the generalized eigenvectors (they remain generalized eigenvectors), and obviously one cannot recover the order of the columns of ${\bf E}$. This means that there is permutation and scaling ambiguity in recovering ${\bf E}$. That is, what we do recover is actually ${\widetilde {\bf E}} = {\bf E} \bPi \bLambda$, where $\bPi$ is a permutation matrix and $\bLambda$ is a nonsingular diagonal scaling matrix. If we use ${\widetilde {\bf E}}$ to recover ${\bf B}$, we will in fact recover $({\widetilde {\bf E}}^T)^\dagger = {\bf B} \bPi \bLambda^{-1}$ -- that is, ${\bf B}$ up to the same column permutation and scaling. It is now easy to see that we can recover ${\bf A}$ and ${\bf C}$ by going back to the original equations for ${\bf X}^{(1)}$ and ${\bf X}^{(2)}$ and multiplying from the right by ${\widetilde {\bf E}} = \left[\tilde{{\bf e}}_{1}, \cdots, \tilde{{\bf e}}_{F}\right]$. Indeed, since $\tilde{{\bf e}}_{\tilde{f}} = \lambda_{\tilde{f},\tilde{f}} {\bf e}_f$ for some $f$, we obtain per column a rank-1 matrix $\left[{\bf X}^{(1)} \tilde{{\bf e}}_{\tilde{f}}, {\bf X}^{(2)} \tilde{{\bf e}}_{\tilde{f}} \right] = \lambda_{\tilde{f},\tilde{f}} {\bf a}_f {\bf c}_f^T$, from which the corresponding column of ${\bf A}$ and ${\bf C}$ can be recovered.

The basic idea behind this type of EVD-based uniqueness proof has been rediscovered many times under different disguises and application areas. We refer the reader to Harshman (who also credits Jenkins) \cite{Har70,Har72}. The main idea is similar to a well-known parameter estimation technique in signal processing, known as ESPRIT \cite{ESPRIT}. A detailed and streamlined EVD proof that also works when $I \neq J$ and $F < \min(I,J)$ and is constructive (suitable for implementation) can be found in the supplementary material. That proof owes much to ten Berge \cite{CEM:CEM1204} for the random slab mixing argument.

\begin{remark}
	Note that if we start by assuming that $\text{rank}(\Xt)=F$ over $\Real$, then, by definition, all the matrices involved will be real, and the eigenvalues in ${\bf D}$ will also be real. If $\text{rank}(\Xt)=F$ over $\Complex$, then whether ${\bf D}$ is real or complex is not an issue.
\end{remark}

Note that there are $F$ linearly independent eigenvectors {\em by construction} under our working assumptions. Next, if two or more of the generalized eigenvalues are identical, then linear combinations of the corresponding eigenvectors are also eigenvectors, corresponding to the same generalized eigenvalue. Hence distinct generalized eigenvalues are {\em necessary} for uniqueness.\footnote{Do note however that, even in this case, uniqueness breaks down only partially, as eigenvectors corresponding to other, distinct eigenvalues are still unique up to scaling.}
The generalized eigenvalues are distinct
if and only if {\em any two} columns of ${\bf C}$ are linearly independent -- in which case we say that ${\bf C}$ has {\em Kruskal rank} $\geq 2$. The definition of Kruskal rank is as follows.
\begin{definition}
	The {\em Kruskal rank} $k_{\bf A}$ of an $I \times F$ matrix ${\bf A}$ is the largest integer $k$ such that {\em any} $k$ columns of ${\bf A}$ are linearly independent. Clearly, $k_{\bf A} \leq r_{\bf A} := \text{rank}({\bf A}) \leq \min(I,F)$. Note that $k_{\bf A} = s_{\bf A} - 1 := \text{spark}({\bf A}) - 1$, where $\text{spark}({\bf A})$ is the minimum number of linearly dependent columns of ${\bf A}$ (when this is $\leq F$). Spark is a familiar notion in the compressed sensing literature, but Kruskal rank was defined earlier.
\end{definition}
We will see that the notion of Kruskal rank plays an important role in  uniqueness results in our context, notably in what is widely known as Kruskal's result (in fact, a ``common denominator'' implied by a number of results that Kruskal has proven in his landmark paper \cite{Kru77}).
Before that, let us summarize the result we have just obtained.

\begin{theorem}
Given $\Xt = \left\llbracket{\bf A},{\bf B},{\bf C}\right\rrbracket$, with ${{\bf A}}:~I \times F$, ${{\bf B}}:~J \times F$, and ${{\bf C}}:~2 \times F$, if $F > 1$ it is {\it necessary} for uniqueness of ${\bf A}$, ${\bf B}$ that $k_{\bf C}=2$. If, in addition $r_{{\bf A}} = r_{{\bf B}} = F$, then $\mbox{rank}(\Xt) = F$ and the decomposition of $\Xt$ is essentially unique.
\label{theor:GEVD}
\end{theorem}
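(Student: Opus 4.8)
The plan is to prove the two halves of the theorem separately, reusing the machinery already developed in the discussion preceding the statement. The first half (necessity of $k_{\bf C}=2$) I would argue by contrapositive, leaning on the GEVD picture already established. If $k_{\bf C} < 2$, then since we assume $F>1$ the only remaining possibility is $k_{\bf C}=1$, meaning two columns of ${\bf C}$ are proportional, say ${\bf c}_1 \parallel {\bf c}_2$. In the matrix-pencil formulation ${\bf X}^{(1)} {\bf e}_f = c_{1,f} {\bf a}_f$, ${\bf X}^{(2)} {\bf e}_f = c_{2,f} {\bf a}_f$, the generalized eigenvalue associated with column $f$ is the ratio $c_{1,f}/c_{2,f}$; two proportional columns of ${\bf C}$ produce a repeated generalized eigenvalue. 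As already noted in the text immediately before the theorem, when two generalized eigenvalues coincide, any linear combination of the corresponding eigenvectors is again an eigenvector for that eigenvalue, so the eigenspace is two-dimensional and the individual eigenvectors (hence the corresponding columns of ${\bf B}$, and then of ${\bf A}$) are not resolvable. This exhibits a genuinely different decomposition with the same number of terms, breaking uniqueness of ${\bf A}$, ${\bf B}$. So $k_{\bf C}=2$ is necessary.

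For the sufficiency half, I would assume $r_{\bf A}=r_{\bf B}=F$ (so ${\bf A}$, ${\bf B}$ are full column rank, which forces $F\le\min(I,J)$) together with $k_{\bf C}=2$, and simply invoke the constructive GEVD argument already spelled out in the paragraphs above the theorem. First I would observe that $\mbox{rank}(\Xt)=F$: the lower bound $\mbox{rank}(\Xt)\ge\max(R_1,R_2,R_3)$ from the bounds subsection, combined with $R_1=r_{\bf A}=F$, gives $\mbox{rank}(\Xt)\ge F$, while the presumed decomposition into $F$ rank-1 terms gives $\mbox{rank}(\Xt)\le F$. Next, full column rank of ${\bf B}$ guarantees ${\bf E}:=({\bf B}^T)^\dagger$ satisfies ${\bf B}^T{\bf E}={\bf I}_F$, so the columns ${\bf e}_f$ are the generalized eigenvectors of the pencil $({\bf X}^{(1)},{\bf X}^{(2)})$ exactly as derived; full column rank of ${\bf A}$ ensures these $F$ eigenvectors are the only ones and are linearly independent. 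Since $k_{\bf C}=2$ makes every pair of columns of ${\bf C}$ independent, all $F$ generalized eigenvalue pairs $(c_{1,f},c_{2,f})$ are pairwise non-proportional, hence the generalized eigenvalues are distinct, so each eigenvector is determined up to scale and the whole set up to permutation. Recovering ${\widetilde{\bf E}}={\bf E}\bPi\bLambda$ yields ${\bf B}$ up to the same permutation and scaling via $({\widetilde{\bf E}}^T)^\dagger$, and then forming $[{\bf X}^{(1)}\tilde{{\bf e}}_{\tilde f},{\bf X}^{(2)}\tilde{{\bf e}}_{\tilde f}]=\lambda_{\tilde f,\tilde f}\,{\bf a}_f{\bf c}_f^T$ recovers the matching columns of ${\bf A}$ and ${\bf C}$, rank-1 factor by rank-1 factor. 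Collecting the permutation into $\bPi$ and the per-column scalars into the three diagonal matrices $\bLambda_1,\bLambda_2,\bLambda_3$ with $\bLambda_1\bLambda_2\bLambda_3={\bf I}$ reproduces exactly the essential-uniqueness form of the Definition.

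The step I expect to require the most care is confirming that distinctness of the generalized eigenvalues is genuinely equivalent to $k_{\bf C}=2$, rather than merely implied by it, and handling the homogeneous nature of the eigenvalues correctly. Because ${\bf C}$ is $2\times F$, the "eigenvalue" attached to column $f$ is really the projective ratio $[c_{1,f}:c_{2,f}]$, and one must treat the cases $c_{2,f}=0$ (infinite eigenvalue) on the same footing as finite ones; the clean statement is that two eigenvalues coincide iff the corresponding $2\times 1$ columns of ${\bf C}$ are proportional, i.e. iff $k_{\bf C}<2$. I would therefore phrase the eigenvalue comparison in terms of proportionality of columns of ${\bf C}$ from the outset, which sidesteps the division-by-zero worry and makes the equivalence with $k_{\bf C}=2$ transparent. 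The remaining bookkeeping — that the scaling ambiguity in ${\bf E}$ propagates consistently to ${\bf A}$, ${\bf B}$, ${\bf C}$ and that the three diagonal scalings multiply to the identity — is routine and follows the template already laid out in the text, so I would not belabor it.
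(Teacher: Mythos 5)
Your proposal is correct and is essentially the paper's own proof: Theorem~\ref{theor:GEVD} is stated as a summary of the GEVD derivation immediately preceding it, with necessity of $k_{\bf C}=2$ coming from the fact that a repeated generalized eigenvalue admits mixed eigenvectors (hence mixed rank-1 terms, a genuinely different decomposition), and sufficiency from the constructive pencil-based recovery of ${\bf B}$ via ${\bf E}=({\bf B}^T)^\dagger$ and then of ${\bf A}$, ${\bf C}$ column by column — the only sub-case your dichotomy overlooks, $k_{\bf C}=0$ (a zero column of ${\bf C}$), breaks uniqueness even more trivially since the corresponding rank-1 term vanishes and ${\bf a}_f$, ${\bf b}_f$ become arbitrary. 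The one difference worth noting is how degeneracies (zero entries of ${\bf C}$, i.e., infinite eigenvalues) are treated: you handle them by reading the eigenvalues projectively as $[c_{1,f}:c_{2,f}]$, whereas the paper's detailed supplementary proof instead mixes the two slabs by a random nonsingular $2\times 2$ matrix $\bGamma$ (ten Berge's argument), which almost surely makes every entry of $\bGamma{\bf C}$ nonzero and the eigenvalue ratios pairwise distinct precisely when $k_{\bf C}=2$ — both devices accomplish the same repair.
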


For tensors that consist of $K\geq 2$ slices, one can consider a pencil of two random slice mixtures and infer the following result from Theorem \ref{theor:GEVD}.

\begin{theorem}
Given $\Xt = \left\llbracket{\bf A},{\bf B},{\bf C}\right\rrbracket$, with ${{\bf A}}:~I \times F$, ${{\bf B}}:~J \times F$, and ${{\bf C}}:~K \times F$, if $F > 1$ it is {\it necessary} for uniqueness of ${\bf A}$, ${\bf B}$ that $k_{\bf C} \geq 2$. If, in addition $r_{{\bf A}} = r_{{\bf B}} = F$, then $\mbox{rank}(\Xt) = F$ and the decomposition of $\Xt$ is essentially unique.
\label{theor:uniq2xfcr}
\end{theorem}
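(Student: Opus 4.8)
The plan is to split the claim into \emph{necessity} and \emph{sufficiency}, and to reduce the general $K\geq 2$ case to the two-slab result of Theorem~\ref{theor:GEVD} by forming random mixtures of slabs. Throughout I take $F>1$, since for $F=1$ the decomposition is trivially unique up to scaling and the condition $k_{\C}\geq 2$ is vacuous. For necessity I would argue exactly as in the two-slab case (and as in the Remark preceding Theorem~\ref{theor:GEVD}): if $k_{\C}=1$, then after a permutation two columns of $\C$ are proportional, say $\c_2=\gamma\,\c_1$, and the first two rank-1 terms collapse into a single third-mode direction, $(\a_1\b_1^T+\gamma\,\a_2\b_2^T)\circledcirc\c_1$. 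The matrix part here has rank at most two and therefore (when $\a_1,\a_2$ and $\b_1,\b_2$ are independent, the relevant case) admits infinitely many distinct rank-two factorizations into two rank-1 matrices; each yields a different pair $(\A,\B)$ synthesizing the same $\Xt$, so $\A,\B$ cannot be unique. Hence $k_{\C}\geq 2$ is necessary.

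For sufficiency, assume in addition $r_{\A}=r_{\B}=F$. I would pick a $2\times K$ mixing matrix $\M$ with i.i.d.\ entries from a continuous distribution and combine the frontal slabs. Using $\Xt(:,:,k)=\A\,\text{D}_k(\C)\,\B^T$, the combined slabs satisfy $\sum_k \M(m,k)\,\Xt(:,:,k)=\A\,\text{D}_m(\M\C)\,\B^T$, so the resulting $I\times J\times 2$ tensor is $\Yt=\left\llbracket\A,\B,\M\C\right\rrbracket$, with the \emph{same} $\A,\B$. The first two factors still have full column rank $F$, and I claim that almost surely $k_{\M\C}=2$. Theorem~\ref{theor:GEVD} then applies to $\Yt$ and yields $\text{rank}(\Yt)=F$ together with essential uniqueness of its decomposition.

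To transfer this back to $\Xt$: any competing decomposition $\Xt=\left\llbracket\bar{\A},\bar{\B},\bar{\C}\right\rrbracket$ with $R$ terms gives, under the same mixing, $\Yt=\left\llbracket\bar{\A},\bar{\B},\M\bar{\C}\right\rrbracket$ with $R$ terms; since $\text{rank}(\Yt)=F$ we get $R\geq F$, while the given decomposition has $F$ terms, so $\text{rank}(\Xt)=F$. Uniqueness of $\Yt$ then forces $\bar{\A}=\A\bPi\bLambda_1$ and $\bar{\B}=\B\bPi\bLambda_2$. Finally I recover the third factor from the rank-revealing unfolding $\X_3=(\B\odot\A)\C^T$: because $\A,\B$ are full column rank, $\B\odot\A$ is full column rank, so $\bar{\C}^T=(\bar{\B}\odot\bar{\A})^\dagger\X_3$ is determined; using $\bar{\B}\odot\bar{\A}=(\B\odot\A)\bPi\bLambda_1\bLambda_2$ one reads off $\bar{\C}=\C\bPi\bLambda_3$ with $\bLambda_1\bLambda_2\bLambda_3=\I$, which is the asserted essential uniqueness.

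The main obstacle is the genericity step $k_{\M\C}=2$. I would note that columns $\M\c_f,\M\c_g$ are dependent precisely when $\det\!\left(\M[\c_f,\c_g]\right)=0$; since $k_{\C}\geq 2$ the $K\times 2$ block $[\c_f,\c_g]$ has rank two, so this determinant is a polynomial in the entries of $\M$ that is not identically zero and therefore vanishes only on a set of measure zero. Taking the union over the finitely many $\binom{F}{2}$ pairs keeps the exceptional set of measure zero, so almost every $\M$ --- in particular at least one --- achieves $k_{\M\C}=2$, which is all the deterministic conclusion requires. A secondary point to verify carefully is that permutation and diagonal scaling of the shared factors commute through the Khatri--Rao product in the last step, so that the three scaling matrices indeed multiply to $\I$.
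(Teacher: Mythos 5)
Your proposal is correct and follows essentially the same route as the paper: the paper establishes Theorem~\ref{theor:uniq2xfcr} precisely by forming a pencil of two random slice mixtures and reducing to the two-slab result of Theorem~\ref{theor:GEVD}, exactly as you do. The details you supply --- the determinant/measure-zero argument showing $k_{{\bf M}{\bf C}}=2$ almost surely, and the recovery of ${\bf C}$ via the full-column-rank unfolding $({\bf B}\odot{\bf A}){\bf C}^T$ --- are the ``slightly more work'' that the paper delegates to a cited lemma rather than spelling out.
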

A probabilistic version of Theorem \ref{theor:uniq2xfcr} goes as follows.
\begin{theorem}
	Given $\Xt = \left\llbracket{\bf A},{\bf B},{\bf C}\right\rrbracket$, with ${{\bf A}}:~I \times F$, ${{\bf B}}:~J \times F$, and ${{\bf C}}:~K \times F$, if $I \geq F$, $J \geq F$ and $K \geq 2$, then $\mbox{rank}(\Xt) = F$ and the decomposition of $\Xt$ in terms of ${\bf A}$, ${\bf B}$, and ${\bf C}$ is essentially unique, {\em almost surely} (meaning that it is essentially unique for all $\Xt = \left\llbracket{\bf A},{\bf B},{\bf C}\right\rrbracket$ except for a set of measure zero with respect to the Lebesgue measure in $\Real^{(I+J+K-2)F}$ or $\Complex^{(I+J+K-2)F}$).
	\label{theor:uniq2xfcrgen}
\end{theorem}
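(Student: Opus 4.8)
The plan is to obtain Theorem~\ref{theor:uniq2xfcrgen} as a \emph{genericity upgrade} of the deterministic Theorem~\ref{theor:uniq2xfcr}. That result already guarantees $\text{rank}(\Xt)=F$ and essential uniqueness whenever $r_{\bf A}=r_{\bf B}=F$ and $k_{\bf C}\geq 2$, so all that remains is to show that these three algebraic conditions hold \emph{simultaneously} for every $({\bf A},{\bf B},{\bf C})$ outside a set of Lebesgue measure zero, under the standing size assumptions $I\geq F$, $J\geq F$, $K\geq 2$. The case $F=1$ is trivial (only the scaling/counter-scaling ambiguity is present), so I would assume $F\geq 2$ throughout.

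First I would dispatch the full-column-rank requirements on ${\bf A}$ and ${\bf B}$. Since $I\geq F$, the square submatrix formed by the top $F$ rows of ${\bf A}$ has a determinant that is a polynomial in the entries of ${\bf A}$; this polynomial is not identically zero (it equals $1$ when those rows form $\eye{F}$), so by the polynomial argument already invoked in the low-rank matrix approximation discussion, it vanishes only on a set of measure zero. Hence $r_{\bf A}=F$ almost surely, and likewise $r_{\bf B}=F$ almost surely because $J\geq F$.

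Next I would handle the Kruskal-rank condition $k_{\bf C}\geq 2$, i.e.\ that no two columns of ${\bf C}$ are proportional. For a fixed pair of indices $p\neq q$, the columns ${\bf c}_p,{\bf c}_q\in\Real^{K}$ (or $\Complex^{K}$) are linearly dependent precisely when all $2\times 2$ minors of $[{\bf c}_p,{\bf c}_q]$ vanish; since $K\geq 2$ one such minor can be made nonzero (take ${\bf c}_p={\bf e}_1$, ${\bf c}_q={\bf e}_2$), so the dependency locus for this pair is a proper algebraic variety and therefore measure zero. Taking the union over the $\binom{F}{2}$ pairs keeps the exceptional set measure zero, giving $k_{\bf C}\geq 2$ almost surely.

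Finally I would assemble the pieces: the ``bad'' set is the finite union of the three measure-zero sets above, hence itself measure zero, and on its complement Theorem~\ref{theor:uniq2xfcr} applies verbatim to yield $\text{rank}(\Xt)=F$ and essential uniqueness. The only point requiring care is the ambient space in which genericity is asserted: the statement measures it in $\Real^{(I+J+K-2)F}$ (resp.\ $\Complex^{(I+J+K-2)F}$), reflecting removal of the $2F$ scaling/counter-scaling degrees of freedom, rather than the full entry space $\Real^{(I+J+K)F}$. The clean way to handle this is to work directly on a slice that fixes the scaling -- e.g.\ pinning the first rows of ${\bf B}$ and ${\bf C}$ to all-ones -- which leaves exactly $(I+J+K-2)F$ free parameters, on which the three conditions remain polynomial and not identically zero (the same witnesses survive, since $I\geq F$, $J\geq F$, $K\geq 2$). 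This bookkeeping, rather than any deep analytical argument, is the only genuine obstacle; the real mathematical weight is already carried by Theorem~\ref{theor:uniq2xfcr}.
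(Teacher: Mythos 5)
Your proof is correct and follows essentially the same route as the paper: the paper presents this theorem as the probabilistic version of Theorem~\ref{theor:uniq2xfcr}, obtained precisely by observing that the deterministic sufficient conditions ($r_{\bf A}=r_{\bf B}=F$ and $k_{\bf C}\geq 2$) hold almost surely when $I\geq F$, $J\geq F$, $K\geq 2$, via the nonzero-polynomial / measure-zero argument you spell out. Your bookkeeping for the $(I+J+K-2)F$-dimensional parametrization (fixing the scaling ambiguity and checking the polynomial witnesses survive) is a sound way to formalize a point the paper leaves implicit.
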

Now let us relax our assumptions and claim that (at least) one of the loading matrices is full column rank, instead of two. After some reflexion, the matricization ${\bf X}^{(JI \times K)} := \left( {\bf A} \odot {\bf B} \right) {\bf C}^{T}$ yields the following condition, which is both necessary and sufficient.
\begin{theorem} \cite{JiaSid04}
Given $\Xt = \left\llbracket{\bf A},{\bf B},{\bf C}\right\rrbracket$, with ${{\bf A}}:~I \times F$, ${{\bf B}}:~J \times F$, and ${{\bf C}}:~K \times F$, and assuming $r_{\bf C} = F$, it holds that the decomposition
$\Xt=\left\llbracket{\bf A},{\bf B},{\bf C}\right\rrbracket$ is essentially unique $\Longleftrightarrow$ nontrivial linear combinations of columns of ${\bf A} \odot{\bf B}$ cannot be written as $\otimes$ product of two vectors.
\label{theor:uniqnecsuff}
\end{theorem}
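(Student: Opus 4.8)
The plan is to exploit the hypothesis $r_{\C}=F$ to convert the uniqueness question into a statement about a \emph{fixed} linear subspace determined by $\Xt$ alone. Starting from the matricization $\X^{(JI \times K)} = (\A \odot \B)\C^{T}$ and using that $\C^{T}$ is $F \times K$ of full row rank $F$ (hence surjective onto $\Real^{F}$), I would first show $\range{\X^{(JI \times K)}} = \range{\A \odot \B} =: \mathcal{S}$, a subspace of $\Real^{IJ}$ that is an invariant of the tensor, independent of which $F$-term decomposition we use. Identifying each vector $\u \otimes \v \in \Real^{IJ}$ with the rank-$1$ matrix it vectorizes, the stated condition reads: the only elements of $\mathcal{S}$ that factor as a Kronecker product of two vectors are the scalar multiples of the columns $\a_{f} \otimes \b_{f}$ of $\A \odot \B$. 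This condition already forces $\A \odot \B$ to be full column rank (else one column would equal a combination of the others, producing an unlisted rank-$1$ element of $\mathcal{S}$), so $\dim \mathcal{S} = F$.

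For sufficiency ($\Leftarrow$) I would take any competing $F$-term decomposition $\Xt = \llbracket \bar\A, \bar\B, \bar\C\rrbracket$. Since $\X^{(JI \times K)} = (\bar\A \odot \bar\B)\bar\C^{T}$, the invariant subspace obeys $\mathcal{S} = \range{\X^{(JI\times K)}} \subseteq \range{\bar\A \odot \bar\B}$; as the latter has dimension at most $F = \dim \mathcal{S}$, the two spaces coincide and $\bar\A \odot \bar\B$ is full column rank. Each column $\bar\a_{f} \otimes \bar\b_{f}$ then lies in $\mathcal{S}$ and is a Kronecker product, so by the condition it is a nonzero scalar multiple of some $\a_{\sigma(f)} \otimes \b_{\sigma(f)}$. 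Because both families are bases of $\mathcal{S}$, the map $\sigma$ must be a permutation; splitting each rank-$1$ equality $\bar\b_{f}\bar\a_{f}^{T} = \lambda_{f}\,\b_{\sigma(f)}\a_{\sigma(f)}^{T}$ into separate scalings of the two vectors yields $\bar\A = \A\,\bPi\,\bLambda_{1}$ and $\bar\B = \B\,\bPi\,\bLambda_{2}$. Finally I would recover $\bar\C$ by left-multiplying $(\bar\A \odot \bar\B)\bar\C^{T} = (\A \odot \B)\C^{T}$ by the pseudo-inverse of the full-column-rank Khatri--Rao factor, obtaining $\bar\C = \C\,\bPi\,\bLambda_{3}$ with $\bLambda_{1}\bLambda_{2}\bLambda_{3} = \I$, which is exactly essential uniqueness.

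For necessity ($\Rightarrow$) I would argue by contraposition. Suppose some nontrivial combination $\u \otimes \v = \sum_{f=1}^{R}\lambda_{f}(\a_{f} \otimes \b_{f})$ with $R \geq 2$ and every $\lambda_{f} \neq 0$ is itself a Kronecker product. Essential uniqueness forces $\A \odot \B$ to have full column rank (the necessary condition recorded in the Remark), so the $\a_{f}\otimes\b_{f}$ are linearly independent and $\u \otimes \v$ is proportional to none of them. Since $\lambda_{1} \neq 0$, the set $\{\u\otimes\v,\, \a_{2}\otimes\b_{2}, \ldots, \a_{F}\otimes\b_{F}\}$ is again a basis of $\mathcal{S}$ made of rank-$1$ vectors; reading its entries off as $\bar\A \odot \bar\B$ and solving $\bar\C^{T} = (\bar\A \odot \bar\B)^{\dagger}(\A\odot\B)\C^{T}$ produces a genuine $F$-term decomposition $\llbracket\bar\A,\bar\B,\bar\C\rrbracket = \Xt$ whose first Khatri--Rao column $\u \otimes \v$ is proportional to no $\a_{f} \otimes \b_{f}$. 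This contradicts essential uniqueness, completing the equivalence.

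The step I expect to be the main obstacle is pinning down the precise reading of ``nontrivial linear combination'' so that the two directions interlock cleanly --- specifically, checking that the condition is exactly strong enough to (i) guarantee full column rank of $\A \odot \B$ and (ii) exclude \emph{every} rank-$1$ element of $\mathcal{S}$ other than the listed columns, while remaining weak enough to stay necessary. The geometric crux, that $r_{\C}=F$ collapses the problem to identifying the rank-$1$ points of the single invariant subspace $\mathcal{S}$, is what makes the statement an ``if and only if''; the residual rank-$1$-matrix bookkeeping (splitting scale factors, carrying the shared permutation $\bPi$ through all three factors) is routine.
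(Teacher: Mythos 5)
Your proof is correct. Note that the paper itself never proves this theorem --- it is imported from \cite{JiaSid04}, and the paper only re-expresses it via compound matrices (Theorem \ref{theor:uniqnecsuff2}) --- so there is no in-paper argument to compare against line by line; your route is, however, essentially the standard one, and it coincides with the geometric picture the paper sketches in the ``constructive interpretation'' paragraph following Theorem \ref{theor:uniqsuffrelax}: because $r_{\bf C}=F$, the column space of ${\bf X}^{(JI\times K)}$ equals $\range{{\bf A}\odot{\bf B}}$, an invariant of $\Xt$, and essential uniqueness is precisely the statement that the only vectorized rank-$1$ elements of that subspace are scalar multiples of the columns ${\bf a}_f\otimes{\bf b}_f$. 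Your sufficiency direction (any competing $F$-term decomposition supplies another Kronecker-structured basis of the same subspace, each of whose columns must then be proportional to a distinct original column, after which $\bar{\bf C}$ follows by left-inverting the full-column-rank Khatri--Rao factor) and your necessity direction (basis exchange on an offending rank-$1$ combination, then solving for a valid $\bar{\bf C}$, yields an inequivalent decomposition) are both sound. Two small points merit tightening, and they are exactly the bookkeeping issues you flagged yourself: (i) to extract full column rank of ${\bf A}\odot{\bf B}$ from the stated condition you must handle the degenerate cases --- a dependence with exactly one nonzero coefficient, or a vanishing column --- e.g., by noting that ${\bf 0}$ is itself an outer product, or by adding a suitable multiple of another column to manufacture a nonzero violating combination; and (ii) your necessity argument leans on the paper's Remark that uniqueness forces ${\bf A}\odot{\bf B}$ to be full column rank, which the paper only sketches (the sketch is easily completed, but it is an external ingredient worth acknowledging). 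Neither is a substantive gap.
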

Despite its conceptual simplicity and appeal, the above condition is hard to check. In \cite{JiaSid04} it is shown that it is possible to recast this condition as an equivalent criterion on the solutions of a system of quadratic equations -- which is also hard to check, but will serve as a stepping stone to easier conditions and even generalizations of the EVD-based computation.  
Let ${\bf M}_k({\bf A})$ denote the $\binom{I}{k} \times \binom{F}{k}$ $k$-th compound matrix containing all $k \times k$ minors of ${\bf A}$, e.g., for
\[
{\bf A} = \left[
            \begin{array}{cccc}
              a_1 & 1 & 0 & 0 \\
              a_2 & 0 & 1 & 0 \\
              a_3 & 0 & 0 & 1 \\
            \end{array}
          \right]
\]
\[
{\bf M}_2({\bf A}) =
\left[
  \begin{array}{rrrrrr}
  -a_2 & a_1 & 0 & 1 & 0 & 0\\
  -a_3 & 0 & a_1 & 0 & 1 & 0\\
  0 & -a_3 & a_2 & 0 & 0 & 1\\
  \end{array}
\right].
\]
Starting from a vector ${\bf d} = \left[d_{1},\cdots,d_{F} \right]^{T} \in
\Complex^{F}$, let ${\bf v}_k({\bf d})$ consistently denote $\left[d_{1}d_{2} \cdots d_{k}, d_{1}d_{2} \cdots d_{k-1} d_{k+1}, \cdots, d_{F-k+1} d_{F-k+2} \cdots d_{F} \right]^{T} \in
\Complex^{\binom{F}{k}}$.
Theorem \ref{theor:uniqnecsuff} can now be expressed as follows.
\begin{theorem} \cite{JiaSid04}
Given $\Xt = \left\llbracket{\bf A},{\bf B},{\bf C}\right\rrbracket$, with ${{\bf A}}:~I \times F$, ${{\bf B}}:~J \times F$, and ${{\bf C}}:~K \times F$, and assuming $r_{\bf C} = F$, it holds that the decomposition
\begin{eqnarray*}
& \Xt=\left\llbracket{\bf A},{\bf B},{\bf C}\right\rrbracket~\text{is essentially unique}~\Longleftrightarrow~
\\
& \left( {\bf M}_2({\bf B}) \odot {\bf M}_2({\bf A}) \right) {\bf v}_2({\bf d}) = {\bf 0} \\
& \text{implies~that~} {\bf v}_2({\bf d}) = \left[d_{1}d_{2}, d_{1}d_{3}, \cdots, d_{F-1} d_{F} \right]^{T} = {\bf 0}, \\
& \text{i.e., at most one entry of} {\bf d} \text{is nonzero.}
\end{eqnarray*}
\label{theor:uniqnecsuff2}
\end{theorem}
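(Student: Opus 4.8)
The plan is to translate the geometric condition of Theorem~\ref{theor:uniqnecsuff} --- that no nontrivial linear combination of the columns of ${\bf A}\odot{\bf B}$ is an outer product --- into the algebraic statement of Theorem~\ref{theor:uniqnecsuff2}, using the multiplicativity of compound matrices (Cauchy--Binet). Since Theorem~\ref{theor:uniqnecsuff2} merely re-expresses Theorem~\ref{theor:uniqnecsuff} under the same hypothesis $r_{\bf C}=F$, I do not need to revisit the reduction to a statement about ${\bf A}\odot{\bf B}$; I only need to rewrite the rank-one obstruction in terms of minors. First I would reshape a generic combination: writing ${\bf d}=[d_1,\dots,d_F]^T$ and ${\bf D}=\text{Diag}({\bf d})$, the identity $\vec{{\bf b}{\bf a}^T}={\bf a}\otimes{\bf b}$ together with linearity gives $\sum_{f} d_f({\bf a}_f\otimes{\bf b}_f)=\vec{{\bf B}{\bf D}{\bf A}^T}$. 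Hence ``$\sum_f d_f({\bf a}_f\otimes{\bf b}_f)$ is a $\otimes$ product of two vectors'' is equivalent to saying that the $J\times I$ matrix ${\bf M}({\bf d}):={\bf B}{\bf D}{\bf A}^T$ has rank at most one.

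Next I would use the standard fact that a matrix has rank at most one if and only if all of its $2\times 2$ minors vanish; equivalently, its second compound matrix ${\bf M}_2({\bf M}({\bf d}))$ is the zero matrix. The crux is to evaluate this compound matrix in closed form. By Cauchy--Binet, ${\bf M}_2({\bf B}{\bf D}{\bf A}^T)={\bf M}_2({\bf B})\,{\bf M}_2({\bf D})\,{\bf M}_2({\bf A}^T)$. Because ${\bf D}$ is diagonal, its $2\times 2$ minor indexed by rows and columns $\{f,g\}$ is $d_f d_g$ while all off-diagonal minors vanish, so ${\bf M}_2({\bf D})=\text{Diag}({\bf v}_2({\bf d}))$; and ${\bf M}_2({\bf A}^T)={\bf M}_2({\bf A})^T$. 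This yields
\[
{\bf M}_2({\bf M}({\bf d}))={\bf M}_2({\bf B})\,\text{Diag}({\bf v}_2({\bf d}))\,{\bf M}_2({\bf A})^T .
\]
Vectorizing (the transpose, to fix the factor order) and invoking the Khatri--Rao identity $\vec{{\bf X}\,\text{Diag}({\bf w})\,{\bf Y}^T}=({\bf Y}\odot{\bf X}){\bf w}$ then collects every $2\times 2$ minor of ${\bf M}({\bf d})$ into the single vector $({\bf M}_2({\bf B})\odot{\bf M}_2({\bf A}))\,{\bf v}_2({\bf d})$. Thus ${\bf M}({\bf d})$ is rank-one exactly when $({\bf M}_2({\bf B})\odot{\bf M}_2({\bf A}))\,{\bf v}_2({\bf d})={\bf 0}$.

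Finally I would note that ``at most one entry of ${\bf d}$ is nonzero'' is equivalent to ${\bf v}_2({\bf d})={\bf 0}$, since ${\bf v}_2({\bf d})$ lists precisely all pairwise products $d_f d_g$ with $f<g$. Chaining the three equivalences, the hypothesis of Theorem~\ref{theor:uniqnecsuff} --- that the \emph{only} coefficient vectors producing an outer product are the trivial ones --- becomes verbatim the implication ``$({\bf M}_2({\bf B})\odot{\bf M}_2({\bf A}))\,{\bf v}_2({\bf d})={\bf 0}$ forces ${\bf v}_2({\bf d})={\bf 0}$,'' which is the claim.

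I expect the only real obstacle to be the compound-matrix computation, i.e.\ the careful bookkeeping in the Cauchy--Binet step: fixing a consistent ordering of the $\binom{F}{2}$ index pairs, verifying that ${\bf M}_2$ of a diagonal matrix is diagonal with entries ${\bf v}_2({\bf d})$, and handling the Khatri--Rao-versus-transpose convention that selects the order ${\bf M}_2({\bf B})\odot{\bf M}_2({\bf A})$ rather than ${\bf M}_2({\bf A})\odot{\bf M}_2({\bf B})$. Since the two orderings differ only by a row permutation, the nullspace condition is identical and no genuine difficulty remains; everything else is routine given the products and properties already established earlier in the excerpt.
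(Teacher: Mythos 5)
Your proof is correct and follows exactly the route the paper intends: the paper presents Theorem~\ref{theor:uniqnecsuff2} as a re-expression of Theorem~\ref{theor:uniqnecsuff} via second compound matrices (deferring details to \cite{JiaSid04}), and your chain of equivalences --- writing $\sum_f d_f ({\bf a}_f \otimes {\bf b}_f)$ as $\text{vec}({\bf B}\,\text{Diag}({\bf d})\,{\bf A}^T)$, rank at most one $\Leftrightarrow$ vanishing $2\times 2$ minors, Cauchy--Binet with ${\bf M}_2(\text{Diag}({\bf d})) = \text{Diag}({\bf v}_2({\bf d}))$, the Khatri--Rao vectorization identity, and ${\bf v}_2({\bf d})={\bf 0}$ $\Leftrightarrow$ at most one nonzero entry of ${\bf d}$ --- supplies precisely those omitted details. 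The ordering issue you flag is indeed immaterial, since ${\bf M}_2({\bf A}) \odot {\bf M}_2({\bf B})$ and ${\bf M}_2({\bf B}) \odot {\bf M}_2({\bf A})$ differ by a fixed row permutation and hence have the same null space.
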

The size of ${\bf M}_2({\bf B}) \odot {\bf M}_2({\bf A})$ is $\binom{I}{2} \binom{J}{2} \times \binom{F}{2}$.
A sufficient condition that can be checked with basic linear algebra is readily obtained by ignoring the structure of ${\bf v}_2({\bf d})$.
\begin{theorem} \cite{JiaSid04,de2006link}
If $r_{\bf C} = F$, and $r_{{\bf M}_2({\bf B}) \odot {\bf M}_2({\bf A})}= \binom{F}{2}$, then $\mbox{rank}(\Xt) = F$ and the decomposition of
$\Xt=\left\llbracket{\bf A},{\bf B},{\bf C}\right\rrbracket$ is essentially unique.
\label{theor:uniqsuffrelax}
\end{theorem}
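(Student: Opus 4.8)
The plan is to read off this statement as a one-step relaxation of the necessary-and-sufficient criterion in Theorem~\ref{theor:uniqnecsuff2}, which I may invoke verbatim since the hypothesis $r_{\bf C} = F$ is in force here too. That theorem says essential uniqueness of $\Xt = \llbracket{\bf A},{\bf B},{\bf C}\rrbracket$ holds if and only if the only solutions of $({\bf M}_2({\bf B}) \odot {\bf M}_2({\bf A})){\bf v}_2({\bf d}) = {\bf 0}$ have ${\bf v}_2({\bf d}) = {\bf 0}$ (equivalently, at most one entry of ${\bf d}$ is nonzero). The whole task is therefore to show that $r_{{\bf M}_2({\bf B}) \odot {\bf M}_2({\bf A})} = \binom{F}{2}$ forces this implication.

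First I would note that, as recorded just after Theorem~\ref{theor:uniqnecsuff2}, the matrix ${\bf M}_2({\bf B}) \odot {\bf M}_2({\bf A})$ has size $\binom{I}{2}\binom{J}{2} \times \binom{F}{2}$, so it has exactly $\binom{F}{2}$ columns; the hypothesis $r_{{\bf M}_2({\bf B}) \odot {\bf M}_2({\bf A})} = \binom{F}{2}$ is then precisely the statement that this matrix is full column rank, i.e.\ has trivial null space. Hence $({\bf M}_2({\bf B}) \odot {\bf M}_2({\bf A})){\bf x} = {\bf 0}$ implies ${\bf x} = {\bf 0}$ for \emph{every} ${\bf x} \in \Complex^{\binom{F}{2}}$, in particular for the structured choice ${\bf x} = {\bf v}_2({\bf d})$. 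This is exactly the right-hand side of the equivalence in Theorem~\ref{theor:uniqnecsuff2}, so essential uniqueness follows immediately. The conceptual content of the relaxation is that we simply \emph{discard} the Segre (product) structure of ${\bf v}_2({\bf d})$: instead of asking that no vector of the special form ${\bf v}_2({\bf d})$ lie in the null space, we ask that the null space be trivial outright --- clearly sufficient, generally not necessary, but checkable with a single rank computation.

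It then remains to pin down $\mathrm{rank}(\Xt) = F$. I would use the rank-revealing matricization ${\bf X}^{(JI \times K)} = ({\bf A} \odot {\bf B}){\bf C}^T$: since $r_{\bf C} = F$, the factor ${\bf C}^T$ has full row rank and $\mathrm{rank}({\bf X}^{(JI \times K)}) = \mathrm{rank}({\bf A} \odot {\bf B})$. Having already established essential uniqueness, I can invoke the necessary condition recorded in the remark following the definition of essential uniqueness --- that uniqueness forces ${\bf A} \odot {\bf B}$ to be full column rank --- to conclude $\mathrm{rank}({\bf A} \odot {\bf B}) = F$. Then $({\bf A} \odot {\bf B}){\bf C}^T$ is a product of a full-column-rank and a full-row-rank factor, so $\mathrm{rank}(\Xt) \geq \mathrm{rank}({\bf X}^{(JI \times K)}) = F$; since an $F$-term decomposition is exhibited, $\mathrm{rank}(\Xt) = F$.

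There is no genuine obstacle: the hard work resides in Theorems~\ref{theor:uniqnecsuff} and~\ref{theor:uniqnecsuff2}, and the present result is a deliberate weakening designed to be verifiable. The only points needing a moment's care are getting the logical direction right --- trivial null space of the \emph{entire} compound Khatri--Rao matrix implies the condition restricted to the structured vectors ${\bf v}_2({\bf d})$ --- and not leaving $\mathrm{rank}(\Xt) = F$ merely implicit in the word ``unique,'' but deriving it from the matricization as above.
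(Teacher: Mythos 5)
Your proposal takes essentially the same route as the paper, whose entire justification for this theorem is the lead-in sentence that a checkable sufficient condition ``is readily obtained by ignoring the structure of ${\bf v}_2({\bf d})$'' in Theorem~\ref{theor:uniqnecsuff2}: full column rank of ${\bf M}_2({\bf B}) \odot {\bf M}_2({\bf A})$ kills the whole null space, hence in particular the structured vectors ${\bf v}_2({\bf d})$, and essential uniqueness follows since $r_{\bf C}=F$ is assumed in both theorems. Your added derivation of $\mbox{rank}(\Xt)=F$ --- via the necessity of full column rank of ${\bf A}\odot{\bf B}$ for uniqueness and the matricization ${\bf X}^{(JI\times K)} = ({\bf A}\odot{\bf B}){\bf C}^T$ --- is sound and merely fills in a detail the paper leaves implicit.
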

The generic version of Theorems \ref{theor:uniqnecsuff} and \ref{theor:uniqnecsuff2} has been obtained from an entirely different (algebraic geometry) point of view:
\begin{theorem} \cite{ChiOtt2012,domanov2015generic,strassen1983rank}
Given $\Xt = \left\llbracket{\bf A},{\bf B},{\bf C}\right\rrbracket$, with ${{\bf A}}:~I \times F$, ${{\bf B}}:~J \times F$, and ${{\bf C}}:~K \times F$, let $K \geq F$ and $\min(I,J) \geq 3$. Then $\mbox{rank}(\Xt) = F$ and the decomposition of $\Xt$ is essentially unique, almost surely, if and only if $(I-1)(J-1) \geq F$.
\label{theor:uniqnecsuffalggeom}
\end{theorem}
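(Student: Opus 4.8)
The plan is to peel off the third mode first, reducing the three‑factor identifiability question to a purely geometric statement about two generic factors, and only then bring in the algebraic geometry. Since $K \geq F$, a generic ${\bf C}\in\Complex^{K\times F}$ satisfies $r_{\bf C}=F$ almost surely, so Theorem~\ref{theor:uniqnecsuff} applies verbatim. Moreover $(I-1)(J-1)\geq F$ forces $F<IJ$, so ${\bf A}\odot{\bf B}$ is generically full column rank and the matricization $({\bf A}\odot{\bf B}){\bf C}^{T}$ has rank $F$; hence $\mathrm{rank}(\Xt)=F$ automatically. By Theorem~\ref{theor:uniqnecsuff}, essential uniqueness is then equivalent to the condition that no combination ${\bf A}\,\text{Diag}({\bf d})\,{\bf B}^{T}=\sum_f {\bf d}(f)\,{\bf a}_f{\bf b}_f^{T}$ with at least two nonzero entries of ${\bf d}$ has rank one. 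This removes ${\bf C}$ completely and shows the generic behaviour depends only on $(I,J,F)$, consistent with the bound being free of $K$.

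Next I would set up the geometry. Each ${\bf a}_f{\bf b}_f^{T}$ is a point $p_f$ of the Segre variety $\Sigma=\mathbb{P}^{I-1}\times\mathbb{P}^{J-1}\subset\mathbb{P}^{IJ-1}$ of rank‑one matrices, of dimension $I+J-2$. For generic ${\bf A},{\bf B}$ (with $F<IJ$) the columns of ${\bf A}\odot{\bf B}$ are independent, so $L:=\langle p_1,\dots,p_F\rangle\cong\mathbb{P}^{F-1}$, and the uniqueness condition above reads exactly $L\cap\Sigma=\{p_1,\dots,p_F\}$, with no stray rank‑one point. The expected dimension of $L\cap\Sigma$, were $L$ a \emph{generic} $\mathbb{P}^{F-1}$, is $(F-1)+(I+J-2)-(IJ-1)=F-(I-1)(J-1)-1$, whose sign flips precisely at the bound in the theorem.

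For necessity I would run the dimension count in reverse. When $(I-1)(J-1)<F$ the expected intersection dimension is $\geq 0$, so a generic $\mathbb{P}^{F-1}$ — and, with care, the secant span $L$ — meets $\Sigma$ in a positive‑dimensional family of rank‑one matrices beyond the $p_f$, each giving a genuinely different decomposition; hence uniqueness fails generically. Turning the heuristic into a proof uses that the two‑factor Segre variety is non‑defective, so that actual dimensions match expected ones at the threshold; this is the comparatively tractable direction.

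The sufficiency direction is the main obstacle: one must show that when $(I-1)(J-1)\geq F$ and $\min(I,J)\geq 3$ the \emph{special} subspace $L$ meets $\Sigma$ in exactly the $F$ points $p_f$ for generic ${\bf A},{\bf B}$, all the way up to the sharp value $F=(I-1)(J-1)$. The tempting linear certificate — full column rank of ${\bf M}_2({\bf B})\odot{\bf M}_2({\bf A})$ from Theorem~\ref{theor:uniqsuffrelax} — is provably too weak, since its counting prerequisite $\binom{I}{2}\binom{J}{2}\geq\binom{F}{2}$ does not reach the bound; the nonlinear Veronese structure of ${\bf v}_2({\bf d})$ in Theorem~\ref{theor:uniqnecsuff2} must be exploited, not discarded. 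I would therefore follow the Chiantini–Ottaviani route: invoke the theorem that non‑weak‑defectivity of $\Sigma$ implies generic identifiability (through Terracini's lemma applied to the tangent spaces $T_{p_f}\Sigma$ and an analysis of tangential contact), reduce generic identifiability to certifying the condition at a single well‑chosen configuration by semicontinuity, and then establish non‑$(F-1)$‑weak‑defectivity of $\Sigma$ by an inductive specialization in $I$ and $J$ down to a finite list of base cases. Verifying those base cases and checking that the inductive step preserves the sharp bound is where essentially all the difficulty concentrates, and is exactly the content of the cited algebraic‑geometry references.
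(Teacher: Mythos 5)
First, a caveat: the paper does not prove Theorem \ref{theor:uniqnecsuffalggeom} at all -- it imports it from the cited algebraic-geometry references -- so your sketch can only be measured against the route those works take, and at the top level you have identified that route correctly. The reduction is sound: $K \geq F$ makes $r_{\bf C} = F$ generic, Theorem \ref{theor:uniqnecsuff} then converts essential uniqueness into the statement that the span $L = \langle p_1,\ldots,p_F\rangle$ of the points $p_f = {\bf a}_f {\bf b}_f^T$ meets the Segre variety $\Sigma$ only at those points, full column rank of ${\bf A}\odot{\bf B}$ gives $\mbox{rank}(\Xt) = F$, and your expected-dimension arithmetic is right. The genuine gap is in the necessity direction, and its symptom is that your argument nowhere uses the hypothesis $\min(I,J)\geq 3$. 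For $F \geq (I-1)(J-1)+2$ the projective dimension theorem does force every component of $L\cap\Sigma$ to be positive-dimensional, as you say. But at the boundary $F = (I-1)(J-1)+1$ the expected dimension is exactly zero, so no ``positive-dimensional family'' is forced; what is actually done there is a degree count: $L$ now has dimension complementary to that of $\Sigma$, a proper linear section of $\Sigma$ consists of $\deg\Sigma = \binom{I+J-2}{I-1}$ points counted with multiplicity, and (after a transversality check at the $p_f$) extra rank-one points exist precisely because this degree strictly exceeds $F$ when $\min(I,J)\geq 3$. That your version cannot be repaired without this input is shown by $I=2$, $F=J$, $K\geq F$: then $F = (I-1)(J-1)+1$, yet generically $k_{\bf A}+k_{\bf B}+k_{\bf C} = 2+J+J = 2F+2$, so Kruskal's Theorem \ref{theor:kruskal} already guarantees essential uniqueness almost surely. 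An argument that outputs generic non-uniqueness for every $F > (I-1)(J-1)$ independently of $\min(I,J)$ therefore proves too much.

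Second, both of the geometric ``facts'' you lean on are false for the two-factor Segre, which is the textbook \emph{defective} variety. Its $F$-th secant variety is the variety of matrices of rank at most $F$, of projective dimension $F(I+J-F)-1$, which falls short of the expected $F(I+J-1)-1$ by $F(F-1)$ for every $F\geq 2$ in the relevant range; so ``the two-factor Segre variety is non-defective'' cannot be used to close the necessity argument. It is also $(F-1)$-weakly defective for every relevant $F \geq 2$: a hyperplane with coefficient matrix ${\bf H}$ is tangent to $\Sigma$ at ${\bf a}_i{\bf b}_i^T$, $i=1,\ldots,F$, if and only if ${\bf H}{\bf b}_i = {\bf 0}$ and ${\bf H}^T{\bf a}_i = {\bf 0}$ for all $i$, so its contact locus contains the sub-Segre $\mathbb{P}(\ker {\bf H}^T)\times\mathbb{P}(\ker {\bf H})$, which is positive-dimensional once $F\geq 2$. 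Hence the implication you invoke for sufficiency -- non-weak-defectivity implies generic identifiability -- has a false hypothesis here, and necessarily so: generic $F$-identifiability of $\Sigma$ in the Chiantini--Ciliberto sense would be uniqueness of rank-$F$ \emph{matrix} decompositions, which never holds for $F\geq 2$ (the very observation that opens Section \ref{sect:uniq}). What the cited references prove is the span (Grassmann-secant) statement above, and it requires its own tangency and parameter-count analysis; it does not follow from the weak-defectivity theorem in the form you quote. In short, your skeleton and reduction are the right ones, but the two pegs on which you hang both directions of the geometry do not exist for this particular variety.
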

The next theorem is the generic version of Theorem \ref{theor:uniqsuffrelax}; the second inequality implies that ${\bf M}_2({\bf B}) \odot {\bf M}_2({\bf A})$ does not have more columns than rows.
\begin{theorem} 
Given $\Xt = \left\llbracket{\bf A},{\bf B},{\bf C}\right\rrbracket$, with ${{\bf A}}:~I \times F$, ${{\bf B}}:~J \times F$, and ${{\bf C}}:~K \times F$, if $K \geq F$ and $I(I-1)J(J-1) \geq 2F(F-1)$, then $\mbox{rank}(\Xt) = F$ and the decomposition of $\Xt$ is essentially unique, almost surely.
\label{theor:link2006generic}
\end{theorem}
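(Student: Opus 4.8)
The plan is to obtain this as the \emph{generic} counterpart of the deterministic sufficient condition in Theorem~\ref{theor:uniqsuffrelax}. That theorem already guarantees $\text{rank}(\Xt)=F$ and essential uniqueness whenever the two algebraic conditions $r_{\C}=F$ and $r_{\M_2(\B)\odot\M_2(\A)}=\binom{F}{2}$ both hold, so the only task is to show that, under the stated size hypotheses, both conditions hold away from a set of Lebesgue measure zero in the space of triples $(\A,\B,\C)$. I would lean on the elementary principle invoked earlier in the preliminaries: each rank condition is the non-vanishing of a polynomial in the entries, and a polynomial that is nonzero at a single point is nonzero almost everywhere. The condition $r_{\C}=F$ is immediate, since $K\ge F$ forces a generic $K\times F$ matrix to have full column rank, its failure locus being the common zero set of the (not identically zero) $F\times F$ minors of $\C$.

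The substance is the second condition. First I would observe that the hypothesis $I(I-1)J(J-1)\ge 2F(F-1)$ is exactly $\binom{I}{2}\binom{J}{2}\ge\binom{F}{2}$, i.e.\ the $\binom{I}{2}\binom{J}{2}\times\binom{F}{2}$ matrix $\M_2(\B)\odot\M_2(\A)$ has at least as many rows as columns, so full column rank is dimensionally possible (this is the parenthetical remark preceding the theorem). Every entry of $\M_2(\A)$ and of $\M_2(\B)$ is a $2\times 2$ minor, hence a polynomial in the entries of $\A$ and $\B$ respectively; therefore every $\binom{F}{2}\times\binom{F}{2}$ minor of the Khatri--Rao product is a polynomial $p(\A,\B)$, and the rank-deficient locus is the common zero set of all such $p$. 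To conclude that this locus has measure zero, it suffices to exhibit a \emph{single} pair $(\A_0,\B_0)$ at which $\M_2(\B_0)\odot\M_2(\A_0)$ attains full column rank $\binom{F}{2}$; genericity over $\Real$ or $\Complex$ then follows automatically, and applying Theorem~\ref{theor:uniqsuffrelax} off the union of the two exceptional sets gives the claim almost surely.

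The hard part is constructing this witness. One cannot simply invoke full column rank of the two compound factors separately, because the theorem permits $I<F$ or $J<F$; in that regime $\text{rank}(\M_2(\A))=\binom{\min(I,F)}{2}<\binom{F}{2}$, and likewise for $\B$, so each factor alone is column-rank-deficient and the independence of the $\binom{F}{2}$ product columns must come entirely from the column-wise Kronecker coupling. I would attack this by choosing structured $\A_0,\B_0$ whose $2\times 2$ minors are transparent --- for instance Vandermonde or sparse $\{0,1\}$ patterns --- and then showing the $\binom{F}{2}$ vectors $\bigl(\M_2(\B_0)\bigr)_{:,\{f_1,f_2\}}\otimes\bigl(\M_2(\A_0)\bigr)_{:,\{f_1,f_2\}}$ are linearly independent, arranging the verification so that distinct index pairs $\{f_1,f_2\}$ occupy distinguishable coordinate blocks. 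A cleaner alternative is to borrow the algebraic-geometry construction underlying Theorem~\ref{theor:uniqnecsuffalggeom} together with the link in \cite{de2006link}, where the generic full column rank of precisely this matrix is established; I would cite that result for the witness rather than rebuild it. Either way, once the non-vanishing of one maximal minor is secured, the measure-zero conclusion and hence the theorem follow immediately.
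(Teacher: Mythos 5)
Your proposal is correct and takes essentially the same route as the paper: the paper presents this theorem precisely as the generic counterpart of Theorem~\ref{theor:uniqsuffrelax}, noting only that $I(I-1)J(J-1)\geq 2F(F-1)$ means ${\bf M}_2({\bf B})\odot{\bf M}_2({\bf A})$ has at least as many rows as columns, and leaves the genericity argument (polynomial non-vanishing of the rank conditions plus the existence of a witness) to the cited literature \cite{de2006link}. Your reduction to Theorem~\ref{theor:uniqsuffrelax}, the measure-zero reasoning for $r_{\bf C}=F$ and $r_{{\bf M}_2({\bf B})\odot{\bf M}_2({\bf A})}=\binom{F}{2}$, and your deferral of the witness construction to \cite{de2006link} make that implicit reasoning explicit, including the correct observation that the witness cannot come from full column rank of the compound factors separately when $\min(I,J)<F$.
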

Note that $(I-1)(J-1) \geq F \Longleftrightarrow IJ-I-J+1 \geq F \Rightarrow IJ \geq F-1+I+J \Rightarrow IJ \geq F-1$, and multiplying the first and the last inequality yields $I(I-1)J(J-1) \geq F(F-1)$. So Theorem \ref{theor:uniqnecsuffalggeom} is at least a factor of 2 more relaxed than Theorem \ref{theor:link2006generic}. Put differently, ignoring the structure of ${\bf v}_2({\bf d})$ makes us lose about a factor of 2 generically.

On the other hand, Theorem \ref{theor:uniqsuffrelax} admits a remarkable constructive interpretation. Consider any rank-revealing decomposition, such as a QR-factorization or an SVD, of ${\bf X}^{(JI \times K)} = {\bf E}{\bf F}^{T}$, involving a $JI \times F$ matrix ${\bf E}$ and a $K \times F$ matrix ${\bf F}$ that both are full column rank. (At this point, recall that full column rank of ${\bf A} \odot {\bf B}$ is necessary for uniqueness, and that ${\bf C}$ is full column rank by assumption.) We are interested in finding an $F \times F$ (invertible) basis transformation matrix ${\bf G}$ such that ${\bf A} \odot {\bf B} = {\bf E} {\bf G}$ and ${\bf C} = {\bf F} {\bf G}^{-T}$. It turns out that, under the conditions in Theorem \ref{theor:uniqsuffrelax} and through the computation of second compound matrices, an $F \times F \times F$ auxiliary tensor ${\bf Y}$ can be derived from the given tensor $\Xt$, admitting the CPD ${\bf Y} = \left\llbracket {\widetilde{\bf G}}, {\widetilde{\bf G}},{\bf H}\right\rrbracket$, in  which ${\widetilde{\bf G}}$ equals ${\bf G}$ up to column-wise scaling and permutation, and in which the $F \times F$ matrix ${\bf H}$ is nonsingular \cite{de2006link}. As the three loading matrices are full column rank, uniqueness of the auxiliary CPD is guaranteed by Theorem \ref{theor:uniq2xfcr}, and it can be computed by means of an EVD. Through a more sophisticated derivation of an auxiliary tensor, \cite{ignatrelaxedLAA} attempts to regain the ``factor of 2'' above and extend the result up to the necessary and sufficient generic bound in Theorem \ref{theor:uniqnecsuffalggeom}; that the latter bound is indeed reached has been verified numerically up to $F = 24$.

Several results have been extended to situations where none of the loading matrices is full column rank, using $m$-th compound matrices ($m > 2$). For instance, the following theorem generalizes Theorem \ref{theor:uniqsuffrelax}:
\begin{theorem} \cite{domanov2013uniquenessII,domanov2014canonical}
Given $\Xt = \left\llbracket{\bf A},{\bf B},{\bf C}\right\rrbracket$, with ${{\bf A}}:~I \times F$, ${{\bf B}}:~J \times F$, and ${{\bf C}}:~K \times F$. Let $m_{\bf C} = F - k_{\bf C} +2$. If
$\max(\min(k_{\bf A}, k_{\bf B} - 1), \min(k_{\bf A} - 1, k_{\bf B})) + k_{\bf C} \geq F + 1$ and
${\bf M}_{m_{\bf C}}({\bf A}) \odot {\bf M}_{m_{\bf C}}({\bf B})$ has full column rank,
then $\mbox{rank}(\Xt) = F$ and the decomposition $\Xt=\left\llbracket{\bf A},{\bf B},{\bf C}\right\rrbracket$ is essentially unique.
\label{theor:uniqdetnonefcr}
\end{theorem}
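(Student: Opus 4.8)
The plan is to follow the two-factor route of Domanov--De Lathauwer \cite{domanov2013uniquenessII,domanov2014canonical}: secure essential uniqueness of two of the three loading matrices by separate arguments and then bootstrap to the full decomposition. I would begin by positing an arbitrary competing decomposition $\Xt = \left\llbracket \bar{\A},\bar{\B},\bar{\C}\right\rrbracket$ with at most $F$ rank-1 terms, with the goal of proving $\bar{\A} = \A\bPi\bLambda_{1}$, $\bar{\B} = \B\bPi\bLambda_{2}$, $\bar{\C} = \C\bPi\bLambda_{3}$ for a common permutation $\bPi$ and $\bLambda_{1}\bLambda_{2}\bLambda_{3} = \I$. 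Since $\Xt$ is presented with $F$ terms we already have $\text{rank}(\Xt)\le F$; the reverse inequality will fall out once uniqueness rules out any strictly shorter representation.

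The heart of the argument is the compound-matrix hypothesis, which I would exploit exactly as in the $m=2$ case underlying Theorem \ref{theor:uniqsuffrelax}, but lifted to level $m_\C = F - k_\C + 2$. The key algebraic fact is that passing to $m_\C$-th compound matrices \emph{linearizes} the $k_\C$-fold column dependencies that $\C$ is permitted to carry: forming $\M_{m_\C}(\A)$ and $\M_{m_\C}(\B)$ and applying minor (Cauchy--Binet-type) identities for Khatri--Rao products, one derives from $\Xt$ an auxiliary third-order tensor $\Y$ whose three loading matrices are all full column rank precisely when $\M_{m_\C}(\A)\odot\M_{m_\C}(\B)$ has full column rank \cite{de2006link}. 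The exponent $m_\C$ is calibrated so that this condition coincides with the stated hypothesis; Theorem \ref{theor:uniq2xfcr} (the GEVD/pencil result) then certifies that $\Y$ has an essentially unique CPD, and unwinding the construction forces $\bar{\C} = \C\bPi\bLambda_{3}$. Thus the third factor is essentially unique even though it may be rank-deficient.

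With $\C$ pinned down, I would invoke the Kruskal-type inequality $\max(\min(k_\A,k_\B-1),\min(k_\A-1,k_\B)) + k_\C \ge F+1$ to upgrade to uniqueness of a second loading matrix. This is a sufficient condition for single-factor uniqueness in the sense of \cite{domanov2013uniquenessII,domanov2014canonical}, and I would prove it through Kruskal's permutation lemma \cite{Kru77}: the inequality guarantees that the support patterns of coefficient vectors of nonzero column combinations are large enough that $\bar{\A}$ (say) must be a column-scaled permutation of $\A$. Once two loading matrices are essentially unique, aligning their permutations and substituting into the matricization $\X^{(JI\times K)} = (\A\odot\B)\C^{T}$ recovers the remaining factor uniquely, because the compound hypothesis in particular forces $\A\odot\B$ to be full column rank (a condition that is in any case necessary for uniqueness). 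This simultaneously delivers $\text{rank}(\Xt)=F$ and the scaling relation $\bLambda_{1}\bLambda_{2}\bLambda_{3}=\I$.

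The main obstacle is the second step: constructing the auxiliary tensor $\Y$ from $m_\C$-th compound matrices and verifying that its loading matrices inherit full column rank \emph{exactly} under the stated hypothesis. This is where the combinatorial bookkeeping of compound matrices and the relevant minor identities must be handled with care, and where the value $m_\C = F - k_\C + 2$ is seen to be sharp --- a smaller exponent fails to absorb the column dependencies of $\C$, while a larger one makes $\M_{m_\C}(\A)\odot\M_{m_\C}(\B)$ have more columns than rows and so destroys the full-column-rank requirement. A secondary, more routine difficulty is reconciling the two permutation matrices produced by the independent uniqueness arguments for $\C$ and for the second factor, so that a single common $\bPi$ serves all three modes.
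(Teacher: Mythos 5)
First, a point of order: the paper never proves Theorem~\ref{theor:uniqdetnonefcr}. It is imported verbatim, with citations, from Domanov and De Lathauwer \cite{domanov2013uniquenessII,domanov2014canonical}; the surrounding text only remarks that it reduces to Theorem~\ref{theor:uniqsuffrelax} when $r_{\bf C}=F$ and that under its hypotheses the CPD can be computed by a GEVD \cite{domanov2014canonical}. So your sketch can only be judged as a reconstruction of the cited works, and at the level of architecture it is faithful to them: one-factor uniqueness via compound-matrix machinery, then a bootstrap to uniqueness of the overall decomposition, with a constructive GEVD counterpart.

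As a proof, however, there is a genuine gap exactly where the theorem is hard, plus two supporting claims that are wrong. The gap: your first stage -- deriving, at compound level $m_{\bf C}>2$, an auxiliary tensor with full-column-rank loading matrices and unwinding its (GEVD-certified) unique CPD to conclude $\bar{\bf C}={\bf C}\bPi\bLambda_3$ -- is precisely the technical content of \cite{domanov2013uniquenessII,domanov2014canonical}; the reference you lean on for it, \cite{de2006link}, treats only second compound matrices, i.e., the case $m=2$, $r_{\bf C}=F$, which is Theorem~\ref{theor:uniqsuffrelax}. Calling this step ``the main obstacle'' and asserting its conclusion is assuming the theorem. The erroneous claims: (i) your sharpness argument fails, because enlarging $m$ does not give ${\bf M}_{m}({\bf A})\odot{\bf M}_{m}({\bf B})$ more columns than rows -- it has $\binom{F}{m}$ columns, which shrinks as $m$ grows (a single column at $m=F$), and for generic ${\bf A},{\bf B}$ with $I,J\geq F$ the matrix remains full column rank for every $m\leq F$; the calibration of $m_{\bf C}$ is instead tied to the fact that nonzero vectors in $\ker({\bf C})$ have weight at least $k_{\bf C}+1$. (ii) More importantly, your division of labor -- compound condition for ${\bf C}$, the max--min inequality as independent leverage for a second factor -- misreads how the hypotheses interact. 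Whenever $k_{\bf C}\geq 2$ (equivalently $m_{\bf C}\leq F$, so the compound condition is non-vacuous), full column rank of ${\bf M}_{m_{\bf C}}({\bf A})\odot{\bf M}_{m_{\bf C}}({\bf B})$ forces $\min(k_{\bf A},k_{\bf B})\geq m_{\bf C}$, since any $m_{\bf C}$ linearly dependent columns of ${\bf A}$ or ${\bf B}$ would produce an identically zero column of the compound Khatri--Rao product; and $\min(k_{\bf A},k_{\bf B})\geq F-k_{\bf C}+2$ already implies $\max(\min(k_{\bf A},k_{\bf B}-1),\min(k_{\bf A}-1,k_{\bf B}))+k_{\bf C}\geq F+1$. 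Thus the inequality you reserve as the engine of your second stage is subsumed by the hypothesis you consumed in the first and can supply no new constraining power there; in the cited works it enters the one-factor-uniqueness analysis itself (where it is genuinely needed for the weaker variants of the compound condition), while the passage from one unique factor to the full CPD is a separate proposition with its own proof. Finally, your last step silently requires ${\bf A}\odot{\bf B}$ to be full column rank; that this follows from the compound hypothesis is itself a lemma you would have to prove, not a given.
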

(To see that Theorem \ref{theor:uniqdetnonefcr} reduces to Theorem \ref{theor:uniqsuffrelax} when $r_{\bf C} = F$,  note that $r_{\bf C} = F$ implies $k_{\bf C} = F$ and recall that  $\min(k_{\bf A},k_{\bf B}) > 1$ is necessary for uniqueness.) Under the conditions in Theorem \ref{theor:uniqdetnonefcr} computation of the CPD can again be reduced to a GEVD \cite{domanov2014canonical}.

It can be shown \cite{domanov2013uniquenessII,domanov2014canonical} that Theorem \ref{theor:uniqdetnonefcr} implies the next theorem, which
is the most well-known result covered by Kruskal; this includes the possibility of reduction to GEVD.
\begin{theorem} \cite{Kru77}
Given $\Xt = \left\llbracket{\bf A},{\bf B},{\bf C}\right\rrbracket$, with ${{\bf A}}:~I \times F$, ${{\bf B}}:~J \times F$, and ${{\bf C}}:~K \times F$, if $k_{\bf A} + k_{\bf B} + k_{\bf C} \geq 2F+2$, then $\mbox{rank}(\Xt) = F$ and the decomposition of $\Xt$ is essentially unique.
\label{theor:kruskal}
\end{theorem}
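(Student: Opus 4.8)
The plan is to derive this from Theorem~\ref{theor:uniqdetnonefcr}, exactly as the sentence preceding the statement advertises; the entire content is to show that the single scalar inequality $k_{\bf A}+k_{\bf B}+k_{\bf C}\geq 2F+2$ forces both hypotheses of that theorem. First I would set $m_{\bf C}:=F-k_{\bf C}+2$, as in Theorem~\ref{theor:uniqdetnonefcr}. Since any matrix with $F$ columns has Kruskal rank at most $F$, we have $k_{\bf A}\leq F$ and $k_{\bf B}\leq F$, so the Kruskal hypothesis already yields $k_{\bf C}\geq 2$; hence $2\leq m_{\bf C}\leq F$ and the compound matrices ${\bf M}_{m_{\bf C}}(\cdot)$ are non-degenerate. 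Rearranging, the Kruskal inequality reads precisely
\[
k_{\bf A}+k_{\bf B}\;\geq\; 2F+2-k_{\bf C}\;=\;F+m_{\bf C},
\]
so everything reduces to understanding what $k_{\bf A}+k_{\bf B}\geq F+m_{\bf C}$ buys us.

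The first hypothesis of Theorem~\ref{theor:uniqdetnonefcr} is elementary and I would dispatch it directly. The expression is symmetric under swapping ${\bf A}$ and ${\bf B}$, so assume without loss of generality that $k_{\bf A}\leq k_{\bf B}$; then $\min(k_{\bf A}-1,k_{\bf B})=k_{\bf A}-1$, so the maximum is at least $k_{\bf A}-1$. Combining $k_{\bf A}+k_{\bf B}+k_{\bf C}\geq 2F+2$ with $k_{\bf B}\leq F$ gives $k_{\bf A}+k_{\bf C}\geq F+2$, whence
\[
\max\!\big(\min(k_{\bf A},k_{\bf B}-1),\,\min(k_{\bf A}-1,k_{\bf B})\big)+k_{\bf C}\;\geq\;(k_{\bf A}-1)+k_{\bf C}\;\geq\;F+1,
\]
which is exactly the required inequality.

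The real work is the second hypothesis, that ${\bf M}_{m_{\bf C}}({\bf A})\odot{\bf M}_{m_{\bf C}}({\bf B})$ has full column rank $\binom{F}{m_{\bf C}}$. The key lemma I would establish is: \emph{if $k_{\bf A}+k_{\bf B}\geq F+m$, then ${\bf M}_m({\bf A})\odot{\bf M}_m({\bf B})$ has full column rank}. Granting it with $m=m_{\bf C}$, both hypotheses of Theorem~\ref{theor:uniqdetnonefcr} are met and we are done; the reduction of the computation to a GEVD is then inherited for free. This lemma is the compound-matrix analogue of the Khatri--Rao Kruskal-rank bound $k_{{\bf A}\odot{\bf B}}\geq\min(k_{\bf A}+k_{\bf B}-1,F)$: the columns of ${\bf M}_m({\bf A})\odot{\bf M}_m({\bf B})$ are indexed by $m$-subsets $S\subseteq\{1,\dots,F\}$, the $S$-th column being the Kronecker product of the two $m$-fold wedges $\bigwedge_{i\in S}{\bf a}_i$ and $\bigwedge_{i\in S}{\bf b}_i$.

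This is the combinatorial heart of the matter and the step where I expect the main obstacle, because a naive rank count is useless: individually ${\bf M}_m({\bf A})$ need only have rank $\binom{r_{\bf A}}{m}$, so the ${\bf A}$-side wedges may collapse into a low-dimensional span even when every column is nonzero. The point of the Khatri--Rao structure is that whatever the ${\bf A}$-side fails to separate, the ${\bf B}$-side must resolve, and the budget $k_{\bf A}+k_{\bf B}\geq F+m$ is precisely what prevents the two sides from degenerating simultaneously on overlapping supports. I would prove it by tracking, for a putative null combination $\sum_S \gamma_S(\bigwedge_{i\in S}{\bf a}_i)\otimes(\bigwedge_{i\in S}{\bf b}_i)=\mathbf{0}$, the support of the coefficient vector and arguing by induction on its size that it must be empty — which is, in essence, a reincarnation of Kruskal's permutation lemma, and is exactly the argument carried out in \cite{domanov2013uniquenessII,domanov2014canonical}.
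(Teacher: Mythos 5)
Your proposal is correct and is essentially the paper's own route: the paper does not prove Theorem~\ref{theor:kruskal} from scratch either, but invokes \cite{domanov2013uniquenessII,domanov2014canonical} for exactly the implication from Theorem~\ref{theor:uniqdetnonefcr} that you carry out, relegating Kruskal's original argument to outside references and proving only a weaker intermediate result (requiring $r_{\bf C}=F$) in the supplementary material. Your elementary verification of the first hypothesis is correct, and the compound-matrix lemma you isolate (Kruskal's inequality, rewritten as $k_{\bf A}+k_{\bf B}\geq F+m_{\bf C}$, implies that ${\bf M}_{m_{\bf C}}({\bf A})\odot{\bf M}_{m_{\bf C}}({\bf B})$ has full column rank) is precisely what those same references establish, so deferring it to them matches the paper's level of rigor.
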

Note that Theorem \ref{theor:kruskal} is symmetric in ${\bf A}$, ${\bf B}$, ${\bf C}$, while in Theorem \ref{theor:uniqdetnonefcr} the role of ${\bf C}$ is different from that of ${\bf A}$ and ${\bf B}$. Kruskal's condition is sharp, in the sense that there exist decompositions that are not unique as soon as $F$ goes beyond the bound \cite{derksen2013kruskal}. This does not mean that uniqueness is impossible beyond Kruskal's bound -- as indicated, Theorem \ref{theor:uniqdetnonefcr} also covers other cases. (Compare the generic version of Kruskal's condition, $\min(I,F) + \min(J,F) + \min(K,F) \geq 2F +2$, with Theorem \ref{theor:uniqnecsuffalggeom}, for instance.)

Kruskal's original proof is beyond the scope of this overview paper; instead, we refer the reader to \cite{SteSid07} for a compact version that uses only matrix algebra, and to the supplementary material for a relatively simple proof of an intermediate result which still conveys the flavor of Kruskal's derivation.

With respect to generic conditions, one could wonder whether a CPD is not unique almost surely for any value of $F$ strictly less than the generic rank, see the equations-versus-unknowns discussion in Section \ref{sect:rank}. For symmetric decompositions this has indeed been proved, with the exceptions $(N,I;F) = (6,3;9), (4,4;9), (3,6;9)$ where there are two decompositions generically \cite{chiantini2016gensym}. For unsymmetric decompositions it has been verified for tensors up to 15000 entries (larger tensors can be analyzed with a larger computational effort) that the only exceptions are $(I_1, \cdots, I_N; F) = (4,4,3;5)$, $(4,4,4;6)$, $(6,6,3;8)$, $(p,p,2,2;2p-1)$ for $p \in \mathbb{N}$, $(2,2,2,2,2;5)$, and the so-called unbalanced case $I_1 > \alpha$, $F \geq \alpha$, with $\alpha  = \prod_{n=2}^{N} I_n - \sum_{n=2}^{N} (I_n -1)$ \cite{chiantini2014algorithm}.

Note that in the above we assumed that the factor matrices are unconstrained. (Partial) symmetry can be integrated in the deterministic conditions by substituting for instance ${\bf A} = {\bf B}$. (Partial) symmetry does change the generic conditions, as the number of equations / number of parameters ratio is affected, see \cite{domanov2015generic} and references therein for variants. For the partial Hermitian symmetry ${\bf A} = {\bf B}^\ast$ we can do better by constructing the extended $I \times I \times 2K$ tensor $\Xt^{\rm (ext)}$ via $x_{i,j,k}^{\rm (ext)} = x_{i,j,k}$ for $k \leq K$ and $x_{i,j,k}^{\rm (ext)} = x_{j,i,k}^\ast$ for $K+1 \leq k \leq 2K$. We have $\Xt^{\rm (ext)} = \left\llbracket{\bf A},{\bf A}^\ast,{\bf C}^{\rm (ext)}\right\rrbracket$, with ${\bf C}^{\rm (ext)} = \left[{\bf C}^T, {\bf C}^H \right]^T$. Since obviously $r_{{\bf C}^{\rm (ext)}} \geq r_{\bf C}$ and $k_{{\bf C}^{\rm (ext)}} \geq k_{\bf C}$, uniqueness is easier to establish for $\Xt^{\rm (ext)}$ than for $\Xt$ \cite{sorensen2015new}. By exploiting orthogonality, some deterministic conditions can be relaxed as well \cite{MS-LDL-PC-SI-LD}. For a thorough study of implications of nonnegativity, we refer to \cite{DBLP:journals/corr/QiCL14}.

Summarizing, there exist several types of uniqueness conditions. First, there are probabilistic conditions that indicate whether it is reasonable to expect uniqueness for a certain number of terms, given the size of the tensor. Second, there are deterministic conditions that allow one to establish uniqueness for a particular decomposition -- this is useful for an a posteriori analysis of the uniqueness of results obtained by a decomposition algorithm. There  also exist deterministic conditions under which the decomposition can actually be computed using only conventional linear algebra (EVD or GEVD), at least under noise-free conditions. In the case of (mildly) noisy data, such algebraic algorithms can provide a good starting value for optimization-based algorithms (which will be discussed in Section \ref{sect:alg}), i.e. the algebraic solution is refined in an optimization step. Further, the conditions can be affected by constraints. While in the matrix case constraints can make a rank decomposition unique that otherwise is not unique, for tensors the situation is rather that constraints affect the range of values of $F$ for which uniqueness holds.

There exist many more uniqueness results that we didn't touch upon in this overview, but the ones that we did present give a good sense of what is available and what one can expect. In closing this section, we note that many (but not all) of the above results have been extended to the case of higher-order (order $N > 3$) tensors. For example, the following result generalizes Kruskal's theorem to tensors of arbitrary order:
\begin{theorem} \cite{SidBro00} \label{theor:SidBro00}
Given $\Xt = \left\llbracket {\bf A}_1, \ldots, {\bf A}_N \right\rrbracket$, with ${\bf A}_n:~I_n \times F$, if $\sum_{n=1}^N k_{{\bf A}_n} \geq 2F+N-1$, then the decomposition of $\Xt$ in terms of $\left\{{\bf A}_n\right\}_{n=1}^N$ is essentially unique.
\end{theorem}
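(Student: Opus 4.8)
The plan is to \emph{reduce the $N$-way problem to the three-way case} and then invoke Kruskal's three-way result, Theorem~\ref{theor:kruskal}. Partition the modes $\{1,\dots,N\}$ into three nonempty groups $S_1,S_2,S_3$ and, within each group, replace the corresponding factor matrices by their Khatri--Rao product; writing ${\bf G}_i := \odot_{n\in S_i}{\bf A}_n$, the $N$-way identity $\Xt=\left\llbracket{\bf A}_1,\dots,{\bf A}_N\right\rrbracket$ becomes a three-way identity $\widetilde{\Xt}=\left\llbracket{\bf G}_1,{\bf G}_2,{\bf G}_3\right\rrbracket$ for a suitably reshaped $\widetilde{\Xt}$. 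The point that makes this reversible is that the $f$-th column of a Khatri--Rao product ${\bf G}_i$ is the Kronecker product of the $f$-th columns of its factors, i.e. the vectorization of a rank-1 array; hence, once the columns of each ${\bf G}_i$ are pinned down up to scaling, each reshapes to a rank-1 tensor whose vector factors are unique up to scaling, so the individual ${\bf A}_n$ can be read off. Thus essential uniqueness of the merged three-way CPD will yield essential uniqueness of the original, provided I can guarantee the three-way Kruskal condition $k_{{\bf G}_1}+k_{{\bf G}_2}+k_{{\bf G}_3}\ge 2F+2$.

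First I would establish the workhorse lemma $k_{{\bf A}\odot{\bf B}}\ge\min(k_{\bf A}+k_{\bf B}-1,\,F)$. Take any $k:=\min(k_{\bf A}+k_{\bf B}-1,F)$ columns of ${\bf A}\odot{\bf B}$ and suppose $\sum_f\gamma_f\,({\bf a}_f\otimes{\bf b}_f)=\zero$; reshaping, this is $\sum_f\gamma_f\,{\bf b}_f{\bf a}_f^T=\zero$, i.e. ${\bf B}'\,\mathrm{Diag}(\bgamma')\,{\bf A}'^T=\zero$ on the support of $\bgamma$, of size $g$. The elementary rank identity then forces $\mathrm{rank}({\bf A}')+\mathrm{rank}({\bf B}')\le g$, whereas the definition of Kruskal rank gives $\mathrm{rank}({\bf A}')\ge\min(g,k_{\bf A})$ and $\mathrm{rank}({\bf B}')\ge\min(g,k_{\bf B})$; a short case check on whether $g$ exceeds $k_{\bf A}$ and/or $k_{\bf B}$ shows these are incompatible unless $g=0$, proving independence. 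Iterating along a group yields $k_{{\bf G}_i}\ge\min\!\big(\sum_{n\in S_i}k_{{\bf A}_n}-(|S_i|-1),\,F\big)$.

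Next I would verify the three-way condition by a careful choice of grouping, the delicate point being the \emph{saturation} of Kruskal rank at $F$ (a Khatri--Rao product can be full column rank, so merging can ``waste'' Kruskal rank). Order the factors so that $k_{{\bf A}_1}\ge\cdots\ge k_{{\bf A}_N}$ and split on the two largest. If $k_{{\bf A}_1}+k_{{\bf A}_2}\ge F+2$, group as $S_1=\{1\}$, $S_2=\{2\}$, $S_3=\{3,\dots,N\}$; the iterated lemma together with $\sum_n k_{{\bf A}_n}\ge 2F+N-1$ gives $k_{{\bf G}_3}\ge\min\!\big(2F+2-(k_{{\bf A}_1}+k_{{\bf A}_2}),\,F\big)$, and since $k_{{\bf A}_1}+k_{{\bf A}_2}\ge F+2$ the minimum equals its first argument, so $k_{{\bf G}_1}+k_{{\bf G}_2}+k_{{\bf G}_3}\ge 2F+2$ and Theorem~\ref{theor:kruskal} applies. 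If instead $k_{{\bf A}_1}+k_{{\bf A}_2}\le F+1$, then the two \emph{smallest} k-ranks also sum to at most $F+1$, so merging modes $N\!-\!1,N$ is non-saturating ($k_{{\bf A}_{N-1}\odot{\bf A}_N}\ge k_{{\bf A}_{N-1}}+k_{{\bf A}_N}-1$); this produces an $(N\!-\!1)$-way tensor whose k-ranks sum to at least $2F+(N-1)-1$, and I would finish by induction on $N$, the base case $N=3$ being Theorem~\ref{theor:kruskal}.

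The main obstacle is precisely this saturation bookkeeping: a naive merge into three balanced groups fails because a saturated group caps at $F$ and may not contribute enough, so one must argue that \emph{some} valid grouping (or else the non-saturating pairwise merge) always exists under the stated inequality -- which is exactly what the dichotomy on $k_{{\bf A}_1}+k_{{\bf A}_2}$ delivers. A secondary, more routine obstacle is threading the scaling/permutation ambiguity through the merge-and-unmerge: Theorem~\ref{theor:kruskal} returns ${\bf G}_1,{\bf G}_2,{\bf G}_3$ up to a \emph{common} column permutation and scaling, and I would check that reshaping each recovered column of a ${\bf G}_i$ into a rank-1 array and extracting its essentially unique vector factors recovers every ${\bf A}_n$ up to that same common permutation, with the per-column scalars absorbed into the overall scaling ambiguity; finally, no $N$-way decomposition with fewer than $F$ terms can exist, since merging it would contradict $\mathrm{rank}(\widetilde{\Xt})=F$.
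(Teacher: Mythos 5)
Your proposal is correct and follows essentially the same route the paper sketches for this theorem: reshape the $N$-way tensor into a third-order one whose loading matrices are Khatri--Rao products of the ${\bf A}_n$, bound the Kruskal rank of those products via Property \ref{frpkrp}, and invoke Kruskal's three-way result (Theorem \ref{theor:kruskal}). Your dichotomy on $k_{{\bf A}_1}+k_{{\bf A}_2}$ (keep the two largest-$k$-rank modes separate when they sum to at least $F+2$, otherwise merge the two smallest, which is then non-saturating, and induct on $N$) together with the rank-1 unmerging step is precisely the bookkeeping that the paper defers to \cite{SidBro00}.
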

This condition is sharp in the same sense as the $N=3$ version is sharp \cite{derksen2013kruskal}.
The starting point for proving Theorem \ref{theor:SidBro00} is that a fourth-order tensor of rank $F$ can be written in third-order form as ${\bf X}_{[1,2;3;4]} = \left\llbracket {\bf A}_1 \odot {\bf A}_2, {\bf A}_3, {\bf A}_4 \right\rrbracket$ $=$ $\left\llbracket {\bf A}_{[1,2]}, {\bf A}_3, {\bf A}_4 \right\rrbracket$ -- i.e., can be viewed as a third-order tensor with a specially structured mode loading matrix ${\bf A}_{[1,2]} := {\bf A}_1 \odot {\bf A}_2$. Therefore, Kruskal's third-order result can be applied, and what matters is the k-rank of the Khatri--Rao product ${\bf A}_1 \odot {\bf A}_2$ -- see property \ref{frpkrp} in the supplementary material, 
and \cite{SidBro00} for the full proof.

\section{The Tucker model and Multilinear Singular Value Decomposition} \label{sect:MLSVD}

\subsection{Tucker and CPD}

Any $I \times J$ matrix ${\bf X}$ can be decomposed via SVD as ${\bf X} = {\bf U} \bSigma {\bf V}^T$, where ${\bf U}^T {\bf U} = {\bf I} = {\bf U} {\bf U}^T$, ${\bf V}^T {\bf V} = {\bf I} = {\bf V} {\bf V}^T$, $\bSigma(i,j) \geq 0$, $\bSigma(i,j) > 0$ only when $j=i$ and $i \leq r_{\bf X}$, and $\bSigma(i,i) \geq \bSigma(i+1,i+1)$, $\forall i$. With ${\bf U} := [{\bf u}_1,\cdots, {\bf u}_I]$, ${\bf V} := [{\bf v}_1,\cdots, {\bf v}_J]$, and $\sigma_f := \bSigma(f,f)$, we can thus write ${\bf X} = {\bf U}(:,1:F) \bSigma(1:F,1:F) ({\bf V}(:,1:F))^T = \sum_{f=1}^F \sigma_f {\bf u}_f {\bf v}_f^T$.

The question here is whether we can generalize the SVD to tensors, and if there is a way of doing so that retains the many beautiful properties of matrix SVD. The natural generalization would be to employ another matrix, of size $K \times K$, call it ${\bf W}$, such that ${\bf W}^T {\bf W} = {\bf I} = {\bf W} {\bf W}^T$, and a nonnegative $I \times J \times K$ {\em core tensor} $\bSigma$ such that $\bSigma(i,j,k) > 0$ only when $k=j=i$ -- see the schematic illustration in Fig. \ref{fig-nikos3}.
\begin{figure}[t]
\includegraphics[width=.45\textwidth]{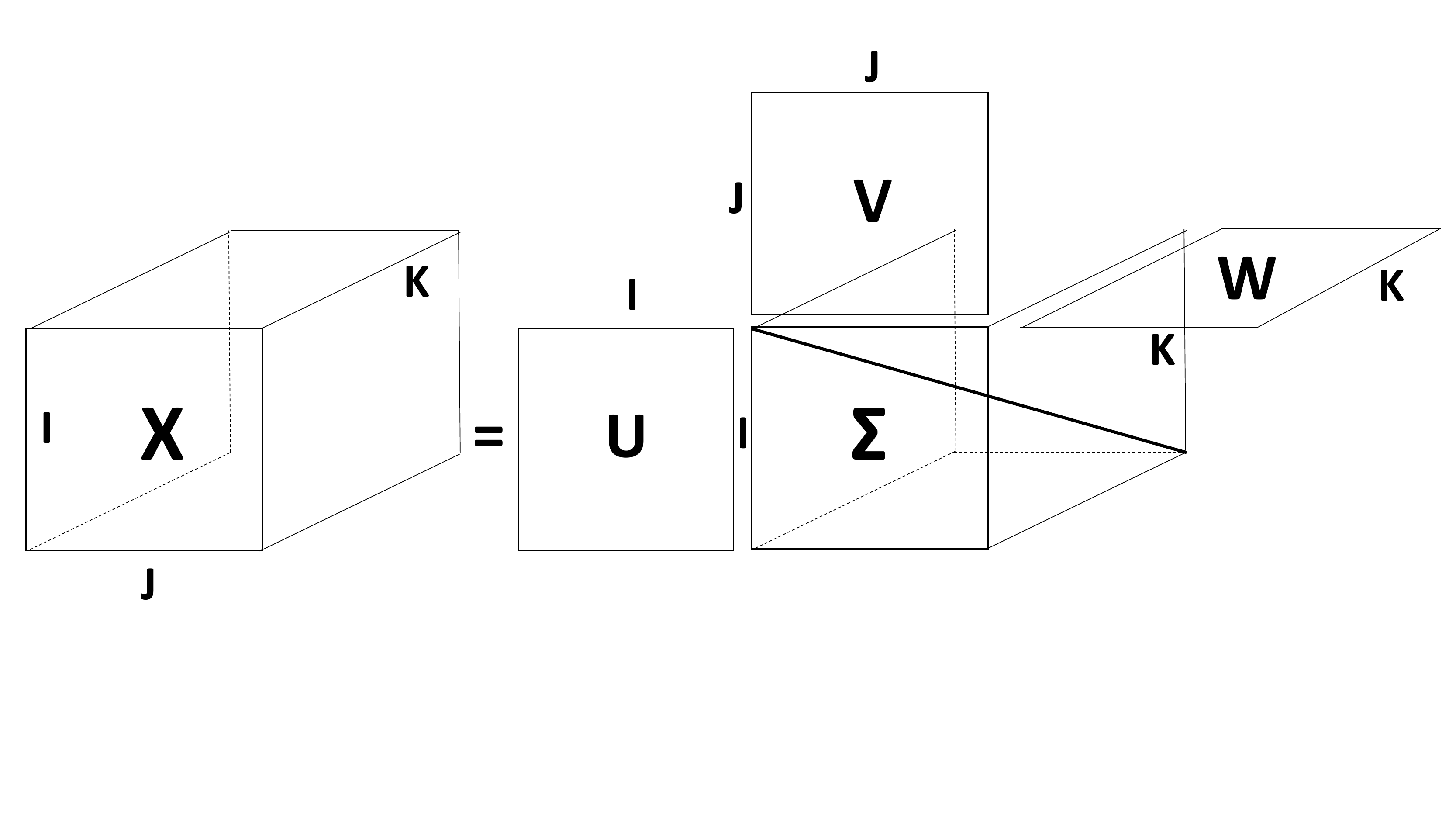}
\vspace{-40pt}
\caption{Diagonal tensor SVD?}\label{fig-nikos3}
\vspace{-10pt}
\end{figure}
Is it possible to decompose an arbitrary tensor in this way? A back-of-the-envelop calculation shows that the answer is no. Even disregarding the orthogonality constraints, the degrees of freedom in such a decomposition would be less\footnote{Since the model exhibits scaling/counter-scaling invariances.} than $I^2+J^2+K^2 + \min(I,J,K)$, which is in general $< IJK$ -- the number of (nonlinear) equality constraints. [Note that, for matrices, $I^2+J^2 + \min(I,J) > I^2+J^2  > IJ$, always.] A more formal way to look at this is that the model depicted in Fig. \ref{fig-nikos3} can be written as
\[
\sigma_1 {\bf u}_1 \circledcirc  {\bf v}_1 \circledcirc  {\bf w}_1 + \sigma_2 {\bf u}_2 \circledcirc  {\bf v}_2 \circledcirc  {\bf w}_2 + \cdots + \sigma_m {\bf u}_m \circledcirc  {\bf v}_m \circledcirc  {\bf w}_m,
\]
where $m := \min(I,J,K)$. The above is a tensor of rank at most $\min(I,J,K)$, but we know that tensor rank can be (much) higher than that. Hence we certainly have to give up diagonality. Consider instead a full (possibly dense, but ideally sparse) core tensor ${\bf G}$, as illustrated in Fig. \ref{fig-nikos4}.
\begin{figure}[t]
\includegraphics[width=.45\textwidth]{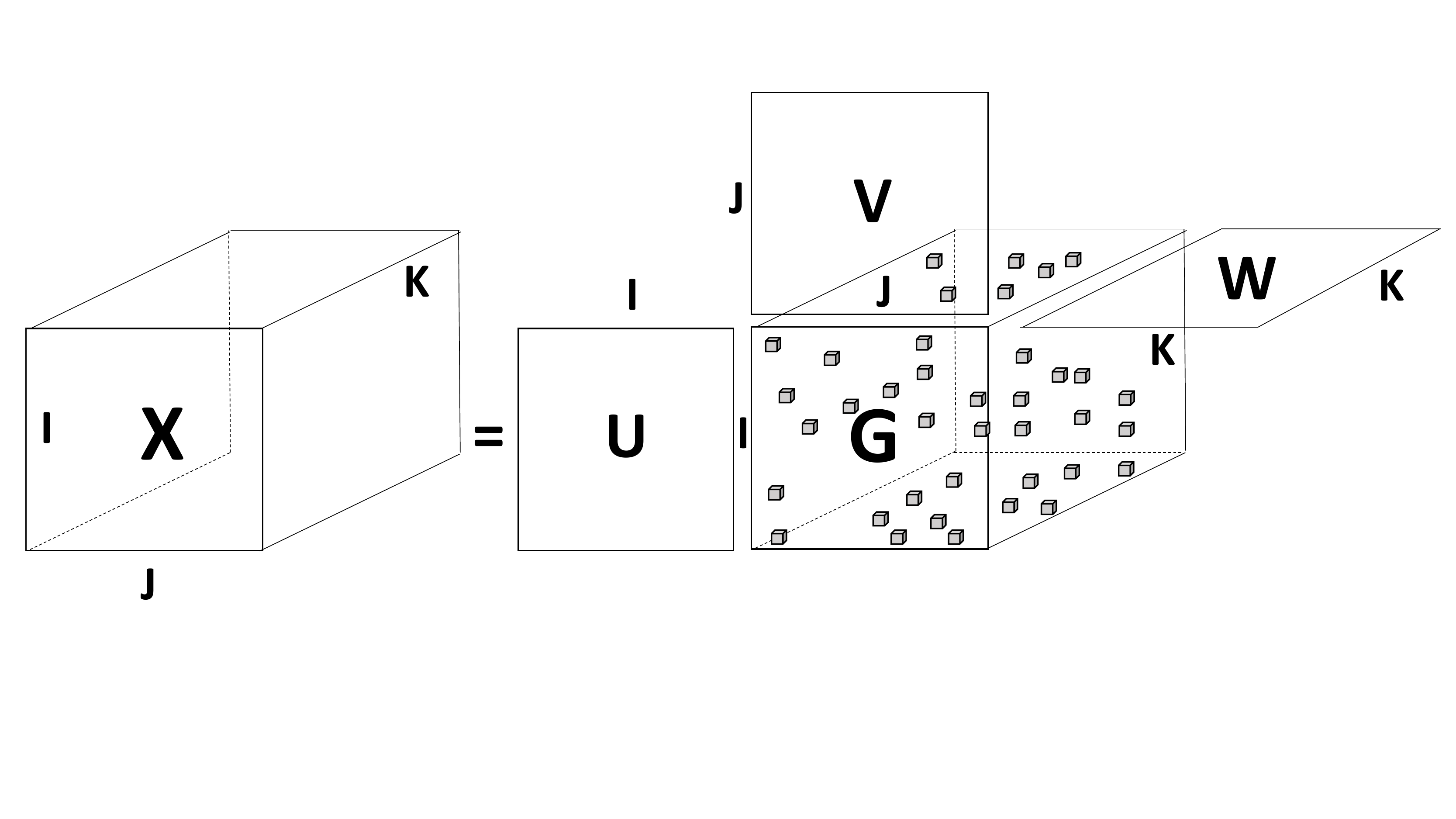}
\vspace{-40pt}
\caption{The Tucker model}\label{fig-nikos4}
\vspace{-10pt}
\end{figure}
\begin{figure}[t]
\includegraphics[width=.45\textwidth]{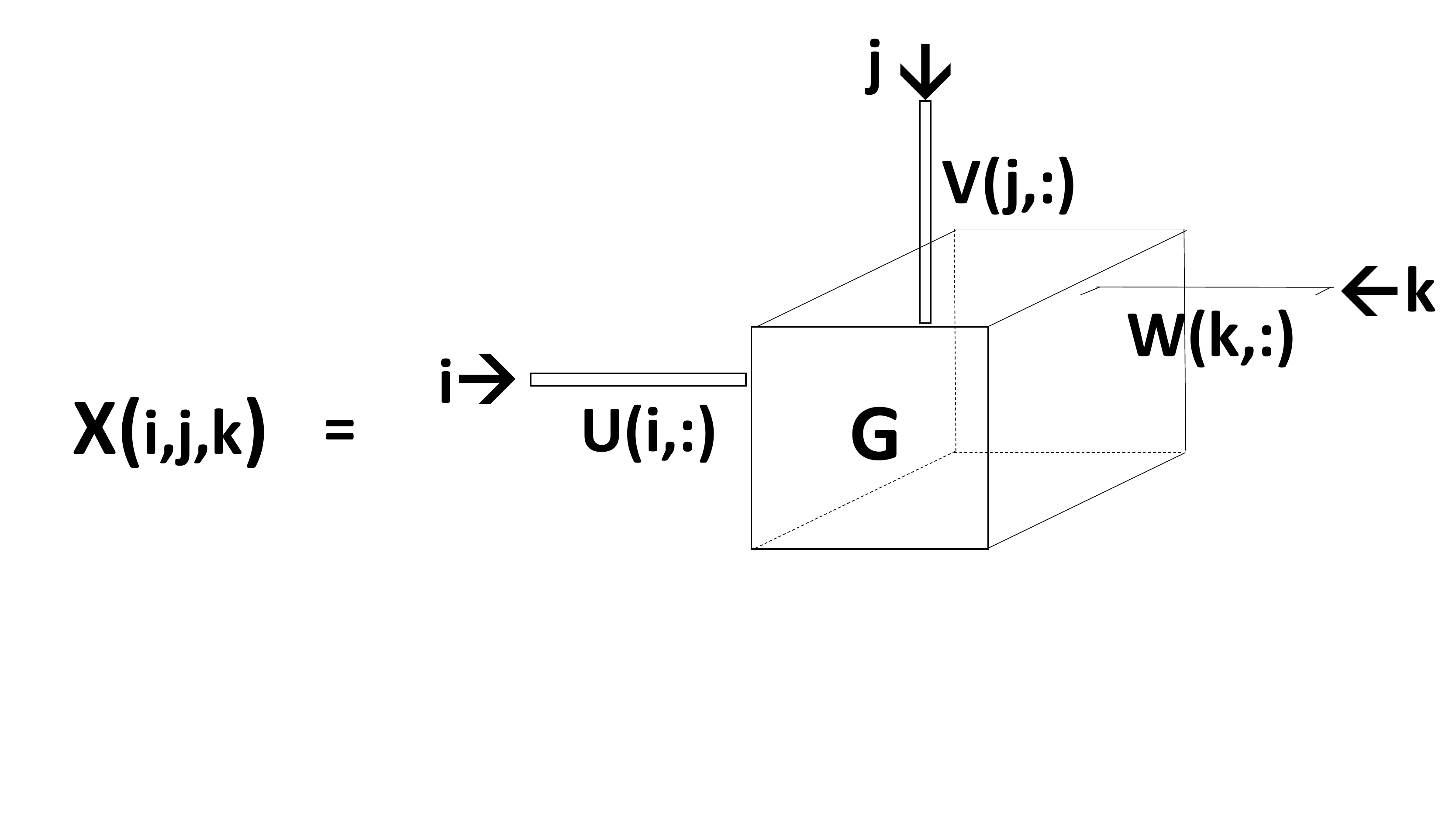}
\vspace{-40pt}
\caption{Element-wise view of the Tucker model}\label{fig-nikos5}
\vspace{-10pt}
\end{figure}
An element-wise interpretation of the decomposition in Fig. \ref{fig-nikos4} is shown in Fig. \ref{fig-nikos5}. From Fig. \ref{fig-nikos5}, we write
\[
\Xt(i,j,k)=\sum_{\ell=1}^I \sum_{m=1}^J \sum_{n=1}^K \Gt(\ell,m,n) {\bf U}(i,\ell) {\bf V}(j,m) {\bf W}(k,n),
\]
or, equivalently,
\[
\Xt = \sum_{\ell=1}^I \sum_{m=1}^J \sum_{n=1}^K \Gt(\ell,m,n) {\bf U}(:,\ell) \circledcirc  {\bf V}(:,m) \circledcirc  {\bf W}(:,n),
\]
\begin{equation}
\label{eqn:Tucker}
\text{or}~\Xt = \sum_{\ell=1}^I \sum_{m=1}^J \sum_{n=1}^K \Gt(\ell,m,n) {\bf u}_{\ell} \circledcirc  {\bf v}_m \circledcirc  {\bf w}_n,
\end{equation}
where ${\bf u}_{\ell} := {\bf U}(:,\ell)$ and likewise for the ${\bf v}_m$, ${\bf w}_n$. Note that each column of ${\bf U}$ interacts with every column of ${\bf V}$ and every column of ${\bf W}$ in this decomposition, and the strength of this interaction is encoded in the corresponding element of $\Gt$. This is different from the rank decomposition model (CPD) we were discussing until this section, which only allows interactions between corresponding columns of ${\bf A}, {\bf B}, {\bf C}$, i.e., the only outer products that can appear in the CPD are of type ${\bf a}_f \circledcirc  {\bf b}_f \circledcirc  {\bf c}_f$. On the other hand, we emphasize that the {\em Tucker model} in \eqref{eqn:Tucker} also allows ``mixed'' products of non-corresponding columns of ${\bf U}$, ${\bf V}$, ${\bf W}$. Note that {\em any} tensor $\Xt$ can be written in Tucker form \eqref{eqn:Tucker}, and a trivial way of doing so is to take ${\bf U}={\bf I}_{I \times I}$, ${\bf V}={\bf I}_{J \times J}$, ${\bf W}={\bf I}_{K \times K}$, and $\Gt=\Xt$. Hence we may seek a possibly sparse $\Gt$, which could help reveal the underlying ``essential'' interactions between triples of columns of ${\bf U}$, ${\bf V}$, ${\bf W}$. This is sometimes useful when one is interested in quasi-CPD models. The main interest in Tucker though is for finding subspaces and for tensor approximation purposes.

From the above discussion, it may appear that CPD is a special case of the Tucker model, which appears when $\Gt(\ell,m,n)=0$ for all $\ell,m,n$ except possibly for $\ell=m=n$. However, when ${\bf U}$, ${\bf V}$, ${\bf W}$ are all square, such a restricted diagonal Tucker form can only model tensors up to rank $\min(I,J,K)$. If we allow ``fat'' (and therefore, clearly, non-orthogonal) ${\bf U}$, ${\bf V}$, ${\bf W}$ in Tucker though, it is possible to think of CPD as a special case of such a ``blown-up'' non-orthogonal Tucker model.

By a similar token, if we allow column repetition in ${\bf A}$, ${\bf B}$, ${\bf C}$ for CPD, i.e., every column of ${\bf A}$ is repeated $JK$ times, and we call the result ${\bf U}$; every column of ${\bf B}$ is repeated $IK$ times, and we call the result ${\bf V}$; and every column of ${\bf C}$ is repeated $IJ$ times, and we call the result ${\bf W}$, then it is possible to think of non-orthogonal Tucker as a special case of CPD -- but notice that, due to column repetitions, this particular CPD model has k-ranks equal to one in all modes, and is therefore highly non-unique.

In a nutshell, both CPD and Tucker are sum-of-outer-products models, and one can argue that the most general form of one contains the other. What distinguishes the two is uniqueness, which is related but not tantamount to model parsimony (``minimality''); and modes of usage, which are quite different for the two models, as we will see.

\subsection{MLSVD and approximation}

By now the reader must have developed some familiarity with vectorization, and it should be clear that the Tucker model can be equivalently written in various useful ways, such as in vector form as
\[
{\bf x} := \text{vec}(\Xt) = \left( {\bf U} \otimes {\bf V} \otimes {\bf W} \right) {\bf g},
\]
where ${\bf g} := \text{vec}(\Gt)$, and the order of vectorization of $\Xt$ only affects the order in which the factor matrices ${\bf U}$, ${\bf V}$, ${\bf W}$ appear in the Kronecker product chain, and of course the corresponding permutation of the elements of ${\bf g}$. From the properties of the Kronecker product, we know that the expression above is the result of vectorization of matrix
\[
{\bf X}_1 = \left( {\bf V} \otimes {\bf W} \right) {\bf G}_1 {\bf U}^T
\]
where the $KJ \times I$ matrix ${\bf X}_1$ contains all rows (mode-1 vectors) of tensor $\Xt$, and the $KJ \times I$ matrix ${\bf G}_1$ is a likewise reshaped form of the core tensor ${\bf G}$. From this expression it is evident that we can linearly transform the columns of ${\bf U}$ and absorb the inverse transformation in ${\bf G}_1$, i.e.,
\[
{\bf G}_1 {\bf U}^T = {\bf G}_1 {\bf M}^{-T} \left({\bf U} {\bf M}\right)^T,
\]
from which it follows immediately that the Tucker model is not unique. Recalling that ${\bf X}_1$ contains all rows of tensor $\Xt$, and letting $r_1$ denote the row-rank (mode-1 rank) of $\Xt$, it is clear that, without loss of generality, we can pick ${\bf U}$ to be an $I \times r_1$ orthonormal basis of the row-span of $\Xt$, and absorb the linear transformation in $\Gt$, which is thereby reduced from $I \times J \times K$ to $r_1 \times J \times K$. Continuing in this fashion with the other two modes, it follows that, without loss of generality, the Tucker model can be written as
\[
{\bf x} := \text{vec}(\Xt) = \left( {\bf U}_{r_1} \otimes {\bf V}_{r_2} \otimes {\bf W}_{r_3} \right) {\bf g},
\]
where ${\bf U}_{r_1}$ is $I \times r_1$, ${\bf V}_{r_2}$ is $J \times r_2$, ${\bf W}_{r_3}$ is $K \times r_3$, and ${\bf g} := \text{vec}(\Gt)$ is $r_1 r_2 r_3 \times 1$ -- the vectorization of the $r_1 \times r_2 \times r_3$ reduced-size core tensor $\Gt$. This compact-size Tucker model is depicted in Fig. \ref{fig-nikos6}.
\begin{figure}
\vspace{-20pt}
\includegraphics[width=.45\textwidth]{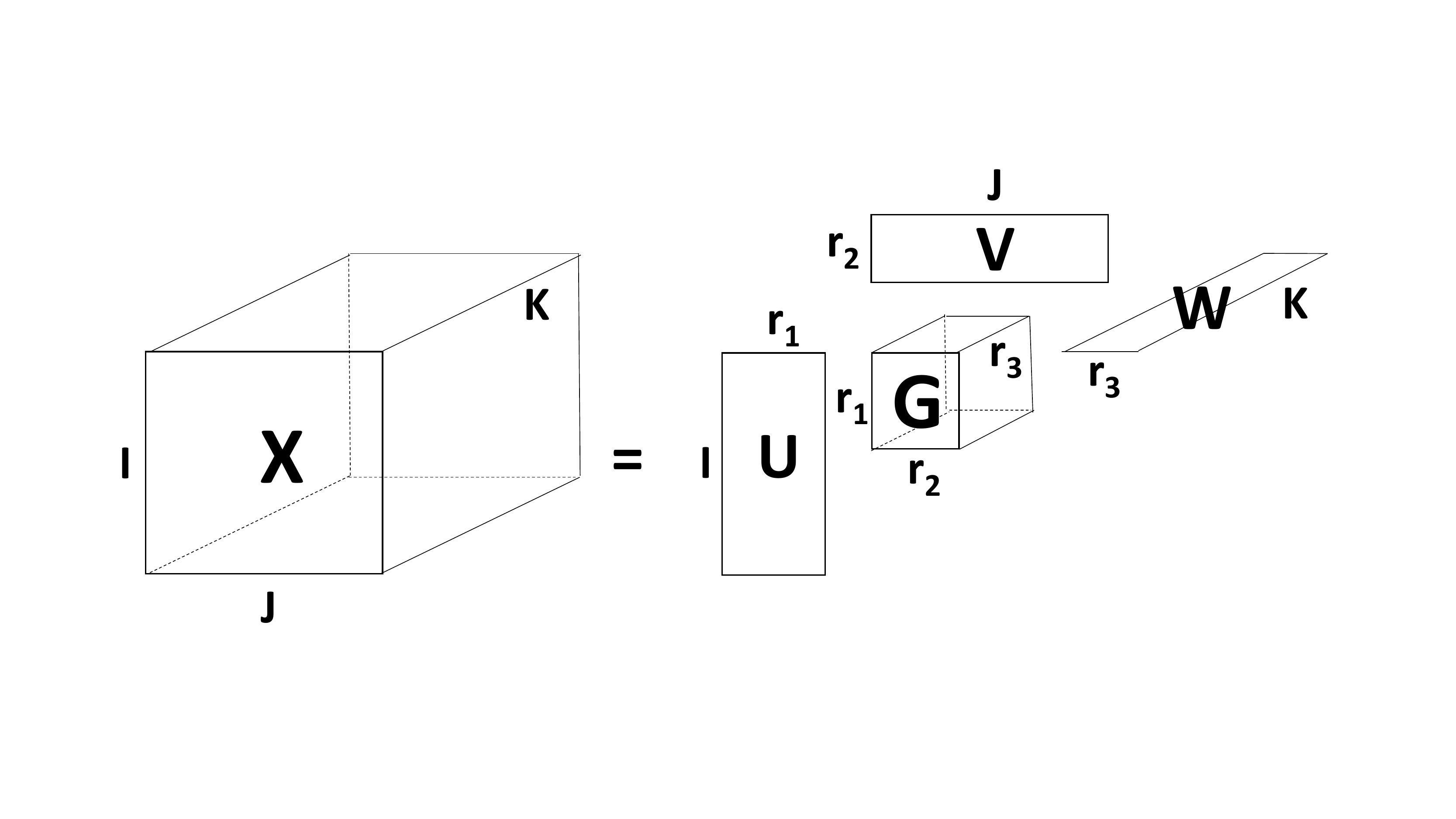}
\vspace{-30pt}
\caption{Compact (reduced) Tucker model: $r_1$, $r_2$, $r_3$ are the mode (row, column, fiber, resp.) ranks of $\Xt$.}\label{fig-nikos6}
\vspace{-10pt}
\end{figure}
Henceforth we drop the subscripts from ${\bf U}_{r_1}$, ${\bf V}_{r_2}$, ${\bf W}_{r_3}$ for brevity -- the meaning will be clear from context. The Tucker model with orthonormal ${\bf U}$, ${\bf V}$, ${\bf W}$ chosen as the right singular vectors of the matrix unfoldings ${\bf X}_1$, ${\bf X}_2$, ${\bf X}_3$, respectively, is also known as the multilinear SVD (MLSVD) (earlier called the higher-order SVD: HOSVD) \cite{doi:10.1137/S0895479896305696}, and it has several interesting and useful properties, as we will soon see.

It is easy to see that orthonormality of the columns of ${\bf U}_{r_1}$, ${\bf V}_{r_2}$, ${\bf W}_{r_3}$ implies orthonormality of the columns of their Kronecker product. This is because $({\bf U}_{r_1} \otimes {\bf V}_{r_2})^T ({\bf U}_{r_1} \otimes {\bf V}_{r_2}) = ({\bf U}_{r_1}^T \otimes {\bf V}_{r_2}^T) ({\bf U}_{r_1} \otimes {\bf V}_{r_2}) = ({\bf U}_{r_1}^T {\bf U}_{r_1}) \otimes ({\bf V}_{r_2}^T {\bf V}_{r_2}) = {\bf I} \otimes {\bf I} = {\bf I}$. Recall that ${\bf x}_1 \perp {\bf x}_2 \Longleftrightarrow {\bf x}_1^T {\bf x}_2 = 0 \Longrightarrow ||{\bf x}_1 + {\bf x}_2||_2^2 = ||{\bf x}_1||_2^2 + ||{\bf x}_2||_2^2$. It follows that
\[
||\Xt||_F^2 := \sum_{\forall ~ i,j,k} |\Xt(i,j,k)|^2 = ||{\bf x}||_2^2 = ||{\bf g}||_2^2 = ||\Gt||_F^2,
\]
where ${\bf x} = \text{vec}(\Xt)$, and ${\bf g} = \text{vec}(\Gt)$. It also follows that, if we drop certain outer products from the decomposition ${\bf x} = \left( {\bf U} \otimes {\bf V} \otimes {\bf W} \right) {\bf g}$, or  equivalently from (\ref{eqn:Tucker}),
i.e., set the corresponding core elements to zero, then, by orthonormality
\[
\left|\left|\Xt - \widehat \Xt\right|\right|_F^2 = \sum_{(\ell,m,n) \in {\cal D}} |\Gt(\ell,m,n)|^2,
\]
where ${\cal D}$ is the set of dropped core element indices. So, if we order the elements of $\Gt$ in order of decreasing magnitude, and discard the ``tail'', then $\widehat \Xt$ will be close to $\Xt$, and we can quantify the error without having to reconstruct $\Xt$, take the difference and evaluate the norm.

In trying to generalize the matrix SVD, we are tempted to consider dropping entire columns of ${\bf U}$, ${\bf V}$, ${\bf W}$. Notice that, for matrix SVD, this corresponds to zeroing out small singular values on the diagonal of matrix $\bSigma$, and per the Eckart--Young theorem, it is optimal in that it yields the best low-rank approximation of the given matrix. Can we do the same for higher-order tensors?

First note that we can permute the slabs of $\Gt$ in any direction, and permute the corresponding columns of ${\bf U}$, ${\bf V}$, ${\bf W}$ accordingly -- this is evident from \eqref{eqn:Tucker}. In this way, we may bring the frontal slab with the highest energy $||\Gt(:,:,n)||_F^2$ up front, then the one with second highest energy, etc. Next, we can likewise order the lateral slabs of the core {\em without changing the energy of the frontal slabs}, and so on -- and in this way, we can compact the energy of the core on its upper-left-front corner. We can then truncate the core, keeping only its upper-left-front dominant part of size $r_1^{'} \times r_2^{'} \times r_3^{'}$, with $r_1^{'} \leq r_1$, $r_2^{'} \leq r_2$, and $r_3^{'} \leq r_3$. The resulting approximation error can be readily bounded as
\begin{eqnarray*}
\left|\left|\Xt - \widehat \Xt\right|\right|_F^2 & \leq & \sum_{\ell=r_1^{'}+1}^{r_1} \left|\left|\Gt(\ell,:,:)\right|\right|_F^2 + \sum_{m=r_2^{'}+1}^{r_2} \left|\left|\Gt(:,m,:)\right|\right|_F^2\\
& & + \sum_{n=r_3^{'}+1}^{r_3} \left|\left|\Gt(:,:,n)\right|\right|_F^2,
\end{eqnarray*}
where we use $\leq$ as opposed to $=$ because dropped elements may be counted up to three times (in particular, the lower-right-back ones). One can of course compute the exact error of such a truncation strategy, but this involves instantiating $\Xt - \widehat \Xt$.

Either way, such truncation in general {\em does not} yield the best approximation of $\Xt$ for the given $(r_1^{'},r_2^{'},r_3^{'})$. That is, there is no exact equivalent of the Eckart--Young theorem for tensors of order higher than two \cite{doi:10.1137/S0895479801394465} -- in fact, as we will see later, the best low multilinear rank approximation problem for tensors is NP-hard. Despite this ``bummer'', much of the beauty of matrix SVD remains in MLSVD, as explained next. In particular, the slabs of the core array $\Gt$ along each mode are orthogonal to each other, i.e., $(\text{vec}(\Gt(\ell,:,:)))^T \text{vec}(\Gt(\ell^{'},:,:))=0$ for $\ell^{'} \neq \ell$, and $\left|\left|\Gt(\ell,:,:)\right|\right|_F$ equals the $\ell$-th singular value of ${\bf X}_1$; and similarly for the other modes (we will actually prove a more general result very soon). These orthogonality and Frobenius norm properties of the Tucker core array generalize a property of matrix SVD: namely, the ``core matrix'' of singular values $\bSigma$ in matrix SVD is diagonal, which implies that its rows are orthogonal to each other, and the same is true for its columns. Diagonality thus implies orthogonality of one-lower-order slabs (sub-tensors of order one less than the original tensor), but the converse is not true, e.g., consider
\[
\left[
  \begin{array}{lr}
    1 & 1 \\
    1 & -1 \\
  \end{array}
\right].
\]
We have seen that diagonality of the core is not possible in general for higher-order tensors, because it severely limits the degrees of freedom; but all-orthogonality of one-lower-order slabs of the core array, and the interpretation of their Frobenius norms as singular values of a certain matrix view of the tensor come without loss of generality (or optimality, as we will see in the proof of the next property). This intuitively pleasing result was pointed out by De Lathauwer \cite{doi:10.1137/S0895479896305696}, and it largely motivates the analogy to matrix SVD -- albeit simply truncating slabs (or elements) of the full core will not give the best low multilinear rank approximation of $\Xt$ in the case of three- and higher-order tensors. The error bound above is actually the proper generalization of the Eckart--Young theorem. In the matrix case, because of diagonality there is only one summation and equality instead of inequality.

Simply truncating the MLSVD at sufficiently high $(r_1^{'},r_2^{'},r_3^{'})$ is often enough to obtain a good approximation in practice -- we may control the error as we wish, so long as we pick high enough $(r_1^{'},r_2^{'},r_3^{'})$. The error $||\Xt - \widehat \Xt||_F^2$ is in fact at most 3 times higher than the minimal error ($N$ times higher in the $N$-th order case) \cite{grasedyck2013literature,hackbusch2012tensor}. If we are interested in the best possible approximation of $\Xt$ with mode ranks $(r_1^{'},r_2^{'},r_3^{'})$, however, then we need to consider the following, after dropping the $^{'}$s for brevity:
\begin{property} \label{pptuc} \cite{doi:10.1137/S0895479896305696,doi:10.1137/S0895479898346995}
Let $\left(\widehat{\bf U}, \widehat{\bf V}, \widehat{\bf W}, \widehat{\bf G}_1\right)$ be a solution to
\begin{eqnarray*}
  \min_{\left({\bf U}, {\bf V}, {\bf W}, {\bf G}_1\right)} && ||{\bf X}_1 - ({\bf V} \otimes {\bf W}) {\bf G}_1 {\bf U}^T||_F^2 \\
  \text{such that:} && {\bf U}: I \times r_1,~r_1 \leq I,~{\bf U}^T{\bf U}={\bf I}\\
  && {\bf V}: J \times r_2,~r_2 \leq J,~{\bf V}^T{\bf V}={\bf I}\\
  && {\bf W}: K \times r_3,~r_3 \leq K,~{\bf W}^T{\bf W}={\bf I}\\
  && {\bf G}_1: r_3 r_2 \times r_1
\end{eqnarray*}
Then
\begin{itemize}
\item $\widehat {\bf G}_1 = (\widehat {\bf V} \otimes \widehat {\bf W})^T {\bf X}_1 \widehat {\bf U}$;
\item Substituting the conditionally optimal ${\bf G}_1$, the problem can be recast in ``concentrated'' form as
\begin{eqnarray*}
  \max_{\left({\bf U}, {\bf V}, {\bf W} \right)} && ||({\bf V} \otimes {\bf W})^T {\bf X}_1 {\bf U}||_F^2 \\
  \text{such that:} && {\bf U}: I \times r_1,~r_1 \leq I,~{\bf U}^T{\bf U}={\bf I}\\
  && {\bf V}: J \times r_2,~r_2 \leq J,~{\bf V}^T{\bf V}={\bf I}\\
  && {\bf W}: K \times r_3,~r_3 \leq K,~{\bf W}^T{\bf W}={\bf I}
\end{eqnarray*}
\item $\widehat {\bf U}$ $=$ dominant $r_1$-dim. right subspace of $(\widehat {\bf V} \otimes \widehat {\bf W})^T {\bf X}_1$;
\item $\widehat {\bf V}$ $=$ dominant $r_2$-dim. right subspace of $(\widehat {\bf U} \otimes \widehat {\bf W})^T {\bf X}_2$;
\item $\widehat {\bf W}$ $=$ dominant $r_3$-dim. right subspace of $(\widehat {\bf U} \otimes \widehat {\bf V})^T {\bf X}_3$;
\item $\widehat{\bf G}_1$ has orthogonal columns; and
\item $\left\{||\widehat {\bf G}_1(:,m)||_2^2\right\}_{m=1}^{r_1}$ are the $r_1$ principal singular values of $(\widehat {\bf V} \otimes \widehat {\bf W})^T {\bf X}_1$. Note that each column of $\widehat{\bf G}_1$ is a vectorized slab of the core array $\widehat \Gt$ obtained by fixing the first reduced dimension index to some value.
\end{itemize}
\end{property}
\begin{proof}
Note that $||\text{vec}({\bf X}_1) - \left( {\bf U} \otimes {\bf V} \otimes {\bf W} \right) \text{vec}({\bf G}_1)||_2^2 = ||{\bf X}_1 - ({\bf V} \otimes {\bf W}) {\bf G}_1 {\bf U}^T||_F^2$, so conditioned on (orthonormal) ${\bf U}$, ${\bf V}$, ${\bf W}$ the optimal $\Gt$ is given by $\text{vec}(\widehat {\bf G}_1) = \left( {\bf U} \otimes {\bf V} \otimes {\bf W} \right)^T \text{vec}({\bf X}_1)$, and therefore $\widehat {\bf G}_1 = ({\bf V} \otimes {\bf W})^T {\bf X}_1 {\bf U}$.

Consider $||{\bf X}_1 - ({\bf V} \otimes {\bf W}) {\bf G}_1 {\bf U}^T||_F^2$, define $\widetilde {\bf X}_1 := ({\bf V} \otimes {\bf W}) {\bf G}_1 {\bf U}^T$, and use that $||{\bf X}_1 - \widetilde {\bf X}_1||_F^2 = \text{Tr}(({\bf X}_1 - \widetilde {\bf X}_1)^T({\bf X}_1 - \widetilde {\bf X}_1)) = ||{\bf X}_1||_F^2 + ||\widetilde {\bf X}_1||_F^2 - 2 \text{Tr}({\bf X}_1^T \widetilde {\bf X}_1)$. By orthonormality of ${\bf U}$, ${\bf V}$, ${\bf W}$, it follows that $||\widetilde {\bf X}_1||_F^2= ||{\bf G}_1||_F^2$. Now, consider
\[
- 2 \text{Tr}({\bf X}_1^T \widetilde {\bf X}_1) = - 2 \text{Tr}({\bf X}_1^T ({\bf V} \otimes {\bf W}) {\bf G}_1 {\bf U}^T),
\]
and substitute ${\bf G}_1 = ({\bf V} \otimes {\bf W})^T {\bf X}_1 {\bf U}$ to obtain
\[
- 2 \text{Tr}({\bf X}_1^T ({\bf V} \otimes {\bf W}) ({\bf V} \otimes {\bf W})^T {\bf X}_1 {\bf U} {\bf U}^T).
\]
Using a property of the trace operator to bring the rightmost matrix to the left, we obtain
\[
- 2 \text{Tr}({\bf U}^T {\bf X}_1^T ({\bf V} \otimes {\bf W}) ({\bf V} \otimes {\bf W})^T {\bf X}_1 {\bf U}) =
\]
\[
- 2 \text{Tr}({\bf G}_1^T {\bf G}_1) = -2 ||{\bf G}_1||_F^2.
\]
It follows that $||{\bf X}_1 - ({\bf V} \otimes {\bf W}) {\bf G}_1 {\bf U}^T||_F^2 = ||{\bf X}_1||_F^2 - ||{\bf G}_1||_F^2$,
so we may equivalently maximize $||{\bf G}_1||_F^2=||({\bf V} \otimes {\bf W})^T {\bf X}_1 {\bf U}||_F^2$. From this, it immediately follows that $\widehat {\bf U}$ is the dominant right subspace of $(\widehat {\bf V} \otimes \widehat {\bf W})^T {\bf X}_1$, so we can take it to be the $r_1$ principal right singular vectors of $(\widehat {\bf V} \otimes \widehat {\bf W})^T {\bf X}_1$. The respective results for $\widehat {\bf V}$ and $\widehat {\bf W}$ are obtained by appealing to role symmetry. Next, we show that $\widehat{\bf G}_1$ has orthogonal columns. To see this, let $\widehat{\bf G}_1 = [\widehat {\bf g}_{1,1},\cdots,\widehat {\bf g}_{1,r_1}]$, and $\widehat{\bf U} = [\widehat {\bf u}_{1},\cdots,\widehat {\bf u}_{r_1}]$. Consider
\[
\widehat {\bf g}_{1,m_1}^T \widehat {\bf g}_{1,m_2} = \widehat {\bf u}_{m_1}^T {\bf X}_1^T (\widehat {\bf V} \otimes \widehat {\bf W}) (\widehat {\bf V} \otimes \widehat {\bf W})^T {\bf X}_1 \widehat {\bf u}_{m_2}.
\]
Let ${\bGamma} \bSigma \tilde {\bf U}^T$ be the SVD of  $(\widehat {\bf V} \otimes \widehat {\bf W})^T {\bf X}_1$. Then $\tilde {\bf U} = [\widehat {\bf U},~\check {\bf U}]$, so
\[
(\widehat {\bf V} \otimes \widehat {\bf W})^T {\bf X}_1 {\bf u}_{m_2} = {\bgamma}_{m_2} \sigma_{m_2},
\]
with obvious notation; and therefore
\[
\widehat {\bf g}_{1,m_1}^T \widehat {\bf g}_{1,m_2} = \sigma_{m_1} \sigma_{m_2} {\bgamma}_{m_1}^T {\bgamma}_{m_2} = \sigma_{m_1} \sigma_{m_2} \delta(m_1-m_2),
\]
by virtue of orthonormality of left singular vectors of $(\widehat {\bf V} \otimes \widehat {\bf W})^T {\bf X}_1$ (here $\delta(\cdot)$ is the Kronecker delta). By role symmetry, it follows that the slabs of $\widehat {\bf G}$ along any mode are likewise orthogonal. It is worth mentioning that, as a byproduct of the last equation, $||\widehat \Gt(:,:,m)||_F^2 = ||\widehat {\bf G}_1(:,m)||_2^2 = ||\widehat {\bf g}_{1,m}||_2^2 = \sigma_m^2$; that is, {\em the Frobenius norms of the lateral core slabs are the $r_1$ principal singular values of} $(\widehat {\bf V} \otimes \widehat {\bf W})^T {\bf X}_1$.
\end{proof}
The best rank-1 tensor approximation problem over $\Real$ is NP-hard \cite[Theorem 1.13]{hillar2013most}, so the best low multilinear rank approximation problem is also NP-hard (the best multilinear rank approximation with $(r_1,r_2,r_3)=(1,1,1)$ is the best rank-1 approximation). This is reflected in a key limitation of the characterization in Property \ref{pptuc}, which gives explicit expressions that relate the sought ${\bf U}$, ${\bf V}$, ${\bf W}$, and $\Gt$, but it does not provide an explicit solution for any of them. On the other hand, Property \ref{pptuc} naturally suggests the following alternating least squares scheme:
\begin{center}
{\bf $\perp$-Tucker ALS}
\end{center}
\begin{enumerate}
\item Initialize:
\begin{itemize}
\item ${\bf U}$ $=$ $r_1$ principal right singular vectors of ${\bf X}_1$;
\item ${\bf V}$ $=$ $r_2$ principal right singular vectors of ${\bf X}_2$;
\item ${\bf W}$ $=$ $r_3$ principal right singular vectors of ${\bf X}_3$;
\end{itemize}
\item repeat:
\begin{itemize}
\item ${\bf U}$ $=$ $r_1$ principal right sing. vec. of $({\bf V} \otimes {\bf W})^T {\bf X}_1$;
\item ${\bf V}$ $=$ $r_2$ principal right sing. vec. of $({\bf U} \otimes {\bf W})^T {\bf X}_2$;
\item ${\bf W}$ $=$ $r_3$ principal right sing. vec. of $({\bf U} \otimes {\bf V})^T {\bf X}_3$;
\item until negligible change in $||({\bf V} \otimes {\bf W})^T {\bf X}_1 {\bf U}||_F^2$.
\end{itemize}
\item ${\bf G}_1 = ({\bf V} \otimes {\bf W})^T {\bf X}_1 {\bf U}$.
\end{enumerate}
The initialization in step 1) [together with step 3)] corresponds to (truncated) MLSVD. It is not necessarily optimal, as previously noted, but it does help as a very good initialization in most cases. The other point worth noting is that each variable update is optimal conditioned on the rest of the variables, so the reward $||({\bf V} \otimes {\bf W})^T {\bf X}_1 {\bf U}||_F^2$ is non-decreasing (equivalently, the cost $||{\bf X}_1 - ({\bf V} \otimes {\bf W}) {\bf G}_1 {\bf U}^T||_F^2$ is non-increasing) and bounded from above (resp. below), thus convergence of the reward (cost) sequence is guaranteed. Note the conceptual similarity of the above algorithm with ALS for CPD, which we discussed earlier. The first variant of Tucker-ALS goes back to the work of Kroonenberg and De Leeuw; see \cite{Kroonenberg} and references therein.

Note that using MLSVD with somewhat higher $(r_1,r_2,r_3)$ can be computationally preferable to ALS. In the case of big data, even the computation of MLSVD may be prohibitive, and randomized projection approaches become more appealing \cite{SidKyrSPL2012,PARACOMP}. A drastically different approach is to draw the columns of ${\bf U}$, $ {\bf V}$, ${\bf W}$ from the columns, rows, fibers of $\Xt$ \cite{oseledets2008tucker,mahoney2008tensor,caiafa2010generalizing}. Although the idea is simple, it has sound algebraic foundations and error bounds are available \cite{goreinov1997theory}. Finally, we mention that for large-scale matrix problems Krylov subspace methods are one of the main classes of algorithms. They only need an implementation of the matrix-vector product to iteratively find subspaces on which to project. See \cite{savas2013krylov} for Tucker-type extensions.

\subsection{Compression as preprocessing}

Consider a tensor $\Xt$ in vectorized form, and corresponding CPD and orthogonal Tucker ($\perp$-Tucker) models
\[
{\bf x} = ({\bf A} \odot {\bf B} \odot {\bf C}) {\bf 1} = ({\bf U} \otimes {\bf V} \otimes {\bf W}) {\bf g}.
\]
Pre-multiplying with $({\bf U} \otimes {\bf V} \otimes {\bf W})^T = ({\bf U}^T \otimes {\bf V}^T \otimes {\bf W}^T)$ and using the mixed-product rule for $\otimes$, $\odot$, we obtain
\[
{\bf g} = \left( ({\bf U}^T {\bf A}) \odot ({\bf V}^T {\bf B}) \odot ({\bf W}^T {\bf C}) \right) {\bf 1},
\]
i.e., the Tucker core array $\Gt$ (shown above in vectorized form ${\bf g}$) admits a CPD decomposition of $\text{rank}(\Gt) \leq \text{rank}(\Xt)$. Let $\left\llbracket\tilde {\bf A}, \tilde {\bf B}, \tilde {\bf C}\right\rrbracket$ be a CPD of $\Gt$, i.e., ${\bf g} = (\tilde {\bf A} \odot \tilde {\bf B} \odot \tilde {\bf C}) {\bf 1}$. Then
\[
{\bf x} = ({\bf U} \otimes {\bf V} \otimes {\bf W}) {\bf g} = ({\bf U} \otimes {\bf V} \otimes {\bf W}) (\tilde {\bf A} \odot \tilde {\bf B} \odot \tilde {\bf C}) {\bf 1} =
\]
\[
= \left( ({\bf U} \tilde {\bf A}) \odot ({\bf V} \tilde {\bf B}) \odot ({\bf W} \tilde {\bf C}) \right) {\bf 1},
\]
by the mixed product rule. Assuming that the CPD of $\Xt$ is essentially unique, it then follows that
\[
{\bf A} = {\bf U} \tilde {\bf A} \bPi \bLambda_a,~{\bf B} = {\bf V} \tilde {\bf B} \bPi \bLambda_b,~{\bf C} = {\bf W} \tilde {\bf C} \bPi \bLambda_c,
\]
where $\bPi$ is a permutation matrix and $~\bLambda_a \bLambda_b \bLambda_c = {\bf I}$. It follows that
\[
{\bf U}^T {\bf A} = \tilde {\bf A} \bPi \bLambda_a,~{\bf V}^T {\bf B} = \tilde {\bf B} \bPi \bLambda_b,~ {\bf W}^T {\bf C} = \tilde {\bf C} \bPi \bLambda_c,
\]
so that the CPD of $\Gt$ is essentially unique, and therefore $\text{rank}(\Gt) = \text{rank}(\Xt)$.

Since the size of $\Gt$ is smaller than or equal to the size of $\Xt$, this suggests that an attractive way to compute the CPD of $\Xt$ is to first compress (using one of the orthogonal schemes in the previous subsection), compute the CPD of $\Gt$, and then ``blow-up'' the resulting factors, since ${\bf A} = {\bf U} \tilde {\bf A}$ (up to column permutation and scaling). It also shows that ${\bf A} = {\bf U} {\bf U}^T {\bf A}$, and likewise for the other two modes. The caveat is that the discussion above assumes {\em exact} CPD and $\perp$-Tucker models, whereas in reality we are interested in low-rank least-squares approximation -- for this, we refer the reader to the {\em Candelinc} theorem of Carroll {\em et al.} \cite{DouglasCarroll1980}; see also Bro \& Andersson \cite{Bro1998105}.

This does not work for a constrained CPD (e.g. one or more factor matrices nonnegative, monotonic, sparse, \ldots) since the orthogonal compression destroys the constraints. In the ALS approach we can still exploit multi-linearity, however, to update ${\bf U}$ by solving a {\em constrained} and/or regularized linear least squares problem, and similarly for ${\bf V}$ and ${\bf W}$, by role symmetry. For $\Gt$, we can use the vectorization property of the Kronecker product to bring it to the right, and then use a constrained or regularized linear least squares solver. By the mixed product rule, this last step entails pseudo-inversion of the ${\bf U}$, ${\bf V}$, ${\bf W}$ matrices, instead of their (much larger) Kronecker product. This type of model is sometimes called {\em oblique} Tucker, to distinguish from {\em orthogonal} Tucker. More generally than in ALS (see the algorithms in Section \ref{sect:alg}), one can fit the constrained CPD in the uncompressed space, but with $\Xt$ replaced by its parameter-efficient factorized representation. The structure of the latter may then be exploited to reduce the per iteration complexity \cite{tensorlab3asilomar}.

\section{Other decompositions}

\subsection{Compression}

In Section \ref{sect:MLSVD} we have emphasized the use of $\perp$-Tucker/MLSVD for tensor approximation and compression. This use was in fact limited to tensors of moderate order. Let us consider the situation at order $N$ and let us assume for simplicity that $r_1 = r_2 = \ldots = r_N = r > 1$. Then the core tensor ${\bf G}$ has $r^N$ entries. The exponential dependence of the number of entries on the tensor order $N$ is called the Curse of Dimensionality: in the case of large $N$ (e.g. $N=100$), $r^N$ is large, even when $r$ is small, and as a result $\perp$-Tucker/MLSVD cannot be used. In such cases one may resort to a Tensor Train (TT) representation or a hierarchical Tucker (hTucker) decomposition instead \cite{oseledets2011tensor,grasedyck2013literature}. A TT of an $N$-th order tensor ${\bf X}$ is of the form
\begin{equation}
{\bf X}(i_1,i_2, \ldots, i_N) = \sum_{r_1 r_2 \ldots r_{N-1}} u_{i_1 r_1}^{(1)} u_{r_1 i_2 r_2}^{(2)} u_{r_2 i_3 r_3}^{(3)} \ldots u_{i_N r_{N-1}}^{(N)},
\end{equation}
in which one can see ${\bf U}^{(1)}$ as the locomotive and the next factors as the carriages.
Note that each carriage ``transports'' one tensor dimension, and that two consecutive carriages are connected through the summation over one common index. Since every index appears at most twice and since there are no index cycles, the TT-format is ``matrix-like'', i.e. a TT approximation can be computed using established techniques from numerical {\em linear} algebra, similarly to MLSVD. Like for MLSVD, fiber sampling schemes have been developed too. On the other hand, the number of entries is now $O(NIr^2)$, so the Curse of Dimensionality has been broken. hTucker is the extension in which the indices are organized in a binary tree.

\subsection{Analysis}

In Section \ref{sect:uniq} we have emphasized the uniqueness of CPD under mild conditions as a profound advantage of tensors over matrices in the context of signal separation and data analysis -- constraints such as orthogonality or triangularity are not necessary per se. An even more profound advantage is the possibility to have a unique decomposition in terms that are not even rank-1. Block Term Decompositions (BTD) write a tensor as a sum of terms that have low multilinear rank, i.e. the terms can be pictured as in Fig. \ref{fig-nikos6} rather than as in Fig. \ref{fig-nikos1} \cite{de2008decompositions,de2011blind}. Note that rank-1 structure of data components is indeed an assumption that needs to be justified.

As in CPD, uniqueness of a BTD is up to a permutation of the terms. The scaling/counterscaling ambiguities within a rank-1 term generalize to the indeterminacies in a Tucker representation. Expanding the block terms into sums of rank-1 terms with repeated vectors as in (\ref{eqn:Tucker}) yields a form that is known as PARALIND \cite{bro2009modeling}; see also more recent results in \cite{doi:10.1137/080743354,doi:10.1137/110847275,doi:10.1137/110825765}.

\subsection{Fusion} \label{subsect:fusion}

Multiple data sets may be jointly analyzed by means of coupled decompositions of several matrices and/or tensors, possibly of different size
\cite{SDF,lahat2015multimodal}. An early variant, in which coupling was imposed through a shared covariance matrix, is Harshman's PARAFAC2 \cite{harshman1972parafac2}. In a coupled setting, particular decompositions may inherit uniqueness from other decompositions; in particular, the decomposition of a data matrix may become unique thanks to coupling \cite{sorensen2015coupled}.

\section{Algorithms}  \label{sect:alg}

\subsection{ALS: Computational aspects}

\subsubsection{CPD}
We now return to the basic ALS algorithm for CPD, to discuss (efficient) computation issues. First note that the pseudo-inverse that comes into play in ALS updates is structured: in updating ${\bf C}$, for example
\[
{\bf C} \leftarrow \text{arg} \min_{{\bf C}} ||{\bf X}_3 - ({\bf B} \odot {\bf A}){\bf C}^T||_F^2,
\]
the pseudo-inverse
\[
({\bf B} \odot {\bf A})^{\dagger} = \left[({\bf B} \odot {\bf A})^T ({\bf B} \odot {\bf A})\right]^{-1} ({\bf B} \odot {\bf A})^T,
\]
can be simplified. In particular,
\[
({\bf B} \odot {\bf A})^T ({\bf B} \odot {\bf A}) =
\left[
  \begin{array}{c}
    {\bf A} {\bf D}_1({\bf B}) \\
    \vdots \\
    {\bf A} {\bf D}_J({\bf B})
  \end{array}
\right]^T
\left[
  \begin{array}{c}
    {\bf A} {\bf D}_1({\bf B}) \\
    \vdots \\
    {\bf A} {\bf D}_J({\bf B})
  \end{array}
\right]
\]
\[
= \sum_{j=1}^J {\bf D}_j({\bf B}) {\bf A}^T {\bf A} {\bf D}_j({\bf B}),
\]
where we note that the result is $F \times F$, and element $(f_1,f_2)$ of the result is element $(f_1,f_2)$ of ${\bf A}^T {\bf A}$ times $\sum_{j=1}^J {\bf B}(j,f_1) {\bf B}(j,f_2)$. The latter is element $(f_1,f_2)$ of ${\bf B}^T {\bf B}$. It follows that
\[
({\bf B} \odot {\bf A})^T ({\bf B} \odot {\bf A}) = ({\bf B}^T {\bf B}) * ({\bf A}^T {\bf A}),
\]
which only involves the Hadamard product of $F \times F$ matrices, and is easy to invert for small ranks $F$ (but note that in the case of big sparse data, small $F$ may not be enough). Thus the update of ${\bf C}$ can be performed as
\[
{\bf C}^T \leftarrow \left(({\bf B}^T {\bf B}) * ({\bf A}^T {\bf A})\right)^{-1} ({\bf B} \odot {\bf A})^T {\bf X}_3.
\]
For small $F$, the bottleneck of this is actually the computation of $({\bf B} \odot {\bf A})^T {\bf X}_3$ -- notice that
${\bf B} \odot {\bf A}$ is $IJ \times F$, and ${\bf X}_3$ is $IJ \times K$. Brute-force computation of $({\bf B} \odot {\bf A})^T {\bf X}_3$ thus demands $IJF$ additional memory and flops to instantiate ${\bf B} \odot {\bf A}$, even though the result is only $F \times K$, and $IJKF$ flops to actually compute the product -- but see \cite{BadKol2007,NV-KM-RV,AHP-PT-AC}. If $\Xt$ (and therefore ${\bf X}_3$) is sparse, having $\text{NNZ}(\Xt)$ nonzero elements stored in a  $[\text{i,j,k,value}]$ list, then every nonzero element multiplies a column of $({\bf B} \odot {\bf A})^T$, and the result should be added to column $k$. The specific column needed can be generated on-the-fly with $F+1$ flops, for an overall complexity of $(2F+1) \text{NNZ}(\Xt)$, without requiring any additional memory (other than that needed to store the running estimates of ${\bf A}$, ${\bf B}$, ${\bf C}$, and the data $\Xt$). When $\Xt$ is dense, the number of flops is inevitably of order $IJKF$, but still no additional memory is needed this way. Furthermore, the computation can be parallelized in several ways -- see \cite{BadKol2007,kang2012gigatensor,niranjay,choi2014dfacto,smith2015splatt} for various resource-efficient algorithms for {\em matricized tensor times Khatri--Rao product} (MTTKRP) computations.

\subsubsection{Tucker}
For $\perp$-Tucker ALS, we need to compute products of type $({\bf V} \otimes {\bf W})^T {\bf X}_1$ (and then compute the principal right singular vectors of the resulting $r_2 r_3 \times I$ matrix). The column-generation idea can be used here as well to avoid intermediate memory explosion and exploit sparsity in $\Xt$ when computing $({\bf V} \otimes {\bf W})^T {\bf X}_1$.

For oblique Tucker ALS we need to compute $\left(({\bf V} \otimes {\bf W}) {\bf G}_1\right)^{\dagger} {\bf X}_1$ for updating ${\bf U}$, and $\left( {\bf U}^{\dagger} \otimes {\bf V}^{\dagger} \otimes {\bf W}^{\dagger} \right) {\bf x}$ for updating ${\bf g} \leftrightarrow \Gt$. The latter requires pseudo-inverses of relatively small matrices, but note that
\[
\left(({\bf V} \otimes {\bf W}) {\bf G}_1\right)^{\dagger} \neq {\bf G}_1^{\dagger} \left({\bf V} \otimes {\bf W} \right)^{\dagger},
\]
in general. Equality holds if ${\bf V} \otimes {\bf W}$ is full column rank {\em and} ${\bf G}_1$ is full row rank, which requires $r_2 r_3 \leq r_1$.

ALS is a special case of {\em block coordinate descent} (BCD), in which the subproblems take the form of linear LS estimation. As the musings in \cite{M} make clear, understanding the convergence properties of ALS is highly nontrivial. ALS monotonically reduces the cost function, but it is not guaranteed to converge to a stationary point. A conceptually easy fix is to choose for the next update the parameter block that decreases the cost function the most -- this {\em maximum block improvement} (MBI) variant is guaranteed to converge under some conditions \cite{LZ-AU-SZ}. However, in the case of third-order CPD MBI doubles the computation time as two possible updates have to be compared. At order $N$, the computation time increases by a factor $N-1$ -- and in practice there is usually little difference between MBI and plain ALS. Another way to ensure convergence of ALS is to include proximal regularization terms and invoke the {\em block successive upper bound minimization} (BSUM) framework of \cite{doi:10.1137/120891009}, which also helps in ill-conditioned cases. In cases where ALS converges, it does so at a local linear rate (under some non-degeneracy condition), which makes it (locally) slower than some derivative-based algorithms \cite{AU,ME-WH-AK}, see further. The same is true for MBI \cite{LZ-AU-SZ}.

\subsection{Gradient descent}
Consider the squared loss
\[
{\cal L}({\bf A}, {\bf B}, {\bf C}) := ||{\bf X}_1 - ({\bf C} \odot {\bf B}) {\bf A}^T||_F^2 =
\]
\[
\text{tr}\left( ({\bf X}_1 - ({\bf C} \odot {\bf B}) {\bf A}^T)^T ({\bf X}_1 - ({\bf C} \odot {\bf B}) {\bf A}^T) \right) =
||{\bf X}_1||_F^2 -
\]
\[
2~\text{tr}\left({\bf X}_1^T ({\bf C} \odot {\bf B}) {\bf A}^T\right) + \text{tr}\left({\bf A} ({\bf C} \odot {\bf B})^T ({\bf C} \odot {\bf B}) {\bf A}^T\right).
\]
Recall that $({\bf C} \odot {\bf B})^T ({\bf C} \odot {\bf B}) = ({\bf C}^T {\bf C}) * ({\bf B}^T {\bf B})$, so we may equivalently take the gradient of $-2~\text{tr}\left({\bf X}_1^T ({\bf C} \odot {\bf B}) {\bf A}^T\right) + \text{tr}\left({\bf A} ({\bf C}^T {\bf C}) * ({\bf B}^T {\bf B}) {\bf A}^T\right)$. Arranging the gradient in the same format\footnote{In some books, $\frac{\partial f({\bf A})}{\partial {\bf A}}$ stands for the transpose of what we denote by $\frac{\partial f({\bf A})}{\partial {\bf A}}$, i.e., for an $F \times I$ matrix instead of $I \times F$ in our case.} as ${\bf A}$, we have
\begin{eqnarray*}
\frac{\partial {\cal L}({\bf A}, {\bf B}, {\bf C})}{\partial {\bf A}} &=& -2 {\bf X}_1^T ({\bf C} \odot {\bf B}) + 2~{\bf A} \left[ ({\bf C}^T {\bf C}) * ({\bf B}^T {\bf B}) \right]\\
&=& -2  \left({\bf X}_1^T - {\bf A} ({\bf C} \odot {\bf B})^T\right) ({\bf C} \odot {\bf B}),
\end{eqnarray*}
Appealing to role symmetry, we likewise obtain
\begin{eqnarray*}
\frac{\partial {\cal L}({\bf A}, {\bf B}, {\bf C})}{\partial {\bf B}} &=& -2 {\bf X}_2^T ({\bf C} \odot {\bf A}) + 2~{\bf B} \left[ ({\bf C}^T {\bf C}) * ({\bf A}^T {\bf A}) \right]\\
&=& -2  \left({\bf X}_2^T - {\bf B} ({\bf C} \odot {\bf A})^T\right) ({\bf C} \odot {\bf A}),\\
\frac{\partial {\cal L}({\bf A}, {\bf B}, {\bf C})}{\partial {\bf C}} &=& -2 {\bf X}_3^T ({\bf B} \odot {\bf A}) + 2~{\bf C} \left[ ({\bf B}^T {\bf B}) * ({\bf A}^T {\bf A}) \right]\\
&=& -2  \left({\bf X}_3^T - {\bf C} ({\bf B} \odot {\bf A})^T\right) ({\bf B} \odot {\bf A}).
\end{eqnarray*}
\begin{remark}
The conditional least squares update for ${\bf A}$ is
\[
{\bf A} \leftarrow \left[ ({\bf C}^T {\bf C}) * ({\bf B}^T {\bf B}) \right]^{-1} ({\bf C} \odot {\bf B})^T {\bf X}_1,
\]
So taking a gradient step or solving the least-squares sub-problem to (conditional) optimality involves computing the same quantities: $({\bf C}^T {\bf C}) * ({\bf B}^T {\bf B})$ and $({\bf C} \odot {\bf B})^T {\bf X}_1$. The only difference is that to take a gradient step you don't need to invert the $F \times F$ matrix $({\bf C}^T {\bf C}) * ({\bf B}^T {\bf B})$. For small $F$, this inversion has negligible cost relative to the computation of the MTTKRP $({\bf C} \odot {\bf B})^T {\bf X}_1$. Efficient algorithms for the MTTKRP can be used for gradient computations as well; but note that, for small $F$, each gradient step is essentially as expensive as an ALS step. Also note that, whereas it appears that keeping three different matricized copies of $\Xt$ is necessary for efficient gradient (and ALS) computations, only one is needed -- see \cite{niranjay,LS-MVB-LDL}.
\end{remark}
With these gradient expressions at hand, we can employ any gradient-based algorithm for model fitting.

\subsection{Quasi-Newton and Nonlinear Least Squares}

The well-known Newton descent algorithm uses a local quadratic approximation of the cost function ${\cal L}({\bf A}, {\bf B}, {\bf C})$ to obtain a new step as the solution of the set of linear equations
\begin{equation}
\mathbf{H} \mathbf{p} = -\mathbf{g},
\label{eq:Newtonstep}
\end{equation}
in which $\mathbf{g}$ and $\mathbf{H}$ are the gradient and Hessian of ${\cal L}$, respectively. As computation of the Hessian may be prohibitively expensive, one may resort to an approximation, leading to quasi-Newton and Nonlinear Least Squares (NLS). Quasi-Newton methods such as Nonlinear Conjugate Gradients (NCG) and (limited memory) BFGS use a diagonal plus low-rank matrix approximation of the Hessian. In combination with line search or trust region globalization strategies for step size selection, quasi-Newton does guarantee convergence to a stationary point, contrary to plain ALS, and its convergence is superlinear \cite{LS-MVB-LDL,nocedal2006numerical}.

NLS methods such as Gauss--Newton and Levenberg--Marquardt start from a local linear approximation of the residual ${\bf X}_1 - ({\bf C} \odot {\bf B}){\bf A}^T$ to approximate the Hessian as $\jacob_{\th}\ph(\th)^T \jacob_{\th}\ph(\th)$, with  $\jacob_{\th}\ph(\th)$ the Jacobian matrix of $\ph(\th)$ (where $\th$ is the parameter vector; see section \ref{sec:CRB} for definitions of $\ph(\th)$, and $\jacob_{\th}\ph(\th)$). The algebraic structure of $\jacob_{\th}\ph(\th)^T \jacob_{\th}\ph(\th)$ can be exploited to obtain a fast inexact NLS algorithm that has several favorable properties \cite{tomasi2006comparison,LS-MVB-LDL}. Briefly, the inexact NLS algorithm uses a  ``parallel version'' of one ALS iteration as a preconditioner for solving the linear system of equations (\ref{eq:Newtonstep}). (In this parallel version the factor matrices are updated all together starting from the estimates in the previous iteration; note that the preconditioning can hence be parallelized.) After preconditioning, (\ref{eq:Newtonstep}) is solved inexactly by a truncated conjugate gradient algorithm. That is, the set of equations is not solved exactly and neither is the matrix $\jacob_{\th}\ph(\th)^T \jacob_{\th}\ph(\th)$ computed or stored. Storage of ${\bf A}$, ${\bf B}$, ${\bf C}$ and ${\bf A}^T {\bf A}$, ${\bf B}^T {\bf B}$, ${\bf C}^T {\bf C}$ suffices for an efficient computation of the product of a vector with $\jacob_{\th}\ph(\th)^T \jacob_{\th}\ph(\th)$, exploiting the structure of the latter, and an approximate solution of (\ref{eq:Newtonstep}) is obtained by a few such matrix-vector products. As a result, the conjugate gradient refinement adds little to the memory and computational cost, while it does yield the nice NLS-type convergence behavior. The algorithm has close to quadratic convergence, especially when the residuals are small.
NLS has been observed to be more robust for difficult decompositions than plain ALS \cite{LS-MVB-LDL,tomasi2006comparison}. The action of $\jacob_{\th}\ph(\th)^T \jacob_{\th}\ph(\th)$ can easily be split into smaller matrix-vector products ($N^2$ in the $N$-th order case), which makes inexact NLS  overall well-suited for parallel implementation. Variants for low multilinear rank approximation are discussed in \cite{ishteva2011best,savas2010quasi} and references therein.

\subsection{Exact line search}

An important issue in numerical optimization is the choice of step-size. One approach that is sometimes used in multi-way analysis is the following \cite{doi:10.1137/06065577}, which exploits the multi-linearity of the cost function. Suppose we have determined an update (``search'') direction, say the negative gradient one. We seek to select the optimal step-size $\mu$ for the update
\[
\left[
  \begin{array}{c}
    {\bf A} \\
    {\bf B} \\
    {\bf C} \\
  \end{array}
\right]
\leftarrow
\left[
  \begin{array}{c}
    {\bf A} \\
    {\bf B} \\
    {\bf C} \\
  \end{array}
\right]
+ \mu \left[
  \begin{array}{c}
    \bm{\Delta}_A \\
    \bm{\Delta}_B \\
    \bm{\Delta}_C \\
  \end{array}
\right],
\]
and the goal is to
\[
\min_{\mu} \left| \left| {\bf X}_1 - \left( ({\bf C} + \mu \bm{\Delta}_C) \odot ({\bf B} + \mu \bm{\Delta}_B) \right) ({\bf A} + \mu \bm{\Delta}_A)^T \right| \right|_F^2.
\]
Note that the above cost function is a polynomial of degree 6 in $\mu$. We can determine the coefficients $c_0, \cdots, c_6$ of this polynomial by evaluating it for $7$ different values of $\mu$ and solving
\[
\left[
  \begin{array}{ccccc}
    1 & \mu_1 & \mu_1^2 & \cdots & \mu_1^6 \\
    1 & \mu_2 & \mu_2^2 & \cdots & \mu_2^6 \\
     &  & \vdots &  &  \\
     1 & \mu_7 & \mu_7^2 & \cdots & \mu_7^6 \\
  \end{array}
\right]
\left[
  \begin{array}{c}
  c_0\\
  c_1\\
  \vdots\\
  c_6\\
  \end{array}
\right]
=
\left[
  \begin{array}{c}
  \ell_1\\
  \ell_2\\
  \vdots\\
  \ell_7\\
  \end{array}
\right],
\]
where $\ell_1, \cdots, \ell_7$ are the corresponding loss values.
Once the coefficients are determined, the derivative is the $5$-th order polynomial $c_1 +2 c_2 \mu + \cdots + 6 c_6 \mu^5$, and we can use numerical root finding to evaluate the loss at its roots and pick the best $\mu$. The drawback of this is that it requires $11$ evaluations of the loss function. We can work out the polynomial coefficients analytically, but this can only save about half of the computation. The bottom line is that optimal line search costs more than gradient computation {\em per se} (which roughly corresponds to 3 evaluations of the loss function, each requiring $IJKF$ flops for dense data). In practice, we typically use a small, or ``good enough'' $\mu$, resorting to exact line search in more challenging cases, for instance where the algorithm encounters ``swamps''. Note that the possibility of exact line search is a profound implication of the multilinearity of the problem. More generally, the optimal update in a search plane (involving two search directions $(\bm{\Delta}_{A,1}^T,\bm{\Delta}_{B,1}^T,\bm{\Delta}_{C,1}^T)^T$ and $(\bm{\Delta}_{A,2}^T,\bm{\Delta}_{B,2}^T,\bm{\Delta}_{C,2}^T)^T$ and even in a three-dimensional search space (additionally involving scaling of $({\bf A}^T, {\bf B}^T,{\bf C}^T)^T$) can be found via polynomial rooting \cite{LS-ID-MVB-LDL}. 

\subsection{Missing values}
Consider the ${\bf C}$-update step in ALS, i.e., $\min_{{\bf C}} ||{\bf X}_3 - ({\bf B} \odot {\bf A}) {\bf C}^T||_F^2$. If there are missing elements in $\Xt$ (and so in ${\bf X}_3$), define the weight tensor
\[
\Wt(i,j,k) = \left\{ \begin{array}{lc}
                       1, & \Xt(i,j,k):~\text{available} \\
                       0, & \text{otherwise}.
                     \end{array} \right.,
\]
and consider $\min_{{\bf C}} ||{\bf W}_3 * ({\bf X}_3 - ({\bf B} \odot {\bf A}) {\bf C}^T)||_F^2$ $\Longleftrightarrow$
$\min_{{\bf C}} ||{\bf W}_3 * {\bf X}_3 - {\bf W}_3 * (({\bf B} \odot {\bf A}) {\bf C}^T)||_F^2$, where matrix ${\bf W}_3$ is the matrix unfolding of tensor ${\bf W}$ obtained in the same way that matrix ${\bf X}_3$ is obtained by unfolding tensor ${\bf X}$. Notice that the Hadamard operation applies to the product $(({\bf B} \odot {\bf A}) {\bf C}^T)$, not to $({\bf B} \odot {\bf A})$ -- and this complicates things. One may think of resorting to column-wise updates, but this does not work either. Instead, if we perform {\em row-wise} updates on ${\bf C}$, then we have to deal with minimizing over ${\bf C}(k,:)$ the squared norm of vector \[
\text{Diag}({\bf W}_3(:,k)) {\bf X}_3(:,k) - \text{Diag}({\bf W}_3(:,k))({\bf B} \odot {\bf A})({\bf C}(k,:))^T,
\]
which is a simple linear least squares problem.

There are two basic alternatives to the above strategy for handling missing data. One is to use derivative-based methods, such as (stochastic) gradient descent (see next two subsections) or Gauss-Newton -- derivatives are easy to compute, even in the presence of $\Wt$. Stochastic gradient descent, in particular, computes gradient estimates by drawing only from the observed values. Effectively bypassing the element-wise multiplication by ${\bf W}$, stochastic gradient methods deal with missing data in a natural and effortless way. This point is well-known in the machine learning community, but seemingly under-appreciated in the signal processing community, which is more used to handling complete data.

The other alternative is to use a form of expectation-maximization to impute the missing values together with the estimation of the model parameters ${\bf A}, {\bf B}, {\bf C}$ \cite{TB05missing}. One can initially impute misses with the average of the available entries (or any other reasonable estimate). More specifically, let $\Xt_a$ be a tensor that contains the available elements, and $\Xt_m$ the imputed ones. Then set $\Xt_c = \Wt * \Xt_a + (1 - \Wt) * \Xt_m$, and fit ${\bf A}, {\bf B}, {\bf C}$ to $\Xt_c$. Set $\Xt_m = \left\llbracket{\bf A}, {\bf B}, {\bf C}\right\rrbracket$, $\Xt_c = \Wt * \Xt_a + (1 - \Wt) * \Xt_m$, and repeat. It is easy to see that the above procedure amounts to alternating optimization over ${\bf A}, {\bf B}, {\bf C}, (1 - \Wt) * \Xt_m$, and it thus decreases the cost function monotonically.

Whether it is best to ignore missing elements or impute them is dependent on the application; but we note that for very big and sparse data, imputation is very inefficient in terms of memory, and is thus avoided.

Note that, as a short-cut in large-scale applications, one may deliberately use only part of the available entries when estimating a decomposition \cite{NV-OD-LS-LDL} (see also the next section); entries that have not been selected, may be used for model cross-validation. If the data structure is sufficiently strong to allow such an approach, it can lead to a very significant speed-up and it should be considered as an alternative to full-scale parallel computation. As a matter of fact, in applications that involve tensors of high order, the latter is not an option due to the curse of dimensionality (i.e., the number of tensor entries depends exponentially on the order and hence quickly becomes astronomically high).

\subsection{Stochastic gradient descent}

{\em Stochastic gradient descent} (SGD) has become popular in the machine learning community for many types of convex and, very recently, non-convex optimization problems as well. In its simplest form, SGD randomly picks a data point $\Xt(i,j,k)$ from the available ones, and takes a gradient step only for those model parameters that have an effect on $\Xt(i,j,k)$; that is, only the $i$-th row of ${\bf A}$, the $j$-th row of ${\bf B}$ and the $k$-th row of ${\bf C}$. We have
\[
\frac{\partial}{\partial {\bf A}(i,f)} \left( \Xt(i,j,k) - \sum_{f=1}^F {\bf A}(i,f) {\bf B}(j,f) {\bf C}(k,f) \right)^2 =
\]
\[
-2 \left( \Xt(i,j,k) - \sum_{f^{'}=1}^F {\bf A}(i,f^{'}) {\bf B}(j,f^{'}) {\bf C}(k,f^{'}) \right) \times
\]
\[
{\bf B}(j,f) {\bf C}(k,f),
\]
so that
\[
\frac{\partial}{\partial {\bf A}(i,:)} = -2 \left( \Xt(i,j,k) - \sum_{f=1}^F {\bf A}(i,f) {\bf B}(j,f) {\bf C}(k,f) \right) \times
\]
\[
\left({\bf B}(j,:) * {\bf C}(k,:)\right).
\]
Notice that the product ${\bf B}(j,:) * {\bf C}(k,:)$ is used once outside and once inside the parenthesis, so the number of multiplications needed is $2F$ for the update of ${\bf A}(i,:)$, and $6F$ for the (simultaneous) SGD update of ${\bf A}(i,:)$, ${\bf B}(j,:)$, ${\bf C}(k,:)$. This makes SGD updates very cheap, but the biggest gain is in terms of random access memory (we only need to load one $\Xt(i,j,k)$, and ${\bf A}(i,:)$, ${\bf B}(j,:)$, ${\bf C}(k,:)$ each time). There is one more inherent advantage to SGD: it can naturally deal with missing elements, as these are simply never ``recalled'' to execute an update. The drawback is that a truly random disk access pattern is a terrible idea (especially if the data is stored in rotating media) as the computation will inevitably be bogged down from the disk I/O. For this reason, we prefer fetching blocks of data from secondary memory, and use intelligent caching strategies. To this end, note that SGD updates involving (stemming from) $\Xt(i,j,k)$ and $\Xt(i^{'},j^{'},k^{'})$ do not conflict with each other and can be executed in parallel, provided
$i^{'} \neq i,~j^{'} \neq j,~k^{'} \neq k$ -- where all three $\neq$ must hold simultaneously. This means that the maximum number of parallel SGD updates is $\min(I,J,K)$ in the three-way case, and $\min(\left\{ I_n \right\}_{n=1}^N)$ in the general $N$-way case. This limits the {\em relative} level of parallelization, especially for high $N$. Another disadvantage is that the convergence can be very slow (sublinear). See \cite{doi:10.1137/1.9781611973440.13} for parallel SGD algorithms for CPD and coupled tensor decomposition. In \cite{NV-LDL} a block sampling variant is discussed that allows one to efficiently decompose TB-size tensors without resorting to parallel computation. The approach leverages CPD uniqueness of the sampled blocks to uniqueness of the CPD of the full tensor. For both SGD and the block sampling variant, the choice of step size is important for convergence. When chosen appropriately, the latter method often converges very fast.
A randomized block-sampling approach for very sparse datasets was proposed in \cite{Papalexakis:2012:PSP:2405473.2405521}, building upon the idea of parallel CPD decomposition of multiple pseudo-randomly drawn sub-tensors, and combining the CPDs using {\em anchor rows}. Sampling is based on mode densities, and identifiability is guaranteed if the sub-tensors have unique CPD.

\subsection{Constraints}

In practice, we are often interested in imposing constraints on a CPD model. One may question the need for this -- after all, CPD is essentially unique under relatively mild conditions. Constraints are nevertheless useful in

\noindent $\bullet$ Restoring identifiability in otherwise non-identifiable cases;

\noindent $\bullet$ Improving estimation accuracy in relatively challenging (low-SNR, and/or barely identifiable, and/or numerically ill-conditioned) cases;

\noindent $\bullet$ Ensuring interpretability of the results (e.g., power spectra cannot take negative values); and

\noindent $\bullet$ As a remedy against ill-posedness.

There are many types of constraints that are relevant in many applications, including those in the ``laundry list'' below.

\noindent $\bullet$ {\it Symmetry or Hermitian (conjugate) symmetry:} ${\bf B} = {\bf A}$, or ${\bf B} = {\bf A}^*$, leading to
$\Xt(:,:,k) = {\bf A} {\bf D}_k({\bf C}) {\bf A}^T$ or $\Xt(:,:,k) = {\bf A} {\bf D}_k({\bf C}) {\bf A}^H$. This is actually only {\em partial symmetry}, with {\em full symmetry} (or simply {\em symmetry}) corresponding to ${\bf C} = {\bf B} = {\bf A}$. Partial symmetry corresponds to joint diagonalization of the frontal slabs, using a non-orthogonal and possibly fat diagonalizer ${\bf A}$ -- that is, the inner dimension can exceed the outer one. Symmetric tensors (with possible conjugation in certain modes) arise when one considers higher-order statistics (HOS).

\noindent $\bullet$ {\it Real-valued parameters:} When $\Xt \in \Real^{I \times J \times K}$, complex-valued ${\bf A}, {\bf B}, {\bf C}$ make little sense, but sometimes do arise because tensor rank is sensitive to the field over which the decomposition is taken. This is an issue in some applications, particularly in Chemistry and Psychology. Engineers are usually not annoyed by complex ${\bf A}, {\bf B}, {\bf C}$, but they may still need the following (stronger) constraint to model, e.g., power spectra.

\noindent $\bullet$ {\it Element-wise non-negativity:} ${\bf A} \geq 0$ and/or ${\bf B} \geq 0$, and/or ${\bf C} \geq 0$. When all three are in effect, the resulting problem is known as {\em non-negative tensor factorization} (NTF). More generally, bound constraints may apply. Non-negativity can help restore uniqueness, because even non-negative matrix factorization (NMF) is unique under certain conditions -- these are much more restrictive than those for CPD uniqueness, but, taken together, low-rank CPD structure and non-negativity can restore identifiability. To appreciate this, note that when $k_{\bf C}=1$ CPD alone cannot be unique, but if NMF of $\Xt(:,:,k) = {\bf A} {\bf D}_k({\bf C}) {\bf B}^T$ is unique (this requires $F<\min(I,J)$ and a certain level of sparsity in ${\bf A}$ and ${\bf B}$), then non-negativity can still ensure essential uniqueness of ${\bf A}, {\bf B}, {\bf C}$.

\noindent $\bullet$ {\it Orthogonality:} This may for instance be the result of prewhitening \cite{MS-LDL-PC-SI-LD}.

\noindent $\bullet$ {\it Probability simplex constraints:} ${\bf A}(i,:) \geq 0$, ${\bf A}(i,:) {\bf 1}=1$, $\forall~i$, or ${\bf A}(:,f) \geq 0$, ${\bf 1}^T {\bf A}(:,f)=1$, $\forall~f$, are useful when modeling allocations or probability distributions.

\noindent $\bullet$ {\it Linear constraints:} More general linear constraints on ${\bf A}, {\bf B}, {\bf C}$ are also broadly used. These can be column-wise, row-wise, or matrix-wise, such as $\text{tr}({\bf W} {\bf A}) \leq b$.

\noindent $\bullet$ {\it Monotonicity and related constraints:} These are useful in cases where one deals with, e.g., concentrations that are known to be decaying, or spectra that are known to have a single or few peaks (unimodality, oligo-modality \cite{BroSidChem1998}).

\noindent $\bullet$ {\it Sparsity:} In many cases one knows (an upper bound on) the number of nonzero elements of ${\bf A}, {\bf B}, {\bf C}$, per column, row, or as a whole; or the number of nonzero columns or rows of ${\bf A}, {\bf B}, {\bf C}$ (group sparsity).

\noindent $\bullet$ {\it Smoothness:} Smoothness can be measured in different ways, but a simple one is in terms of convex quadratic inequalities such as
\[
\left| \left|
\left[
  \begin{array}{rrrrrr}
    -1 & 1 & 0 & \cdots &  & 0 \\
    0 & -1 & 1 & 0 & \cdots & 0 \\
    \vdots &  & \ddots & \ddots &  &  \\
  \end{array}
\right] {\bf A}
\right| \right|_F^{2}.
\]

\noindent $\bullet$ {\it Data model constraints:} All the above constraints apply to the model parameters. One may also be interested in constraints on the reconstructed model of the data, e.g.,
\[
({\bf A} \odot {\bf B} \odot {\bf C}) {\bf 1} \geq 0,~\text{(element-wise)},~ {\bf 1}^T ({\bf A} \odot {\bf B} \odot {\bf C}) {\bf 1} = 1,
\]
if $\Xt$ models a joint probability distribution, or in $({\bf A} \odot {\bf B} \odot {\bf C}) {\bf 1}$ being ``smooth'' in a suitable sense.

\noindent $\bullet$ {\it Parametric constraints:} All the above are {\em non-parametric} constraints. There are also important {\em parametric constraints} that often arise, particularly in signal processing applications. Examples include Vandermonde or Toeplitz structure imposed on ${\bf A}$, ${\bf B}$, ${\bf C}$. Vandermonde matrices have columns that are (complex or real) exponentials, and Toeplitz matrices model linear time-invariant systems with memory (convolution). Factors may further be explicitly modeled as polynomial, sum-of-exponential, exponential polynomial, rational, or sigmoidal functions. This may for instance reduce the number of parameters needed and suppress noise. Various non-parametric constraints can be explicitly parametrized; e.g., non-negativity can be parametrized as ${\bf A}(i,j)=\theta_{i,j}^2,~\theta \in \Real$, a magnitude constraint $|{\bf A}(i,j)|=1$ as ${\bf A}(i,j)=e^{\sqrt{-1} \theta_{i,j}}$, and orthogonality may be parameterized via Jacobi rotations or Householder reflections. Smoothness, probability simplex, and linear constraints can be formulated as parametric constraints as well.

The main issue is then how do we go about enforcing these constraints. The answer depends on the type of constraint considered.
Parametric constraints can be conveniently handled in the framework of derivative-based methods such as quasi-Newton and NLS, by using the chain rule in the computation of derivatives. In this way, the proven convergence properties of these methods can be extended to constrained and possibly coupled decompositions \cite{SDF}.

Linear equality and inequality constraints (including monotonicity) on individual loading matrices (${\bf A}, {\bf B}, {\bf C}$) can be handled in ALS, but change each conditional update from linear least-squares (solving a system of linear equations in the least-squares sense)  to quadratic programming. E.g., non-negative least-squares is much slower than unconstrained least squares, and it requires specialized algorithms. This slows down the outer ALS loop considerably; but see the insert entitled {\em Constrained least squares using ADMM}. Direct application of ADMM for fitting the (nonconvex) CPD model has been considered in \cite{7152968}, using non-negative tensor factorization as the working example. This approach has certain advantages: it can outperform standard approaches in certain scenarios, and it can incorporate various constraints beyond non-negativity, including constraints that couple some of the factor matrices.

The drawback is that direct ADMM requires sophisticated parameter tuning, and even then it is not guaranteed to converge -- in contrast to the AO-ADMM hybrid approach of \cite{HuaSidLiaTSP2016} that soon followed \cite{7152968}. A nice contribution of  \cite{7152968} is that it showed how to parallelize ADMM (and, as a special case, plain ALS) for high-performance computing environments.

\begin{table}
\colorbox{lightgray}{
\begin{minipage}{.48\textwidth}
\normalsize
{\bf Constrained least squares using ADMM:} Consider $\min_{{\bf C} \in {\cal C}} \left| \left| {\bf X}_3 - {\bf M} {\bf C}^T \right| \right|_F^2$, where ${\cal C}$ is a convex constraint set, and in the context of ALS for CPD ${\bf M}:=({\bf B} \odot {\bf A})$. Introduce an auxiliary variable $\widetilde {\bf C}$ and the function
\[
f_{{\cal C}}({\bf C}) := \left\{ \begin{array}{cc}
0, & {\bf C} \in {\cal C} \\
\infty, & \text{otherwise.}
\end{array}
\right.
\]
Then we may equivalently consider
\begin{eqnarray*}
	\min_{{\bf C}, \widetilde {\bf C}} & \frac{1}{2} \left| \left| {\bf X}_3 - {\bf M} {\bf C}^T \right| \right|_F^2 + f_{{\cal C}}(\widetilde {\bf C})\\
	\text{subject to:} & \widetilde {\bf C} = {\bf C}.
\end{eqnarray*}
This reformulation falls under the class of problems that can be solved using the alternating direction method of multipliers (ADMM) -- see \cite{Boyd2011} for a recent tutorial. The ADMM iterates for this problem are
\begin{equation*}
\begin{aligned}[l]
{\bf C}^T & \leftarrow ({\bf M}^T {\bf M} + \rho {\bf I})^{-1}
({\bf M}^T {\bf X}_3 + \rho(\widetilde {\bf C}+ {\bf U})^T),\\
{\widetilde {\bf C}}  & \leftarrow \arg\min_{\widetilde {\bf C}} f_{{\cal C}}(\widetilde {\bf C}) +
\frac{\rho}{2} \| {\widetilde {\bf C}} - {\bf C} + {\bf U} \|_F^2, \\
{\bf U} & \leftarrow {\bf U} + \widetilde {\bf C} - {\bf C}.\\
\end{aligned}
\end{equation*}
Note that ${\bf M}^T {\bf X}_3$ and $({\bf M}^T {\bf M} + \rho {\bf I})^{-1}$ remain fixed throughout the ADMM iterations. We can therefore compute ${\bf M}^T {\bf X}_3$ (or ${\bf X}_3^T {\bf M}$: a MTTKRP computation) and the Cholesky decomposition of $({\bf M}^T {\bf M} + \rho {\bf I})={\bf L} {\bf L}^T$, where ${\bf L}$ is lower triangular, and amortize the cost throughout the iterations. Then each update of ${\bf C}$ can be performed using one forward and one backward substitution, at much lower complexity. The update of $\widetilde {\bf C}$ is the so-called \emph{proximity operator} of the function $(1/\rho) f_{{\cal C}}(\cdot)$, which is easy to compute in many cases of practical interest \cite{Parikh2014}, including (but not limited to)
\begin{itemize}
	\item Non-negativity. In this case, the update simply projects onto the non-negative orthant. Element-wise bounds can be handled in the same way.
	\item Sparsity via $\ell_1$-regularization. The update is the well-known \emph{soft-thresholding} operator.
	\item Simplex constraint. See \cite{duchi2008efficient}.
	\item Smoothness regularization. See \cite{HuaSidLiaTSP2016}.
\end{itemize}
In the context of CPD fitting, the above ADMM loop is embedded within the outer Alternating Optimization (AO) loop that alternates over the matrix variables ${\bf A}$, ${\bf B}$, ${\bf C}$. After several outer iterations, one can use previous iterates to warm-start the inner ADMM loop. This ensures that the inner loop eventually does very few iterations; and, due to computation caching / amortization, each inner loop costs as much as solving an unconstrained linear least squares problem. The net result is that {\em we can perform constrained ALS at roughly the cost of unconstrained ALS}, for a wide variety of constraints, in a mix-and-match, plug-and-play fashion, so long as the proximity operator involved is easy to compute. This is the main attraction of the AO-ADMM approach of \cite{HuaSidLiaTSP2016}, which can also deal with more general loss functions and missing elements, while maintaining the monotone decrease of the cost and conceptual simplicity of ALS. The AO-ADMM framework has been recently extended to handle robust tensor factorization problems where some slabs are grossly corrupted \cite{7208891}.
\end{minipage}}
\vspace{-15pt}
\end{table}

Also note that linear constraints on the reconstructed data model, such as $({\bf A} \odot {\bf B} \odot {\bf C}) {\bf 1} \geq 0$, are nonlinear in ${\bf A}, {\bf B}, {\bf C}$, but become conditionally linear in ${\bf A}$ when we fix ${\bf B}$ and ${\bf C}$, so they too can be handled in the same fashion. Smoothness constraints such as the one above are convex, and can be dualized when solving the conditional mode loading updates, so they can also be handled in ALS, using quadratic programming. Symmetry constraints are more challenging, but one easy way of approximately handling them is to introduce a quadratic penalty term, such as $||{\bf B} - {\bf A}||_F^2$, which has the benefit of keeping the conditional updates in ALS simple. Sparsity can often be handled using $\ell_1$ regularization, which turns  linear least squares conditional updates to LASSO-type updates, but one should beware of latent scaling issues, see \cite{PapSidBro2013}.

Many other constraints though, such as hard $\ell_0$ sparsity, unimodality, or finite-alphabet constraints are very hard to handle.
A general tool that often comes handy under such circumstances is the following. Consider
$\left| \left| {\bf X}_3 - ({\bf B} \odot {\bf A}) {\bf C}^T \right| \right|_F^2$, and let ${\bf M}:=({\bf B} \odot {\bf A})$. Then $\left| \left| {\bf X}_3 - ({\bf B} \odot {\bf A}) {\bf C}^T \right| \right|_F^2 = \left| \left| {\bf X}_3 - \sum_{f=1}^F {\bf M}(:,f)({\bf C}(:,f))^T \right| \right|_F^2$. Fix all the columns of ${\bf C}$ except column $f_0$. Define
$\tilde {\bf X}_3 = {\bf X}_3 - \sum_{f=1,~f \neq f_0}^F {\bf M}(:,f)({\bf C}(:,f))^T$, ${\bf c}:={\bf C}(:,f_0)$, and ${\bf m}:={\bf M}(:,f_0)$, and consider
\[
\min_{{\bf c} \in {\cal C}} \left| \left| \tilde {\bf X}_3 - {\bf m} {\bf c}^T \right| \right|_F^2,
\]
where ${\cal C}$ is a column-wise constraint.

This corresponds to performing ALS over each column of ${\bf C}$, i.e., further breaking down the variable blocks into smaller pieces.
\begin{lemma} \label{oslem}
For {\em any} column-wise constraint set ${\cal C}$ it holds that
\[
\min_{{\bf c} \in {\cal C}} \left| \left| \tilde {\bf X}_3 - {\bf m} {\bf c}^T \right| \right|_F^2 \Longleftrightarrow \min_{{\bf c} \in {\cal C}} \left| \left| \tilde {\bf c} - {\bf c} \right| \right|_F^2,
\]
where $\tilde {\bf c} := \left( \frac{{\bf m}^T}{||{\bf m}||_2^2} \tilde {\bf X}_3\right)^T$, i.e., the optimal solution of the constrained least-squares problem is simply the projection of the unconstrained least-squares solution onto the constraint set ${\cal C}$. This is known as the {\em Optimal Scaling Lemma}; see \cite{BroSidChem1998} and references therein.
\end{lemma}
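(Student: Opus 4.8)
The plan is to reduce the rank-one matrix fitting problem to a simple vector projection by expanding the Frobenius norm and completing the square in ${\bf c}$. Since ${\bf m}$ and $\tilde {\bf X}_3$ are fixed while ${\bf c}$ ranges over ${\cal C}$, and ${\bf m}{\bf c}^T$ is rank one, I expect every ${\bf c}$-dependent term to collapse onto the single quantity $\tilde {\bf c}$, which is exactly the unconstrained least-squares solution of the scalar-weighted problem.

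First I would expand
\[
\| \tilde {\bf X}_3 - {\bf m}{\bf c}^T \|_F^2 = \| \tilde {\bf X}_3 \|_F^2 - 2\,\text{tr}(\tilde {\bf X}_3^T {\bf m}{\bf c}^T) + \| {\bf m}{\bf c}^T \|_F^2.
\]
The quadratic term factors because ${\bf m}^T{\bf m}=\|{\bf m}\|_2^2$ is a scalar, giving $\| {\bf m}{\bf c}^T \|_F^2 = \|{\bf m}\|_2^2 \|{\bf c}\|_2^2$. Using the cyclic property of the trace, the cross term is $\text{tr}(\tilde {\bf X}_3^T {\bf m}{\bf c}^T) = {\bf c}^T \tilde {\bf X}_3^T {\bf m}$. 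By the definition $\tilde {\bf c} = \frac{1}{\|{\bf m}\|_2^2}\tilde {\bf X}_3^T {\bf m}$, we have $\tilde {\bf X}_3^T {\bf m} = \|{\bf m}\|_2^2\,\tilde {\bf c}$, so the cross term equals $\|{\bf m}\|_2^2\,{\bf c}^T \tilde {\bf c}$.

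Next I would factor out the positive scalar $\|{\bf m}\|_2^2$ and complete the square:
\[
\| \tilde {\bf X}_3 - {\bf m}{\bf c}^T \|_F^2 = \| \tilde {\bf X}_3 \|_F^2 - \|{\bf m}\|_2^2 \|\tilde {\bf c}\|_2^2 + \|{\bf m}\|_2^2\, \| {\bf c} - \tilde {\bf c} \|_2^2.
\]
The first two terms do not depend on ${\bf c}$, and $\|{\bf m}\|_2^2>0$ is a fixed positive constant, so minimizing the left-hand side over ${\bf c}\in{\cal C}$ has exactly the same set of minimizers as minimizing $\| {\bf c} - \tilde {\bf c} \|_2^2 = \| \tilde {\bf c} - {\bf c} \|_F^2$ over ${\bf c}\in{\cal C}$. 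The latter is, by definition, the Euclidean projection of the unconstrained least-squares solution $\tilde {\bf c}$ onto ${\cal C}$, which establishes the claimed equivalence.

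The computation is essentially routine; the only points that need care are the bookkeeping of the trace identities and the observation that ${\bf m}={\bf M}(:,f_0)\neq{\bf 0}$, so that $\|{\bf m}\|_2^2>0$, the division defining $\tilde {\bf c}$ is well posed, and the positive scaling preserves the argmin. The ``hard part,'' to the extent there is one, is recognizing at the outset that the rank-one structure is what makes the cross term collapse onto a single direction; once that is seen, completing the square finishes the argument. I expect no genuine obstacle beyond this, and in particular the result holds for \emph{any} ${\cal C}$ because convexity or any other structure of the set is never used.
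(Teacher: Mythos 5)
Your proof is correct: the expansion of the Frobenius norm, the trace manipulation of the cross term, and the completion of the square are all valid, and your closing observations (that ${\bf m} \neq {\bf 0}$ is needed and that no structure of ${\cal C}$ is ever used) are exactly the right points of care. Your route differs mechanically from the paper's, though it rests on the same geometric fact. The paper left-multiplies the residual by the orthogonal matrix $[{\bf v},\ {\bf U}]^T$, where ${\bf v} = {\bf m}/||{\bf m}||_2$ and ${\bf U}$ spans the orthogonal complement of ${\bf m}$; Frobenius-norm invariance under this rotation splits the cost into a block $\frac{{\bf m}^T}{||{\bf m}||_2}\tilde{\bf X}_3 - ||{\bf m}||_2 {\bf c}^T$ that depends on ${\bf c}$ and a block ${\bf U}^T \tilde{\bf X}_3$ that does not, after which dividing by the positive scalar $||{\bf m}||_2$ gives the result. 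Your expand-and-complete-the-square argument is the purely algebraic counterpart of that rotation: it avoids constructing the complementary basis ${\bf U}$ altogether, at the cost of making the underlying geometry (that ${\bf c}$ only influences the component of the residual along ${\bf m}$) implicit rather than explicit. The paper's version makes that geometry visible and generalizes naturally to projections onto higher-dimensional column spaces; yours is shorter, self-contained, and identifies the discarded constant $||\tilde{\bf X}_3||_F^2 - ||{\bf m}||_2^2\,||\tilde{\bf c}||_2^2$ explicitly. Both establish the claim for arbitrary ${\cal C}$.
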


Armed with Lemma \ref{oslem}, we can easily enforce a wide variety of {\em column-wise} (but not row-wise) constraints. For example,
\begin{itemize}
\item if ${\cal C}=\left\{ {\bf c} \in \Real^{K} ~|~w({\bf c})=s\right\}$, where $w(\cdot)$ counts the number of nonzeros (Hamming weight) of its argument, then ${\bf c}_{\text{opt}}$ is obtained by zeroing out the $K-s$ smallest elements of $\tilde {\bf c}$.
\item If ${\cal C}$ is the set of complex exponentials, then ${\bf c}_{\text{opt}}$ is obtained by peak-picking the magnitude of the Fourier transform of $\tilde {\bf c}$.
\item If ${\cal C}$ is the set of non-decreasing sequences, then ${\bf c}_{\text{opt}}$ is obtained via monotone regression of $\tilde {\bf c}$.
\end{itemize}
The drawback of using Lemma \ref{oslem} is that it splits the optimization variables into smaller blocks, which usually slows down convergence. Variable splitting can also be used to tackle problems that involve constraints that couple the loading matrices. An example for the partial symmetry constraint (${\bf B} = {\bf A}$) is provided in the supplementary material.

\section{Cram\'er-Rao Bound}
\label{sec:CRB}

The \crb~bound is the most commonly used performance benchmarking tool in statistical signal processing. It is a lower bound on the variance of any {\em unbiased} estimator (and thus on mean square error of unbiased estimators), which is expressed in terms of the (pseudo-)inverse of the {\em Fisher information matrix}. In many cases it is hard to prove that an estimator is unbiased, but if the empirical bias is small, the \crb~bound is still used as a benchmark. See the supplementary material for general references and more motivation and details about the \crb~bound. We derive the Fisher information matrix and the corresponding \crb~bound for the CPD model
\begin{equation}\label{eq:cpd3}
\vec{\Xt}=(\C\odot\B\odot\A)\one + \vec{\Nt}.
\end{equation}
Assuming the elements of $\Nt$ come from an i.i.d. Gaussian distribution with variance $\sigma^2$, it has been shown that the FIM can be derived in a simpler way without resorting to taking expectations. Denote $\th$ as the long vector obtained by stacking all the unknowns,
\[
\th = \left[~\vec{\A}^T~\vec{\B}^T~\vec{\C}^T~\right]^T,
\]
and define the nonlinear function
\[
\ph(\th)=(\C\odot\B\odot\A)\one,
\]
then the FIM is simply given by~\cite{basu2000stability} (cf. Proposition~\ref{ppst:crb4gauss} in the supplementary material)
\[
\fim = \frac{1}{\sigma^2}\jacob_{\th}\ph(\th)^T\jacob_{\th}\ph(\th),
\]
where $\jacob_{\th}\ph(\th)$ is the Jacobian matrix of $\ph(\th)$, which can be partitioned into three blocks
\[
\jacob_{\th}\ph(\th) = \begin{bmatrix}
\jacob_{\A}\ph(\th) & \jacob_{\B}\ph(\th) & \jacob_{\C}\ph(\th)
\end{bmatrix}.
\]
Rewrite $\ph(\th)$ as follows
\begin{align}
\ph(\th) & =(\C\odot\B\odot\A)\one = \vec{\A(\C\odot\B)^T} \nonumber \\
&= ((\C\odot\B)\otimes\eye{I})\vec{\A} \label{eq:JA}\\
&= \K_{JK,I}(\A\odot\C\odot\B)\one = \K_{JK,I}\vec{\B(\A\odot\C)^T} \nonumber \\
&= \K_{JK,I}((\A\odot\C)\otimes\eye{J})\vec{\B} \label{eq:JB}\\
&= \K_{K,IJ}(\B\odot\A\odot\C)\one = \K_{K,IJ}\vec{\C(\B\odot\A)^T} \nonumber \\
&= \K_{K,IJ}((\B\odot\A)\otimes\eye{K})\vec{\C}, \label{eq:JC}
\end{align}
where $\K_{m,n}$ represents the commutation matrix~\cite{magnus1979commutation} of size $mn \times mn$. The commutation matrix is a permutation matrix that has the following properties:
\begin{enumerate}
\item $\K_{m,n}\vec{\Sb}=\vec{\Sb^T}$, where $\Sb$ is $m \times n$;
\item $\K_{p,m}(\Sb\otimes\T)=(\T\otimes\Sb)\K_{q,n}$, where $\T$ is $p \times q$;
\item $\K_{p,m}(\Sb\odot\T)=\T\odot\Sb$;
\item $\K_{n,m}=\K_{m,n}^T=\K_{m,n}^{-1}$;
\item $\K_{mp,n}\K_{mn,p}=\K_{m,np}$.
\end{enumerate}
From \eqref{eq:JA}-\eqref{eq:JC}, it is easy to see that
\begin{align*}
\jacob_{\A}\ph(\th) &= ((\C\odot\B)\otimes\eye{I}), \\
\jacob_{\B}\ph(\th) &= \K_{JK,I}((\A\odot\C)\otimes\eye{J}),\\
\jacob_{\C}\ph(\th) &= \K_{K,IJ}((\B\odot\A)\otimes\eye{K}).
\end{align*}
Similarly, we can partition the FIM into nine blocks
\[
\fim = \frac{1}{\sigma^2}\ufim
= \frac{1}{\sigma^2}\begin{bmatrix}
\ufim_{\A,\A} & \ufim_{\A,\B} & \ufim_{\A,\C} \\
\ufim_{\B,\A} & \ufim_{\B,\B} & \ufim_{\B,\C} \\
\ufim_{\C,\A} & \ufim_{\C,\B} & \ufim_{\C,\C}
\end{bmatrix}.
\]
For the diagonal blocks, using the properties of commutation matrices
\begin{align*}
\ufim_{\A,\A} &= ((\C\odot\B)\otimes\eye{I})^T((\C\odot\B)\otimes\eye{I}) \\
&= (\C^T\C*\B^T\B)\otimes\eye{I},\\
\ufim_{\B,\B} &= (\A^T\A*\C^T\C)\otimes\eye{J},\\
\ufim_{\C,\C} &= (\B^T\B*\A^T\A)\otimes\eye{K}.
\end{align*}
For the off-diagonal blocks, we derive $\ufim_{\B,\C}$ here for tutorial purposes
\begin{align*}
\ufim_{\B,\C} & = \jacob_{\B}\ph(\th)^T\jacob_{\C}\ph(\th) \\
&= ((\A\odot\C)\otimes\eye{J})^T\K_{JK,I}^T\K_{K,IJ}((\B\odot\A)\otimes\eye{K}) \\
&= ((\A\odot\C)\otimes\eye{J})^T\K_{IK,J}((\B\odot\A)\otimes\eye{K}).
\end{align*}
To further simplify the expression, let us consider the product of $\ufim_{\B,\C}$ and $\vec{\tilde{\C}}$, where $\tilde{\C}$ is an arbitrary $K \times F$ matrix
\begin{align*}
& \ufim_{\B,\C}\vec{\tilde{\C}} \\
=& ((\A\odot\C)\otimes\eye{J})^T\K_{IK,J}((\B\odot\A)\otimes\eye{K})\vec{\tilde{\C}} \\
=& ((\A\odot\C)\otimes\eye{J})^T\K_{IK,J}\vec{\tilde{\C}(\B\odot\A)^T}\\
=& \K_{F,J}(\eye{J}\otimes(\A\odot\C)^T)(\B\odot\A\odot\tilde{\C})\one \\
=& \K_{F,J}(\B\odot(\A^T\A*\C^T\tilde{\C}))\one \\
=& ((\A^T\A*\C^T\tilde{\C})\odot\B)\one \\
=& \vec{\B(\A^T\A*\C^T\tilde{\C})^T} \\
=& (\eye{F}\otimes\B)\diag{\vec{\A^T\A}}\vec{\tilde{\C}^T\C}\\
=& (\eye{F}\otimes\B)\diag{\vec{\A^T\A}}\K_{F,F}(\eye{F}\otimes\C^T)\vec{\tilde{\C}}.
\end{align*}
This holds for all possible $\tilde{\C}\in\Real^{K \times F}$, implying
\[
\ufim_{\B,\C} = (\eye{F}\otimes\B)\diag{\vec{\A^T\A}}\K_{F,F}(\eye{F}\otimes\C^T).
\]
Similarly, we can derive the expression for $\ufim_{\A,\B}$ and $\ufim_{\A,\C}$.
The entire expression for the FIM $\fim=(1/\sigma^2)\ufim$ is given in~\eqref{eq:fim3}.

\begin{figure*}[!t]
\normalsize
\begin{equation}\label{eq:fim3}
\sigma^2\fim=\ufim = \begin{bmatrix}
\left(\B^T\B* \C^T\C\right)\otimes \I_I &
(\I_F\otimes\A)\F_{\C}(\I_F\otimes\B)^T &
(\I_F\otimes\A)\F_{\B}(\I_F\otimes\C)^T \\
(\I_F\otimes\B)\F_{\C}(\I_F\otimes\A)^T &
\left(\A^T\A* \C^T\C\right)\otimes \I_J &
(\I_F\otimes\B)\F_{\A}(\I_F\otimes\C)^T \\
(\I_F\otimes\C)\F_{\B}(\I_F\otimes\A)^T &
(\I_F\otimes\C)\F_{\A}(\I_F\otimes\B)^T &
\left(\A^T\A* \B^T\B\right)\otimes \I_K
\end{bmatrix} = \bm{\Delta} + \bm{\Upsilon}\F\bm{\Upsilon}^T
\end{equation}
\[
\begin{array}{l}
\F_{\A} = \diag{\vec{\A^T\A}}\K_{F,F} \\
\F_{\B} = \diag{\vec{\B^T\B}}\K_{F,F} \\
\F_{\C} = \diag{\vec{\C^T\C}}\K_{F,F}
\end{array}~~~
\F = \begin{bmatrix}
0 & \F_{\B} & \F_{\C} \\
\F_{\A} & 0 & \F_{\C} \\
\F_{\A} & \F_{\B} & 0
\end{bmatrix}
\]
\[
\bm{\Delta} = \begin{bmatrix}
\left(\B^T\B* \C^T\C\right)\otimes \I_I &
0 & 0 \\ 0 &
\left(\A^T\A* \C^T\C\right)\otimes \I_J &
0 \\ 0 & 0 &
\left(\A^T\A* \B^T\B\right)\otimes \I_K
\end{bmatrix},
\bm{\Upsilon} = \begin{bmatrix}
\I_F\otimes\A &
0 & 0 \\ 0 &
\I_F\otimes\B &
0 \\ 0 & 0 &
\I_F\otimes\C
\end{bmatrix}
\]
\hrulefill
\end{figure*}

Formulae for the Jacobian matrix and FIM have appeared in~\cite{liu2001cramer,vorobyov2005robust,tomasi2006thesis,phan2013low,tichavsky2013cramer}, but the derivation is not as clear and straightforward as the one given here. Furthermore, we show below that $\ufim$ is rank deficient with deficiency at least $2F$, and identify the associated null subspace $\ufim$. When the FIM is singular, it has been shown that we can simply take its pseudo-inverse as CRB~\cite{stoica2001parameter}, albeit this bound might be far from attainable, even in theory. When the size of the tensor is large, it may be computationally intractable to take the pseudo-inverse of $\ufim$ directly, which takes $O((I+J+K)^3F^3)$ flops. Instead of taking this route, we explain how we can compute $\ufim^\dagger$ efficiently when the deficiency is exactly $2F$. Simulations suggest that the deficiency is exactly $2F$ when the model is identifiable.

\begin{proposition}
The rank of the $(I+J+K)F \times (I+J+K)F$ FIM $\fim$ defined in \eqref{eq:fim3} is at most $(I+J+K)F-2F$.
\end{proposition}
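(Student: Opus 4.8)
The plan is to work with the Jacobian $\jacob_{\th}\ph(\th)$ rather than the FIM directly, exploiting that $\ufim = \jacob_{\th}\ph(\th)^T \jacob_{\th}\ph(\th)$ shares its null space (hence its rank) with $\jacob_{\th}\ph(\th)$: for real parameters, $\ufim\mathbf{z}=\zero \Leftrightarrow \mathbf{z}^T\ufim\mathbf{z}=\|\jacob_{\th}\ph(\th)\mathbf{z}\|_2^2=0 \Leftrightarrow \jacob_{\th}\ph(\th)\mathbf{z}=\zero$. It therefore suffices to exhibit a $2F$-dimensional subspace of $\Null{\jacob_{\th}\ph(\th)}$, which immediately yields $\rank{\ufim}=\rank{\jacob_{\th}\ph(\th)}\le (I+J+K)F-2F$.

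First I would identify the null directions with the scaling/counter-scaling ambiguity of the CPD (cf. the definition of essential uniqueness). I parametrize an infinitesimal column-wise rescaling by three vectors $\mathbf{s},\mathbf{t},\mathbf{u}\in\Real^{F}$ via $\delta\vec{\A}=\vec{\A\,\diag{\mathbf{s}}}$, $\delta\vec{\B}=\vec{\B\,\diag{\mathbf{t}}}$, $\delta\vec{\C}=\vec{\C\,\diag{\mathbf{u}}}$, and stack them into $\delta\th$. Using the block expression $\jacob_{\A}\ph(\th)=((\C\odot\B)\otimes\eye{I})$ together with the identity $((\C\odot\B)\otimes\eye{I})\vec{\A\,\diag{\mathbf{s}}}=\vec{\A\,\diag{\mathbf{s}}(\C\odot\B)^T}=(\C\odot\B\odot\A)\mathbf{s}$ coming from \eqref{eq:JA}, and the analogous identities following from \eqref{eq:JB}--\eqref{eq:JC}, I would show
\[
\jacob_{\th}\ph(\th)\,\delta\th = (\C\odot\B\odot\A)(\mathbf{s}+\mathbf{t}+\mathbf{u}).
\]
Hence any $\delta\th$ of this form with $\mathbf{s}+\mathbf{t}+\mathbf{u}=\zero$ lies in $\Null{\jacob_{\th}\ph(\th)}$, irrespective of whether $\C\odot\B\odot\A$ has full column rank.

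Next I would count dimensions. The solution set $\{(\mathbf{s},\mathbf{t},\mathbf{u}):\mathbf{s}+\mathbf{t}+\mathbf{u}=\zero\}$ is a $2F$-dimensional subspace of $\Real^{3F}$, a convenient basis being $(\e_f,-\e_f,\zero)$ and $(\e_f,\zero,-\e_f)$ for $f=1,\dots,F$ (the infinitesimal versions of scaling $\a_f$ up while counter-scaling $\b_f$ or $\c_f$ down). It remains to check that the linear map $(\mathbf{s},\mathbf{t},\mathbf{u})\mapsto\delta\th$ is injective, so that these $2F$ generators map to linearly independent elements of the null space; this holds provided no column of $\A$, $\B$, $\C$ is identically zero, which is automatic for a genuine rank-$F$ decomposition (a vanishing column would drop a rank-1 term and contradict $\rank{\Xt}=F$). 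Combining the three steps gives $\dim\Null{\jacob_{\th}\ph(\th)}\ge 2F$ and hence the claimed bound.

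The main obstacle --- modest here --- is the bookkeeping of the commutation matrices $\K_{\cdot,\cdot}$ that appear in the $\B$- and $\C$-blocks of the Jacobian when verifying $\jacob_{\B}\ph(\th)\,\vec{\B\,\diag{\mathbf{t}}}=(\C\odot\B\odot\A)\mathbf{t}$ and its $\C$ analogue. The cleanest route is to reuse the chains of identities already recorded in \eqref{eq:JB}--\eqref{eq:JC} (which absorb the commutation matrices) rather than manipulate them from scratch, and to invoke role symmetry so that only one off-diagonal case need be written out in full. I note that this argument establishes only the upper bound $(I+J+K)F-2F$; the remark following the statement (deficiency exactly $2F$ when the model is identifiable) would require the complementary, and harder, verification that no further null directions exist generically.
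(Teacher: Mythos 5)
Your proof is correct, and although it exhibits exactly the same null directions as the paper, the way it verifies them is genuinely different and lighter. The paper's proof (Proposition~\ref{ppst:fimrank-cp} in the supplementary material, specialized to $N=3$) takes null vectors whose blocks are $\vec{\A\D}$, $-\vec{\B\D}$, $\zero$ and $\vec{\A\D}$, $\zero$, $-\vec{\C\D}$ with $\D$ diagonal; for $\D=\diag{\mathbf{s}}$ these are precisely your $\delta\th$ built from $(\mathbf{s},-\mathbf{s},\zero)$ and $(\mathbf{s},\zero,-\mathbf{s})$. But the paper checks $\ufim\z=\zero$ by multiplying against the FIM blocks \eqref{eq:fim4cp1}--\eqref{eq:fim4cp3} and working through commutation-matrix/Khatri--Rao manipulations, the delicate step being $\diag{\vec{\bm{\Gamma}_{d,c}}}\vec{\H_c^T\H_c^{}\D}=\vec{\bm{\Gamma}_d\D}$, which holds only because $\D$ is diagonal. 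You instead pass to the Jacobian: since $\ufim=\jacob_{\th}\ph(\th)^T\jacob_{\th}\ph(\th)$ with real entries, $\Null{\ufim}=\Null{\jacob_{\th}\ph(\th)}$, and the identity $\jacob_{\th}\ph(\th)\,\delta\th=(\C\odot\B\odot\A)(\mathbf{s}+\mathbf{t}+\mathbf{u})$ drops out of \eqref{eq:JA}--\eqref{eq:JC} applied with $\A\diag{\mathbf{s}}$, $\B\diag{\mathbf{t}}$, $\C\diag{\mathbf{u}}$ in place of $\A$, $\B$, $\C$, because those chains hold for arbitrary matrices and hence absorb all commutation-matrix bookkeeping. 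What this buys: a much shorter computation; a transparent interpretation of the null space as the linearization of the scaling/counter-scaling ambiguity along $\mathbf{s}+\mathbf{t}+\mathbf{u}=\zero$; and an argument that generalizes verbatim to order $N$, where the constraint $\sum_{n=1}^{N}\mathbf{s}_n=\zero$ has dimension $(N-1)F$, recovering the supplementary result. What the paper's route buys is that it stays entirely at the level of the FIM blocks, the same objects its subsequent pseudo-inverse computation manipulates. Two minor points: your non-degeneracy assumption (no identically zero column in $\A$, $\B$, $\C$) is weaker than the paper's full-column-rank hypothesis and is all that either proof actually needs for the linear-independence step; and, as you correctly note, both arguments establish only the upper bound -- the claim that the deficiency equals $2F$ for identifiable models is supported in the paper only by simulation.
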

\begin{IEEEproof}
Please refer to the proof of Proposition~\ref{ppst:fimrank-cp} for the more general $N$-way tensor case, in the supplementary material.
%
\end{IEEEproof}
When the rank deficiency is equal to $2F$ (which appears to be true almost surely when the model is identifiable, based on our simulations) then we can compute the pseudo-inverse of $\ufim$ efficiently, invoking the following lemma proven in the supplementary material of~\cite{huang2014spm}.
\begin{lemma}\label{lmm:pinv1}
Let matrix $\M$ be symmetric and singular, and the matrix $\L$ satisfying $\range{\L}=\Null{\M}$, then
\begin{equation}\label{eq:pinv1}
    \M^{\dag} = (\M + \L\L^T)^{-1}
                        - (\L^{\dag})^T \L^{\dag}.
\end{equation}
\end{lemma}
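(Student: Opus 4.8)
The plan is to reduce the identity to the spectral decomposition of the symmetric matrix $\M$ and then verify it block-by-block. First I would write $\M=\U_1\bLambda\U_1^T$, where the columns of $\U_1$ form an orthonormal basis of $\range{\M}$, the columns of $\U_2$ form an orthonormal basis of $\Null{\M}$, the matrix $[\U_1~\U_2]$ is orthogonal, and $\bLambda$ is the (invertible) diagonal matrix of nonzero eigenvalues. In this basis $\M^{\dagger}=\U_1\bLambda^{-1}\U_1^T$, so the entire claim amounts to showing that the right-hand side of \eqref{eq:pinv1} acts as $\bLambda^{-1}$ on $\range{\U_1}$ and as $\zero$ on $\range{\U_2}$.

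Next I would exploit the hypothesis $\range{\L}=\Null{\M}=\range{\U_2}$. Since every column of $\L$ lies in $\range{\U_2}$ we have $\U_1^T\L=\zero$, hence $\U_1^T\L\L^T=\zero$; thus, expressed in the $[\U_1~\U_2]$ basis, $\L\L^T$ reduces to the single block $\Sb:=\U_2^T\L\L^T\U_2=(\U_2^T\L)(\U_2^T\L)^T$. Because $\U_2$ has orthonormal columns and $\range{\L}=\range{\U_2}$, the matrix $\U_2^T\L$ inherits $\rank{\L}=\dim\Null{\M}$ and so has full row rank; therefore $\Sb$ is symmetric positive definite and in particular invertible. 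Consequently $\M+\L\L^T=\U_1\bLambda\U_1^T+\U_2\Sb\U_2^T$ is nonsingular, with $(\M+\L\L^T)^{-1}=\U_1\bLambda^{-1}\U_1^T+\U_2\Sb^{-1}\U_2^T$.

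The one genuinely technical step — and the part I expect to be the main obstacle — is handling $(\L^{\dagger})^T\L^{\dagger}$ cleanly without assuming $\L$ has full column rank. The plan is to invoke the standard Moore--Penrose identity $\L^{\dagger}=\L^T(\L\L^T)^{\dagger}$ together with the symmetry of $(\L\L^T)^{\dagger}$, which gives $(\L^{\dagger})^T=(\L\L^T)^{\dagger}\L$ and hence collapses the product, $(\L^{\dagger})^T\L^{\dagger}=(\L\L^T)^{\dagger}(\L\L^T)(\L\L^T)^{\dagger}=(\L\L^T)^{\dagger}$. Since $\L\L^T=\U_2\Sb\U_2^T$ with $\Sb$ invertible, its pseudoinverse is simply $(\L\L^T)^{\dagger}=\U_2\Sb^{-1}\U_2^T$.

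Finally I would subtract the two expressions: $(\M+\L\L^T)^{-1}-(\L^{\dagger})^T\L^{\dagger}=\bigl(\U_1\bLambda^{-1}\U_1^T+\U_2\Sb^{-1}\U_2^T\bigr)-\U_2\Sb^{-1}\U_2^T=\U_1\bLambda^{-1}\U_1^T=\M^{\dagger}$, which is exactly \eqref{eq:pinv1}. The argument never uses definiteness of $\M$ itself, only its symmetry and the range hypothesis on $\L$; conceptually, $\L\L^T$ supplies an invertible perturbation confined to $\Null{\M}$ that renders $\M+\L\L^T$ nonsingular, and the correction term $(\L^{\dagger})^T\L^{\dagger}$ precisely cancels that spurious contribution on $\Null{\M}$ while leaving the action on $\range{\M}$ untouched.
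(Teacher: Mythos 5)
Your proof is correct. Note that the paper does not actually prove this lemma in-text: it defers to the supplementary material of \cite{huang2014spm}, so there is no in-paper argument to compare against, and your write-up serves as a self-contained replacement. Every step checks out: since $\M$ is symmetric, $\Null{\M}=\range{\M}^{\perp}$, so the orthogonal splitting $[\U_1~\U_2]$ is legitimate and $\M+\L\L^T$ becomes block diagonal with blocks $\bLambda$ and $\Sb=(\U_2^T\L)(\U_2^T\L)^T$; the hypothesis $\range{\L}=\Null{\M}$ gives $\U_2^T\L$ full row rank, hence $\Sb\succ 0$, and invertibility of $\bLambda$ (which needs only symmetry of $\M$, not definiteness) does the rest. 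The most delicate step, $(\L^{\dagger})^T\L^{\dagger}=(\L\L^T)^{\dagger}$ via $\L^{\dagger}=\L^T(\L\L^T)^{\dagger}$ and the Penrose axiom $B^{\dagger}BB^{\dagger}=B^{\dagger}$, is handled correctly, and it makes your version slightly more general than the paper needs: the lemma as stated does not require $\L$ to have full column rank, and your argument covers that case, whereas in the paper's applications $\L=\bm{\Upsilon}\mathbf{E}$ does have full column rank, so the correction term specializes to $\L\invtwo{\L^T\L}\L^T$ as used in the text. One stylistic observation: an alternative, more pedestrian route would be to verify the four Moore--Penrose conditions for the right-hand side of \eqref{eq:pinv1} directly, but your spectral-basis computation is cleaner and makes transparent the intended mechanism --- $\L\L^T$ is an invertible perturbation supported exactly on $\Null{\M}$, and $(\L^{\dagger})^T\L^{\dagger}$ removes precisely that contribution.
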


A matrix $\L$ that spans the null-space of $\ufim$ can be written as
\[
\L = \bm{\Upsilon}\mathbf{E},
\]
where
\[
\mathbf{E} = \begin{bmatrix}
~\I_F\odot\I_F & ~\I_F\odot\I_F \\
\!-\I_F\odot\I_F & 0 \\
0 & \!-\I_F\odot\I_F
\end{bmatrix}.
\]
Since $\L$ has full column rank, its pseudo-inverse is
\[
\L^\dagger = \inv{\L^T\L}\L^T
= \inv{\mathbf{E}^T\bm{\Upsilon}^T\bm{\Upsilon}\mathbf{E}}\mathbf{E}^T\bm{\Upsilon}^T.
\]

Next, we define $\fullfim$ by ``completing'' the range-space of $\ufim$
\begin{align*}
\fullfim & = \ufim + \L\L^T \\
&= \bm{\Delta} + \bm{\Upsilon}\F\bm{\Upsilon}^T + \bm{\Upsilon}\mathbf{EE}^T\bm{\Upsilon}^T \\
&= \bm{\Delta} + \bm{\Upsilon}\left(\F+\mathbf{EE}^T\right)\bm{\Upsilon}^T,
\end{align*}
where the definitions of $\bm{\Delta}$, $\F$, and $\bm{\Upsilon}$ can be found in~\eqref{eq:fim3}.
If $\F+\mathbf{EE}^T$ is invertible, applying matrix inversion lemma to $\fullfim$ leads to
\[
\fullfim^{-1} \!=\! \bm{\Delta}^{-1} \!-\! \bm{\Delta}^{-1}\bm{\Upsilon}
\inv{\inv{\F\!+\!\mathbf{EE}^T}\!+\!\bm{\Upsilon\Delta}^{-1}\bm{\Upsilon}}
\bm{\Upsilon}^T\bm{\Delta}^{-1}.
\]
Notice that $\bm{\Delta}$ can be efficiently inverted, because it is block diagonal, and each of its diagonal blocks is a Kronecker product. The most expensive step is to compute $\inv{\F+\mathbf{EE}^T}$ and $\inv{\inv{\F\!+\!\mathbf{EE}^T}\!+\!\bm{\Upsilon\Delta}^{-1}\bm{\Upsilon}}$, each taking $O(F^6)$ flops. However, this is still a huge improvement when $F\ll\min(I,J,K)$, compared to directly inverting $\bm{\Delta}$ with $O((I+J+K)^3F^3)$ flops. Finally, according to Lemma~\ref{lmm:pinv1}, $\ufim^\dagger$ can be computed as
\begin{align*}
\ufim^\dagger &= \bm{\Delta}^{-1} - (\L^\dagger)^T\L^\dagger \\
&=\!\bm{\Delta}^{-1} \!-\! \bm{\Delta}^{-1}\bm{\Upsilon}
\inv{\inv{\F\!+\!\mathbf{EE}^T}\!+\!\bm{\Upsilon\Delta}^{-1}\bm{\Upsilon}}
\bm{\Upsilon}^T\bm{\Delta}^{-1} \\
&~~~- \bm{\Upsilon}\mathbf{E}
\invtwo{\mathbf{E}^T\bm{\Upsilon}^T\bm{\Upsilon}\mathbf{E}}\mathbf{E}^T\bm{\Upsilon}^T,
\end{align*}
and the CRB is simply
\[
\fim^\dagger = \sigma^2\ufim^\dagger.
\]

\section{Applications}

\subsection{Blind Multiuser CDMA}

In direct-sequence code-division multiple access (DS-CDMA) communications, a transmitter sends logical digits $d \in \left\{0,1\right\}$ by transmitting one of two analog waveforms $b p(t)$, where $b:=(-1)^d \in \left\{+1,-1\right\}$,
$p(t)=\sum_{\ell=1}^L {\bf s}(\ell) c(t-\ell T_c)$, $t \in \Real$ is the bit signaling pulse, the $L \times 1$ vector ${\bf s}$ is the transmitter's {\em spreading code}, $L$ is the {\em spreading gain}, $T_c$ is the {\em chip period}, and $c(\cdot)$ is the {\em chip pulse}. In practice, the transmitter sends a sequence of digits $\left\{ {\bf d}(n) \right\}_{n=1}^N$ by transmitting $\sum_{n=1}^N {\bf b}(n) p(t-nT)$, and the receiver performs matched filtering with respect to the chip pulse and outputs a vector of chip-rate samples that, under ideal conditions (no multipath, perfect synchronization, no noise), reproduce $\left\{ {\bf y}_n = {\bf s} {\bf b}(n)\right\}_{n=1}^N$ at the other end of the communication link. Collecting these output vectors in an $L \times N$ matrix
\[
{\bf Y} := [{\bf y}_1, {\bf y}_2, \cdots, {\bf y}_N] = {\bf s} {\bf b}^T,
\]
we notice that this is rank-1, and we can therefore recover ${\bf s}$ and ${\bf b}$ from ${\bf Y}$ up to an inherently unresolvable sign ambiguity in this case, simply by reading out a column and row of ${\bf Y}$, respectively. In practice there will be noise and other imperfections, so we will in fact extract the principal component of ${\bf Y}$ instead, using SVD.
This says that we can in fact decode the transmitted sequence even if we do not know the user's ``secret'' spreading code, up to a sign ambiguity.

However, DS-CDMA is very wasteful in terms of bandwidth when used in single-user mode; in fact it is a multiuser multiplexing modality that is used as an alternative to classical frequency- or time-division multiplexing. When there are two co-channel transmitters sending information simultaneously, then (again under idealized conditions)
\[
{\bf Y} = {\bf s}_1 {\bf b}_1^T + {\bf s}_2 {\bf b}_2^T = \left[ {\bf s}_1,~{\bf s}_2\right] \left[ {\bf b}_1,~{\bf b}_2\right]^T = {\bf S} {\bf B}^T,
\]
In this case, if we know ${\bf s}_1$ and ${\bf s}_1 \perp {\bf s}_2$ (i.e., ${\bf s}_1^T {\bf s}_2=0$), then
\[
{\bf s}_1^T {\bf Y} = {\bf s}_1^T {\bf s}_1 {\bf b}_1^T + {\bf s}_1^T {\bf s}_2 {\bf b}_2^T = ||{\bf s}_1||_2^2 {\bf b}_1^T,
\]
and thus perfect interference cancelation and recovery of the transmitted bits of both users is possible. Even if ${\bf s}_1^T {\bf s}_2 \neq 0$, so long as they are linearly independent, we can instead use the so-called {\em zero-forcing} (interference nulling) equalizer
\[
{\bf S}^{\dagger} {\bf Y} = ({\bf S}^T {\bf S})^{-1} {\bf S}^T {\bf Y} = ({\bf S}^T {\bf S})^{-1} {\bf S}^T {\bf S} {\bf B}^T = {\bf B}^T,
\]
and recover both streams of bits. This is the basic idea behind DS-CDMA: it is possible to unmix the user transmissions when we know the spreading codes and these are linearly independent (this requires $L \geq$ the number of users/transmitters). However, unmixing is doomed to fail when we do not know ${\bf S}$, because rank-two or higher matrix factorization is not unique in general.

The first application of CPD to (communication) signal processing was exactly in bypassing this seemingly insurmountable problem, which is of interest in non-cooperative communications \cite{SidGiaBro00}. Suppose that we have two (or more) receivers simultaneously monitoring the same band of interest from different locations. Let ${\bf H}(i,f)$ denote the path loss from the $f$-th transmitter to the $i$-th receiver. Then
\[
{\bf Y}_1 = \left[ {\bf s}_1,~{\bf s}_2\right] \left[ \begin{array}{lr}
                                                      {\bf H}(1,1) & 0 \\
                                                      0 & {\bf H}(1,2)
                                                    \end{array}
\right] \left[ {\bf b}_1,~{\bf b}_2\right]^T = {\bf S} {\bf D}_1({\bf H}) {\bf B}^T,
\]
\[
{\bf Y}_2 = \left[ {\bf s}_1,~{\bf s}_2\right] \left[ \begin{array}{lr}
                                                      {\bf H}(2,1) & 0 \\
                                                      0 & {\bf H}(2,2)
                                                    \end{array}
\right] \left[ {\bf b}_1,~{\bf b}_2\right]^T = {\bf S} {\bf D}_2({\bf H}) {\bf B}^T,
\]
or
\[
\Yt(i,j,k) := {\bf Y}_i(j,k) = \sum_{f=1}^2 {\bf S}(j,f) {\bf H}(i,f) {\bf B}(k,f),
\]
a CPD model of rank $F=2$. When there are more users, we obtain a CPD model of higher rank. The key point here is that the link to CPD allows recovering everything (spreading codes, information bits, and path losses) for all transmitters, up to the inherent (user) permutation and scaling / counter-scaling ambiguities of CPD. This is true even if there are more co-channel transmitters than the length of the spreading codes, $L$, so long as one of the CPD identifiability conditions is satisfied. The conceptual development presented here hides practical concerns, such as multipath, noise, imperfect synchronization, etc. There has been considerable follow-up work to address some of these issues, such as \cite{905948}. 

\subsection{Blind source separation}

Let us consider the model ${\bf y}_n = {\bf A} {\bf s}_n$, $n \in \left\{1, 2, \cdots \right\}$, where ${\bf y}_n$ is $I \times 1$, ${\bf s}_n$ is $F \times 1$, and we ignore additive noise for simplicity of exposition. We adopt {\em one} of the following two assumptions.

\noindent {\sf Assumption A1): Uncorrelated sources of time-varying powers.} In this case,
\[
{\bf R}_n := E \left[ {\bf y}_n {\bf y}_n^T \right] = {\bf A} E \left[ {\bf s}_n {\bf s}_n^T \right] {\bf A}^T =
\]
\[
{\bf A} \left[ \begin{array}{lcr}
                 E[({\bf s}_n(1))^2] & & {\bf 0} \\
                 & \ddots &\\
                 {\bf 0} & & E[({\bf s}_n(F))^2]
               \end{array}
\right] {\bf A}^T = {\bf A} {\bf D}_n {\bf A}^T.
\]
If we assume that the average source powers remain approximately constant over ``dwells'', and then switch to a different ``state'', then we can estimate
\[
\widehat {\bf R}_n :=  \frac{1}{N} \sum_{m=N(n-1)+1}^{Nn} {\bf y}_m {\bf y}_m^T,
\]
yielding a partially symmetric CPD model, which is particularly well-suited for speech signal separation using an array of microphones \cite{NioMokSidPot08}. Given ${\bf A}$, one can use its pseudo-inverse to recover the sources if ${\bf A}$ is tall, else it is possible to mitigate crosstalk using various beamforming strategies. When the speakers are moving, we can even track ${\bf A}$ over time using adaptive CPD approaches \cite{NioSid09}.

\noindent {\sf Assumption A2): Uncorrelated jointly WSS sources having different power spectra.} In this case, we rely instead on correlations at different lags, i.e.,
\[
{\bf R}_{n,\ell} := E \left[ {\bf y}_n {\bf y}_{n-\ell}^T \right] = {\bf A} E \left[ {\bf s}_n {\bf s}_{n-\ell}^T \right] {\bf A}^T =
\]
(since different sources are uncorrelated)
\[
{\bf A} \left[ \begin{array}{lcr}
                 E[{\bf s}_n(1) {\bf s}_{n-\ell}(1)] & & {\bf 0} \\
                 & \ddots &\\
                 {\bf 0} & & E[{\bf s}_n(F) {\bf s}_{n-\ell}(F) ]
               \end{array}
\right] {\bf A}^T =
\]
(since each source is wide-sense stationary (WSS))
\[
{\bf A} \left[ \begin{array}{lcr}
                 r_1(\ell) & & {\bf 0} \\
                 & \ddots &\\
                 {\bf 0} & & r_F(\ell) ]
               \end{array}
\right] {\bf A}^T = {\bf A} {\bf D}_{\ell}({\bf R}) {\bf A}^T,
\]
where ${\bf R}$ is the $L \times F$ (lags considered $\times$ sources) matrix holding the autocorrelation vector for all the sources. The power spectrum is the Fourier transform of the autocorrelation vector, hence different spectra are associated with different autocorrelation vectors, providing the necessary diversity to distinguish the sources and estimate ${\bf A}$. In practice we use $\widehat {\bf R}_{\ell} :=  \frac{1}{N} \sum_{n=\ell+1}^{N+\ell} {\bf y}_n {\bf y}_{n-\ell}^T$. This approach to source separation was pioneered by Belouchrani \cite{554307}. It took some years to recognize that this is a special (partially symmetric) case of CPD. These two approaches are second-order variants of Independent Component Analysis \cite{ComJutBook}. Other assumptions are possible, and may involve higher-order statistics, which are by themselves higher-order tensors.

Tensor factorization can also be applied to source separation in the power spectrum domain \cite{7175044}, which has applications in radio astronomy and dynamic spectrum access.

\subsection{Harmonics}
Consider the harmonic mixture ${\bf Y} = {\bf V} {\bf S}^T$, where ${\bf V}$ is Vandermonde, i.e.,
\[
{\bf V} = \left[ \begin{array}{cccc}
                   1 & 1 & \cdots & 1 \\
                   a_1 & a_2 & \cdots & a_F \\
                   a_1^2 & a_2^2 & \cdots & a_F^2 \\
                   \vdots & \vdots & \vdots & \vdots
                 \end{array}
\right],
\]
where the {\em generators} in the second row can be real or complex. The model ${\bf Y} = {\bf V} {\bf S}^T$ seems unrelated to tensors, however, upon defining
\[
{\bf Y}_1 := {\bf Y}(1:end-1,:)~~~\text{(all rows except last)}
\]
\[
{\bf Y}_2 := {\bf Y}(2:end,:)~~~\text{(all rows except first)}
\]
${\bf V}_1 := {\bf V}(1:end-1,:)$, and ${\bf D} := \text{Diag}([a_1,~a_2,~\cdots,~a_F])$, it is easy to see that
\[
{\bf Y}_1 = {\bf V}_1 {\bf S}^T;~~~ {\bf Y}_2 = {\bf V}_1 {\bf D} {\bf S}^T,
\]
i.e., a two-slab CPD model with Vandermonde structure in one mode. If we instead take ${\bf Y}_1 := {\bf Y}(1:end-2,:)$,
${\bf Y}_2 := {\bf Y}(2:end-1,:)$, ${\bf Y}_3 := {\bf Y}(3:end,:)$, then we obtain a three-slab CPD model with Vandermonde structure in two modes. If ${\bf S}$ is also Vandermonde, then we get Vandermonde (harmonic) stucture in all three modes, leading to a {\em multidimensional harmonic retrieval} (MDHR) problem. There are deep connections between CPD and MDHR (and direction finding using linear and rectangular arrays), originally revealed in \cite{923759,JiaSidtB01}; see also \cite{NionSidRadar}. 

\subsection{Collaborative filtering - based recommender systems}

Switching gears, consider a users $\times$ movies ratings matrix ${\bf R}$ of size $I \times J$, and the bilinear model
${\bf R} \approx {\bf U} {\bf V}^T$, where ${\bf U}$ is $I \times F$, with $F \ll \min(i,J)$, and its $i$-th row contains a reduced-dimension latent description of user $i$, ${\bf V}$ is $J \times F$ and its $j$-th row contains a reduced-dimension latent description of movie $j$, and the model ${\bf R} = {\bf U} {\bf V}^T$ implies that user $i$'s rating of movie $j$ is approximated as ${\bf R}(i,j) \approx {\bf U}(i,:) ({\bf V}(j,:))^T$, i.e., the inner product of the latent descriptions of the $i$-th user and the $j$-th movie. The premise of this type of modeling is that every user is a linear combination of $F$ (few) user ``types'' (e.g., child, college student, ... - these correspond to rows of ${\bf V}^T$ / columns of ${\bf V}$); and every movie is a linear combination of few movie types (e.g., comedy, drama, documentary, ... - these correspond to columns of ${\bf U}$). Typically, only a very small percentage of the entries of ${\bf R}$ is available -- between 1 per thousand and 1 per $10^5$ in practice. Recommender systems try to predict a user's missing ratings using not only that user's past ratings but also the ratings of all other users -- hence the term {\em collaborative filtering}. Notice that, if we can find ${\bf U}$ and ${\bf V}$ from the available ratings, then we can impute the missing ones using inner products of columns of ${\bf U}$ and ${\bf V}$. This suggests using the following formulation.
\[
\min_{{\bf U},~{\bf V}} \left| \left| {\bf W} * \left( {\bf R} - {\bf U} {\bf V}^T \right) \right| \right|_F^2,
\]
where ${\bf W}(i,j)=1$ if ${\bf R}(i,j)$ is available, $0$ otherwise. In practice it is unclear what would be a good $F$, so we typically over-estimate it and then use a rank penalty to control over-fitting and improve generalization ability. The rank of ${\bf X}$ is equal to the number of nonzero singular values of ${\bf X}$, and the nuclear norm $||{\bf X}||_*$ (sum of singular values) is a commonly used convex surrogate for rank ($||\cdot||_1$ vs. $||\cdot||_0$ of the vector of singular values). It has been shown \cite{doi:10.1137/070697835} that
\[
||{\bf X}||_* = \min_{{\bf U},{\bf V}~|~{\bf X} = {\bf U} {\bf V}^T} \frac{1}{2} \left( ||{\bf U}||_F^2 +  ||{\bf V}||_F^2 \right),
\]
giving rise to the following formulation
\[
\min_{{\bf U},~{\bf V}} \left| \left| {\bf W} * \left( {\bf R} - {\bf U} {\bf V}^T \right) \right| \right|_F^2 + \frac{\lambda}{2} \left( ||{\bf U}||_F^2 +  ||{\bf V}||_F^2 \right).
\]

The above ``flattened'' matrix view of ratings hides the fact that additional information on the context in which the ratings were given is often available. This may range from time stamps to detailed information regarding social context, etc. Every different type of context can be treated as one additional mode, giving rise to (very sparse) higher-order tensors. Taking time stamp (rating time) as an example, consider the user $\times$ movie $\times$ time tensor $\Rt$ with elements $\Rt(i,j,k)$. This can be modeled using CPD as
\[
\min_{{\bf A},~{\bf B},~{\bf C}} \sum_{k=1}^K \left| \left| {\Wt}(:,:,k) * \left( {\Rt}(:,:,k) - {\bf A} {\bf D}_k({\bf C}) {\bf B}^T \right) \right| \right|_F^2,
\]
where we have switched variables to the familiar ones for CPD. We can use similar rank regularization surrogates in the tensor case as well. We may also impose smoothness in the temporal mode (columns of ${\bf C}$) to reflect our expectation that user preferences change slowly over time. These have been considered by Xiong {\em et al.} in \cite{doi:10.1137/1.9781611972801.19} from a Bayesian point of view, which also proposed a probabilistic model for the hyper-parameters coupled with Markov Chain Monte-Carlo (MCMC) techniques for automated parameter tuning. At about the same time, Karatzoglou {\em et al.} \cite{Karatzoglou:2010:MRN:1864708.1864727} used age as the third mode, binned into three groups: under 18, 18-50, and over 50. More generally, they proposed adding a new mode for every piece of contextual information provided. The problem with this is that the more modes one adds, the sparser the resulting tensor, and very sparse tensors require high rank to model (recall that a diagonal matrix with nonzero entries on the diagonal is full rank). Karatzoglou {\em et al.} proposed using a non-orthogonal Tucker model instead of CPD. In particular, they proposed 
\[
\min_{{\bf U},{\bf V},{\bf W}, \Gt} \left| \left| \Gammat * \left( {\Rt} - ({\bf U}, {\bf V}, {\bf W}, \Gt) \right) \right| \right|_F^2 +~~~~~~~~~~~~~~~~~~
\]
\[
~~~~~\lambda \left( ||{\bf U}||_F^2 +  ||{\bf V}||_F^2 + ||{\bf W}||_F^2 \right) + \mu ||\Gt||_F^2,
\]
where $\Gammat(i,j,k)=1$ if $\Rt(i,j,k)$ is available, 0 otherwise; $({\bf U}, {\bf V}, {\bf W}, \Gt)$ stands for the Tucker model that is generated by ${\bf U}, {\bf V}, {\bf W}, \Gt$; and $||\Xt||_F^2$ is the sum of squared elements of tensor $\Xt$. Note that orthogonality is not imposed on ${\bf U}, {\bf V}, {\bf W}$ {\em because it is desired to penalize the model's rank} -- so constraints of type ${\bf U}^T {\bf U} = {\bf I}$ cannot be imposed (recall that $||{\bf U}||_F^2 = \text{Tr}({\bf U}^T {\bf U})$).

In recommender systems one may additionally want to exploit side information such as a user similarity matrix that cannot simply be considered as an extra slice of the tensor $\Rt$. In such cases, {\em coupled} matrix-tensor decomposition (possibly involving several matrices and tensors) can be used. There are many possibilities and design choices in this direction; we refer the reader to \cite{AcKoDu11,Acar201353,SDM2014}, and \cite{SDF} which introduces a domain specific language for fast prototyping.

\begin{table}[t]
\colorbox{lightgray}{
\begin{minipage}{.48\textwidth}\normalsize
{\bf Multilinear maps for classification:} In support vector machine (SVM) classification, we use a linear mapping ${\bf w}^T {\bf x} = \sum_i {\bf w}(i) {\bf x}(i)$ to discriminate between vectors belonging to two different classes. When the classes are not linearly separable, one can employ a bilinear mapping ${\bf x}^T {\bf W} {\bf x} = \sum_{i,j} {\bf W}(i,j) {\bf x}(i) {\bf x}(j) = \text{vec}\left( {\bf x}^T {\bf W} {\bf x} \right) = \left({\bf x}^T \otimes {\bf x}^T\right) \text{vec}({\bf W}) = \left({\bf x} \otimes {\bf x} \right)^T \text{vec}({\bf W})$, or even a multilinear one $\left({\bf x} \otimes {\bf x} \otimes {\bf x} \right)^T \text{vec}({\Wt}) = \sum_{i,j,k} \Wt(i,j,k) {\bf x}(i) {\bf x}(j) {\bf x}(k)$. Notice that by augmenting ${\bf x}$ with a unit as last element (i.e., replacing ${\bf x}$ by $[{\bf x}, 1]^T$), higher-order mappings include lower-order ones, hence it suffices to consider the highest order. In such cases, the classifier design problem boils down to designing a suitable matrix or tensor $\Wt$ of weights. In order to keep the number of model parameters low relative to the number of training samples (to enable statistically meaningful learning and generalization), a low-rank tensor model such as CPD \cite{5694074} or low multilinear rank one such as Tucker can be employed, and the model parameters can be learned using a measure of classification error as the cost function. A simple optimization solution is to use SGD, drawing samples from the training set at random.
\end{minipage}}
\vspace{-15pt}
\end{table}

\subsection{Gaussian mixture parameter estimation}
Consider $F$ Gaussians ${\cal N}(\bm{\mu}_f, \sigma_f^2 {\bf I})$, where $\bm{\mu}_f \in \Real^{I \times 1}$ is the mean vector and $\sigma_f^2$ is the variance of the elements of the $f$-th Gaussian. Let $\bm{\pi} = [\pi_1,\cdots,\pi_F]^T$ be a prior distribution, and consider the following experiment: first draw $f_m \sim \bm{\pi}$; then draw ${\bf x}_m \sim {\cal N}(\bm{\mu}_{f_m}, \sigma_{f_m}^2 {\bf I})$. The distribution of ${\bf x}_m$ is then a mixture of the $F$ Gaussians, i.e.,
$\sum_{f=1}^F \pi_f {\cal N}(\bm{\mu}_f, \sigma_f^2 {\bf I})$. Now run $M$ independent trials of the above experiment to create $\left\{ {\bf x}_m \right\}_{m=1}^M$. Given $\left\{ {\bf x}_m \right\}_{m=1}^M$, the problem of interest is to estimate the mixture parameters $\{ \bm{\mu}_f,\sigma_f^2,\pi_f \}_{f=1}^F$. Note that it is possible to estimate $F$ from $\left\{ {\bf x}_m \right\}_{m=1}^M$, but we will assume it given for the purposes of our discussion. Note the conceptual similarity of this problem and $k$-means (here: $F$-means) clustering or vector quantization (VQ): the main difference is that here we make an additional modeling assumption that the ``point clouds'' are isotropic Gaussian about their means. Let us consider
\[
E[{\bf x}_m] = \sum_{f_m=1}^F E[{\bf x}_m|f_m] \pi_{f_m} = \sum_{f=1}^F \bm{\mu}_{f} \pi_{f} = {\bf M} \bm{\pi},
\]
where ${\bf M} := \left[\bm{\mu}_1,\cdots,\bm{\mu}_F \right]$ ($I \times F$). Next, consider
\begin{align*}
E[{\bf x}_m {\bf x}_m^T] &= \sum_{f_m=1}^F E[{\bf x}_m {\bf x}_m^T|f_m] \pi_{f_m} =
\sum_{f=1}^F \left( \bm{\mu}_{f} \bm{\mu}_{f}^T + \sigma_f^2 {\bf I} \right) \pi_{f}\\
&= {\bf M} \text{Diag}(\bm{\pi}) {\bf M}^T + \bar{\sigma}^2 {\bf I},~\bar{\sigma}^2 := \sum_{f=1}^F \sigma_f^2 \pi_f.
\end{align*}
It is tempting to consider third-order moments, which are easier to write out in scalar form
\[
E[{\bf x}_m(i) {\bf x}_m(j) {\bf x}_m(k)] = \sum_{f=1}^F E[{\bf x}_m(i) {\bf x}_m(j) {\bf x}_m(k)|f] \pi_f.
\]
Conditioned on $f$,  ${\bf x}_m(i) = \bm{\mu}_f(i) + {\bf z}_m(i)$, where ${\bf z}_m \sim {\cal N}({\bf 0}, \sigma_f^2 {\bf I})$, and likewise for ${\bf x}_m(j)$ and ${\bf x}_m(k)$. Plugging these back into the above expression, and using that

\noindent $\bullet$ If two out of three indices $i,j,k$ are equal, then $E[{\bf z}_m(i) {\bf z}_m(j) {\bf z}_m(k)|f]=0$, due to zero mean and independence of the third; and

\noindent $\bullet$ If all three indices are equal, then $E[{\bf z}_m(i) {\bf z}_m(j) {\bf z}_m(k)|f]=0$ because the third moment of a zero-mean Gaussian is zero, we obtain
\begin{align*}
& E[{\bf x}_m(i) {\bf x}_m(j) {\bf x}_m(k)|f] = \bm{\mu}_f(i) \bm{\mu}_f(j) \bm{\mu}_f(k) + \\
& \sigma_f^2 \left( \bm{\mu}_f(i) \delta(j-k) + \bm{\mu}_f(j) \delta(i-k) + \bm{\mu}_f(k) \delta(i-j) \right),
\end{align*}
where $\delta(\cdot)$ is the Kronecker delta. Averaging over $\pi_f$,
\begin{align*}
\Rt(i,j,k) \!:=\! E[{\bf x}_m(i) {\bf x}_m(j) {\bf x}_m(k)]
=\sum_{f=1}^F \pi_f \bm{\mu}_f(i) \bm{\mu}_f(j) \bm{\mu}_f(k) +\\
\sum_{f=1}^F \pi_f \sigma_f^2 \left( \bm{\mu}_f(i) \delta(j-k) + \bm{\mu}_f(j) \delta(i-k) + \bm{\mu}_f(k) \delta(i-j) \right).
\end{align*}
At this point, let us further assume, for simplicity, that $\sigma_f^2 = \sigma^2$, $\forall f$, and $\sigma^2$ is known. Then
$\sum_{f=1}^F \pi_f \bm{\mu}_f(i) = E[{\bf x}_m(i)]$ can be easily estimated. So we may pre-compute the second term in the above equation, call it $\Gammat(i,j,k)$, and form
\[
\Rt(i,j,k) - \Gammat(i,j,k)  =  \sum_{f=1}^F \pi_f \bm{\mu}_f(i) \bm{\mu}_f(j) \bm{\mu}_f(k),
\]
which is evidently a symmetric CPD model of rank (at most) $F$. Note that, due to symmetry and the fact that $\pi_f \geq 0$, there is no ambiguity regarding the sign of $\bm{\mu}_f$; but we can still set e.g., $\bm{\mu}_1^{'} = \rho^{\frac{1}{3}} \bm{\mu}_1$, $\pi_1^{'}=\frac{1}{\rho} \pi_1$, $\pi_2^{'}=\pi_1 + \pi_2 - \pi_1^{'} = \frac{\rho-1}{\rho} \pi_1 + \pi_2$, $\frac{1}{\gamma}=\frac{\pi_2^{'}}{\pi_2}$, and $\bm{\mu}_2^{'}= \gamma^{\frac{1}{3}} \bm{\mu}_2$, for some $\rho > 0$. However, we must further ensure that $\pi_2^{'} > 0$, and $\pi_1^{'} < \pi_1 + \pi_2$; both require $\rho > \frac{\pi_1}{\pi_1+\pi_2}$.
We see that scaling ambiguity remains, and is important to resolve it here, otherwise we will obtain the wrong means and mixture probabilities. Towards this end, consider lower-order statistics, namely $E[{\bf x}_m {\bf x}_m^T]$ and $E[{\bf x}_m]$. Note that,
\[
({\bf M} \odot {\bf M} \odot {\bf M}) \bm{\pi} = (({\bf M} {\bf D}^{1/3}) \odot ({\bf M} {\bf D}^{1/3}) \odot ({\bf M} {\bf D}^{1/3})) {\bf D}^{-1} \bm{\pi}
\]
but
\[
E[{\bf x}_m] = {\bf M} \bm{\pi} \neq ({\bf M} {\bf D}^{1/3}) {\bf D}^{-1} \bm{\pi},
\]
\[
E[{\bf x}_m {\bf x}_m^T] - \bar{\sigma}^2 {\bf I} = {\bf M} \text{Diag}(\bm{\pi}) {\bf M}^T
\]
\[
\stackrel{\text{vec}(\cdot)}{\rightarrow} ({\bf M} \odot {\bf M}) \bm{\pi} \neq (({\bf M} {\bf D}^{1/3}) \odot ({\bf M} {\bf D}^{1/3})) {\bf D}^{-1} \bm{\pi}.
\]
This shows that no scaling ambiguity remains when we jointly fit third and second (or third and first) order statistics. For the general case, when the variances $\left\{\sigma_f^2\right\}_{f=1}^F$ are unknown and possibly different, see  \cite{Hsu:2013:LMS:2422436.2422439}. A simpler work-around is to treat ``diagonal slabs'' (e.g., corresponding to $j=k$) as missing, fit the model, then use it to estimate $\left\{\sigma_f^2\right\}_{f=1}^F$ and repeat.

\subsection{Topic modeling}

Given a dictionary ${\cal D} = \left\{w_1,\cdots,w_I\right\}$ comprising $I$ possible words, a {\em topic} is a probability mass function (pmf) over ${\cal D}$. Assume there are $F$ topics overall, let ${\bf p}_f := \text{Pr}(w_i|f)$ be the pmf associated with topic $f$, $\pi_f$ be the probability that one may encounter a document associated with topic $f$, and $\bm{\pi}:=[\pi_1,\cdots,\pi_f]^T$. Here we begin our discussion of topic modeling by assuming that each document is related to one and only one topic (or, document ``type''). Consider the following experiment:

\noindent 1) Draw a document at random;

\noindent 2) Sample $m$ words from it, independently, and at random (with replacement -- and the order in which words are drawn does not matter);

\noindent 3) Repeat (until you collect ``enough samples'' -- to be qualified later).

Assume for the moment that $F$ is known. Your objective is to estimate $\left\{ {\bf p}_f, \pi_f \right\}_{f=1}^F$. Clearly, $\text{Pr}(w_i)=\sum_{f=1}^F \text{Pr}(w_i|f) \pi_f$; furthermore, the word co-occurrence probabilities
$\text{Pr}(w_i,w_j) := $Pr(word $i$ and word $j$ are drawn from the same document) satisfy
\[
\text{Pr}(w_i,w_j) = \sum_{f=1}^F \text{Pr}(w_i,w_j|f) \pi_f = \sum_{f=1}^F {\bf p}_f(i) {\bf p}_f(j) \pi_f,
\]
since the words are independently drawn from the document. Define the matrix of word co-occurrence probabilities ${\bf P}^{(2)}$ with elements ${\bf P}^{(2)}(i,j) := \text{Pr}(w_i,w_j)$, and the matrix of conditional pmfs ${\bf C} := [{\bf p}_1,\cdots,{\bf p}_F]$. Then
\[
{\bf P}^{(2)} = {\bf C} \text{Diag}(\bm{\pi}) {\bf C}^T.
\]
Next, consider ``trigrams'' -- i.e., probabilities of triples of words being drawn from the same document
\[
\text{Pr}(w_i,\!w_j,\!w_k) \!=\! \sum_{f=1}^F \text{Pr}(w_i,\!w_j,\!w_k|f) \pi_f \!=\!
\sum_{f=1}^F {\bf p}_f(i) {\bf p}_f(j) {\bf p}_f(k) \pi_f.
\]
Define tensor $\Pt^{(3)}$ with elements $\Pt^{(3)}(i,\!j,\!k) \!:=\! \text{Pr}(w_i,\!w_j,\!w_k)$. Then $\Pt^{(3)}$ admits a symmetric non-negative CPD model of rank (at most) $F$:
 \[
 \Pt^{(3)} = ({\bf C} \odot {\bf C} \odot {\bf C}) \bm{\pi}.
 \]
Similar to\footnote{But in fact simpler from, since here, due to sampling with replacement, the same expression holds even if two or three indices $i,j,k$ are the same.} Gaussian mixture parameter estimation, we can estimate ${\bf C}$ and $\bm{\pi}$ from the tensor $\Pt^{(3)}$ and the matrix ${\bf P}^{(2)}$. In reality, we will use empirical word co-occurrence counts to estimate $\Pt^{(3)}$ and ${\bf P}^{(2)}$, and for this we need to sample enough triples (``enough samples''). Once we have ${\bf C}$, we can classify any document by estimating (part of) its conditional word pmf and comparing it to the columns of ${\bf C}$.

Next, consider the more realistic situation where each document is a mixture of topics, modeled by a pmf ${\bf q}$ $(F \times 1)$ that is itself drawn from a distribution $\delta(\cdot)$ over the $(F-1)$-dimensional {\em probability simplex} -- see Fig. \ref{fig-nikos7}.
\begin{figure}
\vspace{-10pt}
\includegraphics[height=2in]{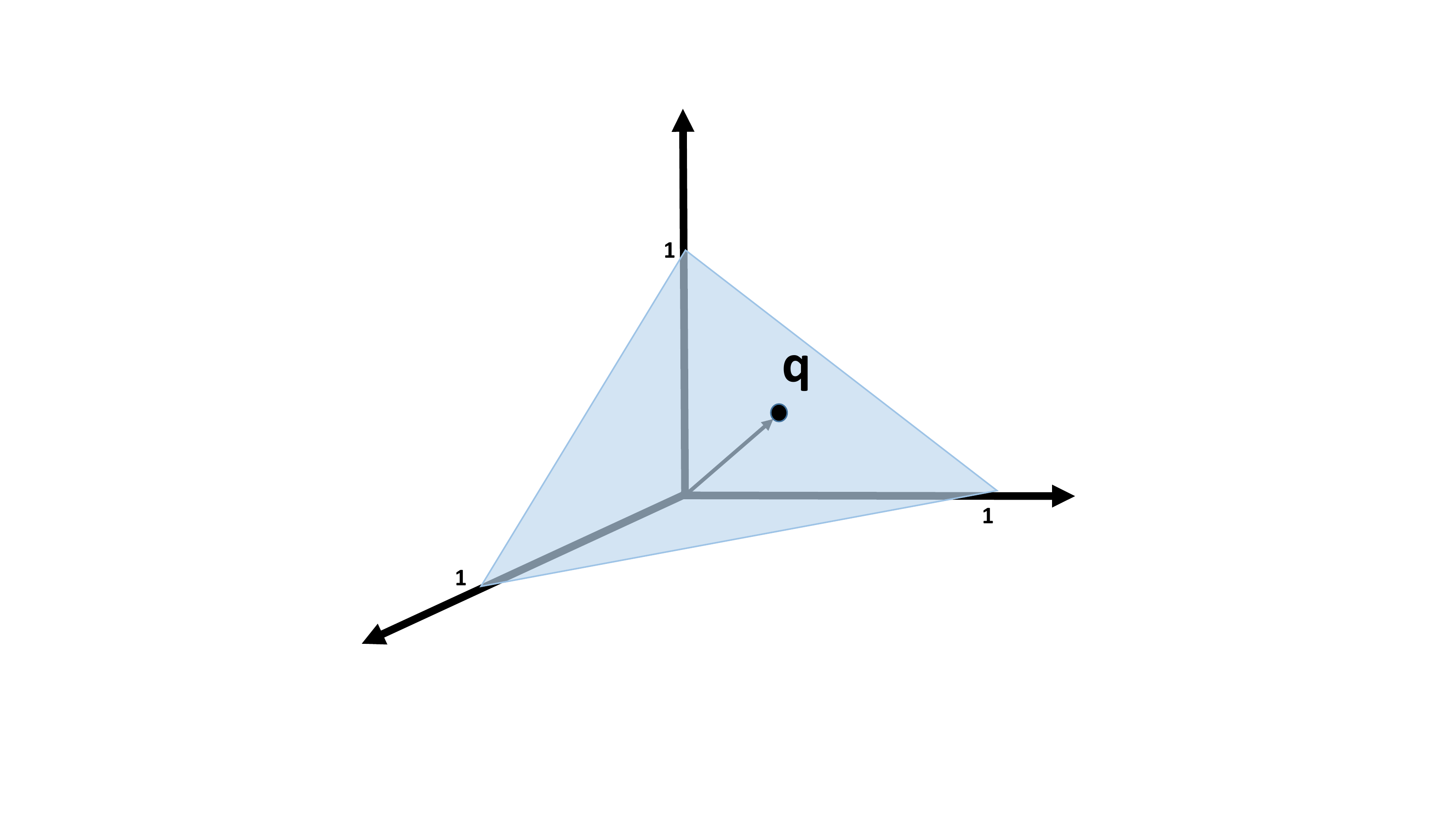}
\vspace{-40pt}
\caption{$2$-D probability simplex in $3$-D space.}\label{fig-nikos7}
\vspace{-10pt}
\end{figure}
Our working experiment is now modified as follows.

\noindent 1) For every document we sample, we draw ${\bf q} \sim \delta(\cdot)$;

\noindent 2) For every word we sample from the given document, we first draw a topic $t$ from ${\bf q}$ -- i.e., topic $f$ is selected with probability ${\bf q}(f)$;

\noindent 3) Next, we draw a word $\sim {\bf p}_t$;

\noindent 4) Goto 2, until you have sampled the desired number of words (e.g., 3) from the given document;

\noindent 5) Goto 1, until you have collected enough samples (e.g., enough triples).

Then,
\begin{align*}
&\text{Pr}(w_i,w_j|t_1,t_2,{\bf q}) = {\bf p}_{t_1}(i) {\bf p}_{t_2}(j) \Longrightarrow\\
&\text{Pr}(w_i,w_j|{\bf q}) = \sum_{t_1=1}^F \sum_{t_2=1}^F  {\bf p}_{t_1}(i) {\bf p}_{t_2}(j) {\bf q}(t_1) {\bf q}(t_2) \Longrightarrow\\
&\text{Pr}(w_i,w_j) = \sum_{t_1=1}^F \sum_{t_2=1}^F  {\bf p}_{t_1}(i) {\bf p}_{t_2}(j) E\left[{\bf q}(t_1) {\bf q}(t_2)\right],
\end{align*}
where we notice that what comes into play is the second-order statistics $E\left[{\bf q}(t_1) {\bf q}(t_2)\right]$ (the correlation) of $\delta(\cdot)$. Likewise, it follows that, for the trigrams, $\text{Pr}(w_i,w_j,w_k) =$
\[
\sum_{t_1=1}^F \sum_{t_2=1}^F \sum_{t_3=1}^F  {\bf p}_{t_1}(i) {\bf p}_{t_2}(j)  {\bf p}_{t_3}(k) E\left[{\bf q}(t_1) {\bf q}(t_2) {\bf q}(t_3)\right],
\]
which involves the third-order statistics tensor $\Gt$ of $\delta(\cdot)$ with elements $\Gt(i,j,k) := E\left[{\bf q}(t_1) {\bf q}(t_2) {\bf q}(t_3)\right]$. Defining the $I \times I \times I$ tensor $\Pt$ with elements $\Pt(i,j,k) := \text{Pr}(w_i,w_j,w_k)$, it follows that $\Pt$ admits a symmetric Tucker decomposition, $\Pt = \text{Tucker}({\bf C},{\bf C},{\bf C},\Gt)$, with ${\bf C} = [{\bf p}_1,\cdots,{\bf p}_F]$. Note that ${\bf C}$ is element-wise non-negative, but in principle $\Gt$ may have negative elements. As we know, Tucker models are not identifiable in general -- there is linear transformation freedom. This can be alleviated when one can assume sparsity in ${\bf C}$ \cite{DBLP:journals/corr/AnandkumarHJK13}, $\Gt$, or both (intuitively, this is because linear transformations generally do not preserve sparsity).

\subsection{Multilinear discriminative subspace learning}

Consider the following {\em discriminative subspace learning} problem: given ${\bf X} = [{\bf x}_1,\cdots,{\bf x}_M]$ ($N \times M$) and associated class labels ${\bf z} = [z_1,\cdots,z_M]$ ($1 \times M$) for the columns of ${\bf X}$, find a dimensionality-reducing linear transformation ${\bf U}$ of size $N \times F$, $F < N$ (usually $F \ll N$) such that
\[
\min_{{\bf U}|{\bf U}^T {\bf U}={\bf I}} \sum_{m=1}^{M} \left\{ (1-\lambda) \sum_{\ell=1|z_{\ell} = z_m}^M ||{\bf U}^T {\bf x}_m  - {\bf U}^T {\bf x}_{\ell}||_2^2 - \right.
\]
\[
\left.
\lambda \sum_{\ell=1|z_{\ell} \neq z_m}^M ||{\bf U}^T {\bf x}_m  - {\bf U}^T {\bf x}_{\ell}||_2^2 \right\},
\]
where the first (second) term measures the within-class (across-class) distance in reduced dimension space. We are therefore trying to find a dimensionality-reducing transformation that will map points close in terms of Euclidean distance if they have the same class label, far otherwise. Another way to look at it is that we are trying to find a subspace to project onto where we can easily visualize (if $F=2$ or $3$) the point clouds of the different classes. Upon defining
\[
w_{m,\ell} := (1-\lambda)^{1(z_{\ell}=z_m)} (-\lambda)^{1-1(z_{\ell}=z_m)},
\]
where ${1(z_{\ell}=z_m)}=1$ if $z_{\ell}=z_m$, $0$ otherwise, we can compactly write the problem as follows
\[
\min_{{\bf U}|{\bf U}^T {\bf U}={\bf I}} \sum_{m=1}^{M} \sum_{\ell=1}^M ||{\bf U}^T {\bf x}_m  - {\bf U}^T {\bf x}_{\ell}||_2^2 w_{m,\ell}.
\]
Expanding the squared norm and using properties of $\text{Tr}(\cdot)$, we can write the cost function as
\[
\sum_{m=1}^{M} \sum_{\ell=1}^M ||{\bf U}^T {\bf x}_m  - {\bf U}^T {\bf x}_{\ell}||_2^2 w_{m,\ell} = \text{Tr}({\bf U} {\bf U}^T {\bf Y}),
\]
where
\[
{\bf Y} := \sum_{m=1}^{M} \sum_{\ell=1}^M  w_{m,\ell} ({\bf x}_m - {\bf x}_{\ell}) ({\bf x}_m - {\bf x}_{\ell})^T.
\]
Notice that $w_{m,\ell} = w_{\ell,m}$ by definition, and ${\bf Y}$ is symmetric. Let ${\bf Y} = {\bf V} \bm{\Lambda} {\bf V}^T$ be the eigendecomposition of ${\bf Y}$, and note that $\text{Tr}({\bf U} {\bf U}^T {\bf Y})=\text{Tr}({\bf U}^T {\bf Y} {\bf U} )$. Clearly, ${\bf U}_{\text{opt}}$ $=$ $F$ minor eigenvectors of ${\bf Y}$ (columns of ${\bf V}$ corresponding to the $F$ smallest elements on the diagonal of $\bm{\Lambda}$).

Now, suppose that the columns in ${\bf X}$ are in fact vectorized tensors. As an example, suppose that there exist {\em common} bases ${\bf U}$ ($I \times r_1$), ${\bf V}$ ($J \times r_2$), ${\bf W}$ ($K \times r_3$), such that
\[
{\bf x}_m \approx ({\bf U} \otimes {\bf V} \otimes {\bf W}) {\bf g}_m,~~~\forall m \in \left\{ 1,\cdots,M \right\},
\]
i.e., each ${\bf x}_m$ can be modeled using a $\perp$-Tucker model with common mode bases, but different cores for different $m$.  We can think of $({\bf U} \otimes {\bf V} \otimes {\bf W})^T$ ($r_1 r_2 r_3 \times IJK$) as a (Kronecker) structured dimensionality reducing transformation, and the vectorized core array ${\bf g}_m$ as the low-dimensional ($r_1 r_2 r_3 \times 1$) representation of ${\bf x}_m$. We want to find ${\bf U}$, ${\bf V}$, ${\bf W}$ such that the ${\bf g}$'s corresponding to ${\bf x}$'s in the same (different) class are close (far) from each other. Following the same development as before, using
\[
\widehat {\bf g}_m = ({\bf U} \otimes {\bf V} \otimes {\bf W})^T {\bf x}_m
\]
as the projection of ${\bf x}_m$ in reduced-dimension space, we arrive at
\[
\min_{{\bf U},{\bf V},{\bf W}} \text{Tr}\left( ({\bf U} \otimes {\bf V} \otimes {\bf W}) ({\bf U} \otimes {\bf V} \otimes {\bf W})^T {\bf Y}\right),
\]
\[
\text{subject to:}~ {\bf U}^T {\bf U}={\bf I},~{\bf V}^T {\bf V}={\bf I},~{\bf W}^T {\bf W}={\bf I},
\]
or, equivalently,
\[
\min_{{\bf U},{\bf V},{\bf W}} \text{Tr}\left( (({\bf U} {\bf U}^T) \otimes ({\bf V} {\bf V}^T) \otimes ({\bf W} {\bf W}^T)) {\bf Y}\right),
\]
\[
\text{subject to:}~ {\bf U}^T {\bf U}={\bf I},~{\bf V}^T {\bf V}={\bf I},~{\bf W}^T {\bf W}={\bf I},
\]
from which it is clear that, conditioned on, say, ${\bf U}$ and ${\bf V}$, the update with respect to ${\bf W}$ boils down to
\[
\min_{{\bf W}|{\bf W}^T {\bf W}={\bf I}}
\text{Tr}\left( {\bf W} {\bf W}^T {\bf Z}\right),
\]
for some matrix ${\bf Z}$ that depends on the values of ${\bf U}$ and ${\bf V}$. See \cite{4032832,Lu:2011:SMS:1950989.1951185} for more on the topic of multilinear subspace learning.

The applications that we reviewed in some depth are by no means exhaustive -- there are a lot more success stories using tensors for data mining and machine learning, e.g., for higher-order web link analysis \cite{KoBa06}, and spotting misbehaviors in location-based social networks \cite{Papalexakis:2014:SML:2567948.2576950}; see also \cite{Papalexakis:2012:PSP:2405473.2405521}.

\section{Software, demos, history, and what lies ahead}

As we wrap up this admittedly long article, we would like to point out some widely available resources that can help bring the reader up to speed experimenting with tensors in minutes. Matlab provides native support for tensor objects, but working with tensors is facilitated by these freely available toolboxes:
\begin{enumerate}
\item The {\tt n-way toolbox} \url{http://www.models.life.ku.dk/nwaytoolbox} by Bro {\it et al.} \cite{andersson2000n}, based on ALS (with Gauss-Newton, line-search and other methods as an option) incorporates many {\em non-parametric} types of constraints, such as non-negativity;
\item The {\tt tensor toolbox} \url{http://www.sandia.gov/~tgkolda/TensorToolbox/index-2.6.html} by Kolda {\it et al.} \cite{bader2007efficient,TTB_Software} was the first to provide support for sparse, dense, and factored tensor classes, alongside standard ALS and all-at-once algorithms for CPD, MLSVD, and other factorizations;
\item {\tt Tensorlab} \url{http://www.tensorlab.net/} by De Lathauwer {\it et al.} \cite{tensorlab}, builds upon the complex optimization framework and offers numerical algorithms for computing CPD, MLSVD and more general block term decompositions. It includes a library of constraints and regularization penalties and offers the possibility to combine and jointly factorize dense, sparse, structured and incomplete tensors. It provides special routines for large-scale problems and visualization.  
\item {\tt SPLATT} \url{http://glaros.dtc.umn.edu/gkhome/splatt/overview} by Smith {\it et al.} is a high-performance computing software toolkit for parallel sparse tensor factorization. It contains memory- and operation-efficient algorithms that allows it to compute PARAFAC decompositions of very large sparse datasets. SPLATT is written in C and OpenMP.
\item The {\tt TensorPackage}
\url{http://www.gipsa-lab.fr/~pierre.comon/TensorPackage/tensorPackage.html} by Comon {\it et al.}, which includes various algorithms for CPD and employs enhanced line search \cite{doi:10.1137/06065577}.
\end{enumerate}
While these toolboxes are great to get you going and for rapid prototyping, when it comes to really understanding what you're doing with tensors, there is nothing as valuable as programming ALS for CPD and $\perp$-Tucker yourself, and trying them on real data. Towards this end, we have produced educational ``plain-vanilla'' programs (CPD-ALS, MLSVD, $\perp$-Tucker-ALS, CPD-GD, CPD-SGD), and simple but instructive demos (multichannel speech separation, and faces tensor compression) which are provided as supplementary material together with this article.

Tensor decomposition has come a long way since Hitchcock '27, \cite{SAPM:SAPM192761164}, Cattell '44 \cite{Cattell1944}, and later Harshman '70-'72 \cite{Har70,Har72}, Carroll and Chang \cite{CarCha70}, and Kruskal's '77 \cite{Kru77} seminal papers. It is now a vibrant field that is well-represented in major IEEE, ACM, SIAM, and other mainstream conferences. The cross-disciplinary community that nurtured tensor decomposition research during the years that it was a niche area has two dedicated workshops that usually happen every three years: the TRICAP (Three-way methods In Chemistry and Psychology) workshop, which was last organized in 2015 at Pecol -- Val di Zoldo (Belluno), Italy \url{http://people.ece.umn.edu/~nikos/TRICAP_home.html}; and the TDA (Tensor Decompositions and Applications) workshop, which was last organized in 2016 at Leuven, Belgium \url{http://www.esat.kuleuven.be/stadius/TDA2016/}.

In terms of opportunities and challenges ahead, we see the need for more effective and tractable tensor rank detection criteria, and flexible and scalable algorithms that can handle very big datasets while offering identifiability, convergence, and parameter RMSE performance guarantees -- at least under certain reasonable conditions. Data fusion, in the form of coupled decomposition of several related tensors and matrices is a very promising direction with numerous applications. More broadly, we believe that machine learning offers a wide range of potential applications, and this is one of the reasons why we wrote this article. Tensor decomposition in higher rank blocks is another interesting but relatively under-explored area with many applications. Finally, using multilinear models such as tensor trains as ``universal approximants'' has been gaining traction and will continue to grow in the foreseeable future, as a way to get away from the ``curse of dimensionality''.

\bibliographystyle{IEEEbib}
\bibliography{refs/NikosRefs,refs/crb,refs/extrarefsDec13}

\begin{thebibliography}{100}

\bibitem{1565685}
T.~G. Kolda, B.~W. Bader, and J.~P. Kenny,
\newblock ``Higher-order web link analysis using multilinear algebra,''
\newblock in {\em Fifth IEEE International Conference on Data Mining
  (ICDM'05)}, Nov 2005, pp. 8 pp.--.

\bibitem{Acar2005}
E.~Acar, S.A. Camtepe, M.S. Krishnamoorthy, and B.~Yener,
\newblock {\em Modeling and Multiway Analysis of Chatroom Tensors}, pp.
  256--268,
\newblock Springer Berlin Heidelberg, Berlin, Heidelberg, 2005.

\bibitem{4538221}
E.~Acar and B.~Yener,
\newblock ``Unsupervised multiway data analysis: A literature survey,''
\newblock {\em IEEE Transactions on Knowledge and Data Engineering}, vol. 21,
  no. 1, pp. 6--20, Jan 2009.

\bibitem{NioMokSidPot08}
D.~Nion, K.N. Mokios, N.D. Sidiropoulos, and A.~Potamianos,
\newblock ``Batch and adaptive {PARAFAC}-based blind separation of convolutive
  speech mixtures,''
\newblock {\em IEEE Transactions on Audio, Speech, and Language Processing},
  vol. 18, no. 6, pp. 1193--1207, 2010.

\bibitem{SidGiaBro00}
N.D. Sidiropoulos, G.B. Giannakis, and R.~Bro,
\newblock ``Blind {PARAFAC} receivers for {DS-CDMA} systems,''
\newblock {\em IEEE Transactions on Signal Processing}, vol. 48, no. 3, pp.
  810--823, 2000.

\bibitem{NionSidRadar}
D.~Nion and N.D. Sidiropoulos,
\newblock ``Tensor algebra and multidimensional harmonic retrieval in signal
  processing for {MIMO} radar,''
\newblock {\em IEEE Transactions on Signal Processing}, vol. 58, no. 11, pp.
  5693--5705, Nov 2010.

\bibitem{SidBroGia00}
N.~D. Sidiropoulos, R.~Bro, and G.~B. Giannakis,
\newblock ``{Parallel factor analysis in sensor array processing},''
\newblock {\em IEEE Trans. Signal Process.}, vol. 48, no. 8, pp. 2377--2388,
  Aug. 2000.

\bibitem{ACMA}
A.-J. van~der Veen and A.~Paulraj,
\newblock ``An analytical constant modulus algorithm,''
\newblock {\em IEEE Transactions on Signal Processing}, vol. 44, no. 5, pp.
  1136--1155, May 1996.

\bibitem{kofidis2001tensor}
E.~Kofidis and P.~Regalia,
\newblock ``Tensor approximation and signal processing applications,''
\newblock {\em Contemporary Mathematics}, vol. 280, pp. 103--134, 2001.

\bibitem{Vasilescu2002}
M.~Vasilescu and D.~Terzopoulos,
\newblock {\em Multilinear Analysis of Image Ensembles: TensorFaces}, pp.
  447--460,
\newblock Springer Berlin Heidelberg, Berlin, Heidelberg, 2002.

\bibitem{psychovgaltis}
A.~Cichocki, D.~Mandic, L.~De~Lathauwer, Zhou Q., Zhao Q., C.~Caiafa, and A.H.
  Phan,
\newblock ``Tensor decompositions for signal processing applications: From
  two-way to multiway component analysis,''
\newblock {\em Signal Processing Magazine, IEEE}, vol. 32, no. 2, pp. 145--163,
  March 2015.

\bibitem{PapFalSid2012}
E.E. Papalexakis, C.~Faloutsos, and N.D. Sidiropoulos,
\newblock ``Parcube: Sparse parallelizable tensor decompositions.,''
\newblock in {\em ECML/PKDD (1)}, P.A. Flach, T.D. Bie, and N.~Cristianini,
  Eds. 2012, vol. 7523 of {\em Lecture Notes in Computer Science}, pp.
  521--536, Springer.

\bibitem{Anandkumar:2014:TDL:2627435.2697055}
A.~Anandkumar, R.~Ge, D.~Hsu, S.~M. Kakade, and M.~Telgarsky,
\newblock ``Tensor decompositions for learning latent variable models,''
\newblock {\em J. Mach. Learn. Res.}, vol. 15, no. 1, pp. 2773--2832, Jan.
  2014.

\bibitem{Kolda09tensordecompositions}
T.G. Kolda and B.W. Bader,
\newblock ``Tensor decompositions and applications,''
\newblock {\em SIAM REVIEW}, vol. 51, no. 3, pp. 455--500, 2009.

\bibitem{ComonTensorsBriefIntro}
P.~Comon,
\newblock ``Tensors : A brief introduction,''
\newblock {\em Signal Processing Magazine, IEEE}, vol. 31, no. 3, pp. 44--53,
  May 2014.

\bibitem{grasedyck2013literature}
L.~Grasedyck, D.~Kressner, and C.~Tobler,
\newblock ``A literature survey of low-rank tensor approximation techniques,''
\newblock {\em GAMM-Mitteilungen}, vol. 36, no. 1, pp. 53--78, 2013.

\bibitem{hackbusch2012tensor}
W.~Hackbusch,
\newblock {\em Tensor spaces and numerical tensor calculus}, vol.~42,
\newblock Springer Science \& Business Media, 2012.

\bibitem{NV-OD-LS-LDL}
N.~Vervliet, O.~Debals, L.~Sorber, and L~{De Lathauwer},
\newblock ``{Breaking the Curse of Dimensionality Using Decompositions of
  Incomplete Tensors: Tensor-based scientific computing in big data
  analysis},''
\newblock {\em Signal Processing Magazine, IEEE}, vol. 31, no. 5, pp. 71--79,
  Sept. 2014.

\bibitem{Bro1997a}
R.~Bro,
\newblock ``{P}{A}{R}{A}{F}{A}{C}. {T}utorial and applications,''
\newblock {\em Chemometrics and Intelligent Laboratory Systems}, vol. 38, pp.
  149--171, 1997.

\bibitem{SmiBroGel}
A.K. Smilde, R.~Bro, and P.~Geladi,
\newblock {\em Multi-way analysis with applications in the chemical sciences},
\newblock John Wiley \& Sons, 2004.

\bibitem{PapFalSidKol:TIST2016}
E.E. Papalexakis, C.~Faloutsos, N.D. Sidiropoulos, and T.~Kolda,
\newblock ``Tensors for data mining and data fusion: Models, applications, and
  scalable algorithms,''
\newblock {\em ACM Trans. on Intelligent Systems and Technology}, vol. 8, no.
  2, pp. 16:1--16:44, Nov. 2016.

\bibitem{Bergqvist2013663}
G.~Bergqvist,
\newblock ``Exact probabilities for typical ranks of 2 $\times$ 2 $\times$ 2
  and 3 $\times$ 3 $\times$ 2 tensors,''
\newblock {\em Linear Algebra and its Applications}, vol. 438, no. 2, pp. 663
  -- 667, 2013.

\bibitem{CEM:CEM584}
P.~Paatero,
\newblock ``Construction and analysis of degenerate parafac models,''
\newblock {\em Journal of Chemometrics}, vol. 14, no. 3, pp. 285--299, 2000.

\bibitem{CEM:CEM1244}
L.-H. Lim and P.~Comon,
\newblock ``Nonnegative approximations of nonnegative tensors,''
\newblock {\em Journal of Chemometrics}, vol. 23, no. 7-8, pp. 432--441, 2009.

\bibitem{DBLP:journals/corr/QiCL14}
Y.~Qi, P.~Comon, and L.~H. Lim,
\newblock ``Uniqueness of nonnegative tensor approximations,''
\newblock {\em IEEE Transactions on Information Theory}, vol. 62, no. 4, pp.
  2170--2183, April 2016.

\bibitem{RePEc:spr:psycho:v:73:y:2008:i:3:p:431-439}
W.~Krijnen, T.~Dijkstra, and A.~Stegeman,
\newblock ``On the non-existence of optimal solutions and the occurrence of
  “degeneracy” in the candecomp/parafac model,''
\newblock {\em Psychometrika}, vol. 73, no. 3, pp. 431--439, 2008.

\bibitem{landsberg-tensors-2012}
J.~Landsberg,
\newblock {\em Tensors: geometry and applications}, vol. 128,
\newblock American Mathematical Soc., 2012.

\bibitem{vannieuwenhoven2014randomized}
Nick Vannieuwenhoven, Raf Vandebril, and Karl Meerbergen,
\newblock ``A randomized algorithm for testing nonsingularity of structured
  matrices with an application to asserting nondefectivity of segre
  varieties,''
\newblock {\em IMA Journal of Numerical Analysis}, p. drt069, 2014.

\bibitem{alexander1995polynomial}
James Alexander and Andr{\'e} Hirschowitz,
\newblock ``Polynomial interpolation in several variables,''
\newblock {\em Journal of Algebraic Geometry}, vol. 4, no. 2, pp. 201--222,
  1995.

\bibitem{Hastad:1990:TRN:96134.95990}
J.~H{\aa}stad,
\newblock ``Tensor rank is np-complete,''
\newblock {\em J. Algorithms}, vol. 11, no. 4, pp. 644--654, Dec. 1990.

\bibitem{Har70}
R.A. Harshman,
\newblock ``Foundations of the {PARAFAC} procedure: Models and conditions for
  an ``explanatory'' multimodal factor analysis,''
\newblock {\em UCLA Working Papers in Phonetics}, vol. 16, pp. 1--84, 1970.

\bibitem{Har72}
R.A. Harshman,
\newblock ``Determination and proof of minimum uniqueness conditions for
  {PARAFAC}-1,''
\newblock {\em UCLA Working Papers in Phonetics}, vol. 22, pp. 111--117, 1972.

\bibitem{ESPRIT}
R.~Roy, A.~Paulraj, and T.~Kailath,
\newblock ``E{S}{P}{R}{I}{T}--a subspace rotation approach to estimation of
  parameters of cisoids in noise,''
\newblock {\em IEEE Trans. Acoustics, Speech and Signal Process.}, vol. 34, no.
  5, pp. 1340 -- 1342, oct 1986.

\bibitem{CEM:CEM1204}
Jos M.~F. Ten~Berge and Jorge~N. Tendeiro,
\newblock ``The link between sufficient conditions by harshman and by kruskal
  for uniqueness in candecomp/parafac,''
\newblock {\em Journal of Chemometrics}, vol. 23, no. 7-8, pp. 321--323, 2009.

\bibitem{Kru77}
J.B. Kruskal,
\newblock ``Three-way arrays: {R}ank and uniqueness of trilinear
  decompositions, with application to arithmetic complexity and statistics,''
\newblock {\em Linear {A}lgebra and its {A}pplications}, vol. 18, no. 2, pp.
  95--138, 1977.

\bibitem{JiaSid04}
T.~Jiang and N.D. Sidiropoulos,
\newblock ``{K}ruskal's permutation lemma and the identification of
  {CANDECOMP/PARAFAC} and bilinear models with constant modulus constraints,''
\newblock {\em IEEE Transactions on Signal Processing}, vol. 52, no. 9, pp.
  2625--2636, 2004.

\bibitem{de2006link}
L.~De~Lathauwer,
\newblock ``A link between the canonical decomposition in multilinear algebra
  and simultaneous matrix diagonalization,''
\newblock {\em SIAM journal on Matrix Analysis and Applications}, vol. 28, no.
  3, pp. 642--666, 2006.

\bibitem{ChiOtt2012}
L.~Chiantini and G.~Ottaviani,
\newblock ``On generic identifiability of 3-tensors of small rank,''
\newblock {\em SIAM. J. Matrix Anal. \& Appl.}, vol. 33, no. 3, pp. 1018--1037,
  2012.

\bibitem{domanov2015generic}
I.~Domanov and L.~{D}e {L}athauwer,
\newblock ``Generic uniqueness conditions for the canonical polyadic
  decomposition and indscal,''
\newblock {\em SIAM Journal on Matrix Analysis and Applications}, vol. 36, no.
  4, pp. 1567--1589, 2015.

\bibitem{strassen1983rank}
V.~Strassen,
\newblock ``Rank and optimal computation of generic tensors,''
\newblock {\em Linear algebra and its applications}, vol. 52, pp. 645--685,
  1983.

\bibitem{ignatrelaxedLAA}
I.~Domanov and L.~{D}e {L}athauwer,
\newblock ``Canonical polyadic decomposition of third-order tensors: Relaxed
  uniqueness conditions and algebraic algorithm,''
\newblock {\em Linear Algebra and its Applications}, vol. 513, no. 15, pp.
  342–--375, Jan 2017.

\bibitem{domanov2013uniquenessII}
I.~Domanov and L.~De~Lathauwer,
\newblock ``On the uniqueness of the canonical polyadic decomposition of
  third-order tensors --- part ii: Uniqueness of the overall decomposition,''
\newblock {\em SIAM Journal on Matrix Analysis and Applications (SIMAX)}, vol.
  34, no. 3, pp. 876--903, 2013.

\bibitem{domanov2014canonical}
I.~Domanov and L.~{D}e {L}athauwer,
\newblock ``Canonical polyadic decomposition of third-order tensors: reduction
  to generalized eigenvalue decomposition,''
\newblock {\em SIAM Journal on Matrix Analysis and Applications}, vol. 35, no.
  2, pp. 636--660, 2014.

\bibitem{derksen2013kruskal}
H.~Derksen,
\newblock ``Kruskal’s uniqueness inequality is sharp,''
\newblock {\em Linear Algebra Appl}, vol. 438, no. 2, pp. 708--712, 2013.

\bibitem{SteSid07}
A.~Stegeman and N.D. Sidiropoulos,
\newblock ``On {K}ruskal's uniqueness condition for the {CANDECOMP/PARAFAC}
  decomposition,''
\newblock {\em Linear {A}lgebra and its {A}pplications}, vol. 420, no. 2-3, pp.
  540--552, 2007.

\bibitem{chiantini2016gensym}
L.~Chiantini, G.~Ottaviani, and N.~Vannieuwenhoven,
\newblock ``On generic identifiability of symmetric tensors of subgeneric
  rank,''
\newblock {\em Trans. Amer. Math. Soc.}, 2016.

\bibitem{chiantini2014algorithm}
L.~Chiantini, G.~Ottaviani, and N.~Vannieuwenhoven,
\newblock ``An algorithm for generic and low-rank specific identifiability of
  complex tensors,''
\newblock {\em SIAM Journal on Matrix Analysis and Applications}, vol. 35, no.
  4, pp. 1265--1287, 2014.

\bibitem{sorensen2015new}
M.~S{\o}rensen and L.~{D}e {L}athauwer,
\newblock ``New uniqueness conditions for the canonical polyadic decomposition
  of third-order tensors,''
\newblock {\em SIAM Journal on Matrix Analysis and Applications}, vol. 36, no.
  4, pp. 1381--1403, 2015.

\bibitem{MS-LDL-PC-SI-LD}
M.~Sorensen, L.~{De Lathauwer}, P.~Comon, S.~Icart, and L.~Deneire,
\newblock ``{Canonical Polyadic Decomposition with a Columnwise Orthonormal
  Factor Matrix},''
\newblock {\em SIAM Journal on Matrix Analysis and Applications}, vol. 33, no.
  4, Oct.-Dec., pp. 1190--1213, 2012.

\bibitem{SidBro00}
N.D. Sidiropoulos and R.~Bro,
\newblock ``On the uniqueness of multilinear decomposition of {N}-way arrays,''
\newblock {\em Journal of chemometrics}, vol. 14, no. 3, pp. 229--239, 2000.

\bibitem{doi:10.1137/S0895479896305696}
L.~De~Lathauwer, B.~De~Moor, and J.~Vandewalle,
\newblock ``A multilinear singular value decomposition,''
\newblock {\em SIAM Journal on Matrix Analysis and Applications}, vol. 21, no.
  4, pp. 1253--1278, 2000.

\bibitem{doi:10.1137/S0895479801394465}
T.~Kolda,
\newblock ``A counterexample to the possibility of an extension of the
  eckart--young low-rank approximation theorem for the orthogonal rank tensor
  decomposition,''
\newblock {\em SIAM Journal on Matrix Analysis and Applications}, vol. 24, no.
  3, pp. 762--767, 2003.

\bibitem{doi:10.1137/S0895479898346995}
L.~De~Lathauwer, B.~De~Moor, and J.~Vandewalle,
\newblock ``On the best rank-1 and rank-(r1 ,r2 ,. . .,rn) approximation of
  higher-order tensors,''
\newblock {\em SIAM Journal on Matrix Analysis and Applications}, vol. 21, no.
  4, pp. 1324--1342, 2000.

\bibitem{hillar2013most}
C.~J. Hillar and L.-H. Lim,
\newblock ``Most tensor problems are {NP}-hard,''
\newblock {\em Journal of the ACM}, vol. 60, no. 6, pp. 45, 2013.

\bibitem{Kroonenberg}
P.M. Kroonenberg,
\newblock {\em Applied multiway data analysis},
\newblock Wiley, 2008.

\bibitem{SidKyrSPL2012}
N.D. Sidiropoulos and A.~Kyrillidis,
\newblock ``Multi-way compressed sensing for sparse low-rank tensors,''
\newblock {\em Signal Processing Letters, IEEE}, vol. 19, no. 11, pp. 757--760,
  Nov 2012.

\bibitem{PARACOMP}
N.~D. Sidiropoulos, E.~E. Papalexakis, and C.~Faloutsos,
\newblock ``Parallel randomly compressed cubes : A scalable distributed
  architecture for big tensor decomposition,''
\newblock {\em IEEE Signal Process. Mag.}, vol. 31, no. 5, pp. 57--70, Sep.
  2014.

\bibitem{oseledets2008tucker}
I.~Oseledets, D.~Savostianov, and E.~Tyrtyshnikov,
\newblock ``Tucker dimensionality reduction of three-dimensional arrays in
  linear time,''
\newblock {\em SIAM Journal on Matrix Analysis and Applications}, vol. 30, no.
  3, pp. 939--956, 2008.

\bibitem{mahoney2008tensor}
M.~Mahoney, M.~Maggioni, and P.~Drineas,
\newblock ``Tensor-cur decompositions for tensor-based data,''
\newblock {\em SIAM Journal on Matrix Analysis and Applications}, vol. 30, no.
  3, pp. 957--987, 2008.

\bibitem{caiafa2010generalizing}
C.~Caiafa and A.~Cichocki,
\newblock ``Generalizing the column--row matrix decomposition to multi-way
  arrays,''
\newblock {\em Linear Algebra and its Applications}, vol. 433, no. 3, pp.
  557--573, 2010.

\bibitem{goreinov1997theory}
S.~Goreinov, E.~Tyrtyshnikov, and N.~Zamarashkin,
\newblock ``A theory of pseudoskeleton approximations,''
\newblock {\em Linear Algebra and its Applications}, vol. 261, no. 1, pp.
  1--21, 1997.

\bibitem{savas2013krylov}
B.~Savas and L.~Eld{\'e}n,
\newblock ``Krylov-type methods for tensor computations i,''
\newblock {\em Linear Algebra and its Applications}, vol. 438, no. 2, pp.
  891--918, 2013.

\bibitem{DouglasCarroll1980}
J.~Douglas~Carroll, S.~Pruzansky, and J.~B. Kruskal,
\newblock ``Candelinc: A general approach to multidimensional analysis of
  many-way arrays with linear constraints on parameters,''
\newblock {\em Psychometrika}, vol. 45, no. 1, pp. 3--24, 1980.

\bibitem{Bro1998105}
R.~Bro and C.~Andersson,
\newblock ``Improving the speed of multiway algorithms: Part ii: Compression,''
\newblock {\em Chemometrics and Intelligent Laboratory Systems}, vol. 42, no.
  1–2, pp. 105 -- 113, 1998.

\bibitem{tensorlab3asilomar}
N.~Vervliet, O.~Debals, and L.~{D}e {L}athauwer,
\newblock ``Tensorlab 3.0 --- numerical optimization strategies for large-scale
  constrained and coupled matrix/tensor factorization,''
\newblock in {\em 2016 Conference Record of the 50th Asilomar Conference on
  Signals, Systems and Computers}. IEEE, 2016.

\bibitem{oseledets2011tensor}
I.~Oseledets,
\newblock ``Tensor-train decomposition,''
\newblock {\em SIAM Journal on Scientific Computing}, vol. 33, no. 5, pp.
  2295--2317, 2011.

\bibitem{de2008decompositions}
L.~De~Lathauwer,
\newblock ``Decompositions of a higher-order tensor in block terms-part ii:
  Definitions and uniqueness,''
\newblock {\em SIAM Journal on Matrix Analysis and Applications}, vol. 30, no.
  3, pp. 1033--1066, 2008.

\bibitem{de2011blind}
L.~De~Lathauwer,
\newblock ``Blind separation of exponential polynomials and the decomposition
  of a tensor in rank-(l\_r,l\_r,1) terms,''
\newblock {\em SIAM Journal on Matrix Analysis and Applications}, vol. 32, no.
  4, pp. 1451--1474, 2011.

\bibitem{bro2009modeling}
R.~Bro, R.~Harshman, N.~Sidiropoulos, and M.~Lundy,
\newblock ``Modeling multi-way data with linearly dependent loadings,''
\newblock {\em Journal of Chemometrics}, vol. 23, no. 7-8, pp. 324--340, 2009.

\bibitem{doi:10.1137/080743354}
A.~Stegeman and A.~de~Almeida,
\newblock ``Uniqueness conditions for constrained three-way factor
  decompositions with linearly dependent loadings,''
\newblock {\em SIAM Journal on Matrix Analysis and Applications}, vol. 31, no.
  3, pp. 1469--1490, 2010.

\bibitem{doi:10.1137/110847275}
A.~Stegeman and T.~Lam,
\newblock ``Improved uniqueness conditions for canonical tensor decompositions
  with linearly dependent loadings,''
\newblock {\em SIAM Journal on Matrix Analysis and Applications}, vol. 33, no.
  4, pp. 1250--1271, 2012.

\bibitem{doi:10.1137/110825765}
X.~Guo, S.~Miron, D.~Brie, and A.~Stegeman,
\newblock ``Uni-mode and partial uniqueness conditions for candecomp/parafac of
  three-way arrays with linearly dependent loadings,''
\newblock {\em SIAM Journal on Matrix Analysis and Applications}, vol. 33, no.
  1, pp. 111--129, 2012.

\bibitem{SDF}
L.~Sorber, M.~{Van Barel}, and L.~{De Lathauwer},
\newblock ``{Structured data fusion},''
\newblock {\em IEEE Journal of Selected Topics in Signal Processing}, vol. 9,
  no. 4, pp. 586--600, June 2015.

\bibitem{lahat2015multimodal}
D.~Lahat, T.~Adali, and C.~Jutten,
\newblock ``Multimodal data fusion: an overview of methods, challenges, and
  prospects,''
\newblock {\em Proceedings of the IEEE}, vol. 103, no. 9, pp. 1449--1477, 2015.

\bibitem{harshman1972parafac2}
R.~Harshman,
\newblock ``Parafac2: Mathematical and technical notes,''
\newblock {\em UCLA working papers in phonetics}, vol. 22, no. 3044, pp.
  122215, 1972.

\bibitem{sorensen2015coupled}
M.~S{\o}rensen and L.~De~Lathauwer,
\newblock ``Coupled canonical polyadic decompositions and (coupled)
  decompositions in multilinear rank-(l\_r,n,l\_r,n,1) terms---part i:
  Uniqueness,''
\newblock {\em SIAM Journal on Matrix Analysis and Applications}, vol. 36, no.
  2, pp. 496--522, 2015.

\bibitem{BadKol2007}
B.~W. Bader and T.~G. Kolda,
\newblock ``Efficient {MATLAB} computations with sparse and factored tensors,''
\newblock {\em SIAM Journal on Scientific Computing}, vol. 30, no. 1, pp.
  205--231, December 2007.

\bibitem{NV-KM-RV}
N.~Vannieuwenhoven, K.~Meerbergen, and R~Vandebril,
\newblock ``{Computing the Gradient in Optimization Algorithms for the CP
  Decomposition in Constant Memory through Tensor Blocking},''
\newblock {\em SIAM Journal on Scientific Computing}, vol. 37, no. 3, pp.
  C415--C438, 2015.

\bibitem{AHP-PT-AC}
A.~H. Phan, P.~Tichavsky, and A.~Cichocki,
\newblock ``{Fast Alternating LS Algorithms for High Order CANDECOMP/PARAFAC
  Tensor Factorizations},''
\newblock {\em IEEE Transactions on Signal Processing}, vol. 61, no. 19, pp.
  4834--4846, Oct. 2013,
\newblock doi: 10.1109/TSP.2013.2269903.

\bibitem{kang2012gigatensor}
U~Kang, E.~Papalexakis, A.~Harpale, and C.~Faloutsos,
\newblock ``Gigatensor: scaling tensor analysis up by 100 times-algorithms and
  discoveries,''
\newblock in {\em Proceedings of the 18th ACM SIGKDD international conference
  on Knowledge discovery and data mining}. ACM, 2012, pp. 316--324.

\bibitem{niranjay}
N.~Ravindran, N.D. Sidiropoulos, S.~Smith, and G.~Karypis,
\newblock ``Memory-efficient parallel computation of tensor and matrix products
  for big tensor decomposition,''
\newblock in {\em Asilomar Conference on Signals, Systems, and Computers},
  2014, pp. 581--585.

\bibitem{choi2014dfacto}
J.~H. Choi and S.~V.~N. Vishwanathan,
\newblock ``{DFacTo}: Distributed factorization of tensors,''
\newblock in {\em Advances in Neural Information Processing Systems}, 2014, pp.
  1296--1304.

\bibitem{smith2015splatt}
S.~Smith, N.~Ravindran, N.~D. Sidiropoulos, and G.~Karypis,
\newblock ``{SPLATT}: Efficient and parallel sparse tensor-matrix
  multiplication,''
\newblock in {\em IEEE International Parallel and Distributed Processing
  Symposium}, 2015.

\bibitem{M}
M.~J. Mohlenkamp,
\newblock ``{Musings on multilinear fitting},''
\newblock {\em Linear Algebra and its Applications}, vol. 438, no. 2, pp.
  834--852, 2013.

\bibitem{LZ-AU-SZ}
L.~Zhening, A.~Uschmajew, and S.~Zhang,
\newblock ``{On convergence of the maximum block improvement method},''
\newblock {\em SIAM Journal on Optimization}, vol. 25, no. 1, pp. 210--233,
  2015.

\bibitem{doi:10.1137/120891009}
M.~Razaviyayn, M.~Hong, and Z.-Q. Luo,
\newblock ``A unified convergence analysis of block successive minimization
  methods for nonsmooth optimization,''
\newblock {\em SIAM Journal on Optimization}, vol. 23, no. 2, pp. 1126--1153,
  2013.

\bibitem{AU}
A.~Uschmajew,
\newblock ``{Local convergence of the alternating least squares algorithm for
  canonical tensor approximation},''
\newblock {\em SIAM Journal on Matrix Analysis and Applications}, vol. 33, no.
  2, pp. 639--652, 2012.

\bibitem{ME-WH-AK}
W.~Espig, M.~Hackbusch and A.~Khachatryan,
\newblock ``{On the convergence of alternating least squares optimisation in
  tensor format representations},''
\newblock arXiv preprint arXiv:1506.00062, 2015.

\bibitem{LS-MVB-LDL}
L.~Sorber, M.~{Van Barel}, and L.~{De Lathauwer},
\newblock ``{Optimization-based algorithms for tensor decompositions: canonical
  polyadic decomposition, decomposition in rank-(Lr,Lr,1) terms and a new
  generalization},''
\newblock {\em SIAM Journal on Optimization}, vol. 23, no. 2, pp. 695--720,
  Apr. 2013.

\bibitem{nocedal2006numerical}
J.~Nocedal and S.~Wright,
\newblock {\em Numerical optimization},
\newblock Springer Science \& Business Media, 2006.

\bibitem{tomasi2006comparison}
G.~Tomasi and R.~Bro,
\newblock ``A comparison of algorithms for fitting the {PARAFAC} model,''
\newblock {\em Computational Statistics and Data Analysis}, vol. 50, no. 7, pp.
  1700--1734, 2006.

\bibitem{ishteva2011best}
M.~Ishteva, P.-A. Absil, S.~Van~Huffel, and L.~De~Lathauwer,
\newblock ``Best low multilinear rank approximation of higher-order tensors,
  based on the riemannian trust-region scheme,''
\newblock {\em SIAM Journal on Matrix Analysis and Applications}, vol. 32, no.
  1, pp. 115--135, 2011.

\bibitem{savas2010quasi}
B.~Savas and L.-H. Lim,
\newblock ``Quasi-newton methods on grassmannians and multilinear
  approximations of tensors,''
\newblock {\em SIAM Journal on Scientific Computing}, vol. 32, no. 6, pp.
  3352--3393, 2010.

\bibitem{doi:10.1137/06065577}
M.~Rajih, P.~Comon, and R.~Harshman,
\newblock ``Enhanced line search: A novel method to accelerate parafac,''
\newblock {\em SIAM Journal on Matrix Analysis and Applications}, vol. 30, no.
  3, pp. 1128--1147, 2008.

\bibitem{LS-ID-MVB-LDL}
L.~Sorber, I.~Domanov, M.~{Van Barel}, and L.~{De Lathauwer},
\newblock ``{Exact Line and Plane Search for Tensor Optimization},''
\newblock {\em Computational Optimization and Applications}, vol. 63, no. 1,
  pp. 121--142, Jan. 2016.

\bibitem{TB05missing}
G.~Tomasi and R.~Bro,
\newblock ``{PARAFAC} and missing values,''
\newblock {\em Chemometrics and Intelligent Laboratory Systems}, vol. 75, no.
  2, pp. 163--180, 2005.

\bibitem{doi:10.1137/1.9781611973440.13}
A.~Beutel, P.~Talukdar, A.~Kumar, C.~Faloutsos, E.~Papalexakis, and E.~Xing,
\newblock {\em FlexiFaCT: Scalable Flexible Factorization of Coupled Tensors on
  Hadoop}, chapter~13, pp. 109--117.

\bibitem{NV-LDL}
N.~Vervliet and L.~{De Lathauwer},
\newblock ``{A randomized block sampling approach to canonical polyadic
  decomposition of large-scale tensors},''
\newblock {\em IEEE Journal of Selected Topics in Signal Processing}, vol. 10,
  no. 2, pp. 284--295.

\bibitem{Papalexakis:2012:PSP:2405473.2405521}
C.~Papalexakis, E.and~Faloutsos and N.~Sidiropoulos,
\newblock ``Parcube: Sparse parallelizable tensor decompositions,''
\newblock in {\em Proceedings of the 2012 European Conference on Machine
  Learning and Knowledge Discovery in Databases - Volume Part I}, Berlin,
  Heidelberg, 2012, ECML PKDD'12, pp. 521--536, Springer-Verlag.

\bibitem{BroSidChem1998}
R.~Bro and N.D. Sidiropoulos,
\newblock ``Least squares regression under unimodality and non-negativity
  constraints,''
\newblock {\em Journal of Chemometrics}, vol. 12, pp. 223--247, 1998.

\bibitem{7152968}
A.~P. Liavas and N.~D. Sidiropoulos,
\newblock ``Parallel algorithms for constrained tensor factorization via
  alternating direction method of multipliers,''
\newblock {\em IEEE Transactions on Signal Processing}, vol. 63, no. 20, pp.
  5450--5463, Oct 2015.

\bibitem{HuaSidLiaTSP2016}
K.~Huang, N.~D. Sidiropoulos, and A.~P. Liavas,
\newblock ``A flexible and efficient algorithmic framework for constrained
  matrix and tensor factorization,''
\newblock {\em IEEE Transactions on Signal Processing}, vol. 64, no. 19, pp.
  5052 -- 5065, Oct. 2016.

\bibitem{Boyd2011}
S.~P. Boyd, N.~Parikh, E.~Chu, B.~Peleato, and J.~Eckstein,
\newblock ``Distributed optimization and statistical learning via the
  alternating direction method of multipliers,''
\newblock {\em Foundations and Trends{\textregistered} in Machine Learning},
  vol. 3, no. 1, pp. 1--122, 2011.

\bibitem{Parikh2014}
N.~Parikh and S.~P. Boyd,
\newblock ``Proximal algorithms,''
\newblock {\em Foundations and Trends{\textregistered} in Optimization}, vol.
  1, no. 3, pp. 123--231, 2014.

\bibitem{duchi2008efficient}
J.~Duchi, S.~Shalev-Shwartz, Y.~Singer, and T.~Chandra,
\newblock ``Efficient projections onto the $l_1$-ball for learning in high
  dimensions,''
\newblock in {\em Proc. ACM ICML}, 2008, pp. 272--279.

\bibitem{7208891}
X.~Fu, K.~Huang, W.~K. Ma, N.~D. Sidiropoulos, and R.~Bro,
\newblock ``Joint tensor factorization and outlying slab suppression with
  applications,''
\newblock {\em IEEE Transactions on Signal Processing}, vol. 63, no. 23, pp.
  6315--6328, Dec 2015.

\bibitem{PapSidBro2013}
E.E. Papalexakis, N.D. Sidiropoulos, and R.~Bro,
\newblock ``From k -means to higher-way co-clustering: Multilinear
  decomposition with sparse latent factors,''
\newblock {\em IEEE Trans. on Signal Processing}, vol. 61, no. 2, pp. 493--506,
  2013.

\bibitem{basu2000stability}
S.~Basu and Y.~Bresler,
\newblock ``The stability of nonlinear least squares problems and the
  {Cram{\'e}r-Rao} bound,''
\newblock {\em IEEE Transactions on Signal Processing}, vol. 48, no. 12, pp.
  3426--3436, 2000.

\bibitem{magnus1979commutation}
J.~R. Magnus and H.~Neudecker,
\newblock ``The commutation matrix: some properties and applications,''
\newblock {\em The Annals of Statistics}, pp. 381--394, 1979.

\bibitem{liu2001cramer}
X.~Liu and N.~D. Sidiropoulos,
\newblock ``{Cram{\'e}r-Rao lower bounds for low-rank decomposition of
  multidimensional arrays},''
\newblock {\em IEEE Transactions on Signal Processing}, vol. 49, no. 9, pp.
  2074--2086, 2001.

\bibitem{vorobyov2005robust}
S.~A. Vorobyov, Y.~Rong, N.~D. Sidiropoulos, and A.~B. Gershman,
\newblock ``Robust iterative fitting of multilinear models,''
\newblock {\em IEEE Transactions on Signal Processing}, vol. 53, no. 8, pp.
  2678--2689, 2005.

\bibitem{tomasi2006thesis}
G.~Tomasi,
\newblock {\em Practical and computational aspects in chemometric data
  analysis},
\newblock Ph.D. thesis, 2006.

\bibitem{phan2013low}
A.-H. Phan, P.~Tichavsky, and A.~Cichocki,
\newblock ``{Low complexity damped Gauss--Newton algorithms for
  CANDECOMP/PARAFAC},''
\newblock {\em SIAM Journal on Matrix Analysis and Applications}, vol. 34, no.
  1, pp. 126--147, 2013.

\bibitem{tichavsky2013cramer}
P.~Tichavsky, A.-H. Phan, and Z.~Koldovsky,
\newblock ``{Cram{\'e}r-Rao-induced bounds for CANDECOMP/PARAFAC tensor
  decomposition},''
\newblock {\em IEEE Transactions on Signal Processing}, vol. 61, no. 8, pp.
  1986--1997, 2013.

\bibitem{stoica2001parameter}
P.~Stoica and T.~L. Marzetta,
\newblock ``Parameter estimation problems with singular information matrices,''
\newblock {\em IEEE Transactions on Signal Processing}, vol. 49, no. 1, pp.
  87--90, 2001.

\bibitem{huang2014spm}
K.~Huang and N.~D. Sidiropoulos,
\newblock ``Putting nonnegative matrix factorization to the test: a tutorial
  derivation of pertinent {Cram{\' e}r-Rao} bounds and performance
  benchmarking,''
\newblock {\em IEEE Signal Processing Magazine}, vol. 31, no. 3, pp. 76--86,
  May 2014.

\bibitem{905948}
N.~D. Sidiropoulos and G.~Z. Dimic,
\newblock ``Blind multiuser detection in {W-CDMA} systems with large delay
  spread,''
\newblock {\em IEEE Signal Processing Letters}, vol. 8, no. 3, pp. 87--89,
  March 2001.

\bibitem{NioSid09}
D.~Nion and N.D. Sidiropoulos,
\newblock ``Adaptive algorithms to track the {PARAFAC} decomposition of a
  third-order tensor,''
\newblock {\em IEEE Transactions on Signal Processing}, vol. 57, no. 6, pp.
  2299--2310, 2009.

\bibitem{554307}
A.~Belouchrani, K.~Abed-Meraim, J.~F. Cardoso, and E.~Moulines,
\newblock ``A blind source separation technique using second-order
  statistics,''
\newblock {\em IEEE Transactions on Signal Processing}, vol. 45, no. 2, pp.
  434--444, Feb 1997.

\bibitem{ComJutBook}
P.~Comon, , and C.~Jutten, Eds.,
\newblock {\em Handbook of Blind Source Separation: Independent Component
  Analysis and Applications},
\newblock Academic Press, Oxford, 2010.

\bibitem{7175044}
X.~Fu, N.~D. Sidiropoulos, J.~H. Tranter, and W.~K. Ma,
\newblock ``A factor analysis framework for power spectra separation and
  multiple emitter localization,''
\newblock {\em IEEE Transactions on Signal Processing}, vol. 63, no. 24, pp.
  6581--6594, Dec 2015.

\bibitem{923759}
N.~D. Sidiropoulos,
\newblock ``Generalizing caratheodory's uniqueness of harmonic parameterization
  to n dimensions,''
\newblock {\em IEEE Transactions on Information Theory}, vol. 47, no. 4, pp.
  1687--1690, May 2001.

\bibitem{JiaSidtB01}
T.~Jiang, N.D. Sidiropoulos, and J.M.F. ten Berge,
\newblock ``Almost sure identifiability of multidimensional harmonic
  retrieval,''
\newblock {\em IEEE Transactions on Signal Processing}, vol. 49, no. 9, pp.
  1849--1859, 2001.

\bibitem{doi:10.1137/070697835}
B.~Recht, M.~Fazel, and P.~A. Parrilo,
\newblock ``Guaranteed minimum-rank solutions of linear matrix equations via
  nuclear norm minimization,''
\newblock {\em SIAM Review}, vol. 52, no. 3, pp. 471--501, 2010.

\bibitem{doi:10.1137/1.9781611972801.19}
L.~Xiong, X.~Chen, T.-K. Huang, J.~Schneider, and J.~Carbonell,
\newblock {\em Temporal Collaborative Filtering with Bayesian Probabilistic
  Tensor Factorization}, chapter~18, pp. 211--222.

\bibitem{Karatzoglou:2010:MRN:1864708.1864727}
A.~Karatzoglou, X.~Amatriain, L.~Baltrunas, and N.~Oliver,
\newblock ``Multiverse recommendation: N-dimensional tensor factorization for
  context-aware collaborative filtering,''
\newblock in {\em Proceedings of the Fourth ACM Conference on Recommender
  Systems}, New York, NY, USA, 2010, RecSys '10, pp. 79--86, ACM.

\bibitem{AcKoDu11}
E.~Acar, T.~Kolda, and D.~Dunlavy,
\newblock ``All-at-once optimization for coupled matrix and tensor
  factorizations,''
\newblock in {\em MLG'11: Proceedings of Mining and Learning with Graphs},
  August 2011.

\bibitem{Acar201353}
E.~Acar, M.~Rasmussen, F.~Savorani, T.~Næs, and R.~Bro,
\newblock ``Understanding data fusion within the framework of coupled matrix
  and tensor factorizations,''
\newblock {\em Chemometrics and Intelligent Laboratory Systems}, vol. 129, pp.
  53 -- 63, 2013,
\newblock Multiway and Multiset Methods.

\bibitem{SDM2014}
E.~E. Papalexakis, T.~Mitchell, N.~D. Sidiropoulos, C.~Faloutsos, P.~P.
  Talukdar, and B.~Murphy,
\newblock ``{Turbo-SMT: Accelerating Coupled Sparse Matrix-Tensor
  Factorizations by 200x},''
\newblock in {\em Proc. SIAM Conference on Data Mining (SDM)}, April 24-26
  2014.

\bibitem{5694074}
S.~Rendle,
\newblock ``Factorization machines,''
\newblock in {\em 2010 IEEE International Conference on Data Mining}, Dec 2010,
  pp. 995--1000.

\bibitem{Hsu:2013:LMS:2422436.2422439}
D.~Hsu and S.~Kakade,
\newblock ``Learning mixtures of spherical gaussians: Moment methods and
  spectral decompositions,''
\newblock in {\em Proceedings of the 4th Conference on Innovations in
  Theoretical Computer Science}, New York, NY, USA, 2013, ITCS '13, pp. 11--20,
  ACM.

\bibitem{DBLP:journals/corr/AnandkumarHJK13}
A.~Anandkumar, D.~Hsu, M.~Janzamin, and S.~Kakade,
\newblock ``When are overcomplete topic models identifiable? uniqueness of
  tensor tucker decompositions with structured sparsity,''
\newblock {\em CoRR}, vol. abs/1308.2853, 2013.

\bibitem{4032832}
S.~Yan, D.~Xu, Q.~Yang, L.~Zhang, X.~Tang, and H.~J. Zhang,
\newblock ``Multilinear discriminant analysis for face recognition,''
\newblock {\em IEEE Transactions on Image Processing}, vol. 16, no. 1, pp.
  212--220, Jan 2007.

\bibitem{Lu:2011:SMS:1950989.1951185}
H.~Lu, K.~Plataniotis, and A.~Venetsanopoulos,
\newblock ``A survey of multilinear subspace learning for tensor data,''
\newblock {\em Pattern Recogn.}, vol. 44, no. 7, pp. 1540--1551, July 2011.

\bibitem{KoBa06}
T.~Kolda and B.~Bader,
\newblock ``The {TOPHITS} model for higher-order web link analysis,''
\newblock in {\em Proceedings of Link Analysis, Counterterrorism and Security
  2006}, 2006.

\bibitem{Papalexakis:2014:SML:2567948.2576950}
E.~Papalexakis, K.~Pelechrinis, and C.~Faloutsos,
\newblock ``Spotting misbehaviors in location-based social networks using
  tensors,''
\newblock in {\em Proceedings of the 23rd International Conference on World
  Wide Web}, New York, NY, USA, 2014, WWW '14 Companion, pp. 551--552, ACM.

\bibitem{andersson2000n}
C.~A Andersson and R.~Bro,
\newblock ``The {N-way} toolbox for matlab,''
\newblock {\em Chemometrics and Intelligent Laboratory Systems}, vol. 52, no.
  1, pp. 1--4, 2000.

\bibitem{bader2007efficient}
B.~W. Bader and T.~G. Kolda,
\newblock ``Efficient {MATLAB} computations with sparse and factored tensors,''
\newblock {\em SIAM Journal on Scientific Computing}, vol. 30, no. 1, pp.
  205--231, 2007.

\bibitem{TTB_Software}
B.~W. Bader, T.~G. Kolda, et~al.,
\newblock ``Matlab tensor toolbox version 2.6,''
  \url{http://www.sandia.gov/~tgkolda/TensorToolbox/}, February 2015.

\bibitem{tensorlab}
N.~Vervliet, O.~Debals, L.~Sorber, M.~Van~Barel, and L.~De~Lathauwer,
\newblock ``Tensorlab v3.0,'' \url{http://www.tensorlab.net/}, Mar. 2016.

\bibitem{SAPM:SAPM192761164}
F.~Hitchcock,
\newblock ``The expression of a tensor or a polyadic as a sum of products,''
\newblock {\em Journal of Mathematics and Physics}, vol. 6, no. 1-4, pp.
  164--189, 1927.

\bibitem{Cattell1944}
R.~Cattell,
\newblock ````parallel proportional profiles'' and other principles for
  determining the choice of factors by rotation,''
\newblock {\em Psychometrika}, vol. 9, no. 4, pp. 267--283, 1944.

\bibitem{CarCha70}
J.D. Carroll and J.J. Chang,
\newblock ``Analysis of individual differences in multidimensional scaling via
  an n-way generalization of “{Eckart-Young}” decomposition,''
\newblock {\em Psychometrika}, vol. 35, no. 3, pp. 283--319, 1970.

\bibitem{1011195}
A.~Yeredor,
\newblock ``Non-orthogonal joint diagonalization in the least-squares sense
  with application in blind source separation,''
\newblock {\em IEEE Transactions on Signal Processing}, vol. 50, no. 7, pp.
  1545--1553, Jul 2002.

\bibitem{kay1993fundamentals}
S.~M. Kay,
\newblock {\em Fundamentals of Statistical Signal Processing, Volume I:
  Estimation Theory},
\newblock Prentice-Hall, 1993.

\bibitem{gorman1990lower}
J.~D. Gorman and A.~O. Hero,
\newblock ``Lower bounds for parametric estimation with constraints,''
\newblock {\em IEEE Transactions on Information Theory}, vol. 36, no. 6, pp.
  1285--1301, 1990.

\bibitem{stoica1998cramer}
P.~Stoica and B.~C. Ng,
\newblock ``On the {Cram{\'e}r-Rao} bound under parametric constraints,''
\newblock {\em IEEE Signal Processing Letters}, vol. 5, no. 7, pp. 177--179,
  1998.

\bibitem{ben2009constrained}
Z.~Ben-Haim and Y.~C. Eldar,
\newblock ``{On the constrained {Cram{\'e}r-Rao} bound with a singular {Fisher}
  information matrix},''
\newblock {\em IEEE Signal Processing Letters}, vol. 16, no. 6, pp. 453--456,
  2009.

\bibitem{boyd2004convex}
S.~Boyd and L.~Vandenberghe,
\newblock {\em Convex optimization},
\newblock Cambridge University Press, 2004.

\bibitem{qian2016icassp}
C.~Qian, N.~D. Sidiropoulos, K.~Huang, L.~Huang, and H.-C. So,
\newblock ``Least squares phase retrieval using feasible point pursuit,''
\newblock in {\em IEEE International Conference on Acoustics, Speech, and
  Signal Processing (ICASSP)}, 2016.

\bibitem{qian2016phase}
C.~Qian, N.~D. Sidiropoulos, K.~Huang, L.~Huang, and H.-C. So,
\newblock ``Phase retrieval using feasible point pursuit: Algorithms and
  {Cram\'er-Rao} bound,''
\newblock {\em IEEE Transactions on Signal Processing}, vol. 64, no. 20, pp.
  5282--5296, Oct. 2016.

\bibitem{huang2016fourier}
K.~Huang, Y.~C. Eldar, and N.~D. Sidiropoulos,
\newblock ``{On Convexity and Identifiability in 1-D Fourier Phase
  Retrieval},''
\newblock in {\em IEEE International Conference on Acoustics, Speech, and
  Signal Processing (ICASSP)}, 2016.

\bibitem{huang2016phase}
K.~Huang, Y.~C. Eldar, and N.~D. Sidiropoulos,
\newblock ``{Phase Retrieval from 1D Fourier Measurements: Convexity,
  Uniqueness, and Algorithms},''
\newblock {\em IEEE Transactions on Signal Processing}, vol. 64, no. 23, pp.
  6105--6117, Dec. 2016.

\bibitem{bertsekas1999nonlinear}
D.~P. Bertsekas,
\newblock {\em Nonlinear programming},
\newblock Athena Scientific, 2nd edition, 1999.

\bibitem{swami1996cramer}
A.~Swami,
\newblock ``{Cram{\'e}r-{R}ao bounds for deterministic signals in additive and
  multiplicative noise},''
\newblock {\em Signal Processing}, vol. 53, no. 2, pp. 231--244, 1996.

\bibitem{swami2002some}
A.~Swami and B.~M. Sadler,
\newblock ``On some detection and estimation problems in heavy-tailed noise,''
\newblock {\em Signal Processing}, vol. 82, no. 12, pp. 1829--1846, 2002.

\bibitem{petersen2006matrix}
K.B. Petersen and M.~S. Pedersen,
\newblock {\em The matrix cookbook},
\newblock Technical University of Denmark, 2006.

\bibitem{huang2014tsp}
K.~Huang, N.~D. Sidiropoulos, and A.~Swami,
\newblock ``Non-negative matrix factorization revisited: Uniqueness and
  algorithm for symmetric decomposition,''
\newblock {\em IEEE Trans. on Signal Processing}, vol. 62, no. 1, pp. 211--224,
  Jan 2014.

\end{thebibliography}

\newpage
\section{Supplementary Material}

\subsection{Detailed proof of uniqueness via eigendecomposition}
Consider an $I \times J \times 2$ tensor $\Xt$ of rank $F \leq \min(I,J)$. Let us look at the two frontal slabs of $\Xt$. Since $\text{rank}(\Xt)=F$, it follows that
\[
{\bf X}^{(1)} = \Xt(:,:,1)={\bf A} {\bf D}_1({\bf C}) {\bf B}^T,
\]
\[
{\bf X}^{(2)} = \Xt(:,:,2)={\bf A} {\bf D}_2({\bf C}) {\bf B}^T,
\]
where ${\bf A}$, ${\bf B}$, ${\bf C}$ are $I \times F$, $J \times F$, and $2 \times F$, respectively. Assume, for the moment, that there is no zero element in ${\bf D}_1({\bf C})$ or ${\bf D}_2({\bf C})$. Define $\widetilde {\bf A} := {\bf A} {\bf D}_1({\bf C})$, and ${\bf D} := ({\bf D}_1({\bf C}))^{-1} {\bf D}_2({\bf C})$. Then,
\[
{\bf X}^{(1)} = \widetilde {\bf A} {\bf B}^T,~~~
{\bf X}^{(2)} = \widetilde {\bf A} {\bf D} {\bf B}^T,
\]
or
\[
\left[
\begin{array}{c}
{\bf X}^{(1)}\\
{\bf X}^{(2)}\\
\end{array}
\right] = \left[
\begin{array}{l}
\widetilde {\bf A}\\
\widetilde {\bf A} {\bf D}\\
\end{array}
\right] {\bf B}^T.
\]
With $[{\bf U},\bSigma,{\bf V}]=\text{svd}\left(\left[
\begin{array}{c}
{\bf X}^{(1)}\\
{\bf X}^{(2)}\\
\end{array}
\right]\right)$, i.e.,
$\left[
\begin{array}{c}
{\bf X}^{(1)}\\
{\bf X}^{(2)}\\
\end{array}
\right] = {\bf U} \bSigma {\bf V}^T,
$
and assuming that $\text{rank}\left({\bf X}^{(1)}\right)=\text{rank}\left({\bf X}^{(2)}\right)=F$ (which implies that the rank of all matrices involved is $F$), we have
\[
{\bf U} =
\left[
\begin{array}{c}
{\bf U}_1\\
{\bf U}_2\\
\end{array}
\right] = \left[
\begin{array}{l}
\widetilde {\bf A}\\
\widetilde {\bf A} {\bf D}\\
\end{array}
\right] {\bf M} = \left[
\begin{array}{l}
\widetilde {\bf A} {\bf M}\\
\widetilde {\bf A} {\bf D} {\bf M}\\
\end{array}
\right],
\]
where matrix ${\bf M}$ is $F \times F$ nonsingular. Compute auto- and cross-correlation matrices
\[
{\bf R}_1 = {\bf U}_1^T {\bf U}_1 = {\bf M}^T {\widetilde {\bf A}}^T \widetilde {\bf A} {\bf M}=: {\bf Q} {\bf M},
\]
\[
{\bf R}_2 = {\bf U}_1^T {\bf U}_2 = {\bf M}^T {\widetilde {\bf A}}^T \widetilde {\bf A} {\bf D} {\bf M}= {\bf Q} {\bf D} {\bf M}.
\]
Notice that both ${\bf R}_1$ and ${\bf R}_2$ are $F \times F$ nonsingular. So, what we have accomplished with these transformations is that we obtained a pair of equations involving square nonsingular matrices (instead of possibly tall, full column rank ones). It follows that
\[
\left({\bf R}_1^{-1} {\bf R}_2\right) {\bf M}^{-1} = {\bf M}^{-1} {\bf D},
\]
i.e., ${\bf M}^{-1}$ holds the eigenvectors of matrix $\left({\bf R}_1^{-1} {\bf R}_2\right)$, and ${\bf D}$ holds the corresponding eigenvalues (assumed to be distinct, for the moment). There is freedom to scale eigenvectors (they remain eigenvectors), and obviously one cannot recover the order of the columns of ${\bf M}^{-1}$. This means that there is permutation and scaling ambiguity in recovering ${\bf M}^{-1}$ from eigendecomposition of $\left({\bf R}_1^{-1} {\bf R}_2\right)$. That is, what we do recover is actually ${\widetilde {\bf M}}^{-1} = {\bf M}^{-1} \bPi \bLambda$, where $\bPi$ is a permutation matrix and $\bLambda$ is a nonsingular diagonal scaling matrix. If we use ${\widetilde {\bf M}}^{-1}$ to recover $\widetilde {\bf A}$ from equation ${\bf U}_1 = \widetilde {\bf A} {\bf M}$ $\Rightarrow$ $\widetilde {\bf A} = {\bf U}_1 {\bf M}^{-1}$, we will in fact recover $\widetilde {\bf A} \bPi \bLambda$ -- that is, $\widetilde {\bf A}$ up to the same column permutation and scaling that stem from the ambiguity in recovering ${\bf M}^{-1}$. It is now easy to see that we can recover ${\bf B}$ and ${\bf C}$ by going back to the original equations for ${\bf X}^{(1)}$ and ${\bf X}^{(2)}$ and left-inverting ${\bf A}$.

During the course of the derivation, we have made assumptions in passing: i) that the slabs of $\Xt$ have rank $F=\text{rank}(\Xt)$, and ii) that the eigenvalues in ${\bf D}$ are distinct ($\Rightarrow$ one row of ${\bf C}$ has no zero elements). We now revisit those working assumptions, starting from the last one, and show that they can be made without loss of generality. First note that $\left({\bf R}_1^{-1} {\bf R}_2\right)$ is diagonalizable (i.e., has a full set of linearly independent eigenvectors) {\em by construction} under our working assumptions. If two or more of its eigenvalues are identical though, then linear combinations of the corresponding eigenvectors are also eigenvectors, corresponding to the same eigenvalue. Hence distinct eigenvalues (elements of ${\bf D}$) are {\em necessary} for uniqueness.

Consider creating two random slab mixtures from the given slabs ${\bf X}^{(1)}$ and ${\bf X}^{(2)}$, as follows
\[
\widetilde {\bf X}^{(1)} = \gamma_{1,1} {\bf X}^{(1)} + \gamma_{1,2} {\bf X}^{(2)} =
\]
\[
{\bf A} \left( \gamma_{1,1} {\bf D}_1({\bf C}) + \gamma_{1,2} {\bf D}_2({\bf C}) \right) {\bf B}^T,
\]
\[
\widetilde {\bf X}^{(2)} = \gamma_{2,1} {\bf X}^{(1)} + \gamma_{2,2} {\bf X}^{(2)} =
\]
\[
{\bf A} \left( \gamma_{2,1} {\bf D}_1({\bf C}) + \gamma_{2,2} {\bf D}_2({\bf C}) \right) {\bf B}^T.
\]
The net effect of such slab mixing is that one replaces ${\bf C}$ by $\widetilde {\bf C} := \bGamma {\bf C}$, and we draw
$\bGamma:=\left[
                 \begin{array}{cc}
                   \gamma_{1,1} & \gamma_{1,2} \\
                   \gamma_{2,1} & \gamma_{2,2} \\
                 \end{array}
               \right]$
to have i.i.d. elements uniformly distributed over $[0,1]$. It is now easy to see that all elements of $\widetilde {\bf C}$ will be nonzero almost surely, and $\text{rank} \left( \widetilde {\bf X}^{(1)} \right) = \text{rank} \left( \widetilde {\bf X}^{(2)} \right) = F$ almost surely. Furthermore, since any two columns of $\widetilde {\bf C}$ are equal to $\bGamma$ times the corresponding two columns of ${\bf C}$, it follows that the ratios $\widetilde {\bf C}(1,:) / \widetilde {\bf C}(2,:)$ (where the division is element-wise) will all be distinct almost surely, if and only if {\em any two} columns of ${\bf C}$ are linearly independent -- in which case ${\bf C}$ has Kruskal rank $\geq 2$.
We have therefore proven Theorem \ref{theor:GEVD}; with slightly more work, using a Lemma in \cite{SidKyrSPL2012}, we can prove Theorem \ref{theor:uniq2xfcr}.

\subsection{An intermediate result that conveys the flavor of Kruskal's}
\begin{theorem}
Given $\Xt = \left\llbracket{\bf A},{\bf B},{\bf C}\right\rrbracket$, with
${{\bf A}}:~I \times F$, ${{\bf B}}:~J \times F$, and ${{\bf
C}}:~K \times F$, it is {\it necessary} for uniqueness of ${\bf
A}$, ${\bf B}$, ${\bf C}$ that
\begin{equation}
\min(r_{{\bf A}\odot{\bf B}}, r_{{\bf B}\odot{\bf C}}, r_{{\bf C}\odot{\bf A}}) = F. \label{equ_cond1}
\end{equation}
If $F > 1$, then it is also necessary that
\begin{equation}
\min(k_{{\bf A}},k_{{\bf B}},k_{{\bf C}}) \geq 2.
\label{equ_cond2}
\end{equation}
If, in addition\footnote{Conditions (\ref{equ_cond3}) and (\ref{equ_cond4}) imply the necessary conditions (\ref{equ_cond1}) and (\ref{equ_cond2}), as we will see shortly.},
\begin{equation}
r_{{\bf C}} = F, \label{equ_cond3}
\end{equation}
and
\begin{equation}
k_{{\bf A}} + k_{{\bf B}} \geq F+2, \label{equ_cond4}
\end{equation}
then the decomposition of $\Xt$ in terms of ${\bf A}$, ${\bf B}$, and ${\bf C}$ is essentially unique. Due to role symmetry, conditions (\ref{equ_cond3}) and (\ref{equ_cond4}) can be replaced by either
\[
r_{{\bf B}} = F,~~~{\it and}~k_{{\bf A}} + k_{{\bf C}} \geq F+2,
\]
or
\[
r_{{\bf A}} = F,~~~{\it and}~k_{{\bf B}} + k_{{\bf C}} \geq F+2.
\]
\end{theorem}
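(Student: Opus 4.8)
The plan is to handle the three assertions in turn, investing almost all the effort in the sufficiency claim, since the two necessity claims follow from short ``collapse'' arguments already foreshadowed in the Remark after the Definition of essential uniqueness. For necessity of \eqref{equ_cond1} I would argue the contrapositive: if, say, $\A\odot\B$ were column-rank deficient, there is a nonzero $\mathbf{d}$ with $(\A\odot\B)\mathbf{d}=\mathbf{0}$; relabeling so that $d_F\neq 0$ and substituting $\a_F\b_F^T=-\tfrac{1}{d_F}\sum_{f<F}d_f\,\a_f\b_f^T$ into $\Xt=\sum_f \a_f\circledcirc\b_f\circledcirc\c_f$ rewrites $\Xt$ with only $F-1$ rank-one terms, contradicting $\text{rank}(\Xt)=F$; role symmetry then gives the claim for $\B\odot\C$ and $\C\odot\A$. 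For necessity of \eqref{equ_cond2} when $F>1$, I would suppose $k_{\A}=1$, so two columns of $\A$ are proportional, say $\a_2=\alpha\a_1$; then the first two terms sum to $\a_1\circledcirc\bigl(\b_1\circledcirc\c_1+\alpha\,\b_2\circledcirc\c_2\bigr)$, whose $(\B,\C)$ part is a matrix of rank at most two. If that matrix has rank two, its factorization is non-unique and re-factoring yields a genuinely different rank-$F$ CPD; if its rank is at most one, $\Xt$ has rank at most $F-1$, again a contradiction. Hence $k_{\A}\geq 2$, and by symmetry $\min(k_\A,k_\B,k_\C)\geq 2$.

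The core of the sufficiency proof is a rank lemma for ``slab mixtures.'' For $\mathbf{g}\in\Complex^F$ set $\mathbf{M}(\mathbf{g}):=\A\diag{\mathbf{g}}\B^T=\sum_{f}g_f\,\a_f\b_f^T$ and let $\ell:=\|\mathbf{g}\|_0$. Restricting to the support $S$, one has $\mathbf{M}(\mathbf{g})=\A_S\diag{\mathbf{g}_S}\B_S^T$ with $\diag{\mathbf{g}_S}$ invertible, and since any $k_{\A}$ columns of $\A$ (resp.\ $k_{\B}$ of $\B$) are independent, $\text{rank}(\A_S)=\min(\ell,k_{\A})$ and $\text{rank}(\B_S)=\min(\ell,k_{\B})$. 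Sylvester's inequality then gives $\text{rank}(\mathbf{M}(\mathbf{g}))\geq \min(\ell,k_{\A})+\min(\ell,k_{\B})-\ell$. I would next verify that under $k_{\A}+k_{\B}\geq F+2$ (which forces $k_{\A},k_{\B}\geq 2$) and $\ell\leq F$, this lower bound is at least $2$ whenever $\ell\geq 2$: the cases $\ell\leq\min(k_\A,k_\B)$, $\ell\geq\max(k_\A,k_\B)$, and the two intermediate ones each reduce to $\ell\geq 2$ or $k_\A+k_\B-\ell\geq 2$. Consequently $\text{rank}(\mathbf{M}(\mathbf{g}))=1$ if and only if $\|\mathbf{g}\|_0=1$. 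This quantitative incarnation of Kruskal's idea is the step I expect to be the main obstacle to state cleanly, and everything downstream is bookkeeping.

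Now I would bring in $r_{\C}=F$. Because $\C$ has full column rank, $\C^T\w$ ranges over all of $\Complex^F$ as $\w$ ranges over $\Complex^K$, so the span of the frontal slabs equals $\mathcal{S}:=\text{span}\{\a_f\b_f^T\}_{f=1}^F$. The lemma (with $\mathbf{g}\neq\mathbf{0}\Rightarrow\mathbf{M}(\mathbf{g})\neq\mathbf{0}$) shows the $\a_f\b_f^T$ are linearly independent, so $\dim\mathcal{S}=F$, and it characterizes the rank-one members of $\mathcal{S}$ as exactly the scalar multiples of the individual $\a_f\b_f^T$. Given a second rank-$F$ decomposition $\Xt=\left\llbracket\bar\A,\bar\B,\bar\C\right\rrbracket$, every slab lies in $\text{span}\{\bar\a_f\bar\b_f^T\}$, forcing $\mathcal{S}\subseteq\text{span}\{\bar\a_f\bar\b_f^T\}$; a dimension count makes this an equality with the $\bar\a_f\bar\b_f^T$ a basis of $\mathcal{S}$. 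Each $\bar\a_f\bar\b_f^T$ is then a rank-one element of $\mathcal{S}$, hence a scalar multiple of some $\a_{\sigma(f)}\b_{\sigma(f)}^T$, and independence on both sides forces $\sigma$ to be a permutation. Uniqueness of rank-one matrix factorization yields $\bar\a_f=\alpha_f\a_{\sigma(f)}$, $\bar\b_f=\beta_f\b_{\sigma(f)}$, i.e.\ $\bar\A=\A\bPi\bLambda_1$ and $\bar\B=\B\bPi\bLambda_2$. The point worth stressing is that this matching invokes \emph{only} the hypotheses on $\A,\B,\C$, never on the barred factors.

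Finally I would recover $\C$ from the unfolding $\X_3=(\B\odot\A)\C^T$ of \eqref{mu3}: since $\B\odot\A$ is full column rank, $\C^T=(\B\odot\A)^{\dagger}\X_3$ is determined by $\A,\B$, and the same applied to the barred quantities, together with $(\B\bPi\bLambda_2)\odot(\A\bPi\bLambda_1)=(\B\odot\A)\bPi\bLambda_1\bLambda_2$, gives $\bar\C=\C\bPi(\bLambda_1\bLambda_2)^{-1}=:\C\bPi\bLambda_3$ with $\bLambda_1\bLambda_2\bLambda_3=\I$, which is precisely essential uniqueness. The two alternative forms of \eqref{equ_cond3}--\eqref{equ_cond4} then follow verbatim by permuting the roles of the modes. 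The rank lemma is the crux; the only additional care needed is to confirm that \eqref{equ_cond3}--\eqref{equ_cond4} subsume the necessary conditions \eqref{equ_cond1}--\eqref{equ_cond2}, so that the $\a_f\b_f^T$ form a basis of $\mathcal{S}$ to begin with.
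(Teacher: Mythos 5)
Your proposal is correct, and its crux is the same computation as the paper's: the lower bound $\mathrm{rank}({\bf A}\,\text{Diag}\{{\bf g}\}{\bf B}^T)\geq \min(\ell,k_{\bf A})+\min(\ell,k_{\bf B})-\ell$ from Sylvester's inequality, followed by the identical three-way case analysis on $\ell$ under $k_{\bf A}+k_{\bf B}\geq F+2$. Where you genuinely diverge is in the architecture built around that inequality. The paper first reduces to a square nonsingular ${\bf C}$ (discarding slabs), then invokes an elementary version of Kruskal's \emph{Permutation Lemma}: it shows that $w({\bf v}^T\bar{\bf C})=1$ implies $w({\bf v}^T{\bf C})=1$, concludes $\bar{\bf C}={\bf C}\bPi\bLambda$ \emph{first}, and only then recovers the relation $\bar{\bf A}\odot\bar{\bf B}=({\bf A}\odot{\bf B})\bPi\bLambda^{-1}$. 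You instead work in the matrix space spanned by the frontal slabs: your rank lemma characterizes the rank-one elements of ${\cal S}=\text{span}\{{\bf a}_f{\bf b}_f^T\}$, a dimension count forces $\{\bar{\bf a}_f\bar{\bf b}_f^T\}$ to be a basis of ${\cal S}$ consisting of rank-one elements, and matching gives ${\bf A},{\bf B}$ \emph{first}; ${\bf C}$ is then recovered by pseudo-inverting the unfolding ${\bf X}_3=({\bf B}\odot{\bf A}){\bf C}^T$. Your route avoids both the square-${\bf C}$ reduction and the Permutation Lemma, and is arguably more geometric and self-contained; what the paper's packaging buys is that the Permutation Lemma machinery is exactly what generalizes to Kruskal's full theorem (where no factor matrix is full column rank), which is the pedagogical point of the supplementary proof. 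Your necessity arguments also differ mildly: for \eqref{equ_cond1} the paper directly exhibits a second decomposition by adding a null-space vector of ${\bf A}\odot{\bf B}$ to a row of ${\bf C}$, whereas you collapse the decomposition to $F-1$ terms and appeal to the convention (the paper's Remark) that an overestimated number of terms precludes uniqueness; both are fine.

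Two small points of rigor, neither fatal. First, your claim $\mathrm{rank}({\bf A}_S)=\min(\ell,k_{\bf A})$ should be an inequality $\mathrm{rank}({\bf A}_S)\geq\min(\ell,k_{\bf A})$ (the rank can exceed $k_{\bf A}$ when $\ell>k_{\bf A}$); since only the lower bound feeds into Sylvester's inequality, the lemma is unaffected. Second, your intermediate case reduces to $\min(k_{\bf A},k_{\bf B})\geq 2$ rather than to the two conditions you list, but this is exactly what \eqref{equ_cond4} supplies, as you note parenthetically.
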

\begin{proof}
	First consider the necessary conditions
	(\ref{equ_cond1})-(\ref{equ_cond2}). The necessity of
	(\ref{equ_cond1}) follows from
	\begin{equation}
	{\bf X}^{(JI \times K)} = \left( {\bf A} \odot {\bf B} \right)
	{\bf C}^{T}, \label{chap3eqn_matricizedview1}
	\end{equation}
	\begin{equation}
	{\bf X}^{(IK\times{J})} = ({\bf C}\odot{\bf A}){\bf B}^T, \label{chap3eqn_matricizedview2}
	\end{equation}
	\begin{equation}
	{\bf X}^{(KJ\times{I})} = ({\bf B}\odot{\bf C}){\bf A}^T,
	\label{chap3eqn_matricizedview3}
	\end{equation}
	where the superscripts denote the size of different matrix rearrangements of tensor ${\bf X}$.
	If any of the three Khatri--Rao products fails to be full column rank, then
	one may add a vector in the right null space of that Khatri-Rao product to any of
	the rows of the corresponding third matrix without affecting the
	data. In order to see that (\ref{equ_cond2}) is necessary, it
	suffices to consider the $F=2$ case (if one can mix two rank-1 factors
	[meaning: use two linear combinations of the two factors instead of the pure factors themselves] without affecting their contribution to the data, then the model is not unique, irrespective of the remaining factors). Hence
	consider the $I \times 2$, $J \times 2$, $K \times 2$ case, and
	assume without loss of generality that $k_{{\bf A}} = 1$. This
	means that the two columns of ${\bf A}$ are collinear, i.e.,
	\[
	x_{i,j,k} = a_{i,1}(b_{j,1} c_{k,1} + \lambda b_{j,2} c_{k,2}),
	\]
	which implies that the $i$-th slab of $\Xt$ along
	the first mode is given by
	\[
	{\bf X}_{i} = a_{i,1} {\bar {\bf B}} {\bf C}^{T},~i=1,\cdots,I,
	\]
	where ${\bar {\bf B}} = \left[ {\bf b}_{1} ~ \lambda {\bf b}_{2}
	\right]$, and ${\bf C} = \left[ {\bf c}_{1} ~ {\bf c}_{2} \right]$.
	Therefore, slabs along the first mode are multiples of each other;
	${\bf a}_{1} = \left[a_{1,1},\cdots,a_{I,1}\right]^{T}$ can be
	uniquely determined up to global scaling (hence ${\bf A}$ can be determined
	up to scaling of its columns), but there exists linear transformation freedom in
	choosing ${\bf B}$ and ${\bf C}$. In this case, the tensor $\Xt$ is just a
	matrix, say the first slab ${\bf X}_{1}$, and then the other slabs are scaled copies of that matrix -- so subject to the same ambiguities as matrix decomposition.
	
	The sufficiency of (\ref{equ_cond1})-(\ref{equ_cond2}) and
	(\ref{equ_cond3})-(\ref{equ_cond4}) will be shown by contradiction. Without loss of
	generality, assume $r_{{\bf C}} = F$ (this implies that ${\bf C}$
	is tall or square), and $r_{{\bf A}\odot{\bf B}} = F$. Note that
	it suffices to consider the case of square ${\bf C}$, for
	otherwise ${\bf C}$ contains a square submatrix consisting of
	linearly independent rows. Considering this square nonsingular
	submatrix amounts to discarding the remaining rows of ${\bf C}$,
	or, equivalently, dispensing with certain data slabs taken along
	the third mode. It is sufficient to prove that the
	parameterization of $\Xt$ in terms of ${\bf A}$,
	${\bf B}$, and the row-truncated ${\bf C}$ is unique based on part
	of the data. The uniqueness of the full ${\bf C}$ will then follow
	trivially.
	
	We will need the following elementary fact, which is a very
	special case of the {\it Permutation Lemma} in [Kruskal, '77].
	Let ${\it w}({\bf v})$ denote the number of nonzero elements (the
	{\it weight}) of
	${\bf v} \in \Complex^{K}$. Consider two $F \times F$ nonsingular
	matrices ${\bf C}$ and $\bar {\bf C}$. Suppose that
	\begin{equation}
	{\it w}({\bf v}^{T} {\bf C}) = 1,~\forall {\bf v} ~~|~~ {\it
		w}({\bf v}^{T} \bar {\bf C}) = 1. \label{equ_plc}
	\end{equation}
	(Meaning: for all ${\bf v}$ such that ${\it
		w}({\bf v}^{T} \bar {\bf C}) = 1$, it holds that ${\it w}({\bf v}^{T} {\bf C}) = 1$ as well.)
	It then follows that $\bar {\bf C} = {\bf C} {\bPi} {\bLambda}$,
	where ${\bPi}$ is a permutation matrix, and ${\bLambda}$ is a
	nonsingular diagonal scaling matrix. For a proof, note that if
	condition (\ref{equ_plc}) holds, then
	\[
	{\bar {\bf C}}^{-1} {\bar {\bf C}} = {\bf I} \Longrightarrow {\bar
		{\bf C}}^{-1} {\bf C} = {\bPi}^{T} {\bf D},
	\]
	where ${\bf D}$ is a nonsingular diagonal matrix, and we have used
	that the product ${\bar {\bf C}}^{-1} {\bf C}$ is full rank, and
	its rows have weight one. It then follows that
	\[
	{\bf C} = {\bar {\bf C}} {\bPi}^{T} {\bf D} \Longleftrightarrow
	{\bar {\bf C}} = {\bf C} {\bf D}^{-1} {\bPi} = {\bf C} {\bPi}
	{\bLambda}.
	\]
	With these preliminaries at hand, we are ready to give the main (previously unpublished) proof.
	
	Suppose $\Xt = \left\llbracket{\bf A},{\bf B},{\bf C}\right\rrbracket = \left\llbracket{\bar
		{\bf A}},{\bar {\bf B}},{\bar {\bf C}}\right\rrbracket$. From
	(\ref{chap3eqn_matricizedview1}), it follows that
	\begin{equation}
	\left( {\bf A} \odot {\bf B} \right) {\bf C}^{T} = {\bf X}^{(JI
		\times K)} = \left( {\bar {\bf A}} \odot {\bar {\bf B}} \right)
	{\bar {\bf C}}^{T}. \label{chap3eqn_willneedthisagain}
	\end{equation}
	Since $r_{{\bf A}\odot{\bf B}} = r_{{\bf C}} = F$, it follows that
	$r_{{\bar {\bf A}}\odot{\bar {\bf B}}} = r_{{\bar {\bf C}}} = F$.
	
	Taking linear combinations of the slabs along the third mode in
	(\ref{chap3eqn_matricizedview2}), we obtain
	\begin{equation}
	\sum_{k=1}^{K} v_{k} {\bf X}(:,:,k) = {\bf A} diag({\bf v}^{T} {\bf
		C}) {\bf B}^{T} = \bar {\bf A} diag({\bf v}^{T} \bar {\bf C}) \bar
	{\bf B}^{T}, \label{chap3eqn_keyform}
	\end{equation}
	for all ${\bf v} := \left[v_{1},\cdots,v_{F} \right]^{T} \in
	\Complex^{F}$.\\ The rank of a matrix product is always less than or
	equal to the rank of any factor, and thus
	\begin{equation}
	{\it w}({\bf v}^{T} \bar {\bf C}) = r_{diag({\bf v}^{T} \bar {\bf
			C})} \geq r_{\bar {\bf A} diag({\bf v}^{T} \bar {\bf C}) \bar {\bf
			B}^{T}} = r_{{\bf A} diag({\bf v}^{T} {\bf C}) {\bf B}^{T}}.
	\label{chap3eqn_6a}
	\end{equation}
	
	Assume ${\it w}({\bf v}^{T} \bar {\bf C}) = 1$; then (\ref{chap3eqn_6a})
	implies $r_{{\bf A} diag({\bf v}^{T} {\bf C}) {\bf B}^{T}} \leq 1$,
	and we wish to show that ${\it w}({\bf v}^{T} {\bf C}) = 1$.
	Let us use the shorthand ${\it w} := {\it w}({\bf v}^{T} {\bf C})$.
	Using Sylvester's inequality and the definition of k-rank:
	\[
	r_{{\bf A}diag({\bf v}^{T} {\bf C}) {\bf B}^{T}} \geq \min(k_{{\bf A}},{\it w}) +
	\min(k_{{\bf B}},{\it w}) - {\it w}.
	\]
	Hence
	\begin{equation}
	\min(k_{{\bf A}},{\it w}) + \min(k_{{\bf B}},{\it w}) - {\it w} \leq 1.
	\label{chap3eqn_keyineq}
	\end{equation}
	Let us consider cases:
	\begin{enumerate}
		\item Case of ${\it w} \leq \min(k_{{\bf A}},k_{{\bf B}})$: then
		(\ref{chap3eqn_keyineq}) implies ${\it w} \leq 1$, hence
		${\it w} = 1$, because ${\bf C}$ is nonsingular and ${\bf v} \neq {\bf 0}$;
		\item Case of $\min(k_{{\bf A}},k_{{\bf B}}) \leq {\it w} \leq \max(k_{{\bf A}},k_{{\bf B}})$: then (\ref{chap3eqn_keyineq}) implies
		$\min(k_{{\bf A}},k_{{\bf B}}) \leq 1$, which contradicts (\ref{equ_cond2}),
		thereby excluding this range of ${\it w}$ from consideration;
		\item Case of ${\it w} \geq \max(k_{{\bf A}},k_{{\bf B}})$:
		then (\ref{chap3eqn_keyineq}) implies that
		${\it w} \geq k_{{\bf A}}+k_{{\bf B}}-1$. Under (\ref{equ_cond4}),
		however, this yields another
		contradiction, as it requires that ${\it w} \geq F+1$, which is
		impossible since the maximum possible ${\it w} = {\it w}({\bf v}^{T} {\bf C})$ is $F$.
	\end{enumerate}
	
	We conclude that, under (\ref{equ_cond1})-(\ref{equ_cond2}) and
	(\ref{equ_cond3})-(\ref{equ_cond4}),
	${\it w}({\bf v}^{T} \bar {\bf C}) = 1$ implies
	${\it w}({\bf v}^{T} {\bf C}) = 1$.
	From the elementary version of the Permutation Lemma, it follows
	that ${\bar {\bf C}} = {\bf C} {\bPi}{\bLambda}$.
	
	From (\ref{chap3eqn_willneedthisagain}) we now obtain
	\[
	\left[ \left( {\bf A} \odot {\bf B} \right) - \left( {\bar {\bf
			A}} \odot {\bar {\bf B}} \right) {\bLambda} {\bPi}^{T} \right]
	{\bf C}^{T} = {\bf 0},
	\]
	and since ${\bf C}$ is nonsingular,
	\begin{equation}
	\left( {\bar {\bf A}} \odot {\bar {\bf B}} \right) = \left( {\bf
		A} \odot {\bf B} \right) {\bPi} {\bLambda}^{-1}.
	\label{chap3eqn_onemorestep}
	\end{equation}
	It follows that, for every column ${{\bf a}}_{f} \otimes {{\bf
			b}}_{f}$ of ${\bf A} \odot {\bf B}$ there exists a unique column
	${\bar {\bf a}}_{f^{'}} \otimes {\bar {\bf b}}_{f^{'}}$ of ${\bar
		{\bf A}} \odot {\bar {\bf B}}$ such that
	\[
	{{\bf a}}_{f} \otimes {{\bf b}}_{f} = {\bar {\bf a}}_{f^{'}}
	\otimes {\bar {\bf b}}_{f^{'}} \lambda_{f^{'}}.
	\]
	It only remains to account for uniqueness of the truncated rows of
	a possibly tall ${\bf C}$, but this is now obvious from
	(\ref{chap3eqn_willneedthisagain}), (\ref{chap3eqn_onemorestep}), and
	(\ref{equ_cond1}). This completes the proof.
\end{proof}

\subsection{Rank and k-rank of the Khatri--Rao product}

\begin{property} [Sidiropoulos \& Liu, '99] \label{frpkrp}
If $k_{\bf A} \geq 1$ and $k_{\bf B} \geq 1$, then it holds that
\[
k_{{\bf B} \odot {\bf A}} \geq \min(k_{\bf A} + k_{\bf B} -1, F),
\]
whereas if $k_{\bf A} = 0$ or $k_{\bf B} = 0$
\[
k_{{\bf B} \odot {\bf A}} = 0.
\]
\end{property}

\begin{proof}
If $k_{{\bf B} \odot {\bf A}} = F$, then the result holds trivially;
hence consider the case $k_{{\bf B} \odot {\bf A}} < F$.
The proof is by contradiction.
Let $S$ be the {\it smallest} number of linearly dependent columns that
can be drawn from ${\bf B} \odot {\bf A}$, denoted by
${\bf b}_{f_{1}} \otimes {\bf a}_{f_{1}}$, $\cdots$,
${\bf b}_{f_{S}} \otimes {\bf a}_{f_{S}}$.
Since $k_{{\bf B} \odot {\bf A}} < F$, $S \leq F$ and $k_{{\bf B} \odot {\bf A}} = S-1$.
Then it holds that
there exist $\mu_{1} \in \Complex, \cdots, \mu_{S} \in \Complex$, with
$\mu_{1} \neq 0, \cdots, \mu_{S} \neq 0$ (since $S$ is {\it smallest}) such that
\[
\mu_{1} {\bf b}_{f_{1}} \otimes {\bf a}_{f_{1}} + \cdots +
\mu_{S} {\bf b}_{f_{S}} \otimes {\bf a}_{f_{S}} = {\bf 0}_{IJ \times 1},
\]
or, equivalently,
\begin{equation}
\tilde {\bf A} diag \left( \left[ \mu_{1}, \cdots, \mu_{S} \right] \right)
\tilde {\bf B}^{T} = {\bf 0}_{I \times J},
\label{chap2eqn_loveboat}
\end{equation}
with
\[
\tilde {\bf A} := \left[ {\bf a}_{f_{1}}, \cdots, {\bf a}_{f_{S}} \right],
\]
and
\[
\tilde {\bf B} := \left[ {\bf b}_{f_{1}}, \cdots, {\bf b}_{f_{S}} \right].
\]
Invoking Sylvester's inequality
\[
0 = rank\left( {\bf 0} \right) = rank \left( \tilde {\bf A} diag \left( \left[ \mu_{1}, \cdots, \mu_{S} \right] \right)
\tilde {\bf B}^{T} \right) \geq
\]
\[
rank(\tilde {\bf A}) +
rank(\tilde {\bf B}) - S.
\]
However, by definition of k-rank
\[
rank(\tilde {\bf A}) \geq \min(k_{\bf A},S); ~~~
rank(\tilde {\bf B}) \geq \min(k_{\bf B},S),
\]
and thus
\begin{equation}
0 \geq \min(k_{\bf A},S) + \min(k_{\bf B},S) - S.
\label{chap2eqn_2considercases}
\end{equation}
Now consider the following cases for (\ref{chap2eqn_2considercases}):
\begin{itemize}
\item if $1 \leq S \leq \min(k_{\bf A},k_{\bf B})$,
then (\ref{chap2eqn_2considercases}) gives $0 \geq S \geq 1$, which is a contradiction;
\item else if $\min(k_{\bf A},k_{\bf B}) < S < \max(k_{\bf A},k_{\bf B})$,
then inequality (\ref{chap2eqn_2considercases}) gives $0 \geq \min(k_{\bf A},k_{\bf B}) + S - S = \min(k_{\bf A},k_{\bf B})$ $\geq 1$, another contradiction;
\item else if $\max(k_{\bf A},k_{\bf B}) \leq S$, then (\ref{chap2eqn_2considercases}) gives $0 \geq k_{\bf A}+k_{\bf B} - S$, so $S \geq k_{\bf A}+k_{\bf B}$ is the only option.
\end{itemize}
The conclusion is that if $k_{{\bf B} \odot {\bf A}} < F$, then $k_{{\bf B} \odot {\bf A}} = S-1 \geq k_{\bf A}+k_{\bf B} - 1$.
Note that $k_{\bf A} = 0$ if and only if ${\bf A}$ contains at least
one identically zero column, in which case the column-wise
Kronecker product will have at least one identically zero column,
hence its k-rank will be zero.
This completes the proof.
\end{proof}

One can show that full rank (even full $k$-rank) of both ${\bf A}$
and ${\bf B}$ does not necessarily guarantee that the Khatri--Rao
product ${\bf A} \odot {\bf B}$ is full rank (let alone full
$k$-rank). For example, let $F = 6$ and
${\bf A}$, ${\bf B}$ Vandermonde with the following generators:
\begin{equation*}
\label{JSJ-52} \alpha_{1} = 1, \alpha_{2} = 2, \alpha_{3} = 3,
\alpha_{4} = 4, \alpha_{5} = 5, \alpha_{6} = 6.
\end{equation*}
\begin{equation*}
\label{JSJ-53} \beta_{1} = 1, \beta_{2} = \sqrt{2}, \beta_{3}=
\sqrt{3}, \beta_{4} = \sqrt{4}, \beta_{5} = \sqrt{5}, \beta_{6} =
\sqrt{6}.
\end{equation*}
With this choice of generators, ${\bf A}$ and ${\bf B}$ are
full $k$-rank. When $I=3$ and $J=2$, the $6 \times 6$ Khatri--Rao product
${\bf A} \odot {\bf B}$ is
full rank, hence also full $k$-rank:
$k_{{\bf A} \odot {\bf B}} = r_{{\bf A} \odot {\bf B}} = 6$.
Now set $I=2$ and $J=3$; the Khatri--Rao product is still $6 \times 6$,
but its rank is $5$. As it turns out, this phenomenon is uncommon:
\begin{property} [Jiang, Sidiropoulos, ten Berge, '01] \label{asfrpkrp}
For a pair of matrices ${\bf A} \in \Complex^{I \times F}$ and ${\bf B}
\in \Complex^{J \times F}$,
\begin{equation}
\label{myJSJ-13} r_{ {\bf A} \odot {\bf B} } = k_{ {\bf A} \odot
{\bf B} } = \min(I J,F), ~~~P_{{\cal L}}(\Complex^{(I+J)F})-a.s.,
\end{equation}
where $P_{{\cal L}}(\Complex^{(I+J)F})$ is the distribution used to draw the
$(I+J)F$ complex elements of $A$ and $B$, assumed continuous with respect to
the Lebesgue measure in $\Complex^{(I+J)F}$. In plain words,
$r_{ {\bf A} \odot {\bf B} } = k_{ {\bf A} \odot
{\bf B} } = \min(I J,F)$ for almost every ${\bf A} \in \Complex^{I \times F}$,
${\bf B} \in \Complex^{J \times F}$.
\end{property}

\begin{proof}
If $I J \leq F$, it suffices to prove that
an {\it arbitrary selection} of $I J$ columns yields an almost surely
nonsingular matrix. Any such matrix is a square Khatri--Rao product,
and its determinant is an analytic function of
(a subset of) the elements of ${\bf A}$, and
${\bf B}$.
If, on the other hand, $I J \geq F$,
then it suffices to show that the upper $F \times F$ part
of the Khatri--Rao product is almost surely nonsingular. The
determinant of this upper square part is likewise analytic in
(a subset of) the elements of ${\bf A}$,
${\bf B}$. What remains is to show that these functions are non-trivial,
or, equivalently,

\begin{quote}
Show that, with $IJ \geq F$, it is possible to construct
a Khatri--Rao product whose upper square part is nonsingular,
for otherwise arbitrary $I, J, F$.
\end{quote}

The key idea is to pick ${\bf A}$,
${\bf B}$ in such a way that the resulting Khatri--Rao product is
a Vandermonde matrix with FFT-grid generators. This is done as
follows. Let ${\bf A}$ be Vandermonde with generators
$\alpha_f= e^{\sqrt{-1} \frac{2 \pi}{F} J
(f-1)}$, and likewise
${\bf B}$ Vandermonde with generators
$\beta_f = e^{\sqrt{-1} \frac{2 \pi}{F} (f-1)}$ for
$f=1, \cdots, F$. Then ${\bf A} \odot {\bf B}$ is
itself a Vandermonde matrix with generators $ (1, e^{\sqrt{-1}
\frac{2 \pi}{F}}, \cdots, e^{\sqrt{-1} \frac{2 \pi}{F}(F-1)} )$,
and its upper square part is therefore nonsingular. This completes
the proof.
\end{proof}

\begin{remark}
Note that the order of Khatri--Rao
multiplications only affects the order of rows in the final
result; in particular, the rank/k-rank of the final result
is not affected by the order in which
the multiplications are carried out.
\end{remark}

\subsection{Optimal scaling lemma}
\begin{proof} (Lemma \ref{oslem})
Let ${\bf U}$ be a basis of the orthogonal complement of ${\bf v}:= {\bf m} / ||{\bf m}||$. Then
\[
\left| \left| \tilde {\bf X}_3 - {\bf m} {\bf c}^T \right| \right|_F^2 = \left| \left| [{\bf v},~{\bf U}]^T \left( \tilde {\bf X}_3 - {\bf m} {\bf c}^T \right) \right| \right|_F^2
\]
\[
= \left| \left| \left[
                \begin{array}{c}
                  \frac{{\bf m}^T}{||{\bf m}||} \tilde {\bf X}_3 - ||{\bf m}|| {\bf c}^T\\
                  {\bf U}^T \tilde {\bf X}_3\\
                \end{array}
              \right]
 \right| \right|_F^2.
\]
It follows that
\[
\min_{{\bf c} \in {\cal C}} \left| \left| \tilde {\bf X}_3 - {\bf m} {\bf c}^T \right| \right|_F^2  \Longleftrightarrow
\min_{{\bf c} \in {\cal C}} \left| \left| \frac{{\bf m}^T}{||{\bf m}||} \tilde {\bf X}_3 - ||{\bf m}|| {\bf c}^T \right| \right|_F^2
\]
\[
\Longleftrightarrow
\min_{{\bf c} \in {\cal C}} \left| \left| \frac{{\bf m}^T}{||{\bf m}||^2} \tilde {\bf X}_3 - {\bf c}^T \right| \right|_F^2.
\]
\end{proof}

\subsection{Enforcing partial symmetry (${\bf B} = {\bf A}$) in CPD ALS}

Adopting a rank-1 update strategy leads to problems of type
\[
\min_{{\bf a}_{f}} \sum_{k=1}^K \left| \left| \tilde {\bf X}(:,:,k) - {\bf a}_{f} {\bf C}(k,f) {\bf a}_{f}^T  \right| \right|_F^2.
\]
Without loss of generality, we may assume $||{\bf a}_{f}||_2=1$ and absorb any scaling in the ${\bf C}$ matrix. Let ${\bf U}$ be a basis for the orthogonal complement of ${\bf a}_{f}$. Using $||{\bf M}||_F^2=\text{Tr}({\bf M}^T {\bf M})$, we have
\[
\left| \left| \tilde {\bf X}(:,:,k) - {\bf a}_{f} {\bf C}(k,f) {\bf a}_{f}^T  \right| \right|_F^2 = \left| \left| \tilde {\bf X}(:,:,k)\right| \right|_F^2 -
\]
\[
2 \text{Tr}\left((\tilde {\bf X}(:,:,k))^T {\bf a}_{f} {\bf C}(k,f) {\bf a}_{f}^T\right) + ({\bf C}(k,f))^2 ||{\bf a}_{f}||_2^4 =
\]
\[
\left| \left| \tilde {\bf X}(:,:,k)\right| \right|_F^2 - 2 \text{Tr}\left((\tilde {\bf X}(:,:,k))^T {\bf a}_{f} {\bf C}(k,f) {\bf a}_{f}^T\right) + ({\bf C}(k,f))^2.
\]
So we may equivalently maximize
\[
2 \text{Tr}\left((\tilde {\bf X}(:,:,k))^T {\bf a}_{f} {\bf C}(k,f) {\bf a}_{f}^T\right) = 2 {\bf a}_{f}^T (\tilde {\bf X}(:,:,k))^T {\bf a}_{f} {\bf C}(k,f)
\]
\[
= {\bf a}_{f}^T {\bf C}(k,f) \left((\tilde {\bf X}(:,:,k))^T + \tilde {\bf X}(:,:,k) \right) {\bf a}_{f},
\]
and collecting terms $\forall k$,
\[
\max_{||{\bf a}_{f}||_2=1} {\bf a}_{f}^T {\bf Q} {\bf a}_{f},
\]
\[
{\bf Q} := \sum_{k=1}^K  {\bf C}(k,f) \left( (\tilde {\bf X}(:,:,k))^T + \tilde {\bf X}(:,:,k) \right).
\]
So the optimal update for ${\bf a}_{f}$ amounts to solving for the eigenvector corresponding to the maximum eigenvalue of matrix ${\bf Q}$; see \cite{1011195}.

\subsection{CRB for matrix and CP tensor factorization}

The \crb~bound (CRB) \cite[Ch.~3]{kay1993fundamentals} is the most widely used estimation benchmark in signal processing. In many cases it is relatively easy to compute, and it is asymptotically achievable by maximum likelihood (ML) estimators in high signal to noise ratio (SNR) scenarios \cite[pp. 164]{kay1993fundamentals}. In other cases, there may be technical difficulties in deriving (or complexity issues in computing) the pertinent CRB; but due to the central role of this bound in signal processing research, work on developing CRB tools continues \cite{gorman1990lower,stoica1998cramer,stoica2001parameter,ben2009constrained}, thereby enlarging the set of problems for which the CRB can be used in practice.

After a brief review of the \crb~bound and some of its modern developments, we will derive the \crb~for both matrix and CP factorizations. The Fisher information matrices (FIM) for the matrix and CP factorization models are not affected by constraints imposed onto the latent factors, even though some constraints (like non-negativity) are crucial in terms of identifiability. We also discuss efficient ways to (peudo-)invert the FIM to avoid the massive computation and memory requirements of direct inversion when the problem size is moderately large, since the FIM can easily become huge as it is a symmetric matrix with number of rows equal to the number of parameters we want to estimate.

\subsubsection{The \crb~Bound}
Suppose a set of measurements $\y$ is drawn from a probability density function $p(\y;\th)$ parameterized by $\th$, and our goal is to estimate $\th$ given the realizations of $\y$. If the regularity condition $$\E{\score}=0$$ is satisfied, we can define the Fisher information matrix (FIM) as
\[
\fim = -\E{\nabla_{\th}^2 \log p(\y;\th)},
\]
which can be shown to be equal to~\cite{kay1993fundamentals}
\[
\fim = \E{\score\score^T};
\]
then for any unbiased estimator $\hat{\th}$, i.e., $\E{\hat{\th}}=\th$, we have
\[
\cov{\hat{\th}} = \E{(\hat{\th}-\th)(\hat{\th}-\th)^T} \succeq \fim^{-1},
\]
or we can simply take
\[
\E{\|\hat{\th}-\th\|^2}\geq \trace{\fim^{-1}}.
\]

A simple way to prove the CRB is as follows. Let us look at the following covariance
\begin{align}
\E{\begin{bmatrix}
\hat{\th}-\th \\
\score
\end{bmatrix}
\begin{bmatrix}
\hat{\th}-\th \\
\score
\end{bmatrix}^T} \nonumber\\
= \begin{bmatrix}
\cov{\hat{\th}} & \G \\
\G^T & \fim \\
\end{bmatrix} \succeq 0 \label{eq:proof}
\end{align}
where $\G = \E{(\hat{\th}-\th)\score^T}$. According to Schur complement~\cite[Appendix~A.5.5]{boyd2004convex}, if $\fim\succ 0$, then (\ref{eq:proof}) holds iff
\begin{equation}\label{eq:crbproof}
\cov{\hat{\th}}-\G\fim^{-1}\G^T \succeq 0.
\end{equation}
Looking at
\begin{align*}
\E{\score} & = \int_{\cal Y} \left(\score\right) p(\y;\th) d\y \\
			&= \int_{\cal Y} \nabla_{\th} p(\y;\th) d\y ,			
\end{align*}
suppose the support of the random variable $\y$, denoted as $\mathcal{Y}$, is independent of $\th$, then we can reverse the order of derivative and integration, leading to
\[
\E{\score} = \nabla_{\th} \int_{\cal Y} p(\y;\th) d\y = 0,
\]
which gives us the regularity condition. Then for the matrix $\G$ we have that
\begin{align*}
\G &= \E{(\hat{\th}-\th)\score^T} \\
&= \E{\hat{\th}\score^T} \\
&= \int_{\cal Y} \hat{\th} \left(\score\right)^T p(\y;\th) d\y \\
&= \int_{\cal Y} \hat{\th} \nabla_{\th}p(\y;\th)^T d\y \\
&= \jacob_{\th}\E{\hat{\th}}^T = \I,
\end{align*}
where we again used the fact that the order of integral and derivative can be reversed, and that $\hat{\th}$ is unbiased $\E{\hat{\th}}=\th$. Thus, we plug it back into (\ref{eq:crbproof}) and obtain the \crb~bound
\[
\cov{\hat{\th}} \succeq \fim^{-1}.
\]

If the FIM $\fim$ is singular, we can use the generalized Schur complement result~\cite[\S A.5.5]{boyd2004convex} to conclude that
\[
\begin{bmatrix}
\cov{\hat{\th}} & \G \\
\G^T & \fim \\
\end{bmatrix} \succeq 0
\]
if and only if
\[
\fim\succeq 0,~~(\I-\fim\fim^\dagger)\G=0,~~
\cov{\hat{\th}}-\G\fim^\dagger\G^T \succeq 0.
\]
This means that:
\begin{enumerate}
\item $\E{\|\hat{\th}-\th\|^2} \geq \trace{\fim^\dagger}$ is still a valid bound;
\item this looser bound is in theory not attainable, because $\I-\fim\fim^\dagger\neq 0$.
\end{enumerate}

\noindent {\bf CRB and identifiability:} It is natural to suspect that the singularity of FIM is caused by the fact that the model is not identifiable (meaning the solution is not unique in the noiseless case). However, identifiability in general neither implies nor is implied by a non-singular FIM. A famous example is given in~\cite{basu2000stability}: consider the scalar signal model
\[
y = \theta^2 + \nu, \text{where } \nu\sim\mathcal{N}(0,\sigma^2),
\]
the FIM with respect to $\theta$ is
\[
\Phi = \frac{4}{\sigma^2}\theta^2;
\]
interestingly, $\Phi=0$ if and only if $\theta=0$, the only identifiable point. Experience shows that the rank deficiency of FIM is usually related to trivial ambiguities of the problem, but not the critical ones. Take phase retrieval as an example, it has been shown that the FIM corresponding to this problem is always rank one deficient~\cite{qian2016icassp,qian2016phase}, which, by identifying its null space, seems to be highly related to the global phase ambiguity inherent to this problem. For certain measurement systems, e.g., 1D Fourier measurement, the problem is not identifiable besides the trivial phase ambiguity, but the rank deficiency is still one~\cite{huang2016fourier,huang2016phase}, which means the critical non-uniqueness issue is not revealed by the singularity of FIM. The practical implication for us is that, for trivial ambiguities, for example the permutation and scaling ambiguity in the matrix and CP factorization considered in this paper, when we simulate we should fix these trivial ambiguities in a consistent way, and then compare the MSE with the generalized CRB obtained from the pseudo-inverse of the singular FIM.

\noindent {\bf CRB under constraints:} Suppose for estimating $\th$ we have the prior information that
\[
\bm{f}(\th)\leq 0,~~ \bm{g}(\th)=0.
\]
Roughly speaking, the inequality constraints do not affect the CRB. For equality constraints, denote the Jacobian matrix of the vector function $\bm{g}(\th)$ at point $\th$ as $\jacob_{\th}\bm{g}(\th)$, i.e.,
\[
[~\jacob_{\th}\bm{g}(\th)~]_{i,j} = \frac{\partial g_i(\th)}{\theta_j};
\]
we can find a matrix $\Q$ with ortho-normal columns that spans the null space of $\jacob_{\th}\bm{g}(\th)$, i.e.,
\[
\jacob_{\th}\bm{g}(\th)^T\Q = 0,~~ \Q^T\Q = \I.
\]
Then the constrained CRB is modified as follows
\[
\E{\|\hat{\th}-\th\|^2} \geq \trace{\Q\left(\Q^T\fim\Q\right)^\dagger\Q^T}.
\]

\noindent {\bf CRB under Gaussian noise:} Suppose the data model admits the form
\begin{equation}\label{eq:nls}
\bm{y}=\ph(\th)+\bm{\nu},
\end{equation}
where $\bm{\nu}$ are i.i.d. Gaussian noise with variance $\sigma^2$, the most commonly used noise model in practice. In this case, it can be shown that the Fisher information matrix admits a very simple form~\cite{basu2000stability}, as presented in the following.
\begin{proposition}\label{ppst:crb4gauss}
The Fisher information matrix for the data model (\ref{eq:nls}) is given by
\[
\fim = \frac{1}{\sigma^2}\jacob_{\th}\ph(\th)^T\jacob_{\th}\ph(\th).
\]
\end{proposition}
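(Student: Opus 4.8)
The plan is to compute the Fisher information directly from its definition $\fim = \E{\score\,\score^T}$ (equivalently, via the negative expected Hessian), exploiting the explicit Gaussian density for the model \eqref{eq:nls}. First I would write the likelihood, with $n$ denoting the length of $\y$, as
\[
p(\y;\th) = (2\pi\sigma^2)^{-n/2}\exp\left(-\frac{1}{2\sigma^2}\|\y-\ph(\th)\|_2^2\right),
\]
so that the log-likelihood is $\log p(\y;\th) = \text{const} - \frac{1}{2\sigma^2}\|\y-\ph(\th)\|_2^2$, where the additive constant does not depend on $\th$ and disappears upon differentiation.

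Next I would differentiate the quadratic residual term by the chain rule. Since $\nabla_{\th}\,\tfrac{1}{2}\|\y-\ph(\th)\|_2^2 = -\jacob_{\th}\ph(\th)^T(\y-\ph(\th))$, the score becomes
\[
\score = \frac{1}{\sigma^2}\jacob_{\th}\ph(\th)^T\left(\y-\ph(\th)\right) = \frac{1}{\sigma^2}\jacob_{\th}\ph(\th)^T\bm{\nu},
\]
where I substituted $\y-\ph(\th)=\bm{\nu}$ from the model. This also verifies the regularity condition $\E{\score}=0$, since $\E{\bm{\nu}}=0$ and $\jacob_{\th}\ph(\th)$ is deterministic (a function of the true $\th$ alone).

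Then I would form the outer product and take the expectation, pulling the deterministic Jacobian outside:
\[
\fim = \E{\score\,\score^T} = \frac{1}{\sigma^4}\jacob_{\th}\ph(\th)^T\,\E{\bm{\nu}\bm{\nu}^T}\,\jacob_{\th}\ph(\th).
\]
Invoking the i.i.d. Gaussian assumption $\E{\bm{\nu}\bm{\nu}^T}=\sigma^2\I$ collapses one factor of $1/\sigma^2$ and delivers the claimed expression $\fim = \frac{1}{\sigma^2}\jacob_{\th}\ph(\th)^T\jacob_{\th}\ph(\th)$.

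There is no genuine obstacle here; the only points requiring care are the sign and transpose bookkeeping in the chain-rule step and the observation that $\jacob_{\th}\ph(\th)$ may be extracted from the expectation. As a cross-check I would note that the Hessian route agrees: $\nabla_{\th}^2\log p$ equals $-\frac{1}{\sigma^2}\jacob_{\th}\ph(\th)^T\jacob_{\th}\ph(\th)$ plus terms linear in the residual $\bm{\nu}$ (stemming from second derivatives of $\ph$), and those terms vanish in expectation because $\E{\bm{\nu}}=0$, so $-\E{\nabla_{\th}^2\log p}$ reproduces exactly the same FIM.
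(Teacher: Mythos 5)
Your proof is correct, but it takes a different route from the paper's. The paper works with the curvature form of the Fisher information: it writes down the Hessian of the nonlinear least-squares log-likelihood, $-\nabla_{\th}^2\log p(\y;\th) = \frac{1}{\sigma^2}\bigl(\jacob_{\th}\ph(\th)^T\jacob_{\th}\ph(\th) + \sum_i (y_i-\varphi_i(\th))\nabla_{\th}^2\varphi_i(\th)\bigr)$ (citing a standard NLS reference), and then observes that the residual-weighted second-derivative terms vanish in expectation because $\E{y_i}=\varphi_i(\th)$. You instead use the outer-product form $\fim = \E{\score\,\score^T}$: compute the score by the chain rule, substitute $\y-\ph(\th)=\bm{\nu}$, and use $\E{\bm{\nu}\bm{\nu}^T}=\sigma^2\I$. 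Your route is arguably more elementary -- it never requires second derivatives of $\ph$ (so it works even if $\ph$ is only once differentiable), it needs no external citation, and it verifies the regularity condition $\E{\score}=0$ along the way. What the paper's route buys is an explicit link to Gauss--Newton: it exhibits the FIM as exactly the Gauss--Newton approximation of the Hessian (the part that survives when residuals are small or average out), which dovetails with the paper's NLS algorithm section where $\jacob_{\th}\ph(\th)^T\jacob_{\th}\ph(\th)$ reappears as the approximate Hessian. Your closing cross-check via the expected Hessian is, in fact, precisely the paper's proof, so you have effectively given both arguments.
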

\begin{proof}
For i.i.d. Gaussian noise $\bm{\nu}$, the log-likelihood is simply given by
\[
\log p(\bm{y};\th)=-\frac{1}{2\sigma^2}\|\bm{y}-\ph(\th)\|^2,
\]
a non-linear least squares function. The Hessian matrix of it at point $\th$ has the form~\cite[\S1.5]{bertsekas1999nonlinear}
\begin{align*}
 &-\nabla_{\th}^2\log p(\bm{y};\th) \\
=& \frac{1}{\sigma^2}
\left( \jacob_{\th}\ph(\th)^T\jacob_{\th}\ph(\th) +
\sum_i (y_i-\varphi_i(\th))\nabla_{\th}^2\varphi_i(\th) \right).
\end{align*}
The FIM is taken as the expected value of $\nabla_{\th}^2\log p(\bm{y};\th)$ over $\bm{y}$. Notice that in the above equation, $\bm{y}$ only appears in the second term; furthermore, according to our data model, we have that $\E{y_i}=\varphi_i(\th)$, which means the second term becomes zero after we take expectation. Hence, we have that
\[
\fim = -\E{\nabla_{\th}^2\log p(\bm{y};\th)}
= \frac{1}{\sigma^2}\jacob_{\th}\ph(\th)^T\jacob_{\th}\ph(\th).
\]
\end{proof}

\noindent {\bf CRB under non-Gaussian noise:} In some cases we may observe that the noise is more heavy-tailed, thus we may wish to model noise different from the Gaussian distribution, for example the Laplacian or Cauchy. Luckily, it has been shown that the CRB for a family of non-Gaussian noise models are simply scaled versions of their Gaussian counter-parts~\cite{swami1996cramer,swami2002some}. Specifically, we have shown that the FIM under i.i.d. Gaussian noise with zero mean and variance $\sigma^2$ takes the form
\[
\fim = \frac{1}{\sigma^2}\ufim,
\]
which could be derived via an easier way as we discussed before; then for the same model, if the additive noise $\nu$ is changed to Laplacian noise
\[
p(\nu)=\frac{1}{2b}\exp\left(-\frac{|\nu|}{b}\right),
\]
the modified CRB is
\[
\fim = \frac{2}{b^2}\bm{\ufim};
\]
for Cauchy noise with distribution
\[
p(\nu) = \frac{1}{\pi\gamma}\left(\frac{\gamma^2}{\nu^2+\gamma^2}\right),
\]
the CRB becomes
\[
\fim = \frac{1}{2\gamma^2}\bm{\ufim}.
\]

\subsubsection{\crb~Bound for Matrix Factorization Models}
Consider the $m \times n$ matrix generated as
\begin{equation*}
    \Y = \W\H^T + \N,
\end{equation*}
where $\W$ is $m \times k$, $\H$ is $n \times k$, and the elements of $\N$ are drawn from an i.i.d. Gaussian distribution with zero-mean and variance $\sigma^2$. Then the log-likelihood of $\Y$ parameterized by $\W$ and $\H$ is
\begin{align*}
 & \log p(\Y;\W,\H) \\
=& -\frac{1}{2\sigma^2}\left\|\Y-\W\H^T\right\|_F^2 \\
=& -\frac{1}{2\sigma^2}\left\|\vec{\Y}-\ph(\th)\right\|^2,
\end{align*}
where the unknown parameters we want to estimate, $\W$ and $\H$, are stacked into a single long vector of size $(m+n)k$ as follows
\[
\th = \left[~\vec{\W}^T~\vec{\H}^T~\right]^T,
\]
and the non-linear function
\begin{align}
\ph(\th) & = \vec{\W\H^T} \nonumber\\
 & = (\H\otimes\I_m)\vec{\W} \label{eq:phW}\\
 & = \C_{n,m}\vec{\H\W^T} \nonumber\\
 & = \C_{n,m}(\W\otimes\I_n)\vec{\H}. \label{eq:phH}
\end{align}
Here we use $\C_{m,n}$ to represent the commutation matrix~\cite{magnus1979commutation} of size $mn \times mn$, which is a permutation matrix that has the following properties:
\begin{enumerate}
\item $\C_{m,n}\vec{\Sb}=\vec{\Sb^T}$, where $\Sb$ is $m \times n$;
\item $\C_{p,m}(\Sb\otimes\T)=(\T\otimes\Sb)\C_{q,n}$, where $\T$ is $p \times q$;
\item $\C_{p,m}(\Sb\odot\T)=\T\odot\Sb$;
\item $\C_{n,m}=\C_{m,n}^T=\C_{m,n}^{-1}$;
\item $\C_{mp,n}\C_{mn,p}=\C_{m,np}$.
\end{enumerate}

\noindent {\bf The Fisher Information Matrix:} Invoking Proposition~\ref{ppst:crb4gauss}, the FIM for matrix factorization model under Gaussian noise has the form
\[
\fim = \frac{1}{\sigma^2}\ufim = \frac{1}{\sigma^2}\jacob_{\th}\ph(\th)^T\jacob_{\th}\ph(\th).
\]
The Jacobian matrix of $\ph(\th)$ can be partitioned into two blocks
\[
\jacob_{\th}\ph(\th) = \begin{bmatrix}
\jacob_{\W}\ph(\th) & \jacob_{\H}\ph(\th)
\end{bmatrix},
\]
and according to \eqref{eq:phW} and \eqref{eq:phH}, we have that
\begin{align*}
\jacob_{\W}\ph(\th) &= \H\otimes\I_m, \\
\jacob_{\H}\ph(\th) &= \C_{n,m}(\W\otimes\I_n).
\end{align*}
Using properties of the commutation matrices, we have that
\begin{align*}
\jacob_{\W}\ph(\th)^T\jacob_{\W}\ph(\th) &= \H^T\H\otimes\I_m, \\
\jacob_{\H}\ph(\th)^T\jacob_{\H}\ph(\th) &= \W^T\W\otimes\I_n,
\end{align*}
and
\begin{align*}
 &\jacob_{\W}\ph(\th)^T\jacob_{\H}\ph(\th) \\
=& (\H^T\otimes\I_m)\C_{n,m}(\W\otimes\I_n) \\
=& (\H^T\otimes\I_m)(\I_n\otimes\W)\C_{n,k} \\
=& (\H^T\otimes\W)\C_{n,k} \\
=& (\I_k\otimes\W)(\H^T\otimes\I_k)\C_{n,k} \\
=& (\I_k\otimes\W)\C_k(\I_k\otimes\H^T).
\end{align*}

Hence, we can then express the $(m+n)k \times (m+n)k$ Fisher information matrix $\fim=\sigma^{-2}\ufim$ compactly as follows
\begin{align}\label{eq:fim4mf}
 & \ufim \nonumber\\
=&\begin{bmatrix}
\H^T\H \otimes \eye{m} & \!\!\! (\eye{k}\otimes\W)\C_k(\eye{k}\otimes\H)^T \\
(\eye{k}\otimes\H)\C_k(\eye{k}\otimes\W)^T \!\!\! & \W^T\W \otimes \eye{n}
\end{bmatrix} \\
= &\begin{bmatrix}
\H^T\H \otimes \eye{m} & 0 \\
0 & \W^T\W \otimes \eye{n}
\end{bmatrix} + \nonumber\\
&\begin{bmatrix}
\eye{k}\otimes\W & 0 \\
0 & \eye{k}\otimes\H
\end{bmatrix}\begin{bmatrix}
0 & \C_k \\
\C_k & 0
\end{bmatrix}\begin{bmatrix}
\eye{k}\otimes\W & 0 \\
0 & \eye{k}\otimes\H
\end{bmatrix}^T. \nonumber
\end{align}

The FIM for the matrix factorization model (\ref{eq:fim4mf}) is rank deficient, as shown in the following proposition.
\begin{proposition}\label{ppst:fimrank-mf}
If $\W$ and $\H$ both have full column rank, then the rank of the $(m+n)k \times (m+n)k$ FIM $\fim$ is at most $(m+n)k-k^2$.
\end{proposition}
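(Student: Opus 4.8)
The plan is to exhibit an explicit $k^2$-dimensional subspace inside the null space of $\ufim$, from which the bound $\rank{\ufim} \leq (m+n)k - k^2$ follows by rank--nullity (and, since $\fim = \sigma^{-2}\ufim$ with $\sigma^2>0$, the same deficiency holds for $\fim$). The natural source of this null space is the rotational ambiguity of matrix factorization already noted in the excerpt: $\W\H^T = (\W\M)(\H\M^{-T})^T$ for every nonsingular $k\times k$ matrix $\M$. Because the noise is real Gaussian and $\ufim = \jacob_{\th}\ph(\th)^T\jacob_{\th}\ph(\th)$ by Proposition~\ref{ppst:crb4gauss}, we have $\Null{\ufim} = \Null{\jacob_{\th}\ph(\th)}$ (indeed $\ufim\mathbf{x}=0$ forces $\|\jacob_{\th}\ph(\th)\mathbf{x}\|^2=0$). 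So it suffices to produce $k^2$ independent parameter directions along which $\ph(\th)=\vec{\W\H^T}$ is stationary to first order.

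First I would linearize the ambiguity about $\M=\eye{k}$: setting $\M=\eye{k}+\varepsilon\mathbf{Z}$ for an arbitrary $k\times k$ matrix $\mathbf{Z}$ gives $\M^{-T}\approx \eye{k}-\varepsilon\mathbf{Z}^T$, hence the perturbation $\delta\W=\W\mathbf{Z}$, $\delta\H=-\H\mathbf{Z}^T$. Using $\jacob_{\W}\ph(\th)=\H\otimes\eye{m}$, $\jacob_{\H}\ph(\th)=\C_{n,m}(\W\otimes\eye{n})$, together with $\vec{\mathbf{A}\mathbf{X}\mathbf{B}}=(\mathbf{B}^T\otimes\mathbf{A})\vec{\mathbf{X}}$ and the commutation property $\C_{n,m}\vec{\mathbf{S}}=\vec{\mathbf{S}^T}$, a short computation shows the two blocks cancel, $\vec{\W\mathbf{Z}\H^T}-\vec{\W\mathbf{Z}\H^T}=0$, so every such direction lies in $\Null{\jacob_{\th}\ph(\th)}$ for all $\mathbf{Z}$.

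Next I would assemble these directions into one matrix. Writing $\vec{\W\mathbf{Z}}=(\eye{k}\otimes\W)\vec{\mathbf{Z}}$ and $\vec{\H\mathbf{Z}^T}=(\eye{k}\otimes\H)\C_k\vec{\mathbf{Z}}$, the stacked perturbation $\delta\th=[\,\vec{\delta\W}^T\ \vec{\delta\H}^T\,]^T$ equals $\L\vec{\mathbf{Z}}$, where
\[
\L = \begin{bmatrix} \eye{k}\otimes\W & 0 \\ 0 & \eye{k}\otimes\H \end{bmatrix}
\begin{bmatrix} \eye{k^2} \\ -\C_k \end{bmatrix}
\]
is $(m+n)k\times k^2$ and mirrors the $\bm{\Upsilon}\mathbf{E}$ structure used later for the CPD case. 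Its columns span the candidate null subspace.

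The only genuinely technical step, and the place where the hypothesis is used, is to verify $\rank{\L}=k^2$, which pins the nullity of $\jacob_{\th}\ph(\th)$ (and of $\ufim$) at least at $k^2$. The second factor above has full column rank because its top block is the identity; the block-diagonal first factor has full column rank exactly when both $\eye{k}\otimes\W$ and $\eye{k}\otimes\H$ do, i.e.\ when $\W$ and $\H$ are full column rank, since $\rank{\eye{k}\otimes\W}=k\,\rank{\W}$. A product of two full-column-rank factors with matching inner dimension is again full column rank, so $\rank{\L}=k^2$. I expect this verification to be the main (if modest) obstacle, as everything else is a formal consequence of the factorization ambiguity; once it is in hand, $\dim\Null{\ufim}\geq k^2$ yields $\rank{\ufim}\leq (m+n)k-k^2$, completing the proof.
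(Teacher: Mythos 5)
Your proof is correct, and it reaches the paper's bound by a recognizably different route. The paper's own proof never leaves the FIM: it computes $\ufim\z$ for a general partitioned vector $\z=[\,\vec{{\bf Z}_1}^T~\vec{{\bf Z}_2}^T\,]^T$, simplifies the two blocks to $\vec{{\bf Z}_1\H^T\H+\W{\bf Z}_2^T\H}$ and $\vec{\H{\bf Z}_1^T\W+{\bf Z}_2\W^T\W}$, and then exhibits the $k^2$ rank-one choices ${\bf Z}_1=\w_\kappa\e_l^T$, ${\bf Z}_2=-\h_l\e_\kappa^T$, $\kappa,l=1,\dots,k$, for which both blocks vanish. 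These are exactly your perturbations $\delta\W=\W\mathbf{Z}$, $\delta\H=-\H\mathbf{Z}^T$ specialized to $\mathbf{Z}=\e_\kappa\e_l^T$, so the null space being identified is the same; what differs is the derivation and the independence argument. You (i) pass through the Jacobian, using $\Null{\jacob_{\th}\ph(\th)^T\jacob_{\th}\ph(\th)}=\Null{\jacob_{\th}\ph(\th)}$, and obtain the null directions conceptually by linearizing the gauge invariance $\W\H^T=(\W\M)(\H\M^{-T})^T$ at $\M=\eye{k}$; and (ii) collect all of them as columns of $\L$ and prove $\rank{\L}=k^2$ by factoring $\L$ into two full-column-rank factors. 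Item (ii) is a genuine gain in rigor: the paper merely asserts that its $k^2$ solutions ``are linearly independent, if $\W$ and $\H$ both have full column rank,'' whereas your factorization argument actually proves it. A second benefit is economy: your $\L$ is precisely the matrix the paper constructs afterwards when computing $\ufim^\dagger$ via Lemma~\ref{lmm:pinv} (which requires $\range{\L}=\Null{\ufim}$), so your route delivers the proposition and that later ingredient in one stroke. What the paper's more computational route buys in exchange is self-containedness: it uses nothing beyond matrix algebra on the FIM blocks --- no appeal to the Gram-matrix null-space identity and no Jacobian expressions --- so it can be read independently of Proposition~\ref{ppst:crb4gauss}.
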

\begin{proof}
This proposition is equivalent to the claim that the linear system $\ufim\z=0$ has at least $k^2$ linearly independent nonzero solutions. Let
\[
\z = [~\z_1^T~\z_2^T~]^T =
[~\vec{{\bf Z}_1}^T~\vec{{\bf Z}_2}^T~]^T,
\]
where ${\bf Z}_1$ is an $m\times k$ matrix, and ${\bf Z}_2$ is an $n\times k$ matrix. Then
\begin{align*}
 &\ufim\z \\
=& \begin{bmatrix}
\H^T\H\otimes\eye{m}\vec{{\bf Z}_1} + (\eye{k}\otimes\W)\C_k(\eye{k}\otimes\H)^T\vec{{\bf Z}_2} \\
(\eye{k}\otimes\H)\C_k(\eye{k}\otimes\W)^T\vec{{\bf Z}_1} + \W^T\W\otimes\eye{n}\vec{{\bf Z}_1}
\end{bmatrix} \\
=& \begin{bmatrix}
\vec{{\bf Z}_1\H^T\H + \W{\bf Z}_2^T\H} \\
\vec{\H{\bf Z}_1^T\W + {\bf Z}_2\W^T\W}
\end{bmatrix}.
\end{align*}
Now let ${\bf Z}_1 = \w_\kappa^{}\e_l^T$, ${\bf Z}_2 = -\h_l^{}\e_\kappa^T$, where $\kappa,l = 1,2,...,k$, then $\z\neq 0$ and
\[
\ufim\z = \begin{bmatrix}
\vec{\left(\w_\kappa^{}\h_l^T-\w_\kappa^{}\h_l^T\right)\H} \\
\vec{\left(\h_l^{}\w_\kappa^T-\h_l^{}\w_\kappa^T\right)\W}
\end{bmatrix} = 0.
\]
Thus, we have found $k^2$ solutions in that form, and indeed they are linearly independent, if $\W$ and $\H$ both have full column rank.
\end{proof}

\noindent {\bf Computing the \crb~Bound:} For classical CRB, once we have derived the FIM, the CRB is simply given by the inverse of FIM. As we have argued in Proposition~\ref{ppst:fimrank-mf}, the FIM for matrix factorization models is always rank deficient. Nevertheless, pseudo-inverse of the FIM can be used to compute a lowerbound, albeit not necessarily attainable in theory. In terms of identifiability, it is well-known that additional constraints are needed to insure uniqueness of the solution; however, as we have argued, simple constraints like non-negativity can provide identifiability under mild conditions, and in fact does not affect the CRB since it can be represented as inequality constraints. Therefore, we discuss how to efficiently compute the psuedo-inverse of the FIM without modifying it to accommodate any equality constraints.

Without exploiting any structure of the matrix, the usual way to calculate the pseudo-inverse is by using the singular value decomposition (SVD), which entails complexity approximately cubic in the matrix dimension. The FIM for matrix factorization is $(m+n)k \times (m+n)k$, and the complexity of brute-force pseudo-inversion via the SVD is problematic. At first glance, the FIM in (\ref{eq:fim4mf}) exhibits good structure: $\fim$ given in (\ref{eq:fim4mf}) is the summation of a non-singular matrix and a low rank term. However, we cannot directly apply the matrix inversion lemma (Woodbury's identity) or the blockwise inversion formula (cf. \cite{petersen2006matrix}), simply because they are both singular, as we have argued in Propositions \ref{ppst:fimrank-mf}.

There exist similar results for the pseudo-inverse, but the formulas are very complicated. In fact, $\fim$ also has Kronecker structure, and it is appealing to try using the following Property of the Kronecker product \cite{petersen2006matrix}
\begin{equation*}
    (\A \otimes \B)^{\dag} = \A^{\dag} \otimes \B^{\dag}
\end{equation*}
to greatly reduce the computation complexity. However, the formulas are so complicated that such structure would be destroyed. Therefore, in this subsection we seek specialized methods to compute the pseudo-inverse of $\fim$.

The basic idea of our method is based on the fact that we have not only identified the singularity but also bases for the null space of $\fim$. In this case, the basis of the null space helps us to calculate the pseudo-inverse by using the techniques for calculating the inverse, as described in the following lemma proven in the supplementary material of~\cite{huang2014spm}.
\begin{lemma}\label{lmm:pinv}
Let matrix $\M$ be symmetric and singular, and the matrix $\L$ satisfying $\range{\L}=\Null{\M}$, then
\begin{equation}\label{eq:pinv}
    \M^{\dag} = (\M + \L\L^T)^{-1}
                        - (\L^{\dag})^T \L^{\dag}.
\end{equation}
\end{lemma}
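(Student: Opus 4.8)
The plan is to exploit the fact that symmetry of $\M$ forces an orthogonal splitting $\Real^n = \range{\M} \oplus \Null{\M}$, and to recognize the correction term as a genuine pseudo-inverse. Let $\P$ denote the orthogonal projector onto $\range{\M}$ and $\Q = \I - \P$ the orthogonal projector onto $\Null{\M}$; symmetry gives $\range{\M} \perp \Null{\M}$, so these are well defined and complementary with $\P + \Q = \I$. I would first record the Moore--Penrose facts I intend to use: $\M\M^{\dagger} = \M^{\dagger}\M = \P$, that $\range{\M^{\dagger}} = \range{\M}$, and the identity $(\L^{\dagger})^T \L^{\dagger} = (\L\L^T)^{\dagger}$ (which always holds for a matrix times its transpose, as one checks from the SVD). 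Since $\range{\L\L^T} = \range{\L} = \Null{\M}$ by hypothesis, the symmetric positive semidefinite matrix $(\L\L^T)^{\dagger}$ has range $\Null{\M}$ and satisfies $\L\L^T (\L\L^T)^{\dagger} = \Q$.

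The core computation is to verify the candidate inverse. I would show $(\M + \L\L^T)(\M^{\dagger} + (\L\L^T)^{\dagger}) = \I$ by expanding into four terms. The two diagonal terms give $\M\M^{\dagger} = \P$ and $\L\L^T(\L\L^T)^{\dagger} = \Q$, summing to $\P + \Q = \I$. The two cross terms vanish: $\M (\L\L^T)^{\dagger} = 0$ because $(\L\L^T)^{\dagger}$ maps into $\Null{\M}$, and $\L\L^T \M^{\dagger} = 0$ because $\M^{\dagger}$ maps into $\range{\M} = \Null{\L\L^T}$ (using $\Null{\L\L^T} = \range{\L}^{\perp} = \Null{\M}^{\perp} = \range{\M}$). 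Both cancellations hinge precisely on the orthogonality of $\range{\M}$ and $\Null{\M}$, i.e., on symmetry of $\M$.

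Before concluding I must justify that $\M + \L\L^T$ is actually invertible, so that $(\M + \L\L^T)^{-1}$ in the statement is meaningful and the verified right-inverse is the genuine inverse. I would argue this by the orthogonal decomposition: if $(\M + \L\L^T)x = 0$, split $x = x_R + x_N$ with $x_R \in \range{\M}$ and $x_N \in \Null{\M}$; then $\M x \in \range{\M}$ and $\L\L^T x \in \Null{\M}$, so orthogonality forces $\M x = 0$ and $\L\L^T x = 0$ separately, whence $x_R = 0$ (as $\M$ is injective on its range) and $x_N = 0$ (as $\L\L^T$ is injective on $\range{\L} = \Null{\M}$). With invertibility in hand, the product identity yields $(\M + \L\L^T)^{-1} = \M^{\dagger} + (\L\L^T)^{\dagger}$, and substituting $(\L\L^T)^{\dagger} = (\L^{\dagger})^T \L^{\dagger}$ gives the claim after rearrangement.

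The main obstacle is conceptual rather than computational: the entire argument rests on the orthogonal complementarity $\range{\M} \perp \Null{\M}$, which is exactly what symmetry of $\M$ buys and what makes the cross terms disappear; without it the stated formula would fail. A secondary point requiring care is the reverse-order pseudo-inverse identity $(\L\L^T)^{\dagger} = (\L^{\dagger})^T \L^{\dagger}$, which I would either cite or confirm directly by checking the four Penrose conditions.
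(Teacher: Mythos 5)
Your proof is correct and complete. Note that the paper itself never proves this lemma: both in the main text and in the supplementary material it is invoked with a pointer to the supplementary material of \cite{huang2014spm}, so there is no in-paper argument to compare yours against, and your write-up has to stand on its own merits --- which it does. The chain of facts you rely on all check out: symmetry of $\M$ gives the orthogonal splitting $\Real^n = \range{\M}\oplus\Null{\M}$; the reverse-order identity $(\L^{\dagger})^T\L^{\dagger} = (\L\L^T)^{\dagger}$ is a routine SVD verification; $\range{(\L\L^T)^{\dagger}} = \range{\L\L^T} = \range{\L} = \Null{\M}$ and $\Null{\L\L^T} = \range{\L}^{\perp} = \range{\M}$ kill the two cross terms, while the diagonal terms produce the complementary projectors, so
\begin{equation*}
(\M+\L\L^T)\left(\M^{\dagger} + (\L\L^T)^{\dagger}\right) = \P + \Q = \I ,
\end{equation*}
and subtracting $(\L\L^T)^{\dagger} = (\L^{\dagger})^T\L^{\dagger}$ gives \eqref{eq:pinv}. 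One small economy you could take: since $\M+\L\L^T$ is square, exhibiting a right inverse already forces it to be nonsingular, so your separate injectivity argument is not strictly required --- though it is harmless and makes the appearance of $(\M+\L\L^T)^{-1}$ in the statement legitimate before any algebra is done. Your closing observation is also well placed: the entire mechanism is the orthogonality $\range{\M}\perp\Null{\M}$ purchased by symmetry; without it the conclusion genuinely fails (e.g., for a nilpotent $2\times 2$ Jordan block with $\L$ spanning its null space, $\M+\L\L^T$ is not even invertible).
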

In addition, we will also use the matrix inversion lemma
\begin{equation*}
(\A+\B\C\D)^{-1} = \A^{-1} - \A^{-1}\B(\C^{-1}+\D\A^{-1}\B)^{-1}\D\A^{-1},
\end{equation*}
to ease the computation of matrix inverse.

Now we are ready to derive a computationally efficient way of calculating the pseudo-inverse of $\ufim$. Recall that we have fully identified the null space of $\ufim$ in Proposition~\ref{ppst:fimrank-mf}, and a basis of its null space is  of the form
\begin{align*}
\z =& \begin{bmatrix}
~~~\vec{\w_\kappa^{}\e_l^T} \\ -\vec{\h_l^{}\e_\kappa^T}
\end{bmatrix} = \begin{bmatrix}
\eye{k}\otimes\W & 0 \\ 0 & \eye{k}\otimes\H
\end{bmatrix}\begin{bmatrix}
~~\e_\kappa\otimes\e_l \\ -\e_l\otimes\e_\kappa
\end{bmatrix} \\
=&\begin{bmatrix}
\eye{k}\otimes\W & 0 \\ 0 & \eye{k}\otimes\H
\end{bmatrix}\begin{bmatrix}
\eye{k^2} & 0 \\ 0 & \C_k
\end{bmatrix}\begin{bmatrix}
~~\e_\kappa\otimes\e_l \\ -\e_\kappa\otimes\e_l
\end{bmatrix},
\end{align*}
Thus, we can stack all the vectors of this form and define the matrix $\L$ as
\begin{align*}
\L &= \begin{bmatrix}
\eye{k}\otimes\W & 0 \\ 0 & \eye{k}\otimes\H
\end{bmatrix}\begin{bmatrix}
\eye{k^2} & 0 \\ 0 & \C_k
\end{bmatrix}\begin{bmatrix}
~~\eye{k^2} \\ -\eye{k^2}
\end{bmatrix} 								\\
&= \begin{bmatrix}
\eye{k}\otimes\W & 0 \\ 0 & \eye{k}\otimes\H
\end{bmatrix}\begin{bmatrix}
~~\eye{k^2} \\ -\C_k
\end{bmatrix} = \begin{bmatrix}
\eye{k}\otimes\W \\ -\left(\eye{k}\otimes\H\right)\C_k
\end{bmatrix},
\end{align*}
whose columns are linearly independent, thus we can write its pseudo-inverse explicitly as
\begin{align*}
 &\L^\dagger \\
=& \inv{\L^T\L}\L^T \\
=& \inv{\eye{k}\otimes\W^T\W + \C_k\left(\eye{k}\otimes\H^T\H\right)\C_k}
\begin{bmatrix}
\eye{k}\otimes\W \\ -\left(\eye{k}\otimes\H\right)\C_k
\end{bmatrix}								\\
=& \inv{\eye{k}\otimes\W^T\W+\H^T\H\otimes\eye{k}}
\begin{bmatrix}
\eye{k}\otimes\W \\ -\left(\eye{k}\otimes\H\right)\C_k
\end{bmatrix}^T,
\end{align*}
and we have that $\range{\L}=\Null{\ufim}$. Then we can ``complete'' the range of $\ufim$ via defining
\begin{align*}
& \fullfim = \ufim + \L\L^T 				\\
=& \begin{bmatrix}
\H^T\H \otimes \eye{m} & 0 \\
0 & \W^T\W \otimes \eye{n}
\end{bmatrix}								\\
+&\begin{bmatrix}
\eye{k}\otimes\W & 0 \\
0 & \eye{k}\otimes\H
\end{bmatrix}\begin{bmatrix}
0 & \C_k \\
\C_k & 0
\end{bmatrix}\begin{bmatrix}
\eye{k}\otimes\W & 0 \\
0 & \eye{k}\otimes\H
\end{bmatrix}^T 							\\
+& \begin{bmatrix}
\eye{k}\otimes\W & 0 \\
0 & \eye{k}\otimes\H
\end{bmatrix}\begin{bmatrix}
\eye{k^2} & \!\!\!-\C_k \\
-\C_k\!\!\! & \eye{k^2}
\end{bmatrix}\begin{bmatrix}
\eye{k}\otimes\W & 0 \\
0 & \eye{k}\otimes\H
\end{bmatrix}^T								\\
=& \begin{bmatrix}
\fullfim_{\W} & 0 \\ 0 & \fullfim_{\H}
\end{bmatrix},
\end{align*}
where
\begin{align*}
\fullfim_{\W} &= \H^T\H\otimes\eye{m} + (\eye{k}\otimes\W)(\eye{k}\otimes\W)^T, \\
\fullfim_{\H} &= \W^T\W\otimes\eye{n} + (\eye{k}\otimes\H)(\eye{k}\otimes\H)^T,
\end{align*}
which is, surprisingly, block diagonal, and each diagonal block can be inverted easily using matrix inversion lemma as in \eqref{eq:invOmegaa}-\eqref{eq:invOmegac}, and finally $\ufim^\dagger$ given in \eqref{eq:pinvPsi}, thanks to Lemma~\ref{lmm:pinv}.

In a lot of cases we are only interested in evaluating how small $\|\W-\hat{\W}\|_F^2$ and $\|\H-\hat{\H}\|_F^2$ can be, on average. We can then define $\beta_{\W}$ and $\beta_{\H}$ as in \eqref{eq:betaWH}, and they are the \crb~bound for the matrix factorization model, i.e.,
\begin{align*}
\E{\|\W-\hat{\W}\|_F^2} &\geq \sigma^2\beta_{\W}, \\
\E{\|\H-\hat{\H}\|_F^2} &\geq \sigma^2\beta_{\H},
\end{align*}
for any unbiased estimators $\hat{\W}$ and $\hat{\H}$.

\begin{figure*}[t]
\begin{subequations}
\begin{align}
\fullfim^{-1} = & \begin{bmatrix}
\fullfim_{\W}^{-1} & 0 \\ 0 & \fullfim_{\H}^{-1}
\end{bmatrix}, 							\label{eq:invOmegaa}	\\
\fullfim_{\W}^{-1} = & \inv{\H^T\H}\otimes\eye{m} -
\left(\inv{\H^T\H}\otimes\W\right)
\inv{\eye{k^2}+\inv{\H^T\H}\otimes\W^T\W}
\left(\inv{\H^T\H}\otimes\W^T\right),	\label{eq:invOmegab}	\\
\fullfim_{\H}^{-1} = & \inv{\W^T\W}\otimes\eye{n} -
\left(\inv{\W^T\W}\otimes\H\right)
\inv{\eye{k^2}+\inv{\W^T\W}\otimes\H^T\H}
\left(\inv{\W^T\W}\otimes\H^T\right),	\label{eq:invOmegac}	\\
\ufim^\dag = & \begin{bmatrix}
\fullfim_{\W}^{-1} & 0 \\ 0 & \fullfim_{\H}^{-1}
\end{bmatrix} - \begin{bmatrix}
\eye{k}\otimes\W \\ -\left(\eye{k}\otimes\H\right)\C_k
\end{bmatrix}
\invtwo{\eye{k}\otimes\W^T\W+\H^T\H\otimes\eye{k}}
\begin{bmatrix}
\eye{k}\otimes\W \\ -\left(\eye{k}\otimes\H\right)\C_k
\end{bmatrix}^T,						\label{eq:pinvPsi}
\end{align}
\end{subequations}
\begin{subequations}\label{eq:betaWH}
\begin{align}
\beta_{\W} =&~ \trace{\inv{\H^T\H}\otimes\eye{m}} -
  \trace{\inv{\eye{k^2}+\inv{\H^T\H}\otimes\W^T\W}\left(\invtwo{\H^T\H}\otimes\W^T\W\right)} \nonumber\\
  &~~- \trace{\invtwo{\eye{k}\otimes\W^T\W+\H^T\H\otimes\eye{k}}\left(\eye{k}\otimes\W^T\W\right)}, \\
\beta_{\H} =&~ \trace{\inv{\W^T\W}\otimes\eye{n}} -
  \trace{\inv{\eye{k^2}+\inv{\W^T\W}\otimes\H^T\H}\left(\invtwo{\W^T\W}\otimes\H^T\H\right)} \nonumber\\
  &~~- \trace{\invtwo{\eye{k}\otimes\W^T\W+\H^T\H\otimes\eye{k}}\left(\H^T\H\otimes\eye{k}\right)},
\end{align}
\end{subequations}
\hrulefill
\end{figure*}

\subsubsection{\crb~Bound for CP Factorization Models}

The CRB for the CP factorization model exhibits a lot of similarities to the one for the matrix factorization model, but also a fair number of differences, thus it deserves to be derived from scratch and study its properties separately.

Consider the $N$-way tensor generated as
\[
\Yt = \ktensor{\H_d}_{d=1}^N + \Nt,
\]
where $\H_d$ is $n_d \times k$ and the elements of $\Nt$ are drawn from an i.i.d. Gaussian distribution with zero mean and variance $\sigma^2$. Then the log-likelihood of $\Yt$ parameterized by $\H_1, ..., \H_N$ is
\begin{align*}
 & \log p(\Yt;\H_1,...,\H_N) \\
=& -\frac{1}{\sigma^2}\left\|\vec{\Yt}-(\H_N\odot...\odot\H_1)\one\right\|^2 \\
=& -\frac{1}{\sigma^2}\|\vec{\Yt}-\ph(\th)\|^2,
\end{align*}
where the unknown parameters we want to estimate, $\H_1, ..., \H_N$, are stacked into one single long vector of size $(n_1+...+n_N)k$
\[
\th = \left[~\vec{\H_1}^T~...~\vec{\H_N}^T~\right]^T,
\]
and the nonlinear function $\ph(\th)$ is given in \eqref{eq:phHd}.

\noindent {\bf The Fisher Information Matrix:} Invoking Proposition~\ref{ppst:crb4gauss}, the FIM for the CP model under Gaussian noise has the form
\[
\fim = \frac{1}{\sigma^2}\ufim = \frac{1}{\sigma^2}\jacob_{\th}\ph(\th)^T\jacob_{\th}\ph(\th).
\]
The Jacobian matrix of $\ph(\th)$ can be partitioned into $N$ blocks
\[
\jacob_{\th}\ph(\th) = \begin{bmatrix}
\jacob_{\H_1}\ph(\th) & \cdots & \jacob_{\H_N}\ph(\th)
\end{bmatrix},
\]
and from \eqref{eq:phHd}, $\jacob_{\H_d}\ph(\th)$ can be written as in \eqref{eq:D_Hd}.
Using the properties of the commutation matrices, we have that
\begin{equation*}
\jacob_{\H_d}\ph(\th)^T\jacob_{\H_d}\ph(\th) =
\left(\hada{d}\H_j^T\H_j^{}\right)\otimes\I_{n_d}
\end{equation*}
and as for the off-diagonal blocks $\jacob_{\H_c}\ph(\th)^T\jacob_{\H_d}\ph(\th)$, consider multiplying this matrix with $\vec{\tilde{\H}_d}$ where $\tilde{\H}_d$ is a $n_d \times k$ matrix, we have \eqref{eq:Psy_cd*vec}, which holds for all possible $\tilde{\H}\in\Real^{n_d\times k}$, implying
\begin{align*}
&\jacob_{\H_c}\ph(\th)^T\jacob_{\H_d}\ph(\th) \\
=&(\I_k\otimes\H_c)\diag{\hada{d,c}\H_j^T\H_j^{}}\C_k(\I_k\otimes\H_d)^T.
\end{align*}

\begin{figure*}[t]
\begin{align}
\ph(\th) &= (\H_N\odot...\odot\H_1)\one \nonumber\\
&= \C_{n_{d-1}...n_1,n_N...n_d}(\H_{d-1}\odot\cdots\odot\H_1\odot\H_N\odot\cdots\odot\H_d)\one \nonumber\\
&= \C_{n_{d-1}...n_1,n_N...n_d}\vec{\H_d(\H_{d-1}\odot\cdots\odot\H_1\odot\H_N\odot\cdots\odot\H_{d+1})^T} \nonumber\\
&= \C_{n_{d-1}...n_1,n_N...n_d}
\left(\left(\H_{d-1}\odot\cdots\odot\H_1\odot\H_N\odot\cdots\odot\H_{d+1}\right)\otimes\I_{n_d}\right)\vec{\H_d}.
\label{eq:phHd}\\
\jacob_{\H_d}\ph(\th) &= \C_{n_{d-1}...n_1,n_N...n_d}
\left(\left(\H_{d-1}\odot\cdots\odot\H_1\odot\H_N\odot\cdots\odot\H_{d+1}\right)\otimes\I_{n_d}\right)\label{eq:D_Hd}
\end{align}
\begin{align}
\jacob_{\H_c}\ph(\th)^T\jacob_{\H_d}\ph(\th)\vec{\tilde{\H}_d}
=&\jacob_{\H_c}\ph(\th)^T\C_{n_{d-1}...n_1,n_N...n_d}
\vec{\tilde{\H}_d\left(\H_{d-1}\odot\cdots\odot\H_1\odot\H_N\odot\cdots\odot\H_{d+1}\right)^T}	\nonumber\\
=&\jacob_{\H_c}\ph(\th)^T\C_{n_{d-1}...n_1,n_N...n_d}
(\H_{d-1}\odot\cdots\odot\H_1\odot\H_N\odot\cdots\odot\H_{d+1}\odot\tilde{\H}_d)\one	\nonumber\\
=&\jacob_{\H_c}\ph(\th)^T(\H_N\odot\cdots\odot\H_{d+1}\odot\tilde{\H}_d\odot\H_{d-1}\odot\cdots\odot\H_1)\one	 \nonumber\\
=&\left(\left(\H_{c-1}\odot\cdots\odot\H_1\odot\H_N\odot\cdots\odot\H_{c+1}\right)^T\otimes\I_{n_c}\right)
\C_{n_N...n_c,n_{c-1}...n_1}	\nonumber\\
 &~~~~~~(\H_{d-1}\odot\cdots\odot\H_1\odot\H_N\odot\cdots\odot\H_{d+1}\odot\tilde{\H}_d)\one	\nonumber\\
=&\left(\left(\left(\hada{d,c}\H_j^T\H_j^{}\right)*\H_d^T\tilde{\H}_d^{}\right)\odot\H_c\right)\one	\nonumber\\
=&\vec{\H_c\left(\left(\hada{d,c}\H_j^T\H_j^{}\right)*\H_d^T\tilde{\H}_d^{}\right)^T} \label{eq:vec}\\
=&(\I_k\otimes\H_c)\diag{\hada{d,c}\H_j^T\H_j^{}}\C_{k,k}\vec{\H_d^T\tilde{\H}_d^{}} 	\nonumber\\
=&(\I_k\otimes\H_c)\diag{\hada{d,c}\H_j^T\H_j^{}}\C_{k,k}(\I_k\otimes\H_d^T)\vec{\tilde{\H}_c}. \label{eq:Psy_cd*vec}
\end{align}
\hrulefill
\end{figure*}

We can then express the $(n_1+...+n_N)k \times (n_1+...+n_N)k$ Fisher information matrix $\fim=\sigma^{-2}\ufim$ compactly as the following block form
\begin{equation}\label{eq:fim4cp1}
\ufim = \begin{bmatrix}
\ufim_{1,1} & \cdots & \ufim_{1,N} \\
\vdots		& \ddots & \vdots 	   \\
\ufim_{N,1} & \cdots & \ufim_{N,N}
\end{bmatrix},
\end{equation}
where
\begin{equation}\label{eq:fim4cp2}
\ufim_{d,c} = \left\{
\begin{aligned}
&\bm{\Gamma}_d \otimes \eye{n_d},~~~ d=c,\\
&\left(\eye{k}\otimes\H_d\right) \C_k\diag{\vec{\bm{\Gamma}_{d,c}}} \left(\eye{k}\otimes\H_c\right)^T,\\
&~~~~~~~~~~~~~,~~~ d\neq c,
\end{aligned}
\right.
\end{equation}
and
\begin{equation}\label{eq:fim4cp3}
\bm{\Gamma}_{d,c} = \hada{d,c}\H_j^T\H_j^{}.
\end{equation}
Alternatively, we can also write it in the form of ``block diagonal plus low rank'' as follows
\begin{align*}
\ufim & = \bm{\Delta} + \bm{\Upsilon}\mathbf{K}\bm{\Upsilon}^T,
\end{align*}
where $\bm{\Delta}$ and $\bm{\Upsilon}$ are both block diagonal
\begin{align*}
\bm{\Delta} &= \begin{bmatrix}
\bm{\Gamma}_1\otimes\eye{n_1} & 0 & \cdots & 0 \\
0 & \bm{\Gamma}_2\otimes\eye{n_2} &  & \vdots \\
\vdots &  & \ddots & 0 \\
0 & \cdots & 0 & \bm{\Gamma}_N\otimes\eye{n_N}
\end{bmatrix}, \\
\bm{\Upsilon} &= \begin{bmatrix}
~\eye{k}\otimes\H_1~ & 0 & \cdots & 0 \\
0 & ~\eye{k}\otimes\H_2~ &  & \vdots \\
\vdots &  & \ddots & 0 \\
0 & \cdots & 0 & ~~\eye{k}\otimes\H_N~~
\end{bmatrix},
\end{align*}
and the $Nk^2 \times Nk^2$ matrix $\mathbf{K}$ is partitioned into $N\times N$ blocks each of size $k^2 \times k^2$, and the $d,c$-th block equals to
\[
\mathbf{K}_{d,c} = \left\{
\begin{aligned}
& 0, ~~~ d=c, \\
& \C_k\diag{\vec{\bm{\Gamma}_{d,c}}}, ~~~ d\neq c.
\end{aligned}
\right.
\]

Formulae for the Jacobian matrix and FIM have appeared in~\cite{liu2001cramer,vorobyov2005robust,tomasi2006thesis,phan2013low,tichavsky2013cramer}, but the derivation is not as clear and straight-forward as the one given here.

\paragraph*{Remark}
The FIM for the CP model indeed looks very similar to the FIM for the matrix factorization model. In fact, for $N=2$, if we overload the definition of $\bm{\Gamma}_{d,c}$ to be $\bm{\Gamma}_{1,2}=\bm{\Gamma}_{2,1}=\one_{k\times k}$, then the FIM for CP defined in (\ref{eq:fim4cp1})-(\ref{eq:fim4cp3}) for $N=2$ becomes exactly equal to the FIM for matrix factorization defined in (\ref{eq:fim4mf}). Similar to the matrix factorization case, the FIM is also rank deficient. However, the null space result for the matrix factorization case does not simply generalize to the CP case, as will be seen in the following proposition.

\begin{proposition}\label{ppst:fimrank-cp}
If $\H_1, ..., \H_N$ all have full column rank, then the rank of the $(n_1+...+n_N)k \times (n_1+...+n_N)k$ FIM $\fim$ is at most $(n_1+...+n_N)k - (N-1)k$.
\end{proposition}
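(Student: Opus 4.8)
The plan is to follow the template of Proposition~\ref{ppst:fimrank-mf}. Since $\ufim = \jacob_{\th}\ph(\th)^T\jacob_{\th}\ph(\th)$, any vector $\z$ satisfying $\jacob_{\th}\ph(\th)\z = \zero$ lies in $\Null{\ufim}$, so it suffices to exhibit $(N-1)k$ linearly independent such vectors. These null directions come from the scaling / counter-scaling ambiguity of the CP model, which is the only \emph{continuous} indeterminacy present when the factors have full column rank (permutation is discrete and contributes nothing to the tangent space). Indeed, replacing $\H_d$ by $\H_d\,\text{Diag}(\d_d)$ with $\prod_{d=1}^N \d_d = \one$ (element-wise over the diagonals) leaves $\ph(\th)$ unchanged; differentiating this constraint yields, for each column index, an $(N-1)$-dimensional family of admissible infinitesimal rescalings, hence $(N-1)k$ directions in total.

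To make this explicit, for each $f\in\{1,\dots,k\}$ and each $d\in\{2,\dots,N\}$ I would define $\z_{f,d}$ as the stacked vector whose $\H_1$-block equals $\e_f\otimes\h_{1,f}$, whose $\H_d$-block equals $-\e_f\otimes\h_{d,f}$ (writing $\h_{d,f}:=\H_d(:,f)$ and $\e_f$ for the $f$-th canonical basis vector of $\Real^k$), and whose remaining blocks are zero; by $\vec{\a\b^T}=\b\otimes\a$ this is exactly the vectorized perturbation that scales column $f$ of $\H_1$ up and that of $\H_d$ down. The key verification uses the multilinear expansion $\ph(\th)=\sum_{g=1}^k \h_{N,g}\otimes\cdots\otimes\h_{1,g}$: because $\ph$ is linear in each factor, the directional derivative $\jacob_{\H_1}\ph(\th)(\e_f\otimes\h_{1,f})$ retains only the $g=f$ summand and equals the single vectorized rank-one term $\bm{\tau}_f := \h_{N,f}\otimes\cdots\otimes\h_{1,f}$, while $\jacob_{\H_d}\ph(\th)(-\e_f\otimes\h_{d,f})=-\bm{\tau}_f$. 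Summing the block contributions gives $\jacob_{\th}\ph(\th)\z_{f,d}=\bm{\tau}_f-\bm{\tau}_f=\zero$, which sidesteps the commutation-matrix bookkeeping of \eqref{eq:fim4cp1}--\eqref{eq:fim4cp3} altogether.

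It then remains to check that the $(N-1)k$ vectors $\{\z_{f,d}\}$ are linearly independent. Fixing a mode $d\ge 2$ and inspecting the $\H_d$-block of a vanishing combination $\sum_{f,d}c_{f,d}\z_{f,d}=\zero$, only the terms carrying that particular $d$ survive, yielding $\sum_f c_{f,d}\,(\e_f\otimes\h_{d,f})=\zero$; since the summands $\e_f\otimes\h_{d,f}$ have disjoint supports and $\h_{d,f}\neq\zero$ by full column rank, all $c_{f,d}=0$, and this holds for every $d\ge 2$. Hence $\rank{\fim}\le (n_1+\cdots+n_N)k-(N-1)k$, as claimed.

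The main conceptual obstacle is correctly pinning down the \emph{size} of the null space rather than the mechanics of the calculation. For $N=2$ the CP model coincides with matrix factorization, whose null space is the strictly larger $k^2$-dimensional one arising from arbitrary $k\times k$ mixing (Proposition~\ref{ppst:fimrank-mf}); it is only for $N\ge 3$ that essential uniqueness forbids all but diagonal mixing, so that the $(N-1)k$ scaling directions are generically the \emph{whole} continuous ambiguity. Because the proposition asserts merely an upper bound on the rank, producing these $(N-1)k$ directions is enough, and remains consistent with the larger $N=2$ deficiency; the delicate point is to argue that full column rank of all factors is exactly what drives both the pairwise cancellation $\bm{\tau}_f-\bm{\tau}_f=\zero$ and the independence argument, without overlooking any further continuous null direction.
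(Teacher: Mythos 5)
Your proof is correct, and your null vectors are in fact identical to the paper's: the paper takes block pairs $\vec{\H_1\D}$ and $-\vec{\H_c\D}$ with $\D$ an arbitrary diagonal matrix, and your $\z_{f,d}$ is exactly this choice with $\D=\e_f^{}\e_f^T$, since $\vec{\H_1\e_f^{}\e_f^T}=\e_f\otimes\h_{1,f}$ (a basis the paper itself points to in the remark following its proof). The genuine difference is the verification route. The paper checks $\ufim\z=\zero$ directly against the assembled FIM blocks \eqref{eq:fim4cp1}--\eqref{eq:fim4cp3}, which requires commutation-matrix manipulations to establish that $\ufim_{d,c}\vec{\H_c\D}=\vec{\H_d\D\bm{\Gamma}_d}$ for diagonal $\D$ — a quantity independent of $c$ — and then differences two such vectors. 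You instead verify $\jacob_{\th}\ph(\th)\,\z_{f,d}=\zero$ straight from the multilinear expansion $\ph(\th)=\sum_{g}\h_{N,g}\otimes\cdots\otimes\h_{1,g}$ and use $\Null{\jacob_{\th}\ph(\th)^T\jacob_{\th}\ph(\th)}=\Null{\jacob_{\th}\ph(\th)}$. Your route is more elementary (no commutation matrices, no FIM block algebra) and more transparent about the underlying cause: the $\z_{f,d}$ are tangent to the scaling/counter-scaling orbit along which $\ph$ is constant, so they must annihilate the Jacobian — an interpretation the paper relegates to a post-proof remark. It also does not depend on the correctness of the derived FIM block formulas, whereas the paper's computation doubles as a consistency check of those formulas, and its way of acting on vectors of the form $\vec{\H_c\D}$ is of the same flavor exploited later when forming $\L=\bm{\Upsilon}\mathbf{E}$ for the pseudo-inverse computation. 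Your linear-independence argument (disjoint supports within the $\H_d$-block, nonzero columns guaranteed by full column rank) is sound, and you are right that, since the proposition asserts only an upper bound on the rank, your unproven side claim that these $(N-1)k$ directions exhaust the continuous ambiguity for $N\geq 3$ is harmless.
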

\begin{proof}
Again, it suffices to find $(N-1)k$ linearly independent solutions to the linear system $\ufim\z=0$. Consider a vector $\z$ of the following form
\[
\z = [~\underbrace{0~...~0}_{(n_1+...+n_{c-1})k}~\vec{\H_c\D}^T~
		\underbrace{0~...~0}_{(n_{c+1}+...+n_N)k}~]^T,
\]
where $\D$ is an arbitrary diagonal matrix, then
\[
\ufim\z = \begin{bmatrix}
\ufim_{1,c}\vec{\H_c\D} \\ \ufim_{2,c}\vec{\H_c\D} \\ \vdots \\ \ufim_{N,c}\vec{\H_c\D}
\end{bmatrix},
\]
where
\[
\ufim_{c,c}\vec{\H_c\D} = \vec{\H_c\D\bm{\Gamma}_c},
\]
and
\begin{align*}
 & \ufim_{d,c}\vec{\H_c\D} \\
=& \left(\eye{k}\otimes\H_d\right) \C_k\diag{\vec{\bm{\Gamma}_{d,c}}} \left(\eye{k}\otimes\H_c\right)^T\vec{\H_c\D} \\
=& \left(\eye{k}\otimes\H_d\right) \C_k\diag{\vec{\bm{\Gamma}_{d,c}}} \vec{\H_c^T\H_c^{}\D}\\
=& \left(\eye{k}\otimes\H_d\right) \C_k \vec{\bm{\Gamma}_d\D} \\
=& \left(\eye{k}\otimes\H_d\right) \vec{\D\bm{\Gamma}_d} \\
=& \vec{\H_d\D\bm{\Gamma}_d},
\end{align*}
for $d \neq c$. Notice that at the third step,
$$\diag{\vec{\bm{\Gamma}_{d,c}}} \vec{\H_c^T\H_c^{}\D} = \vec{\bm{\Gamma}_d\D}$$
holds if $\D$ is a diagonal matrix, but not in general. As we can see, for $\z$ of this form, the result of $\ufim\z$ is independent of $c$.

Next, consider $\z$ to be the difference of two vectors of the aforementioned form, the non-zero block of one of them being the first block
\[
~~\z = \begin{bmatrix}
\vec{\H_1\D} \\
\zero_{(n_2+...+n_{c-1})k,1} \\
-\vec{\H_c\D} \\
\zero_{(n_{c+1}+...+n_N)k,1}
\end{bmatrix},
\]
then
\[
\ufim\z = \begin{bmatrix}
\ufim_{1,1}\vec{\H_1\D} \\ \ufim_{2,1}\vec{\H_1\D} \\ \vdots \\ \ufim_{N,1}\vec{\H_1\D}
\end{bmatrix} - \begin{bmatrix}
\ufim_{1,c}\vec{\H_c\D} \\ \ufim_{2,c}\vec{\H_c\D} \\ \vdots \\ \ufim_{N,c}\vec{\H_c\D}
\end{bmatrix} = 0.
\]
Fixing $c$, a $k \times k$ diagonal matrix $\D$ has $k$ degrees of freedom, and $c$ can be chosen from $2,...,N$, so we have found in total $(N-1)k$ linearly independent solutions to the linear system $\ufim\z=0$.

Notice that we can also make the $d$-th block and $c$-th block of $\z$ being $\vec{\H_d\D}$ and $-\vec{\H_c\D}$, but it is equal to the first and $d$-th, minus first and $c$-th, so this does not introduce additional dimension to the null space of $\ufim$.
\end{proof}

\paragraph*{Remark}
In terms of the dimension of the null space of $\fim$, the FIM for the CP model behaves differently from the one for the matrix case, since the rank deficiency is $(N-1)k$, whereas the rank deficiency of the FIM for the MF model is $k^2 \neq (2-1)k$. In fact, let us pick a basis for the span of the diagonal matrices to be $\{\e_1^{}\e_1^T, ..., \e_k^{}\e_k^T\}$, then it becomes apparent that the null space is closely related to the inherent scaling ambiguity in the CP model (meaning if the $l$-th column of $\H_1$ and the $l$-th column of $\H_c$ move in the opposite direction, it does not affect the CRB), whereas in the two factor case, the $\D$ matrix is not restricted to be diagonal, which seems related to the fact that for matrix factorization $\Y=\W\H^T$, we can put a more general non-singular matrix in between $\Y=\W\A\A^{-1}\H^T$. Nevertheless, these are trivial ambiguities within these factor analysis models, and it is not obvious how, for example, simple non-negativity constraints can lead to essentially unique solutions as shown in~\cite{huang2014tsp}.

\begin{figure*}[t]
\begin{align}
\L & = \begin{bmatrix}
~~\vec{\H_1\e_1^{}\e_1^T} & \cdots & ~~\vec{\H_1\e_N^{}\e_N^T} & \cdots &
~~\vec{\H_1\e_1^{}\e_1^T} & \cdots & ~~\vec{\H_1\e_N^{}\e_N^T} \\
-\vec{\H_2\e_1^{}\e_1^T} & \cdots & -\vec{\H_2\e_N^{}\e_N^T} & 0 & \cdots &  & 0 \\
\vdots &  & \vdots & \ddots & & \ddots & \vdots \\
0 & \cdots &  & 0 & -\vec{\H_N\e_1^{}\e_1^T} & \cdots & -\vec{\H_N\e_N^{}\e_N^T}
\end{bmatrix} \nonumber\\
& = \begin{bmatrix}
\eye{k}\otimes\H_1 & 0 & \cdots & 0 \\
0 & \eye{k}\otimes\H_2 &  & \vdots \\
\vdots &  & \ddots & 0 \\
0 & \cdots & 0 & \eye{k}\otimes\H_N
\end{bmatrix}\begin{bmatrix}
~~\eye{k}\odot\eye{k} & \cdots & ~~\eye{k}\odot\eye{k} \\
 -\eye{k}\odot\eye{k} &  & 0 \\
  & \ddots & \vdots \\
 0 & \cdots & -\eye{k}\odot\eye{k}
\end{bmatrix} = \bm{\Upsilon}\mathbf{E},\label{eq:cpL}\\
\mathbf{E}^T\bm{\Upsilon}^T\bm{\Upsilon}\mathbf{E} &= \begin{bmatrix}
\eye{k}*(\H_1^T\H_1^{}+\H_2^T\H_2^{}) & \eye{k}*\H_1^T\H_1^{} & \cdots
												& \eye{k}*\H_1^T\H_1^{}		\\
\eye{k}*\H_1^T\H_1^{} & \eye{k}*(\H_1^T\H_1^{}+\H_3^T\H_3^{}) &
												& \eye{k}*\H_1^T\H_1^{}		\\
\vdots &  & \ddots & \vdots		\\
\eye{k}*\H_1^T\H_1^{} & \eye{k}*\H_1^T\H_1^{} & \cdots &
					\eye{k}*(\H_1^T\H_1^{}+\H_N^T\H_N^{})
\end{bmatrix} \label{eq:EUUE}\\
& = \begin{bmatrix}
\eye{k}*\H_2^T\H_2^{} & 0 & \cdots & 0 \\
 & \eye{k}*\H_3^T\H_3^{} & & \vdots \\
\vdots & & \ddots \\
0 & \cdots & & \eye{k}*\H_N^T\H_N^{}
\end{bmatrix} + \begin{bmatrix}
\eye{k}*\H_1^T\H_1^{} & \eye{k}*\H_1^T\H_1^{} & \cdots & \eye{k}*\H_1^T\H_1^{} \\
\eye{k}*\H_1^T\H_1^{} & \eye{k}*\H_1^T\H_1^{} & 	\\
\vdots &  & \ddots & \vdots \\
\eye{k}*\H_1^T\H_1^{}  & \cdots & & \eye{k}*\H_1^T\H_1^{}
\end{bmatrix},\nonumber
\end{align}
\hrulefill
\end{figure*}

\noindent {\bf Computing the \crb~Bound:} To compute the pseudo-inverse of the FIM we derived in (\ref{eq:fim4cp1})-(\ref{eq:fim4cp3}) to obtain the CRB for the CP factorization model, we use the similar idea used in CRB for the MF case, which is by invoking Lemma~\ref{lmm:pinv}, and the fact that we have identified the null space of $\fim$ in Proposition~\ref{ppst:fimrank-cp}.

First, define the matrix $\L$ whose columns span the null space of $\ufim$ as in \eqref{eq:cpL}, where $\mathbf{E}$ is $Nk^2 \times (N-1)k$, partitioned into $N\times(N-1)$ blocks, with $d,c$-th block defined as
\[
\mathbf{E}_{d,c} = \left\{\begin{aligned}
~\eye{k}\odot\eye{k}, &~~ d = 1, \\
-\eye{k}\odot\eye{k}, &~~ d = c+1, \\
0,~~~~~~~ & ~~\text{otherwise.}
\end{aligned}
\right.
\]
Since $\L$ has full column rank, its pseudo-inverse is
\begin{align*}
\L^\dagger & = \inv{\L^T\L}\L^T \\
& = \inv{\mathbf{E}^T\bm{\Upsilon}^T\bm{\Upsilon}\mathbf{E}}\mathbf{E}^T\bm{\Upsilon}^T,
\end{align*}
where the matrix we want to invert can be written as in \eqref{eq:EUUE},
which is ``diagonal plus low rank'', thus can be inverted efficiently.

Next, we define $\fullfim$ by completing the range space of $\ufim$
\begin{align*}
\fullfim & = \ufim + \L\L^T \\
&= \bm{\Delta} + \bm{\Upsilon}\mathbf{K}\bm{\Upsilon}^T + \bm{\Upsilon}\mathbf{EE}^T\bm{\Upsilon}^T \\
&= \bm{\Delta} + \bm{\Upsilon}\left(\mathbf{K}+\mathbf{EE}^T\right)\bm{\Upsilon}^T.
\end{align*}
In the CP factorization case, the ``completed'' matrix $\fullfim$ does not have the nice block diagonal structure as we did in the matrix case. Such a hope is arguably impossible, as we notice that each $k^2 \times k^2$ block not on the diagonal is full rank, and different, whereas the rank of $\mathbf{EE}^T$ has rank $(N-1)k$, therefore we cannot construct a matrix $\mathbf{E}$ such that each of its off-diagonal block have rank $k^2$, unless $k<N$. Nevertheless, if $\mathbf{K}+\mathbf{EE}^T$ is invertible, applying matrix inversion lemma on $\fullfim$ leads to
\begin{align*}
\fullfim^{-1} \!=\! \bm{\Delta}^{-1} \!\!-\! \bm{\Delta}^{-1}\bm{\Upsilon}
				\inv{\inv{\mathbf{K}\!+\!\mathbf{EE}^T}\!\!\!+\!\bm{\Upsilon\Delta}^{-1}\bm{\Upsilon}}
				\bm{\Upsilon}^T\bm{\Delta}^{-1}.
\end{align*}
Notice that $\bm{\Delta}$ is block diagonal, and each of its diagonal blocks is a Kronecker product, thus computing $\bm{\Delta}^{-1}$ only requires inverting $N$ number of $k \times k$ matrices. The most expensive step is to compute $\inv{\mathbf{K}+\mathbf{EE}^T}$ and $\inv{\inv{\mathbf{K}+\mathbf{EE}^T}+\bm{\Upsilon\Delta}^{-1}\bm{\Upsilon}}$, both of size $Nk^2\times Nk^2$. However, it is still a huge improvement, considering the size of $\ufim$ is $(n_1+...+n_N)k \times (n_1+...+n_N)k$, if $Nk<n_1+...+n_N$. Otherwise, the CP factorization is not considered ``low-rank'', thus directly (pseudo-)invert the FIM is not a bad idea, nonetheless.

Finally, by subtracting $(\L^{\dag})^T \L^{\dag}$ from $\fullfim^{-1}$ we obtain
\begin{align*}
\ufim^\dagger & = \fullfim^{-1} - (\L^{\dag})^T \L^{\dag} \\
& = \bm{\Delta}^{-1} \!\!-\! \bm{\Delta}^{-1}\bm{\Upsilon}
				\inv{\inv{\mathbf{K}\!+\!\mathbf{EE}^T}\!\!\!+\!\bm{\Upsilon\Delta}^{-1}\bm{\Upsilon}}
				\bm{\Upsilon}^T\bm{\Delta}^{-1} 		\\
& ~~~~~~~~~-	
\bm{\Upsilon}\mathbf{E}\invtwo{\mathbf{E}^T\bm{\Upsilon}^T\bm{\Upsilon}\mathbf{E}}\mathbf{E}^T\bm{\Upsilon}^T,
\end{align*}
and the CRB is simply
\[
\fim^\dagger = \sigma^2\ufim^\dag.
\]

\newpage
~
\newpage
~
\newpage

\begin{IEEEbiography}[{\includegraphics[width=1in,height=1.25in,clip,keepaspectratio]{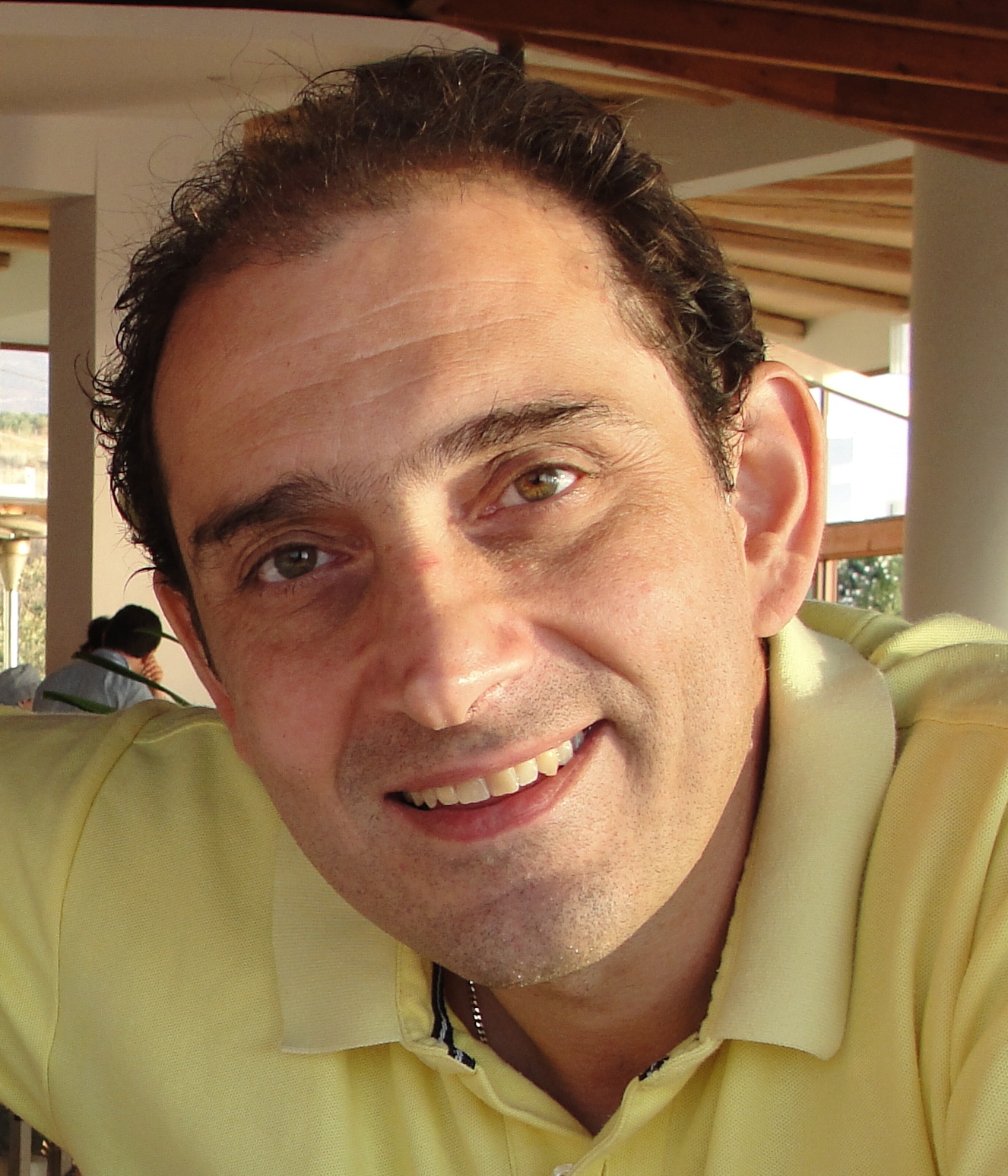}}]
{Nicholas D. Sidiropoulos} (F'09) received the Diploma in Electrical
Engineering from the Aristotelian University of Thessaloniki, Greece,
and M.S. and Ph.D. degrees in Electrical Engineering from the
University of Maryland at College Park, in 1988, 1990 and 1992,
respectively. He served as assistant professor at the University of
Virginia, associate professor at the University of Minnesota, and
professor at TU Crete, Greece. Since 2011, he has been at the
University of Minnesota, where he currently holds an ADC Chair in
digital technology. His research spans topics in signal processing
theory and algorithms, optimization, communications, and factor
analysis - with a long-term interest in tensor decomposition and its
applications. His current focus is primarily on signal and tensor
analytics for learning from big data. He received the NSF/CAREER award
in 1998, and the IEEE Signal Processing (SP) Society Best Paper Award
in 2001, 2007, and 2011. He served as IEEE SP Society Distinguished
Lecturer (2008-2009), and as Chair of the IEEE Signal Processing for
Communications and Networking Technical Committee (2007-2008). He
received the 2010 IEEE SP Society Meritorious Service Award, and the
2013 Distinguished Alumni Award from the Dept. of ECE, University of
Maryland. He is a Fellow of IEEE (2009) and a Fellow of EURASIP
(2014).
\end{IEEEbiography}

\begin{IEEEbiography}[{\includegraphics[width=1in,height=1.25in,clip,keepaspectratio]{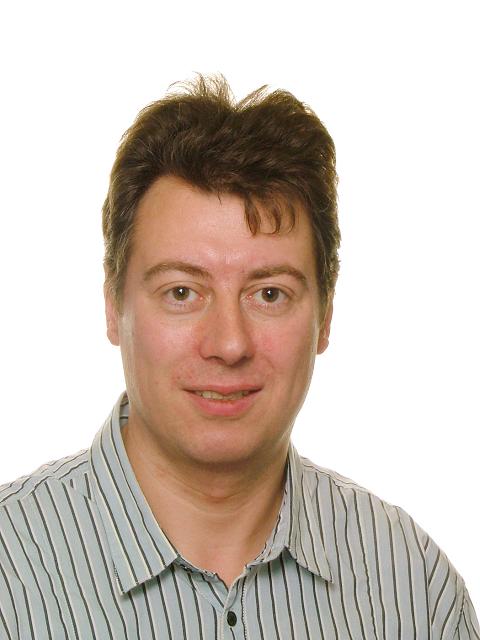}}]
{Lieven De Lathauwer} (F'15) received the Master's degree in electromechanical engineering
and the Ph.D. degree in applied sciences from KU Leuven, Belgium, in 1992 and 1997, respectively.
From 2000 to 2007 he was Research Associate of the French Centre National de la Recherche Scientifique, research group CNRS-ETIS. He is currently Professor at KU Leuven, affiliated with both the Group Science, Engineering and Technology of Kulak, and with the group STADIUS of the Electrical Engineering Department (ESAT). He is Associate Editor of the SIAM Journal on Matrix Analysis and Applications and he has served as Associate Editor for the IEEE Transactions on Signal Processing. His research concerns the development of tensor tools for mathematical engineering. It centers on the following axes: (i) algebraic foundations, (ii) numerical algorithms, (iii) generic methods for signal processing, data analysis and system modelling, and (iv) specific applications. Keywords are linear and multilinear algebra, numerical algorithms, statistical signal and array processing, higher-order statistics, independent component analysis and blind source separation, harmonic retrieval, factor analysis, blind identification and equalization, big data, data fusion. Algorithms have been made available as Tensorlab (www.tensorlab.net) (with N. Vervliet, O. Debals, L. Sorber and M. Van Barel).
\end{IEEEbiography}

\begin{IEEEbiography}[{\includegraphics[width=1in,height=1.25in,clip,keepaspectratio]{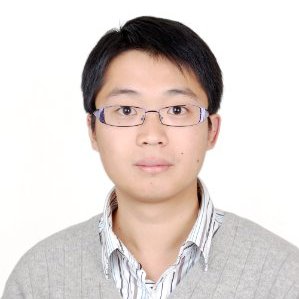}}]
{Xiao Fu} (S'12-M'15) received his B.Eng and M.Eng degrees in communication and information engineering from the University of Electronic Science and Technology of China, Chengdu, China, in 2005 and 2010, respectively. In 2014, he received his Ph.D. degree in electronic engineering from the Chinese University of Hong Kong (CUHK), Hong Kong. From 2005 to 2006, he was an assistant engineer at China Telecom Co. Ltd., Shenzhen, China. He is currently a Postdoctoral Associate at the Department of Electrical and Computer Engineering, University of Minnesota, Minneapolis, United States. His research interests include signal processing and machine learning, with a recent emphasis on factor analysis and its applications. Dr. Fu was an awardee of the Overseas Research Attachment Programme (ORAP) 2013 of the Engineering Faculty, CUHK, which sponsored his visit to the Department of Electrical and Computer Engineering, University of Minnesota, from September 2013 to February 2014. He received a Best Student Paper Award at ICASSP 2014, and co-authored a Best Student Paper Award at IEEE CAMSAP 2015.
\end{IEEEbiography}

\begin{IEEEbiography}[{\includegraphics[width=1in,height=1.25in,clip,keepaspectratio]{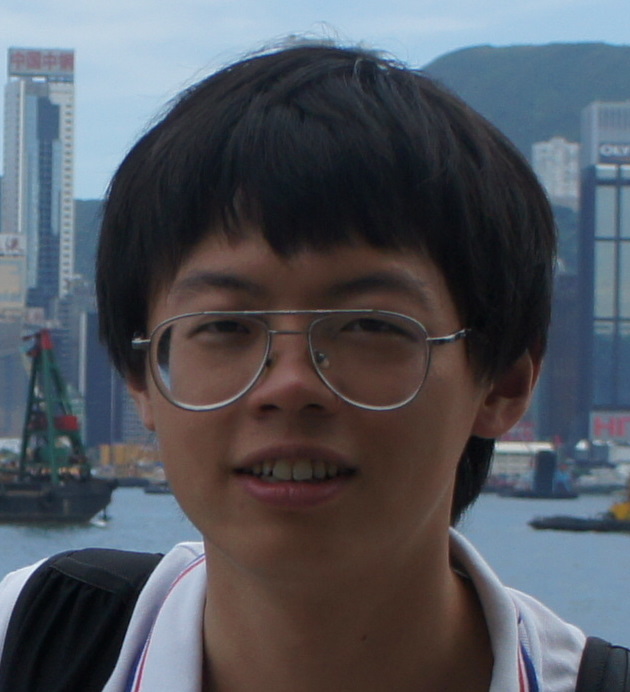}}]
{Kejun Huang} (S'13) received the B.Eng. in Communication Engineering from Nanjing University of Information Science and Technology, Nanjing, China in 2010. Since September 2010, he has been working towards his Ph.D. degree in the Department of Electrical and Computer Engineering, University of Minnesota. His research interests include signal processing, machine learning, and data analytics. His current research focuses on identifiability, algorithms, and performance analysis for factor analysis of big matrix and tensor data.
\end{IEEEbiography}

\begin{IEEEbiography}[{\includegraphics[width=1in,height=1.25in,clip,keepaspectratio]{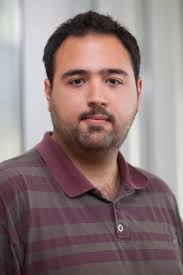}}]
{Evangelos Papalexakis} is an Assistant Professor in the Computer Science Department at the University of California, Riverside. He earned his Diploma and M.Sc. in Electronic and Computer Engineering at the Technical University of Crete, Chania, Greece, and his Ph.D. in Computer Science at Carnegie Mellon University, in 2010, 2011, and 2016 respectively. He has considerable experience in tensor decompositions, data mining, computing, and signal processing. He is currently interested in discovering how knowledge and information is expressed and stored in the brain, through analyzing brain scan data coupled with external information. He is also interested in anomaly detection on very large graphs, especially when temporal or multi-view information is present. He has spent two summers as an intern at  Microsoft Research Silicon Valley, working at the Search Labs and the Interaction \& Intent group.
\end{IEEEbiography}

\begin{IEEEbiography}[{\includegraphics[width=1in,height=1.25in,clip,keepaspectratio]{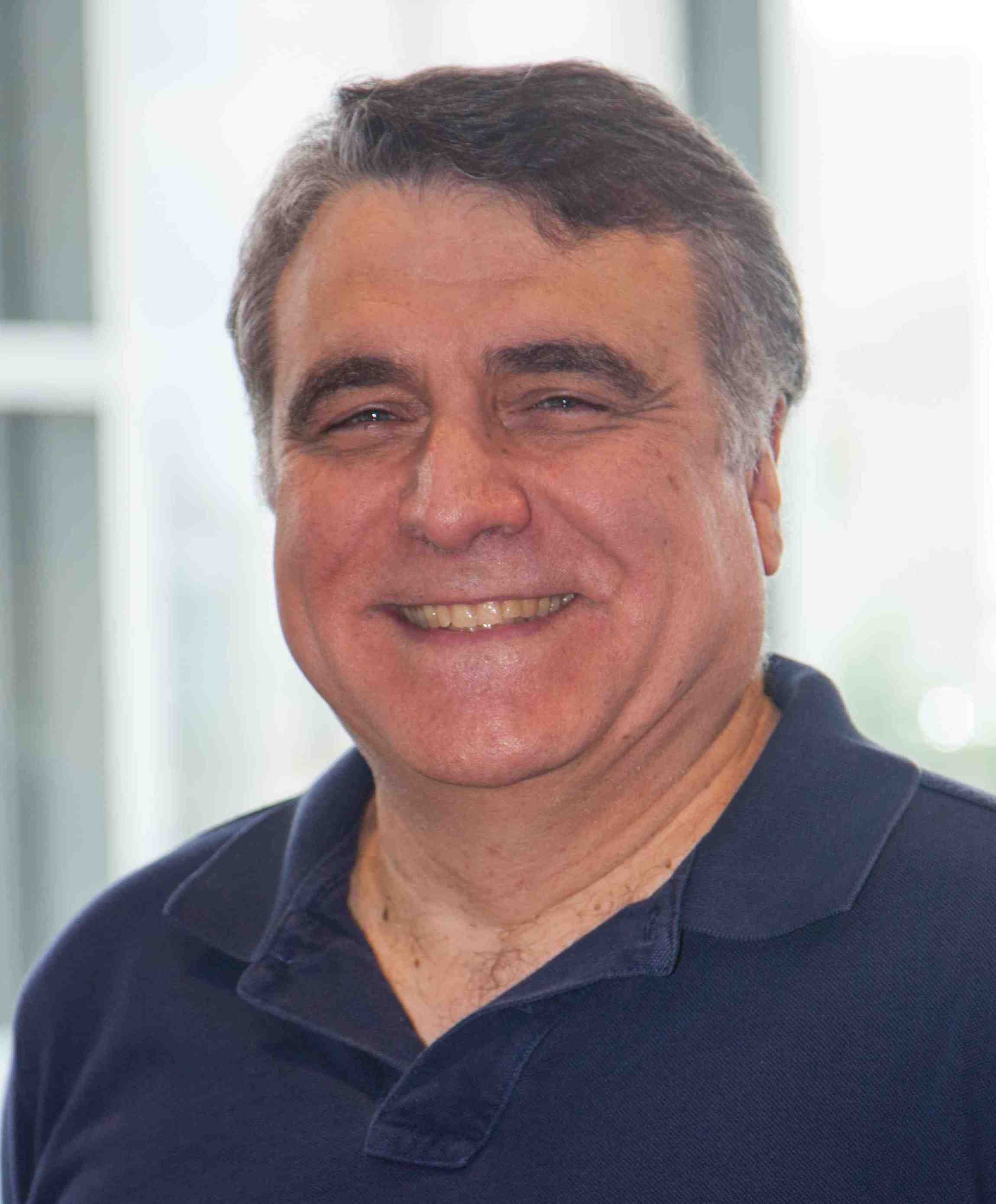}}]
{Christos Faloutos} is a Professor at Carnegie Mellon University. He has received the Presidential Young Investigator Award by the National Science Foundation (1989), the Research Contributions Award in ICDM 2006,
the SIGKDD Innovations Award (2010), 21 ``best paper'' awards (including 3 ``test of time'' awards),
and four teaching awards. Five of his advisees have won KDD or SCS dissertation awards. He is an ACM Fellow,
he has served as a member of the executive committee of SIGKDD; he has published over 300 refereed articles, 17 book chapters and two monographs.  He holds nine patents and he has given over 40 tutorials and over 20 invited distinguished lectures. He has a long-term interest in tensor decompositions and their practical applications in data mining, having published many well-appreciated papers in the area. His broad research interests include large-scale data mining with emphasis on graphs and time sequences; anomaly detection, tensors, and fractals.
\end{IEEEbiography}

\end{document}